\setlist[enumerate]{nosep, topsep=1ex}
\setlist[itemize]{nosep, topsep=1ex}
\setlist[description]{nosep}
\definecolor{custombeige}{RGB}{207, 185, 151}
\def\colorful{1}
\newcommand{\todo}[1]{\textcolor{green}{TODO: #1}}
\newcommand{\chris}[1]{\textcolor{blue}{[CY: #1]}}
\newcommand{\sihan}[1]{\textcolor{orange}{[SL: #1]}}
\newcommand{\maxh}[1]{\textcolor{olive}{[MH: #1]}}
\newcommand{\todo}[1]{}
\newcommand{\chris}[1]{}
\newcommand{\sihan}[1]{}
\newcommand{\maxh}[1]{}
\newcommand{\alg}[1]{\textsc{\bfseries \footnotesize #1}}
\newcommand{\innerAlg}{\mathcal{L}}
\newcommand{\p}{\mathbf p}
\newcommand{\opt}{\mathsf{OPT}}
\newcommand{\err}{\mathrm{err}}
\newcommand{\bigO}[1]{O \left( #1 \right)}
\newcommand{\bigTh}[1]{\Theta \left( #1 \right)}
\newcommand{\bigOm}[1]{\Omega \left( #1 \right)}
\newcommand{\tO}[1]{\tilde{O} (#1)}
\newcommand{\tOm}[1]{\tilde{\Omega} ( #1 )}
\newcommand{\bigtO}[1]{\tilde{O} \left( #1 \right)}
\newcommand{\bigtOm}[1]{\tilde{\Omega} \left( #1 \right)}
\newcommand{\poly}[1]{{\rm poly} \left( #1 \right)}
\newcommand{\E}[1]{\mathbb{E}\left[#1\right]}
\newcommand{\Ep}{\mathbb{E}}
\newcommand{\BernD}[1]{\mathrm{Bern} \left(#1\right)}
\newcommand{\RadD}[1]{\mathrm{Rad} \left(#1\right)}
\newcommand{\iid}{{i.i.d.}\ }
\newcommand{\ind}[1]{\mathbf{1}\left[#1\right]}
\newcommand{\tvd}[1]{d_{\mathrm{TV}} \left(#1\right)}
\newcommand{\supp}{\mathrm{supp}}
\newtheorem{theorem}{Theorem}[section]
\newtheorem{corollary}[theorem]{Corollary}
\newtheorem{claim}[theorem]{Claim}
\newtheorem{definition}[theorem]{Definition}
\newtheorem{lemma}[theorem]{Lemma}
\newtheorem{proposition}[theorem]{Proposition}
\newtheorem{remark}[theorem]{Remark}
\newcommand{\Z}{\mathbb{Z}}
\newcommand{\R}{\mathbb{R}}
\newcommand{\distribution}{\mathbf p}
\newcommand{\domain}{\mathcal{X}}
\newcommand{\range}{\mathcal{Y}}
\newcommand{\hypotheses}{\mathcal{H}}
\newcommand{\dataset}{\mathcal{D}}
\newcommand{\event}{\mathcal{E}}
\newcommand{\eps}{\varepsilon}
\newcommand{\polylog}{\mathrm{polylog}}
\newcommand{\given}{\textrm{\xspace s.t.\ \xspace}}
\newcommand{\ceil}[1]{\left\lceil #1 \right\rceil}
\newcommand{\set}[1]{\{#1\}}
\newcommand{\sgn}{\mathsf{sign}}
\newcommand{\maj}{\mathrm{maj}}
\newcommand{\norm}[2]{\left|\left|#1\right|\right|_{#2}}
\newcommand{\lp}{\left}
\newcommand{\rp}{\right}
\DeclareMathOperator{\argmax}{argmax}
\newcommand{\ini}{\mathrm{ini}}
\newcommand{\dtv}{d_{\mathrm{TV}}}
\newenvironment{Ualgorithm}[1][!h]{\def\@algocf@post@ruled{\kern\interspacealgoruled\hrule  height\algoheightrule\kern3pt\relax}\def\@algocf@capt@ruled{under}\setlength\algotitleheightrule{0pt}
\SetAlgoCaptionLayout{centerline}
\begin{algorithm}[#1]}
{\end{algorithm}}
\newcommand{\corrSampling}{\alg{CorrSamp}}
\newcommand{\ProdCorrSampling}{\alg{ProdCorrSamp}}
\newcommand{\states}{\mathcal{S}}
\newcommand{\actions}{\mathcal{A}}
\newcommand{\mdp}{\mathcal{M}}
\newcommand{\rewD}{\mathbf r}
\newcommand{\rewe}{r}
\newcommand{\QExplore}{\alg{Qexplore}\xspace}
\newcommand{\RepExplore}{\alg{RepExplore}\xspace}
\newcommand{\RepLevelExplore}{\alg{RepLevelExplore}\xspace}
\newcommand{\RepVarBandit}{\alg{RepVarBandit}}
\newcommand{\RepRLAlg}{\alg{RepRLBandit}}
\newcommand{\RepHeavy}{\alg{RepHeavyHitters}}
\newcommand{\RepInfinityEstimator}{\alg{RepInftyEstimate}}
\DeclareMathOperator{\KL}{KL}
\newcommand{\expMechScoreScale}{t}
\newcommand{\nxt}{\text{nxt}}
\newcommand{\PS}{\textbf{PS}( G_{\mathcal M} )}
\newcommand{\Round}{\alg{RandRound}}
\renewcommand{\vec}{\mathbf}
\newcommand{\rlalg}{\mathcal A_{rl}}
\newcommand{\agent}{\rlalg}
\renewcommand{\ind}{\mathbbm 1}
\newcommand{\bdataset}{ {\mathcal R} }
\newcommand{\Ber}{\textbf{Ber}}
\newcommand{\explorePower}{7}
\newcommand{\parallelPower}{7}
\newcommand{\episodePower}{11}
\newcommand{\jinRLacc}{\lambda}
\newcommand{\QAgent}{\alg{QAgent}}
\title{From Generative to Episodic: Sample-Efficient Replicable Reinforcement Learning}
\author[1]{Max Hopkins}
\author[2]{Sihan Liu\thanks{Supported by NSF Medium Award CCF-2107547 and NSF Award CCF-1553288 (CAREER). Part of this work was done when the author was visiting the National Institute of Informatics (NII).}}
\author[2]{Christopher Ye\thanks{Supported by NSF Award AF: Medium 2212136 and HDR TRIPODS Phase II grant 2217058.}}
\author[3]{Yuichi Yoshida\thanks{Supported by JSPS KAKENHI Grant Number JP24K02903}}
\affil[1]{Princeton University\\ \texttt{mh4067@princeton.edu}}
\affil[2]{UC San Diego\\ \texttt{sil046@ucsd.edu, czye@ucsd.edu}}
\affil[3]{National Institute of Informatics\\ \texttt{yyoshida@nii.ac.jp}}
\begin{document}

\pagenumbering{gobble}
\maketitle

\begin{abstract}

The epidemic failure of replicability across empirical science and machine learning has recently motivated the formal study of replicable learning algorithms [Impagliazzo et al. (2022)]. In batch settings where data comes from a fixed i.i.d.\ source (e.g., hypothesis testing, supervised learning), the design of data-efficient replicable algorithms is now more or less understood. In contrast, there remain significant gaps in our knowledge for \emph{control} settings like reinforcement learning where an agent must interact directly with a shifting environment. Karbasi et.\ al show that with access to a generative model of an environment with $S$ states and $A$ actions (the RL `batch setting’), replicably learning a near-optimal policy costs only $\tilde O(S^2A^2)$ samples. On the other hand, the best upper bound without a generative model jumps to $\tilde O(S^7 A^7)$ [Eaton et al.\ (2024)] due to the substantial difficulty of environment exploration. This gap raises a key question in the broader theory of replicability: Is replicable exploration inherently more expensive than batch learning? Is sample-efficient replicable RL even possible?

In this work, we (nearly) resolve this problem (for low-horizon tabular MDPs): exploration is not a significant barrier to replicable learning! Our main result is a replicable RL algorithm on $\tilde O(S^2A)$ samples, bridging the gap between the generative and episodic settings. We complement this with a matching $\tilde{\Omega}(S^2A)$ lower bound in the generative setting (under the common parallel sampling assumption) and an unconditional lower bound in the episodic setting of $\tilde \Omega(S^2)$ showcasing the near-optimality of our algorithm with respect to the state space $S$.

\end{abstract}

\newpage
\interfootnotelinepenalty=10000
\pagenumbering{arabic}

\pagenumbering{gobble} %
\tableofcontents
\clearpage
\pagenumbering{arabic} %

\newpage
\setcounter{page}{1}

\section{Introduction}

Out of a 2016 survey of $1500$ scientists \citep{baker20161}, $70\%$ could not replicate the results of their colleagues; $50\%$ could not even replicate their own. In recent years the trend has only worsened, driven (in part) by the wide adoption of powerful but volatile techniques from machine learning \citep{ball2023ai}. Motivated in this context, \cite{impagliazzo2022reproducibility} introduced a formal framework to study the feasibility of \textit{replicability in learning}. \cite{impagliazzo2022reproducibility} call an algorithm $\innerAlg$ \textit{replicable} if it satisfies the following strong stability guarantee: run twice on independent samples from the same underlying environment, $\innerAlg$ should produce \textit{exactly the same output} with high probability. \cite{impagliazzo2022reproducibility} showed replicability is achievable for several basic learning tasks provided the algorithm shares internal randomness between runs, raising the broader question: what problems can we solve replicably, and at what cost?

In the \textit{batch} setting (problems such as mean estimation, distribution testing, or PAC learning where data comes in an i.i.d.\ batch), the statistical cost of replicability is now fairly well understood \citep{impagliazzo2022reproducibility,bun2023stability,kalavasis2023statistical,hopkins2024replicability,liu2024replicable,kalavasis2024replicable}. The story changes, however, for \textit{control} problems like reinforcement learning (RL) where an agent interacts \textit{dynamically} with their environment. 
Here replicability seems to hit a barrier coming from the cost of exploration: for underlying environments (tabular MDPs) with $S$ states and $A$ actions, \cite{karbasi2023replicability} showed \textit{without} exploration (i.e., with access to a generative model of the environment), 
one can learn a near-optimal interaction strategy (policy) in $O(S^2 A^2)$ samples, while the best known upper bound \textit{with} exploration is $O(S^7 A^7)$ \citep{eaton2024replicable}. 
Intuitively, replicable exploration indeed seems inherently harder than learning with a generative model, as the agent is subject to the environment's random variations (an issue that has plagued RL for over 30 years \citep{white1994markov,mannor2004bias,henderson2018deep}). 
Nevertheless there is no known lower bound separating the two, and given RL's widespread use in the sciences \citep{gow2022review,neftci2019reinforcement,martin2021reinforcement} and machine learning (e.g., in LLMs \citep{ouyang2022training,havrilla2024teaching}), understanding the contrast is key to developing a broader theory of replicability:
\begin{center}
    \begin{tcolorbox}[colframe=black, colback=gray!8, boxrule=.75pt, width=.775\textwidth, sharp corners, left=5pt, right=5pt, top=6.3pt, bottom=5.8pt]
    \centering
    \emph{Is exploration truly a barrier to efficient replicability?}
    \end{tcolorbox}
\end{center}

In this work, we (nearly) resolve this question (for constant horizon MDPs): surprisingly, exploration is \textit{not} a significant barrier to replicability! We give a replicable algorithm with no generative model on only $\tilde{O}(S^2A)$ samples, and a matching lower bound in the generative setting under the common assumption that the agent samples the same number of transitions and rewards from each state-action pair (sometimes called the \textit{parallel sampling model} \citep{kearns1998finite}). We complement the latter with an unconditional lower bound of $\tilde{\Omega}(S^2)$ in the episodic (no generative model) setting, showing near-optimality of our algorithm with respect to the state space $S$.

Drawing on ideas from the reward-free RL literature \cite{jin2020reward,zhang2020task}, the main technical innovation of our work is a new way of separating exploration and decision making in policy estimation that allows us to make the `output' of exploration replicable without actually requiring a fully replicable exploration process.\footnote{By `fully replicable exploration process', note we do not mean a process in which the algorithm always explores the same sequence of states over two runs, but rather methods like \cite{eaton2024replicable} which keep the internal states of the algorithm, e.g., various visitation statistics of state-action pairs, replicable throughout the entire exploration phase.
} In slightly more detail, the exploration phase of our algorithm outputs a set of `critical states' of the environment which are 1) easy to sample from, and 2) hold enough value that we can learn a good policy on just these states, essentially reducing the problem to the generative model setting. We produce such a set by first finding a `fractional' set of critical states by estimating the mean of a non-replicable algorithm, then arguing the fractional solution can be replicably rounded to an integer one without significant loss in relevant parameters.

\subsection{Background}
We briefly recall the following standard notions from reinforcement learning. A (finite horizon, tabular) Markov Decision Process (MDP) is a tuple $\mdp=\{\states, \actions, H, \distribution_{0}, \set{\distribution_{h}(s, a)}, \set{\rewD_{h}(s, a)}\}$ where $\states$ is a finite set of states, $\actions$ is a finite set of actions, $H \in \mathbb{Z}_+$ is the time horizon, $\distribution_{0}$ is the initial distribution over $\states$, $\set{\distribution_{h}(s, a)}$ are the transition probabilities (over $\states$) for a given state-action pair $(s,a)$ at time $h \in [H]$, and $\set{\rewD_{h}(s, a)}$ are similarly reward distributions over $[0, 1]$.

An agent interacts with the MDP $\mdp$ in the following fashion, called an \textit{episode}. The agent starts at an initial state $x_1 \sim \distribution_0$. At time $1 \leq h \leq H$, the agent chooses an action $a \in \actions$, receives reward $r \sim \rewD_h(x_h,a)$, and moves to state $x_{h+1} \sim \distribution_{h}(s, a)$ and time $h+1$. The episode terminates after the agent receives the $H$-th reward. We call a transition-reward pair observed in this process a \textit{sample}.

A \textit{policy} is a strategy for the agent, i.e., a set of functions $\pi_h: \states \to \actions$ dictating what action they choose given the current time step and state. The \textit{value} of a policy $\pi=\{\pi_h\}$, denoted $V(\pi, \mdp)$, is its expected total reward over the randomness of the transitions, rewards, and starting position above. Finally, a policy $\pi$ is said to be \textit{$\varepsilon$-optimal} if its value is within $\varepsilon$ of the best possible strategy, that is $V(\pi,\mathcal{M}) \geq \max_{\pi'}V(\pi',\mathcal{M})-\varepsilon$.

Following the standard high probability PAC-style model \citep{kakade2003sample}, our goal is to build an algorithm which after sufficiently many episodes of interaction with any unknown MDP, outputs an $\varepsilon$-optimal policy with high probability.

\begin{definition}[PAC Policy Estimation]
    \label{def:acc-rl-learner}
    An algorithm $\innerAlg$ is called an $(\varepsilon, \delta)$-PAC policy estimator with sample complexity $n=n(S,A,H,\varepsilon,\delta)$ if given any $S,A,H \in \mathbb{Z}_+$, $\varepsilon,\delta>0$, and episodic sample access to an MDP $\mathcal{M}=\{\states, \actions, H, \distribution_{0}, \set{\distribution_{h}(s, a)}, \set{\rewD_{h}(s, a)}\}$, $\innerAlg$ outputs an $\varepsilon$-optimal policy with probability at least $1-\delta$ after observing at most $n$ samples.\footnote{The sample complexity of the algorithm is determined by the number of transitions and rewards seen. That is, an algorithm sees $2H$ samples in a full episode.}
\end{definition}
Following \cite{karbasi2023replicability,eaton2024replicable}, we call an algorithm \textit{replicable} if trained twice on the same MDP with shared randomness, it outputs exactly the same policy with high probability. While conceptually simple, this is fairly cumbersome to notate due to the interaction of the agent and environment, so for the moment we will opt for the following slightly informal treatment and refer the reader to \Cref{sec:prelims} for a more exact formulation.
\begin{definition}[Replicable PAC Policy Estimation]
    An $(\varepsilon,\delta)$-PAC policy estimator $\mathcal{A}$ is called $\rho$-replicable if for every MDP $\mathcal{M}$, $\mathcal{A}$ outputs the same policy over two independent executions on $\mathcal{M}$ with high probability:
    \[
    \Pr_{\mathcal{M}_1,\mathcal{M}_2,\xi}[\mathcal{A}(\mathcal{M}_1;\xi)=\mathcal{A}(\mathcal{M}_2;\xi)] \geq 1-\rho
    \]
    where $\mathcal{M}_1$ and $\mathcal{M}_2$ are independent copies of the MDP $\mathcal{M}$ and $\xi$ is a shared random string.
\end{definition}

\subsection{Our Results}
In this work we focus on MDPs in the low-horizon setting, where $H$ should be thought of as much smaller than the number of states (e.g., polylogarithmic, or at most sub-polynomial). We emphasize this is typical in settings RL is applied.\footnote{Games, for instance, usually have exponentially many board positions (state space) in the number of turns (horizon), e.g., the average chess game ends in $40$ moves, but has order of $10^{40}$ (indeed more) states \citep{shannon1950xxii}. Other classic examples include robotics, where specific tasks often last only a few minutes but are subject to massive variety in environment.} With this in mind, our main result is the first nearly sample-optimal replicable policy estimator in this regime:

\begin{theorem}[Sample Efficient Episodic RL (Informal \Cref{thm:formal-repl-reinforcement-learning-alg})]
    \label{thm:repl-reinforcement-learning-alg}
    There is a $\rho$-replicable $(\varepsilon, \delta)$-PAC policy estimator in the episodic setting with sample complexity
    \[
    n(S,A,H,\varepsilon,\delta) \leq \frac{S^2 A}{\rho^{2} \varepsilon^{2}} \cdot \poly{H, \log(1/\delta)}.
    \]
\end{theorem}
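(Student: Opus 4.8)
The plan is to decouple exploration from policy optimization following the reward-free RL paradigm \cite{jin2020reward,zhang2020task}, and to make only the \emph{output} of exploration replicable rather than the entire exploration trajectory. We set aside a single shared random string and split the analysis into three phases whose individual replicability-failure probabilities will be made $O(\rho)$ and union-bounded; correctness is handled analogously.

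\textbf{Phase 1 (exploration to a fractional critical set).} First invoke an off-the-shelf, non-replicable reward-free exploration algorithm, which already runs in $\tilde{O}(S^2 A)\cdot\poly{H}/\varepsilon^2$ samples. From its output extract a set $\mathcal{K}$ of ``critical'' state--layer pairs together with a policy cover that (i) reaches each pair in $\mathcal{K}$ with non-negligible probability, so that running the cover yields cheap generative-style samples at every critical state, and (ii) has the property that under \emph{any} policy the total probability of escaping $\mathcal{K}$ is $O(\varepsilon)$; consequently the ``truncated'' MDP obtained by freezing all non-critical states with zero future reward has optimum within $O(\varepsilon)$ of the true MDP. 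Since $\mathcal{K}$ depends on the exploration samples it is not itself replicable, so rather than returning it we run the exploration algorithm as a black box $T=\tilde{O}(1/\rho^2)$ times on fresh episodes and record, for each pair $(s,h)$, the empirical frequency $\widehat{p}_{s,h}$ with which it is declared critical. A Hoeffding bound and a union bound over the $\le SH$ coordinates give that, except with probability $O(\rho)$, the vectors $\widehat{p},\widehat{p}'$ produced by two independent runs agree to within $O(\rho)$ in $\ell_\infty$. This phase costs $\tilde{O}(S^2 A)\cdot\poly{H}/(\rho^2\varepsilon^2)$ samples.

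\textbf{Phase 2 (replicable rounding).} Round $\widehat{p}$ to an integral set $\mathcal{K}^\star=\{(s,h):\widehat{p}_{s,h}>\tau\}$ using a single shared threshold $\tau$ drawn from an appropriate distribution (uniform on a short interval bounded away from $0$ and $1$, or equivalently a randomly shifted fine grid). Averaging over the choice of $\tau$ shows that, except with probability $O(\rho)$, no coordinate of $\widehat{p}$ lands in the ``ambiguous band'' straddling $\tau$; hence the two runs select exactly the same $\mathcal{K}^\star$, and moreover $\mathcal{K}^\star$ is sandwiched between the pairs that are \emph{always} declared critical and those that are \emph{ever} declared critical. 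One then checks that any such intermediate set still inherits both the reachability guarantee (i) and the truncation guarantee (ii) up to constants. Since trivially $|\mathcal{K}^\star|\le SH$, this is one of the two sources of the $S^2$ in the final bound, the other being the cost of learning an $S$-outcome transition kernel at each critical state.

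\textbf{Phase 3 (reduction to the generative model, and conclusion).} With the replicable set $\mathcal{K}^\star$ fixed, collect a fresh batch of episodes: each execution of the cover reaches critical states and supplies transition--reward samples at the corresponding state--action pairs, emulating generative/parallel sampling on the truncated MDP. Feed these into a \emph{replicable generative-model policy estimator}, which must be established at the sharpened rate $\tilde{O}(S^2 A)\cdot\poly{H}/(\rho^2\varepsilon^2)$ --- improving the $\tilde{O}(S^2 A^2)$ bound of \cite{karbasi2023replicability} by replacing a union bound over actions with a more careful replicable per-state selection step. This returns a policy that is $O(\varepsilon)$-optimal on the truncated MDP; combining with the $O(\varepsilon)$ truncation loss from (ii), rescaling $(\varepsilon,\delta,\rho)$ by constants, and union-bounding correctness and replicability over the three phases yields a $\rho$-replicable $(\varepsilon,\delta)$-PAC policy estimator with sample complexity $\tilde{O}(S^2 A /(\rho^2\varepsilon^2))\cdot\poly{H,\log(1/\delta)}$.

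\textbf{The main obstacle} is Phase 2 and its interface with Phases 1 and 3: the rounding must simultaneously be replicable (which wants a margin around $\tau$), value-preserving (so $\mathcal{K}^\star$ still captures all but $O(\varepsilon)$ of the optimal return), and cheap (so that creating margin by lowering the criticality threshold does not blow up $|\mathcal{K}^\star|$ or the cost of reaching critical states beyond what the target bound allows). Making this quantitative reduces to establishing a stability property of the underlying reward-free algorithm --- that its per-state ``critical'' verdict, once averaged, is a well-separated value in $[0,1]$ for all but a negligible set of states --- and then tracking the induced $\poly{H}$ and $1/\rho^2$ overheads faithfully through the composition.
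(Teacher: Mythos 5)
Your high-level plan (decouple exploration from decision-making, make only the \emph{output} of exploration replicable, bootstrap a non-replicable explorer via averaged statistics, then reduce to a sharpened generative-model estimator) matches the paper's strategy, but Phase 2 as written does not work and the fix is exactly where the paper's main technical content lies. With $T=\tilde{O}(1/\rho^2)$ repetitions your empirical frequencies $\widehat{p}_{s,h}$ agree across two runs only to accuracy $\Theta(\rho)$ per coordinate; with a single shared threshold $\tau$, each of the $SH$ coordinates independently has probability $\Theta(\rho)$ of landing in the ambiguous band around $\tau$, so the two runs select the same $\mathcal{K}^\star$ only with probability $1-O(SH\rho)$, not $1-O(\rho)$. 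Repairing this by demanding per-coordinate accuracy $O(\rho/(SH))$ forces $T=\tilde{\Omega}(S^2H^2/\rho^2)$ repetitions and a total cost of order $S^4A/(\rho^2\varepsilon^2)$, destroying the claimed bound. The paper avoids threshold rounding altogether: it rounds the fractional vector by \emph{sampling} each state into the ignorable set independently with probability $\hat{\mu}_{s,h}$ and applies correlated sampling to the resulting product Bernoulli distributions; by $\chi^2$/KL tensorization the TV distance between the two runs' product distributions is about $\sqrt{\sum_{s,h}(\hat{\mu}^{(1)}_{s,h}-\hat{\mu}^{(2)}_{s,h})^2/\mu_{s,h}}$, so $\tilde{O}(SH/\rho^2)$ repetitions of a \emph{cheaper} per-run explorer ($\tilde{O}(SA\cdot\poly{H}/\varepsilon^2)$ episodes, via Q-learning on the zero-reward MDP) suffice. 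This randomized rounding then requires a separate argument (absent from your proposal, and unnecessary under your deterministic rounding) that the sampled set still has low reachability under an optimal policy, which the paper gets from linearity of the reachability functional in the marginals plus Markov's inequality.

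There is a second gap in Phase 3: you assume that running the policy cover emulates uniform generative-style sampling at every critical state, but the paper shows this is impossible within the $\tilde{O}(S^2A)$ budget (e.g., $\Omega(\sqrt{S})$ states of reachability $\Theta(S^{-1.5})$ can end up critical, yet none can supply $\Omega(S/(\rho^2\varepsilon^2))$ samples). The paper instead proves a multi-instance best-arm guarantee under \emph{variable} per-state sample sizes $m_s$ subject only to $\sum_h\sqrt{\sum_s 1/m_{s,h}}\ll\rho\varepsilon$, sets $m_{s,h}\propto(1-\mu_{s,h})$, and introduces a tiered reachability partition (with a pessimistic penalty in backward induction) to keep the $\varepsilon^{-2}$ dependence. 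Finally, the sharpened $\tilde{O}(S^2A/(\rho^2\varepsilon^2))$ generative-model estimator you invoke is asserted rather than constructed; in the paper it comes from correlated sampling over the \emph{joint} exponential mechanism on $[A]^S$, again via $\chi^2$ tensorization. Without these pieces the composition does not yield the stated sample complexity.
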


We complement \Cref{thm:repl-reinforcement-learning-alg} with two corresponding lower bounds. First, we consider the `generative model' setting. Recall an algorithm $\mathcal{A}$ has access to a \textit{generative model} for MDP $\mathcal{M}$ if it can request a sample transition and reward from any state-action pair $(s,a) \in \states \times \actions$ directly. Most algorithms in the generative model actually fall into a slightly more restricted setting called `parallel sampling' which requires the algorithm query each state-action pair the same number of times (one thinks of getting a `parallel sample' from the entire set of state-action pairs at once, see \Cref{def:parallel-sampling}). Our first lower bound shows $O(S^2A\rho^{-2}\eps^{-2})$ is tight in this setting.

\begin{restatable}[Parallel Sampling Lower Bound]{theorem}{PSRLLB}
    \label{thm:parallel-sampling-lb}
    Any $\rho$-replicable algorithm that is a $(\eps H, 0.001)$-PAC\footnote{While the error $\varepsilon$ is often thought of as a small constant, recall in the finite horizon setting it can range from $0$ to $H$, so $\eps H$ can be thought of as constant `multiplicative' error here.} policy estimator requires $\bigtOm{\frac{S^2AH^2}{\rho^2 \eps^2}}$ samples in the parallel sampling model.
\end{restatable}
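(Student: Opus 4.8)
The plan is to reduce replicable policy estimation in the parallel sampling model to a collection of independent replicable coin-bias estimation (or distinguishing) problems, for which tight $\tilde\Omega(\rho^{-2}\eps^{-2})$-type lower bounds are already known in the replicability literature \citep{impagliazzo2022reproducibility, bun2023stability}. The target bound $\tilde\Omega(S^2 A H^2 \rho^{-2}\eps^{-2})$ factors as a product of three pieces: an $S$ from needing to solve roughly $S$ independent ``which action is good'' subproblems (one per state in a layered chain), another $S\cdot A$ from the fact that a single parallel sample touches \emph{all} $SA$ state-action pairs at once, so to get $m$ effective samples localized at the $\Theta(S A)$ informative pairs costs $m$ parallel rounds, i.e. $\Theta(m S A)$ total samples, and an $H^2$ from the usual value-amplification trick that turns an $\eps$-gap in a transition probability into an $\eps H$-gap in policy value over an $H$-step horizon.

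Concretely, I would build a hard instance as a ``combination lock'' / layered MDP: partition the $S$ states into $\Theta(S)$ groups, and in each group plant a binary decision where exactly one of two candidate actions leads (with a slightly boosted probability $\tfrac12 + \Theta(\eps/H)$ versus $\tfrac12$) toward a high-reward absorbing state, while the wrong action funnels toward a zero-reward sink. An $\eps H$-optimal policy must identify the correct action in a constant fraction of the groups, so a correct learner in particular solves $\Omega(S)$ of the embedded bias-distinguishing instances. The first key step is to verify the value calculation: getting the decision wrong in one group costs $\Theta(\eps)$ in value (the $1/H$ shrinkage in the per-step gap is exactly compensated by the $H$ chances to collect reward, or by chaining $H$ such decisions), so that $\Omega(S)$ errors are needed to be $\eps H$-suboptimal — this pins down the right number of subproblems and the right per-step gap. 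The second key step is the replicability bookkeeping: a $\rho$-replicable global algorithm induces, on each subproblem, a coin-estimation procedure that is roughly $(\rho/S)$-replicable \emph{or} we lose only logarithmically — here I would use the standard fact/observation that a replicable algorithm whose output is a function of $S$ coordinates must be ``per-coordinate stable'' at scale $\tilde\Omega(\rho)$ (not $\rho/S$) because of a correlation/union-bound argument over the shared randomness; getting the dependence on $\rho$ to be $\rho^{-2}$ rather than $S^2\rho^{-2}$ is the subtle point. Combined with the per-subproblem lower bound of $\tilde\Omega(\rho^{-2}(H/\eps)^2) = \tilde\Omega(\rho^{-2} H^2 \eps^{-2})$ parallel rounds, and multiplying by the $\Theta(SA)$ samples consumed per parallel round, and noting the $S$ subproblems share the same parallel rounds but each still needs that many \emph{informative} samples at its own location (so the rounds do \emph{not} amortize across subproblems — distinct groups occupy distinct state-action coordinates, but a single parallel round only advances the ``depth'' of exploration in each chain by one), we obtain the claimed $\tilde\Omega(S^2 A H^2 \rho^{-2}\eps^{-2})$.

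The main obstacle I anticipate is precisely the interaction between the parallel-sampling structure and replicability in the counting of rounds: I must argue that the $S$ planted subproblems genuinely force $\Omega(S)$ \emph{separate} batches of informative samples rather than being solvable in parallel from shared randomness. The cleanest route is the layered-chain construction above, where reaching and sampling the decision point of group $i$ requires having already ``locked in'' the correct prefix, so that depth-$i$ information only accrues after the algorithm has committed enough rounds — this serializes the subproblems up to constants. An alternative, if the chaining is too delicate, is to place the $\Theta(S)$ decisions all at horizon $1$ but argue via an information-theoretic / Fano-style argument combined with a replicability transfer lemma that the shared random string cannot simultaneously stabilize $\Omega(S)$ nearly-independent estimators without each receiving $\tilde\Omega(\rho^{-2}\eps^{-2}H^2)$ informative samples; the risk there is that the parallel sample gives all $S$ decisions equal footing per round, which would only yield $\tilde\Omega(SA\cdot\rho^{-2}\eps^{-2}H^2)$ and lose a factor of $S$, so the chained construction (forcing sequential exploration depth) really seems necessary to recover the full $S^2$.
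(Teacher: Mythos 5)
There is a genuine gap, and it sits exactly where you flagged the ``subtle point.'' Your plan recovers the second factor of $S$ by serializing $\Theta(S)$ planted decisions in a layered ``combination-lock'' MDP, arguing that depth-$i$ information only accrues after earlier decisions are locked in. That argument is inapplicable in the parallel sampling model: a single call to the oracle returns a fresh transition/reward sample from \emph{every} tuple $(s,a,h)$, so there is no reachability or exploration constraint whatsoever, and the decision point of group $i$ is sampled in round one just as easily as in round $10^6$. Consequently the chained construction cannot force the subproblems to consume separate batches of rounds, and you are left with exactly the weaker bound you yourself computed for the ``all decisions at horizon $1$'' variant, which loses a factor of $S$. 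Your fallback --- a replicability transfer claim that the global algorithm is ``per-coordinate stable at scale $\tilde\Omega(\rho)$, not $\rho/S$'' --- points in the wrong direction: if each of the $S$ (or $SH$) embedded coin problems only needed $\rho$-replicability with $\tilde\Omega(\rho^{-2}\eps^{-2})$ samples each, the parallel rounds would amortize across coordinates and the $S^2$ would not appear at all.

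What actually delivers the bound in the paper is a \emph{multi-coordinate, average-error} replicability lower bound: any $\rho$-replicable algorithm solving the $n$-dimensional sign-one-way marginals problem to average error $\eps$ needs $\tilde\Omega(n\rho^{-2}\eps^{-2})$ \emph{vector} samples (\Cref{thm:sign-one-way}, proved via a permutation-symmetrization reduction to the $n$-coin problem of Hopkins et al.; this resolves an open question of Bun et al.\ and is the technical heart of the lower bound). The MDP construction is then much simpler than yours: uniform transitions at every state-action pair, coordinates of an $n=SH$-dimensional Rademacher product planted in the \emph{rewards} of two special actions $a_+,a_-$ at each $(s,h)$ (plus penalized dummy actions), so that an $\eps H$-optimal policy yields an $O(\eps)$-accurate sign-one-way marginals solution, one parallel call is simulated by two vector samples, and the count is (number of calls) $\times$ (cost $SAH$ per call) $= \tilde\Omega(SH\rho^{-2}\eps^{-2})\cdot SAH = \tilde\Omega(S^2AH^2\rho^{-2}\eps^{-2})$. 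Note also that because $\eps H$-optimality only controls the \emph{average} error over states, a per-coin (max-error) lower bound of the kind you invoke does not directly apply; the averaging is precisely why the sign-one-way marginals formulation, and the nontrivial work of spreading the error evenly over coordinates, is needed. To repair your proposal you would need to prove (or import) such an $\tilde\Omega(n\rho^{-2}\eps^{-2})$ average-error bound rather than rely on serialization, which the parallel oracle defeats.
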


The key technical contribution behind this result is a new sample complexity lower bound for the replicable ($n$-dimensional) sign-one-way marginals problem that is optimal up to logarithmic factors in $n, \rho, \eps$ (\Cref{thm:sign-one-way}). This resolves an open question of \cite{bun2023stability} and has a number of other immediate applications, including (near)-tight lower bounds for replicable PAC learning (\Cref{thm:PAC-Lower}) and $\ell_1$-mean estimation (\Cref{thm:ell1-mean-estimation-lb}).

While sample-adaptivity (the ability to query different state-action pairs a different number of times) is not particularly useful in standard reinforcement learning (see e.g., the nearly optimal non-adaptive algorithm of \cite{sidford2018near}), we are not able to rule out the possibility of better sample-adaptive algorithms in the replicable case. To account for this, we also prove a slightly weaker but unconditional lower bound for replicable reinforcement learning in the episodic setting:

\begin{restatable}[Episodic Lower Bound]{theorem}{EpisodicRLLB}
    \label{thm:episodic-rl-lb}
    Any $\rho$-replicable $(\eps H, 0.001)$-PAC policy estimator in the episodic setting must have a sample complexity of $\bigtOm{\frac{S^2 H^2}{\rho^2 \eps^2}}$.
\end{restatable}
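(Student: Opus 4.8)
The plan is to reduce replicable episodic policy estimation to the replicable sign-one-way marginals problem of \Cref{thm:sign-one-way}, using a ``hub-and-spokes'' hard instance together with a coupon-collector argument that accounts for the weakness of episodic (as opposed to generative) access.

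\textbf{The hard family.} Fix $k = \Theta(S)$ and, for each $\theta \in \{\pm 1\}^k$, build an MDP $\mdp_\theta$ as follows. The initial distribution sends the agent, uniformly at random, into one of $k$ disjoint constant-size ``gadgets''; the agent then spends the remaining $H-1$ steps inside that gadget with no way back to the hub. Gadget $i$ encodes the bit $\theta_i$: on entry the agent observes one fresh sample of a $(1/2 + \gamma\theta_i)$-biased coin, where $\gamma = \Theta(\eps/\sqrt H)$ is a calibrated bias scale; it then commits to one of two actions, and a commitment matching $\sign(\theta_i)$ is worth $\Theta(H)$ more total reward than the other (e.g.\ constant reward per remaining step versus zero), while all other choices inside the gadget are irrelevant. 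Consequently $V(\pi, \mdp_\theta) = c - \tfrac{\Theta(H)}{k}\,|\{i : \pi \text{ commits incorrectly on gadget } i\}|$ for a $\theta$-independent constant $c$, so any $(\eps H, 0.001)$-PAC estimator must with probability $0.999$ output a policy committing correctly on a $1-o(1)$ fraction of the gadgets. Two structural points matter: there are $\Theta(S)$ essentially independent sub-instances, and each gadget uses only two actions — this is what lets us pay nothing in $A$, unlike the parallel-sampling construction behind \Cref{thm:parallel-sampling-lb}.

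\textbf{Reduction and the episodic penalty.} A $\rho$-replicable $(\eps H, 0.001)$-PAC estimator $\mathcal A$ using $n$ samples yields, by reading off the committed action of its output policy at each gadget, a $\rho$-replicable algorithm recovering $\sign(\theta_i)$ for a $1-o(1)$ fraction of $i \in [k]$; that is, a replicable solver for the $k$-dimensional (promise) sign-one-way marginals problem at accuracy $\gamma$, tolerating a few coordinate errors. By \Cref{thm:sign-one-way}, whose per-coordinate sample requirement for the $k$-dimensional problem is $\tOm{k/(\rho^2\gamma^2)}$, the algorithm must observe $m = \tOm{k/(\rho^2\gamma^2)} = \tOm{SH/(\rho^2\eps^2)}$ coin samples at a constant fraction of the gadgets. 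Finally, since the gadget entered in each episode is uniform and outside the agent's control, a balls-in-bins argument shows that having $m$ visits at a constant fraction of the $k$ gadgets requires $\tOm{km} = \tOm{Sm}$ episodes, each costing $2H$ samples; hence $n = \tOm{S^2 H^2 / (\rho^2 \eps^2)}$.

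\textbf{Main obstacle.} The crux is the value-function calibration: extracting the full $H^2$ (rather than a weaker $H$) requires setting the per-gadget bias to $\gamma = \Theta(\eps/\sqrt H)$ while still guaranteeing that an $(\eps H)$-suboptimal policy cannot be correct on a $1-o(1)$ fraction of gadgets, all with rewards confined to $[0,1]$; balancing these is delicate. A secondary point is that $\mathcal A$ interacts adaptively and may interleave its episodes across gadgets in a data-dependent way, so one must verify that the induced coordinatewise sign estimator is genuinely replicable and that the lower bound of \Cref{thm:sign-one-way} still applies despite this interleaving and the randomness in how episodes distribute over gadgets — or, equivalently, directly adapt the proof of \Cref{thm:sign-one-way} to the episodic access model rather than invoking it as a black box.
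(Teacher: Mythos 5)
Your high-level route---reduce to the replicable sign-one-way marginals bound (\Cref{thm:sign-one-way}) and use the agent's lack of control over where it lands to turn episodes into wasted samples---is the same as the paper's, but your hard instance breaks down at exactly the point you flag as the ``main obstacle,'' and the breakdown is not a delicate-balancing issue but a structural one. As written, the gadget leaks $\theta_i$ noiselessly: the $\Theta(H)$ reward differential (``constant reward per remaining step versus zero'') is itself observable data, so a single visit to gadget $i$ reveals $\sgn(\theta_i)$ exactly, regardless of the bias-$\gamma$ observation coin. A coupon-collector algorithm then learns all of $\theta$ in $O(S\log(S/\rho))$ episodes and outputs the optimal policy, which is a deterministic function of $\theta$ and hence automatically replicable---so this family admits a replicable estimator on roughly $\tilde O(SH\log(1/\rho))$ samples and cannot witness the lower bound. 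The only sound way to invoke \Cref{thm:sign-one-way} is to make the rewards themselves the bias-$\gamma$ coins (as in the paper's construction), but then the value differential attributable to gadget $i$ is at most $O(\gamma H)$ conditioned on entering it, and with only $k=\Theta(S)$ gadgets entered uniformly the total value at stake across the whole MDP is $O(\gamma H)=O(\eps\sqrt{H})\ll \eps H$ under your calibration $\gamma=\Theta(\eps/\sqrt{H})$; an $(\eps H)$-optimal policy may then err on every gadget and the reduction yields nothing. Forcing correctness on a constant fraction of gadgets requires $\gamma=\Omega(\eps)$, and redoing your count with $\gamma=\Theta(\eps)$ gives only $\tOm{S^2H/(\rho^2\eps^2)}$ samples---one factor of $H$ short (and if the gadget pays a fresh coin every step, each episode yields $H$ observations of the same coordinate and the bound degrades further).

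The paper obtains the second factor of $H$ differently: it keeps the bias scale at $\Theta(\eps)$ but makes the sign-one-way dimension $SH$ rather than $S$, placing an independent coin at every state-time pair and using uniform transitions at \emph{every} step, so each episode contributes one observation in each of the $H$ time-slices, at a uniformly random state. A Chernoff bound shows that over $m$ episodes each coordinate $(s,h)$ is observed only about $m/S$ times, so the entire episodic interaction can be simulated from $O(m/S)$ vector samples of the $SH$-dimensional Rademacher product; \Cref{thm:sign-one-way} then forces $m/S=\tOm{SH/(\rho^2\eps^2)}$, i.e., $\tOm{S^2H/(\rho^2\eps^2)}$ episodes and $\tOm{S^2H^2/(\rho^2\eps^2)}$ samples. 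This simulation step also answers your secondary worry about adaptive interleaving, and it avoids your per-coordinate reading of \Cref{thm:sign-one-way} (the theorem bounds vector samples, not samples per coordinate). If you want to salvage a gadget-style argument, the repair is to spread the coins across the horizon (dimension $SH$, bias $\eps$), not to boost per-gadget accuracy to $\eps/\sqrt{H}$.
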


In settings where both $A$ and $H$ are small (a more restrictive but not uncommon regime, e.g., games like Chess or Checkers), \Cref{thm:repl-reinforcement-learning-alg,thm:episodic-rl-lb} essentially resolve the sample complexity of replicable reinforcement learning.
It is also worth noting that \Cref{thm:episodic-rl-lb} is the first unconditional lower bound showing replicable reinforcement learning is actually asymptotically harder than standard reinforcement learning (at least when $S \gg A\sqrt{H}$), which has sample complexity $\tilde{\Theta}(SAH^3)$ \citep{domingues2021episodic,zhang2024settling}. 
Prior lower bounds in \cite{karbasi2023replicability} used similar ideas, but rely on an unproven conjecture on the sample complexity of replicable bias estimation and were proven under stronger optimality assumptions 
on the policy estimator (see \Cref{sec:related-work} for further discussion).

\begin{remark}
\Cref{thm:repl-reinforcement-learning-alg} is not computationally efficient (see \Cref{thm:formal-repl-reinforcement-learning-alg} for runtime details) due to invoking several common sample-efficient subroutines from the replicability literature that have no known efficient implementation. In \Cref{thm:e-repl-reinforcement-learning-alg}, we give a computationally efficient variant of our algorithm at the cost of using slightly more samples (roughly $\frac{S^3A}{\varepsilon^2\rho^2}$).
\end{remark}

\subsection{Our Techniques}
We now overview the main components in the proofs of \Cref{thm:repl-reinforcement-learning-alg,thm:parallel-sampling-lb,thm:episodic-rl-lb}.

\subsubsection{Parallel Sampling}

Our algorithm for \Cref{thm:repl-reinforcement-learning-alg} takes inspiration from the reward-free RL literature \cite{jin2018q,zhang2020task} which splits policy estimation into two main components: an `exploration' stage that identifies `critical' states in the MDP and gathers sufficiently many samples from them, and a `decision-making' stage that computes an $\varepsilon$-optimal policy given access to these i.i.d.\ samples. 
At a high level, we can roughly view the first stage as building some `limited generative model' for the MDP via strategic exploration, and the second stage as a replicable algorithm for PAC policy estimation in the generative model setting. 
As such, our first step is to give an algorithm on $\tilde{O}(S^2A\cdot \poly{H})$ samples in the parallel sampling setting. We will later show how to modify the argument to the weaker generative model actually constructed by our exploration process.

\paragraph{Multi-Instance Best-Arm $\bm{(H=1)}$:} Let us start with an even simpler setting, where $H=1$ and the initial distribution $\distribution_0$ is uniform over all states. Here, since the agent only picks a single action and never transitions states, it is not hard to see that policy estimation simply reduces to solving $S$ instances of the `best-arm problem' on $A$ arms, i.e., for each state we are given access to i.i.d.\ samples from $A$ reward distributions, and would like to replicably compute an action (arm) that is within $\varepsilon$ of the best possible reward with high probability. 

Consider just a single such instance (i.e., for $S=1$). To solve this case, we adapt an algorithm for replicable PAC learning of \cite{ghazi2021user} which uses the \textit{correlated sampling} paradigm \citep{holenstein2007parallel}, a procedure that reduces the problem of replicability to outputting a \textit{distribution} $D$ over actions such that with high probability 1) $D$ typically outputs an $\varepsilon$-optimal action and 2) run twice on independent samples, the output distributions $D_1$ and $D_2$ are close in total variation distance.\footnote{Correlated sampling then gives a procedure to jointly sample from these output distributions using shared randomness in a way that returns exactly the same element over both runs with high probability, see \Cref{lemma:correlated-sampling}.}

Following \cite{ghazi2021user}, we observe this can be done by sampling an action from the so-called \textit{exponential mechanism} from differential privacy. Namely, for the right scaling factor $t$, drawing roughly $\frac{\log^3 A}{\rho^2\varepsilon^2}$ samples per action and outputting the action $a$ with probability proportional to
\[
\mu_a \sim \exp(t\hat{r}_a) \, ,
\]
where $\hat{r}_a$ is the empirical reward of action $A$, satisfies the desired guarantees.

We'd now like to extend this procedure to solving $S$ parallel instances of the best-arm problem. 
In general, it is always possible to solve $S$ instances of a problem replicably simply by running a single instance replicable algorithm $S$ times with replicability parameter $\rho / S$. In our case this leads to a policy estimator on roughly $S^3 A \eps^{-2} \rho^{-2}$ samples --- too much for our desired sample complexity.

To circumvent this, instead of (correlated) sampling from the exponential mechanism of each problem instance individually, we directly (correlated) sample from the \textit{joint} mechanism of all $S$ instances over $[A]^S$, i.e., over vectors of actions $(a_1,\ldots,a_S) \in [A]^S$ with rewards $\sum_{i=1}^S r_{a_i}$. A (fairly) elementary extension of \cite{ghazi2021user}'s analysis additionally leveraging tensorization of $\chi^2$ distance shows that drawing only $\tilde{O}(S\eps^{-2} \rho^{-2})$ samples per state-action pair (i.e., roughly $S^2A\varepsilon^{-2}\rho^{-2}$ in total) suffices for replicability in this setting (see \Cref{sec:multi-best-arm} for formal details).

\paragraph{Backward Induction (General $\bm{H}$):} We'd now like to use the above procedure to solve an $H$-step MDP. This is done via a replicable variant of the standard `backward induction' algorithm \citep{puterman2014markov}. In particular, starting at the final step $H$ as the base case, we may simply run multi-instance best-arm to replicably find a (nearly) optimal action at each step. 
For the inductive step, assuming we have (replicably) found a near-optimal policy $\pi$ for steps beyond $h+1$, it is easy to replicably estimate the value $V_{h + 1}^\pi(s)$ (i.e., the expected reward of the policy starting from state $s$ at time $h + 1$) for each $s \in \states$. This allows us to set up a new multi-instance best-arm problem where each arm $(s,a)$ has value $r_h(s,a) + \mathbb{E}_{s' \sim \distribution_{h}(s, a)}[V_{h + 1}^\pi(s')]$, i.e., its reward at step $h$ plus its expected reward after transitioning to step $h+1$. 
It is easily checked that finding a nearly optimal arm in each state then corresponds to a nearly optimal policy for round $h$ as desired.

We remark that a similar strategy (replicable `phased value iteration') is employed in \cite{eaton2024replicable}, but they incur a sample cost of $S^3A^3$ due to estimating each state-action pair independently.

\subsubsection{The Episodic Setting}

\paragraph{Reachability and Reward-Free RL:} To apply parallel sampling in the episodic setting, we need to give an exploration process that, through direct interaction with the MDP environment, generates enough \iid samples for each state-action pair to act as a generative model. Unfortunately, this is impossible as stated: one cannot hope to collect a uniform number of samples from each state-action pair in the episodic setting as some states may be inherently difficult (even impossible) to reach via interaction with the MDP. On the other hand, given a policy $\pi$, if $\pi$ almost never reaches a state $s$ on the MDP it must be that the choice of action on $s$ has almost no effect on $\pi$'s value. This means we might hope to deal with the above issue by simply ignoring the set of `hard to reach' states, and run our parallel sampling algorithm on the remainder from which we can collect enough samples.

This is exactly the core observation of \cite{jin2020reward,zhang2020task} in the so-called \textit{reward-free} RL model. 
In their work, \cite{jin2020reward,zhang2020task} define the `reachability' of a state $s \in S$, denoted $\delta(s)$, to be the maximum probability of reaching $s$ from the starting position under any policy.\footnote{Technically one defines reachability for each step $h \in H$, but we ignore this detail here for simplicity.} They observe that the states can be partitioned into `reachable' states with $\delta(s) \geq \frac{\varepsilon}{SH^2}$, and `ignorable' states with $\delta(s) < \frac{\varepsilon}{SH^2}$. An elementary argument shows that running backward induction over reachable states results in an $\varepsilon$-optimal policy. Finally, \cite{zhang2020task} show one can generate \iid samples from the reachable states by simulating \cite{jin2018q}'s $Q$-learning agent on a version of the MDP with zero reward. Namely, they show after $K$ episodes every state is sampled at least $\frac{K\delta(s)^2}{H^2SA}$ times.

This suggests a natural two step strategy for replicable RL: replicably compute the set of reachable states, then collect sufficient \iid samples from them to run replicable backward induction. This hits two immediate roadblocks. First, it is not at all a priori clear how to replicably generate the reachability partition (at least sample-efficiently). Second, even if it were, backward induction requires $SA$ samples per (reachable) state, so applying \cite{jin2020reward,zhang2020task}'s machinery directly results in the use of $O(\frac{S^4AH^7}{\varepsilon^4\rho^2})$ samples, a far cry from our desired dependence.\footnote{We remark \cite{zhang2020task} do give a more sample-efficient algorithm for task-agnostic RL than is implied by their reachability bounds (also via Q-learning on the zero-reward MDP), but their result does not come with any guarantee certifying reachability. Our method critically relies on identifying some form of reachable/ignorable states, so a priori we cannot use their improved analysis in our setting.}

\paragraph{Ignorable State Combinations:} Our main technical contribution is the introduction and analysis of a relaxed notion of `reachability partition' that side-steps both these issues. In particular, we design an algorithm \RepExplore that simultaneously \textbf{(i)} replicably identifies a relaxed \emph{ignorable state combination}, a collection of subsets of states $\{ \mathcal{I}_h \}_{h=1}^H$ such that for some optimal policy $\pi^*$
\begin{align}
\label{eq:intro-low-reachability}
\sum_{h \in [H]} \Pr_{\mathcal{M}}\lp[x_h \in \mathcal I_h \mid \pi^* \rp] \ll \frac{\eps}{H},    
\end{align}
and \textbf{(ii)} collects a sufficient number of samples for any state $x \notin \mathcal{I}_h$ at step $h$ to run replicable backward induction.\footnote{Note the `critical states' mentioned in the introduction are simply $\states \setminus \mathcal{I}_h$.}

It is worth briefly comparing this to \cite{jin2020reward,zhang2020task}'s reachability partition before moving on. Much like the standard partition, it is not hard to see property \textbf{(i)} ensures the total contribution of $\{\mathcal{I}_h\}_{h=1}^H$ to $\pi^*$'s value is at most $\varepsilon$, while property \textbf{(ii)} ensures we can find a policy nearly as good as $\pi^*$ on the remaining states via a modified backward induction. Critically, \textit{unlike} the reachability partition, ignorable state combinations can actually be generated sample-efficiently. This is subtle to show (and will be discussed in the following sections), but for the moment to highlight the difference at least observe that (given an $\varepsilon$-approximation to $\pi^*$) there is clearly a highly efficient \textit{verification} procedure to test if $\{ \mathcal{I}_h \}_{h=1}^H$ is an ignorable state combination (just run $\pi^*$ for roughly $\frac{H}{\varepsilon}$ episodes), whereas verifying a potential reachability partition is much more expensive (e.g.\ requiring at least $S^2\frac{H}{\varepsilon}$ episodes). The rest of this section is devoted to arguing this intuition carries over to \textit{generating} ignorable states, which can be done in only $\frac{SA}{\varepsilon^2}\cdot$Poly($H$) samples, along with our fractional rounding strategy that enforces replicability of the procedure up to an essentially tight (multiplicative) overhead of $O(\frac{S}{\rho^2})$ samples.

\paragraph{Design and Analysis of \RepExplore:} We now overview our exploration procedure \RepExplore. Our starting point is the concept of an \emph{under-explored} state.
Consider a (non-replicable) PAC policy estimator $\rlalg$ in the episodic setting. After an execution of $\rlalg$ on the MDP, we call a state $s \in \states$ \emph{under-explored} at step $h$ if $s$ has any action that was never taken by the agent at step $h$, and write $\mathcal U_h$ to denote the set of under-explored states of $\rlalg$ at step $h$.

Intuitively, it is reasonable to expect the set of under-explored states $\{ \mathcal U_h \}_h$ of an optimal policy estimator $\rlalg$ should be ignorable. Informally, if a state $s$ were `reachable' but has an untested action $a$, the reward of $(s,a)$ could contribute significantly to the value of the optimal policy while $\rlalg$ has no way to account for this. Moreover, any explored state has been visited by $\rlalg$ at least $A$ times, meaning we might reasonably expect to collect many samples from such states. The reality of course is not so simple, but we will show that the under-explored states of \cite{jin2018q}'s Q-learning agent after only $O(SA\cdot \poly{H})$ episodes on the zero-reward MDP indeed form an ignorable state combination, and moreover that running this procedure $O(S\cdot \poly{H})$ times suffices to generate enough samples for replicable backward induction.

This leaves the issue of replicability. The key is to realize we can use the above non-replicable process to compute a relatively `stable' \textit{fractional} ignorable state combination, then \textit{replicably round} the fractional solution back to a discrete one. In more detail, the idea is to compute a vector of estimates $\hat{\mu}_{s,h}$ of the probability that $s \in \mathcal U_h$ according to $\rlalg$. By running $\rlalg$ enough times we can get a highly concentrated estimate of this `mean' vector and round it to an actual solution for $\{\mathcal I_h\}$ by including each $s \in \mathcal I_h$ with probability $\hat{\mu}_{s,h}$ (we will argue below this procedure still outputs an ignorable combination with reasonable probability). The crux is that if over two runs our fractional estimates $\{\hat{\mu}^{(1)}_{s,h}\}$ and $\{\hat{\mu}^{(2)}_{s,h}\}$ are close, the resulting distributions over rounded state combinations $\{\mathcal I_h\}$ must be close as well, allowing us to apply correlated sampling to produce exactly the same collection with high probability.

We now cover each step of this strategy in greater details, split into three main parts
(i) replicability of $\{ \mathcal I_h \}$,
(ii) ensuring $\{ \mathcal I_h \}$ indeed has low-reachability under some optimal policy $\pi^*$ (\Cref{eq:intro-low-reachability}), 
and (iii) how to collect enough samples from critical states $x \not \in \mathcal I_h$ at step $h$.

\paragraph{Fractional Solutions and Replicable Rounding:} As discussed, to ensure replicability we apply correlated sampling over the ``average outcome'' of the under-explored states of $\rlalg$.
In particular, denote by $\mu_{s,h}$ the likelihood that $s \in \states$ is under-explored by $\rlalg$ at step $h$ in one full execution of the algorithm, and let $\mathcal B (\mu)$ denote the `rounded' distribution over $\{ \mathcal I_h \}_{h=1}^H$ given by independently adding each $s$ into $\mathcal I_h$ with probability $\mu_{s,h}$. A fairly elementary calculation shows that running $\rlalg$ roughly $k=\tilde \Theta(SH \rho^{-2})$ times suffices to get empirical estimates $\hat{\mu}^{(1)}$ and $\hat{\mu}^{(2)}$ over independent runs satisfying $d_{TV}(\mathcal B (\mu^{(1)}),\mathcal B (\mu^{(2)})) \leq \rho$, allowing us to apply correlated sampling to get the desired replicability (see \Cref{lem:sampling-concentration} for details). Since each run of $\rlalg$ consumes $\tilde \Theta( SA \cdot \poly{H} \eps^{-2} )$ many episodes,  this gives us a total sample complexity of
$\tilde \Theta( S^2 A \cdot \poly{H} \rho^{-2} \eps^{-2} )$.

\paragraph{Rounding Preserves Ignorability:} We now overview the argument that our candidate ignorable state combination $\mathcal I_h$ sampled in the way described above actually satisfies $ \sum_h \Pr[x_h \in \mathcal I_h  \mid \pi^* ] \ll \eps / H$ for some optimal policy $\pi^*$.
Since the distribution of $\mathcal I_h$ is defined according to the mean of the distribution over under-explored state combinations $\{ \mathcal U_h \}$ of $\rlalg$, the first step is to argue this would at least be true if we were to directly set $\{ \mathcal I_h\} = \{ \mathcal U_h \}$, i.e., there exists some optimal policy $\pi^*$ such that $\sum_h \Pr[x_h \in \mathcal U_h  \mid \pi^* ] \ll \eps / H$. In fact, we will give an agent whose under-explored states are ignorable with respect to \textit{every} policy with high probability. However this is by far the most involved part of the argument, so we defer it for the moment and assume for now the under-explored states of $\mathcal{A}_{rl}$ are indeed ignorable.

Given this assumption, observe that if we were to run $\rlalg$ $k$ times and output $\{ \mathcal I_h \}$ uniformly from the resulting
under-explored state combinations $\{ \mathcal U_h^{(1)} \}, \cdots, \{ \mathcal U_h^{(k)} \}$,
$\mathcal I_h$ would also be trivially ignorable 
under any policy with high probability. 
In reality, we are sampling 
$\{ \mathcal I_h \}$ from 
the product distribution $\mathcal B \lp(  \hat \mu \rp)$, where $\hat \mu$ are computed using $\{ \mathcal U_h^{(1)} \}, \cdots, \{ \mathcal U_h^{(k)} \}$.
Now, condition on an arbitrary collection of under-explored state combinations $\{ \mathcal U_h^{(1)} \}, \cdots, \{ \mathcal U_h^{(k)} \}$ that are ignorable, and denote by $\mathcal E$ the empirical distribution over them.
Though $\mathcal B \lp(  \hat \mu \rp)$ and $\mathcal E$ are by no means close, 
the key is to realize that the distributions do have the same \textit{marginals} by construction
since $\hat \mu_{s,h}$ is exactly the probability of $s \in \mathcal U_h$ for $\{ \mathcal U_h \} \sim \mathcal E$. In other words, for any state $x$, we have
$$
\Pr_{ \{ \mathcal U_h \} \sim \mathcal E  } \lp[    x \in \mathcal U_h \rp]
 = 
 \Pr_{ \{ \mathcal I_h \} \sim \mathcal B( \hat \mu )  } \lp[    x \in \mathcal I_h  \rp].
$$
Since reachability is defined as a linear sum over individual state probabilities, this means the expected reachability under $\mathcal B \lp(  \hat \mu \rp)$ and $\mathcal E$ are also identical. That is, for any policy $\pi$:\footnote{Below we write $\{x_h\} \sim \mathcal M^\pi$ to denote the random sequence of states the agent lands in following policy $\pi$.}
\begin{align*}
\underset{\{\mathcal I_h\} \sim \mathcal B \lp(  \hat \mu \rp)}{\mathbb{E}}\left[
\sum_h \Pr_{\{x_h\} \sim \mathcal{M}^\pi}\lp[  x_h \in \mathcal I_h \rp] \right] 
 &=\underset{\{x_h\} \sim \mathcal{M}^\pi}{\mathbb{E}}\left[\sum_h
 \underset{\{\mathcal I_h\} \sim \mathcal B \lp(  \hat \mu \rp)}{\Pr}\lp[  x_h \in \mathcal I_h  \rp] \right]
 \\
 &= \underset{\{x_h\} \sim \mathcal{M}^\pi}{\mathbb{E}}\left[\sum_h
 \underset{ \{ \mathcal U_h \} \sim \mathcal E }{\Pr}\lp[  x_h \in \mathcal U_h  \rp] \right]\\
 &=\underset{\{ \mathcal U_h \} \sim \mathcal E }{\mathbb{E}}\left[
\sum_h \Pr_{\{x_h\} \sim \mathcal{M}^\pi}\lp[  x_h \in \mathcal U_h  \rp] \right] \ll \frac{\varepsilon}{H} \, ,
\end{align*}
where the last inequality is by the conditioning that all $\{ \mathcal U_h^{(i)} \}$ are ignorable. It then follows from Markov's inequality that 
the ignorable state combination $\{ \mathcal I_h \}$ sampled according to $\mathcal B(\hat \mu)$ should have low-reachability under some optimal policy $\pi^*$ with high constant probability (we remark it is enough to achieve constant success since, as is typically the case, it is easy to later boost the success and replicability of the algorithm to $1-\delta$ and $\rho$ respectively at a $\text{polylog}(\delta^{-1}\rho^{-1})\rho^{-2}$ cost in sample complexity).

\paragraph{Optimistic Exploration in the Reward-Free MDP:} So far we have critically relied on the assumption that the under-explored state combination $\{ \mathcal U_h\}$ of $\rlalg$ is an ignorable state combination with high probability. We now discuss our implementation of $\rlalg$, an algorithm we call \QExplore, in more detail and sketch its analysis to this effect.

We first briefly recall the policy estimator from \cite{jin2018q}.
At a high level, for each tuple $(s,a,h) \in (\mathcal S, \mathcal A, [H])$, the algorithm maintains an estimate $Q_h(s,a)$ for the \textit{$Q^*$-value} $Q^*_h(s,a)$ (the maximum value achieved over policies starting at time-step $h$, state $s$, and playing action $a$). These estimates are initialized to $H$ (the maximum total reward of the MDP) and updated whenever the agent visits the state-action pair $(s,a)$ at step $h$.
In particular, at the $t$-th time of visiting $(s,a)$ at $h$, the algorithm updates according to the formula:
\begin{align*}
Q_h(x, a) \gets 
\lp(  1 - \frac{H+1}{H+t} \rp) Q_h(x,a)
+ \frac{H+1}{H+t} \lp( r + \max_{a'} Q_{h+1}(s_{\text{nxt}}, a') + 
\Theta(1)\; \sqrt{ \frac{H^3 \log(SAH)}{t}}
\rp) \, ,
\end{align*}
where $r$ denotes the reward received, and $s_{\text{nxt}}$ denotes the next state after transition. 
With these estimates in hands, the learner simply follows the policy that greedily picks  $\argmax_a Q_h(s,a)$ at each state.
The additive term $\sqrt{H^3 \log(SAH) / t}$ is known in the literature as \emph{optimistic bonus}.
This bonus is large at the beginning and gradually decreases as the visitation count $t$ for the tuple $(s,a,h)$ increases. 
Its primary role is to encourage the algorithm to optimistically explore states that are not yet fully explored.
The other term $r + \max_{a'} Q_{h+1}(s_{\text{nxt}}, a')$ aligns conceptually with the backward induction step in the parallel sampling setting.
As $t$ increases, the optimistic bonus vanishes
and one expects the estimate to converge to the optimal value $Q^*_h(x,a)$.
This intuition is formalized in the proof of Theorem 1 in \cite{jin2018q}, which employs martingale concentration techniques and a carefully constructed induction argument.

For \QExplore, we borrow the idea of \cite{jin2020reward,zhang2020task} and simply run \cite{jin2018q}'s algorithm but on a \emph{simulated} MDP environment where all rewards have been set to $0$, and otherwise has identical transition probabilities as the original MDP.
Intuitively, this further discourages exploitation and
encourages the RL algorithm to explore as many new states as possible.

Assume for simplicity our starting distribution over the MDP is supported on a single state $x_{\ini}$ (as it is easy to reduce to this case). The central notion in our analysis is the concept of the \textit{reachability function} $R_h^k(s)$, which roughly corresponds to the maximum likelihood over all policies of landing in some under-explored state (with respect to the first $k$ episodes) when starting from state $s$ at step $h$. In particular, our goal is then to prove strong quantitative bounds on the convergence of $R_1^k(x_{\ini})$ to $0$ as $k$ grows large, since this exactly states the probability of reaching the under-explored states under any policy is small as desired.

Denote by $Q_h^k(s,a)$ the $Q$ estimate of the tuple $(s,a,h)$ at the beginning of episode $k$. Our key lemma (see \Cref{lem:ucb-reachability}) shows that with high probability:
\begin{align}
\label{eq:intro-ucb}
R_h^k(s) \leq \max_a Q_h^k(s,a) \quad \forall k \geq 0, h \in [H], s \in \states.
\end{align}
The result then follows since \cite{jin2018q} show that $Q_1^k(x_\ini,a)$ converges to $Q^*(x_\ini, a)$ for the initial state $x_\ini$ after roughly $O\lp(SA \cdot \poly{H} \rp)$ episodes, but the rewards of our simulated MDP (and therefore also $Q^*(x_\ini, a)$) are $0$, guaranteeing rapid convergence of $R^k_1(x_{\ini})$ to $0$ as well.

It remains to show \Cref{eq:intro-ucb}.
If $s$ is under-explored, there is some action $a$ not taken at $s$ and the inequality follows (witnessed by $a$) as long as we set the initial optimistic bonus to be sufficiently large, i.e., on the order of $H$.
Otherwise, we take advantage of the fact that $Q_h^k(s,a)$ is always updated as an affine combination of $Q_h^{k-1}(s,a)$ and $ Q_{h+1}^{k-1}(s',a') $, where $(s',a')$ is the next state-action pair taken by the agent in step $h+1$ of episode $k$, 
and $R_h^k(s)$ is a monotonically decreasing function in $k$ (since the under-explored sets never grow). This allows us to use martingale concentration and a careful induction on $k$ to conclude the proof.
The full argument is quite technical but conceptually similar to the original argument in \cite{jin2018q} showing that $Q_h^k(s,a)$ is an upper confidence bound on $Q_h^*(s,a)$ so we omit the details here.

\paragraph{Non-uniform Sample Collection} Our final step is to ensure we can collect sufficiently many samples from each non-ignorable state $s \notin \{\mathcal I_h\}$ to learn an $\varepsilon$-optimal policy by backward induction.
Since the sample collection process for different time steps can be analyzed independently, for simplicity, we focus just on the last step $h = H$ in the overview.

Initially, a reasonable goal would be to collect roughly $S \cdot \poly{H} \eps^{-2} \rho^{-2} $ many samples from each state-action pair over the non-ignorable states, since this is needed for our replicable algorithm in the parallel sampling setting.
Unfortunately, this is not realizable in our current framework. To see this, consider even the simplest non-trivial setting for policy estimation: $A=2$ and $H=1$. We may construct an MDP such that there is a subset $\mathcal G \in \states$ of half the states each with probability $\Theta \lp( \frac{1}{S^{1.5}} \rp)$ of occurring under the starting distribution. Note that since our total sample budget is only $\tilde{O}(S^2)$, we cannot hope to collect $\Omega(S)$ samples from any fixed $s \in \mathcal{G}$ with non-vanishing probability. 

On the other hand, we argue it is actually reasonably likely that some $s \in \mathcal{G}$ ends up \textit{outside} the final ignorable state combination $\mathcal{I}_H$ sampled, therefore requiring us to collect sufficiently many samples from $s$. To see this, first note that since the final state combination $\mathcal{I_H}$ is sampled from the product distribution $\mathcal B(\hat{\mu})$, the expected number of $s \in \mathcal{G}$ outside $\mathcal{I}_H$ is exactly
\[
\mathbb{E}[|\mathcal G \setminus \mathcal I_H|] = \sum_{s \in \mathcal{G}} (1-\hat{\mu}_{s,H}),
\]
or in other words, the average number of $s \in \mathcal{G}$ fully explored by $\QExplore$ per run (where the randomness is over the MDP). We argue the right-hand side is at least $\Omega(1)$. Namely, since \QExplore consumes $\Omega(S)$ samples per run, the probability of seeing any individual $s \in \mathcal{G}$ in a given run is at least $\Omega(S^{-1/2})$, and by the birthday paradox we are likely to see some $s \in \mathcal{G}$ twice over the $S$ runs in total.
Since $A=2$, such $s$ are considered `fully explored', so the average number of explored states in $\mathcal{G}$ per run is $\Omega(1)$ as claimed.

To circumvent the issue, we exploit the fact that our multi-instance best-arm (and therefore backward induction) algorithm actually works under the weaker assumption that it receives a \emph{variable} number $m_s$ of samples per state-action pair $(s,a)$ so long as $\{m_s\}$ jointly satisfy the inequality
\begin{align}
\sqrt{\sum_{s \in \states \setminus \mathcal I_H} \frac{1}{m_s}} \ll \eps \rho,    
\end{align}
where the left hand side roughly corresponds to the total variation distance between the (joint) exponential mechanism distributions from which the final outputs are sampled across two runs of the algorithm.

Now, let us re-examine the sample collection process for the set of states $\mathcal G$.
Though there may be a few states in $s \in \mathcal G \backslash \mathcal I_H$, it is not hard to show there will be at most $\lp| \mathcal G \backslash \mathcal I_H \rp| \leq O(\sqrt{S})$ with high probability. This means we can collect fewer samples for these states, i.e., we can set their $m_s$ to $\tilde{O}(\sqrt{S} \eps^{-2} \rho^{-2})$, while maintaining
$ \sqrt{\sum_{ s \in \mathcal G \backslash \mathcal I_H } 1/m_s} \leq O\lp(  \eps \rho \rp)$.
This sample collection goal can be easily achieved --- each $s \in \mathcal{G}$ has probability $S^{-1.5}$ of occurring, so in $\tilde \Theta(S^2 \eps^{-2} \rho^{-2})$ episodes we are indeed likely to collect $\Theta(\sqrt{S} \eps^{-2} \rho^{-2})$ such samples as desired.

More generally, we will actually set the sample budget $m_s$ at step $h$ to be roughly $S \eps^{-2} \rho^{-2} \cdot \poly{H} \cdot (1 - \mu_{s,h})$. Recall that $(1 - \mu_{s,h})$ denotes the likelihood of a state $s$ being explored by $\QExplore$. 
It then follows from standard concentration arguments that such a sample collection goal can be achieved by running $\QExplore$ for $\tO{S \eps^{-2} \rho^{-2} \cdot \poly{H}}$ many times.
Though we are now collecting fewer samples from states that are less likely to be explored by $\rlalg$, the fact that they are less likely to be explored also ensures that they will only be excluded from $\{ \mathcal I_H \}$ with small probability. 
Thus, we can argue that on average the 
bound $\sqrt{\sum_{ s \in \mathcal S \backslash \mathcal I_h } 1/m_s} $ (where the randomness is over $\mathcal I_h$) remains small, ensuring that the sample size requirement of the multi-instance best-arm algorithm is met.

\paragraph{Optimize $\eps$ dependency via Tiered State Partition} A subtle issue in the above process is that if some state is reached with probability $\sqrt{\eps}$ (regardless of the policy), we cannot achieve $\varepsilon$-optimality ignoring the state, but we also can't collect enough samples from it to pick an $\eps$-optimal action (namely this would require the algorithm to consume at least $\eps^{-2.5}$ episodes).

To get (near)-optimal dependence on $\eps$, we observe that if a set of states $\mathcal G$ satisfies $\Pr[ x_H \in \mathcal G ] \leq \beta$,\footnote{Again, we focus on just the last step for simplicity.} we really only need to learn some $\beta^{-1} \eps/H$-optimal action for every $s \in \mathcal G$.
Algorithmically, we realize this by partitioning the states $\mathcal{S}$ into $\log(1/\varepsilon)$ `tiers' based on reachability. At the $\ell$th tier, states have reachability roughly $2^{-\ell}$, and we only need to collect a sufficient number of samples for our  multi-instance best-arm algorithm to select a $\frac{2^{\ell} \varepsilon}{H\log(1/\varepsilon)}$-optimal action. In this way, we can ensure that the amount of rewards we fail to collect due to sub-optimal action choice within each tier is at most $\eps / \log(1/\eps)$ while keeping the $\eps$ dependency of our sample complexity to $\tilde O(\eps^{-2})$.

\paragraph{Tiered Backward Induction} Above we have claimed it is sufficient to estimate the reward of the $\ell$th-tier states, i.e., those reachable with probability roughly $2^{-\ell}$ under $\pi^*$, only to error $\frac{2^{\ell} \varepsilon}{H\log(1/\varepsilon)}$. 
In reality, this requires some non-trivial modification to the standard backward induction method to ensure the final solution still converges to the optimal policy.

In particular, recall in each step of backwards induction along horizon $h$ we assign a modified reward distribution $r_{h}(s, a) + \Ep_{s' \in \distribution_{h}(s, a)}[V^{\pi}_{h+1}(s')]$ to each state-action pair $(s,a)$, where $\pi$ is some near-optimal policy for MDP steps beyond $(h+1)$ found in earlier induction steps. 
In our tiered backwards induction, we additionally impose a `pessimistic penalty' on each state $s'$ that corresponds roughly to the confidence bound we have on the estimate of $V^{\pi}_{h+1}(s')$.
In particular, for each state $s'$ lying in the $\ell$-th tier, we subtract out a term $ p_{s'} := \frac{2^{\ell} \varepsilon}{H \log(1/\eps)}$ from our estimate of $V^{\pi}_{h+1}(s')$ before we use it to setup the reward distribution for the $h$-th step.  
Intuitively, without the penalty, there is a chance that the best-arm algorithm could exploit arms with significantly overestimated values, leading to a policy that visits certain tiers more often than the optimal policy $\pi^*$. Consequently, the actual value realized for the policy may end up being much less than our estimate.
The pessimistic penalty prevents this from happening 
by explicitly discouraging the algorithm from picking arms with less confident estimates.

We next sketch the proof with more technical detail.
As an analysis tool, we consider a ``truncated MDP'' $\mdp^{(h)}$ that is identical to $\mdp$ in the first $h-1$ steps, gives a reward of $r_{h}(s, a) + \Ep_{s' \in \distribution_{h}(s, a)}[V^{\pi}_{h+1}(s') - p_{s'}]$ at step $h$, and then terminates immediately.
In this view, the pessimistic penalty $p_{s'}$ guarantees that for any policy $\pi$ (including the algorithm's estimated policy $\hat{\pi}$) defined with respect to the original MDP $\mdp$, its estimated reward\footnote{Technically, though the time horizons of $\pi$ and $\mdp^{(h)}$ are not compatible, we abuse the notation and write $V(\pi, \mdp^{(h)})$ to denote
the rewards one would collect from $\mdp^{(h)}$ by following only the first $h$ steps of $\pi$.
} $V(\pi, \mdp^{(h)})$ on $\mdp^{(h)}$ is an \textit{under-estimate} of its true value $V(\pi, \mdp)$ on the full MDP $\mdp$.
On the other hand, since the optimal policy rarely visits any states with large penalty, its value should be largely unaffected by such pessimistic truncation.
Formally, we show via induction that we must have
$V\lp(\pi^*, \mdp^{(1)} \rp) 
\geq V\lp(\pi^*, \mdp \rp) - O(\eps)$ with high probability.
Combining the observations then allows us to conclude that
\begin{equation*}
    V(\hat{\pi}, \mdp) \geq V(\hat{\pi}, \mdp^{(1)}) \geq V(\pi^*, \mdp^{(1)}) - O(\varepsilon) \geq V(\pi^*, \mdp) - O(\varepsilon) \, ,
\end{equation*}
where the middle inequality is due to the (near-)optimality of the arm chosen at step $1$.

\subsection{Lower Bounds}

Finally, we briefly overview our lower bounds for the parallel sampling and episodic models, which follow via a reduction from the so-called \textit{sign-one-way marginals} problem: 
given sample access to a product of Rademacher distributions over $\{\pm 1\}^{n}$ with mean $p=(p_1,\ldots,p_{n})$, 
compute a vector $v \in [-1,1]^{n}$ satisfying
$\frac{1}{n}\sum\limits_{i=1}^n v_jp_j \geq \frac{1}{n}\sum\limits_{i=1}^n |p_j| - \varepsilon$.
Our first result (\Cref{thm:sign-one-way}) shows that any $\rho$-replicable $\eps$-accurate algorithm for the sign-one-way marginals problem requires $\tOm{n \rho^{-2} \eps^{-2}}$ (vector) samples. This resolves an open problem of \cite{bun2023stability}, improving upon their bound of $\tOm{n}$.

Our lower bound for sign-one-way marginals itself follows from a reduction to the so-called \textit{$n$-coin problem}: given sample access to a product of Rademacher distributions, compute $v \in \set{\pm 1}^{n}$ such that $v_i = \sgn(p_i)$ whenever $|p_i| \geq \eps$. In particular, while sign-one-way marginals requires small \emph{average} error, the $n$-coin problem establishes a bound on the \emph{maximum} error, and has been characterized to have sample complexity $\tilde \Theta(n \rho^{-2} \eps^{-2} )$ in \cite{hopkins2024replicability}. 
Leveraging the connection and the known hard instance from \cite{hopkins2024replicability},
we design a reduction that forces the error of the sign-one-way marginals algorithm to be spread over the coordinates.
To ensure this, we randomly shuffle the input distribution in a way that ensures all coordinates are indistinguishable to the sign-one-way marginals algorithm, forcing every coordinate to have low error on average, and boost this by repeating the process and taking a majority vote (see \Cref{lemma:sign-one-way-to-ell-inf} for details). The lower bound then follows via \cite{hopkins2024replicability}'s $\tOm{n \rho^{-2} \eps^{-2}}$ bound for the coin problem.

Finally, we need to reduce sign-one-way marginals to reinforcement learning. We start with the parallel sampling setting. We describe the construction with $A = 2$ but note that it can be generalized to any $A$. 
Consider an instance of the $S$-dimensional sign-one-way marginal problem. 
We can define a single-step MDP $\mdp$ with $S$ states and action set $\set{a_{+}, a_{-}}$ with expected reward $+p_{s}$ and $-p_{s}$ respectively for each state $s$. 
It is not hard to see that 
we can simulate a parallel sample from $\mdp$ by generating two vector samples for the sign-one-way marginal problem $w^{(1)}$ and $w^{(2)}$, and taking the rewards for $(s,a_+)$ as $w^{(1)}_{s}$ and the reward for $(s,a_{-})$ as $-w^{(2)}_{s}$.
Conversely, given a policy $\pi$ for $\mdp$, we can convert $\pi$ to a solution to sign-one-way marginals by outputting $v$ such that $v_{s} = +1$ if $\pi(s) = a_{+}$ and $v_{s} = -1$ otherwise. 

It is now enough to observe that the value of the policy $\pi$ is exactly $\frac{1}{S}\sum_s p_sv_s$ while the value of the optimal policy (which chooses $a_+$ if $p_s$ is positive, and $a_{-}$ if $p_s$ is negative) is $\frac{1}{S} \sum\limits_s |p_s|$. 
Thus given an $\varepsilon$-optimal policy $\pi$ we have:
$
    \frac{1}{S} \sum_{s} p_{s} v_{s} = V(\pi, \mdp) \geq V(\pi^*, \mdp) - \varepsilon = \frac{1}{S} \sum_{s} |p_{s}| - \varepsilon
$,
or in other words $v$ is an $\varepsilon$-accurate solution to sign-one-way marginals, completing the reduction.\footnote{Technically, we defined MDPs to only have positive rewards and have used negative rewards here. This is just for exposition's sake -- we can always shift the rewards from $r$ to $\frac{r+1}{2}$ to make them positive and run the same argument.}

To obtain the lower bound for the episodic setting, consider the same MDP as above with uniform starting distribution (and, more generally, uniform transitions for the improved $H$-dependent bound). By standard concentration an agent that consumes $m$ episodes observes each state (roughly) $\frac{m}{S}$ times with reasonable probability, thus the samples can be simulated with $\frac{m}{S}$ many parallel samples, which itself can be simulated by $2 \frac{m}{S}$ vector samples from the underlying Rademacher product  distribution. 
Since any algorithm solving the $S$-dimensional sign-one-way marginals problem requires $\tOm{S \rho^{-2} \eps^{-2}}$ many samples,  we must have $m/S = \tOm{S \rho^{-2} \eps^{-2}}$, and a $\tOm{S^2 \rho^{-2} \eps^{-2}}$ lower bound in the episodic setting thereby follows.

\subsection{Further Related Work}\label{sec:related-work}

\paragraph{Replicability in Learning} Replicability was introduced in \cite{impagliazzo2022reproducibility,ghazi2021user}, building on earlier notions of stability \cite{bousquet2002stability} and differential privacy \citep{dwork2006calibrating,bun2020equivalence}. Efficient replicable algorithms are known many statistical tasks including, e.g., statistical queries \citep{impagliazzo2022reproducibility}, bandits \citep{esfandiari2022replicable,komiyama2024replicability,ahmadi2024replicable,zhang2024replicable}, hypothesis testing \citep{hopkins2024replicability,liu2024replicable}, PAC learning \citep{bun2023stability,hopkins2025role}, reinforcement learning \citep{eaton2024replicable,karbasi2023replicability}, and more \citep{esfandiari2024replicable,kalavasis2024replicable}. Besides \cite{eaton2024replicable,karbasi2023replicability}, our work is most related to the bandits setting (essentially the case of $S=1$). These works consider a stronger notion of replicability requiring the algorithm to play the same \textit{sequence} of actions. This does not apply in the general MDP setting, where the main difficulty is that the agent may be sent to (or start at) completely different states even upon taking the same action.

\paragraph{Replicability in RL} 

Our work builds on \cite{eaton2024replicable,karbasi2023replicability}, who first studied replicable policy estimation in MDPs. 
In the parallel sampling setting, \cite{eaton2024replicable,karbasi2023replicability} both consider the (discounted) infinite horizon setting in which the value of a policy is measured by interacting \textit{indefinitely} with the MDP with rewards discounted (multiplicatively) in each step by a $\gamma$-discount factor. It is easy to reduce $\varepsilon$-optimal policy estimation for $\gamma$-discounted MDPs to the finite horizon case with $H = \tilde O \lp( \log(1/\eps) / (1 - \gamma)  \rp)$, so our results also hold in their setting.

At a technical level, our basic (non-tiered) backward induction is similar to \cite{eaton2024replicable}'s replicable phased value iteration, with the main difference being our use of multi-instance best-arm to optimize the sample complexity. In the parallel sampling setting, \cite{karbasi2023replicability} also use the idea of `bootstrapping' a non-replicable algorithm to a replicable one. They estimate the value of each state-action pair non-replicably and replicably round to the output vector. 
A priori, it is not at all clear how to perform such a procedure in the episodic setting. Our main contribution really lies in formalizing a new notion of exploration, sample collection, and (fractional) rounding that allows this type of bootstrapping, along with showing it can be achieved sample efficiently.

Finally, \cite{karbasi2023replicability} also explore sample lower bounds for policy estimation. Conditional on a conjectured lower bound for replicably estimating the bias of $N$ coins,\footnote{\cite{hopkins2024replicability} made progress on this conjecture, but does not achieve the full strength required by
\cite{karbasi2023replicability}.} they show an $\Omega(S^2)$ sample lower bound in the parallel setting against outputting a policy that is nearly optimal on every state (a stronger algorithmic guarantee than policy estimation). In contrast, our bound holds unconditionally against any algorithm that replicably outputs an $\varepsilon$-optimal policy.

\paragraph{Replicability and Differential Privacy} Another line of work has focused on relating replicability to other established notions of stability in the literature, most notably differential privacy \citep{bun2023stability,kalavasis2023statistical,moran2023bayesian,chase2023replicability}. For batch learning problems, \cite{ghazi2021user,bun2023stability} show it is generally possible to `losslessly' convert a replicable algorithm to a private one, and convert a private algorithm to a replicable one at the cost of quadratic blowup in sample complexity. In the MDP setting, it is likely (though not entirely clear) that a version of the private-to-replicable transform still holds. In this case, since the best known private RL algorithms use $O(S^2AH^3)$ samples \citep{vietri2020private,qiao2023near}, such a transform would result in a replicable algorithm on $O(S^4A^2H^6)$ samples, an improvement over \cite{eaton2024replicable} but still far from our desired near-optimal dependence.

In the other direction, applying \cite{bun2023stability}'s replicable-to-private transform to our algorithm indeed results in an $(\alpha,\beta)$-approximately differential private PAC policy estimator on $O(S^2A\frac{\log(1/\beta)}{\alpha})$-samples (in fact, one achieves the stronger notion of \textit{user-level} privacy \citep{ghazi2021user}). This is incomparable to prior work, which achieves (pure) \textit{joint} differential privacy with a similar number of samples.\footnote{In essence, joint DP promises privacy even against other users in the training data set. The notion of privacy arising from a replicable-to-DP transform is weaker in that it only protects against outside users after the policy is released, but stronger in that it protects against swapping out larger `batches' of user training data.}  It would be interesting to see whether our techniques could be applied directly to achieve sub-quadratic or even linear dependence on the state space $S$ in either of these settings.

\subsection{Open Questions}

We finish the section with a few natural open question arising from our work.
To start, perhaps the most obvious remaining question is to close the gap between our upper bound of $\tO{S^2 A}$ (\Cref{thm:repl-reinforcement-learning-alg}) and unconditional lower bound of $\tOm{S^2}$ (\Cref{thm:episodic-rl-lb}).
The gap in sample complexity arises from the fact that our algorithm in the (non-parallel) generative model setting samples every arm equally, while our lower bound based on sign-one-way marginals constructs an MDP in which the agent can readily discard all but $2$ actions for each state.
Can we design more sample efficient algorithms in general instances by adaptively choosing the sample size of each arm? Or can sign-one-way marginals (or another relevant task) be generalized beyond the Boolean domain to induce a dependency on $A$ in the lower bound?

Finally, while we have not focused on optimizing the dependency on the horizon $H$ in this work, improving it remains an interesting and challenging direction (see, e.g., \cite{zhang2024settling} for a discussion of work on this problem in the non-replicable case).

\section{Preliminaries}\label{sec:prelims}

\paragraph{Markov Decision Process}
Let $\mdp$ be a Markov Decision Process (MDP) with state space $\states$, action set $\actions$, and $H$ time steps.
Given a collection of subsets of states $\{ \states_h \}_{h=1}^H$ indexed by the time step, we call them a \emph{state combination}.
We denote by $\distribution_0$ the initial distribution over starting states,
$\distribution_{h}(s, a) \in \Delta\lp( \mathcal S\rp)$ the transition distribution after taking action $a$ at state $s$ in the $h$-th time step, and $\rewD_{h}(s, a) \in \Delta\lp( [0, 1] \rp)$ the distribution of the reward received after taking action $a$ at state $s$ at the $h$-th time step.
Unless otherwise specified, we assume that the initial distribution $\distribution_{0}$ is supported on a fixed state $x_\ini$.
We remark that this is without loss of generality as
a more general MDP with randomized starting state can be simulated by a ``step $0$'' where the transitions of all state-action pairs are set to be the desired initial state distribution.

Sometimes we view $\distribution_{h}(s, a)$ as an $S$-dimensional vector encoding the corresponding transition probabilities.
Under this interpretation, we define $\distribution_{H}(s,a)$ to be the $\mathbf 0$ vector for all $(s, a) \in \states \times \actions$ since the process terminates after the $H$-th step.
Similarly, we define $\rewD_{H + 1}(s, a)$ to be deterministically $0$ for all $(s, a)$.

A policy $\pi$ is a collection of functions $\{ \pi_h \}_{h=1}^H$, where we have $ \pi_h: \states \rightarrow \actions$.
Given a policy $\pi$ on $\mdp$, the value function of the policy, denoted $V(\pi, \mdp)$ is defined as the cumulative expected reward obtained by following policy $\pi$ on $\mdp$.
A policy $\pi$ is $\varepsilon$-optimal if its value is at most $\varepsilon$ less than the maximum value of any policy, i.e. $V(\pi, \mdp) \geq \max_{\pi'} V(\pi', \mdp) - \varepsilon$.

We write $X_{h}^{\pi}$ for the distribution of the state the agent is in on the $h$-th time step following policy $\pi$ on the $\mdp$.
Given a set of states $S$, we define $\Pr[ x_h \in S \mid \pi ] := \Pr_{ x \sim X_{h}^{\pi} }[ x \in S]$.

In this work, we consider two models of sample access to the $\mdp$.
The first is the simpler parallel sampling model, where the algorithm has sample access to an oracle $\PS$ which simultaneously provides a sample from all transition and reward distributions.

\begin{definition}[Parallel Sampling]
    \label{def:parallel-sampling}
    A single call to $\PS$ will return, for every tuple $(s,a,h) \in \mathcal S \times \mathcal A \times [H]$, an \iid sample $(s',r)$, where $s' \sim \mathbf p_h(s,a)$ and $r \sim \mathbf r_h(s,a)$.
\end{definition}

In the second episodic setup, the algorithm must interact with the MDP $\mdp$ in episodes, only observing state-action pairs corresponding to its trajectory in a given episode.
To facilitate the discussion of replicability in this setup, we formulate the episodic process in an equivalent but slightly non-standard way where a parallel sample is drawn in each step but the agent can only read the sample in its current location.
\begin{definition}[Episodic Sample Access]
    \label{def:episodic-sampling}
    Let $T:= \set{(t_{s, a, h}, r_{s, a, h})}_{s \in \states, a \in \actions, h \in [H]}$, where $t_{s, a, h} \sim \distribution_{h}(s, a), r_{s, a, h} \sim \rewD_{h}(s, a)$, be a parallel sample from $\PS$.
    We say a reinforcement learning agent (RL Agent) 
    has episodic access to $T$ if it reads from $T$ in an interactive way as follows:
    \begin{enumerate}
        \item The agent starts at a fixed initial state $s_1 := x_\ini$.
        \item For each $h \in [H]$, 
        the agent chooses action $a$, and observes the data $t_{s_h, a, h}$.
        \item After the process terminates, the parallel sample $T$ is deleted.
    \end{enumerate}
    Let $\innerAlg$ be an episodic PAC policy estimator, and $\mathcal T$ be a batch if \iid samples from $\PS$.
    We write $\innerAlg(\mathcal T;\xi )$ to denote the output of the estimator when it is given episodic access to the samples in $\mathcal T$ and uses the string $\xi$ as its internal random seed.
\end{definition}

Our goal is to obtain an algorithm that replicably finds near-optimal policies with episodic sample access.

\begin{definition}
    \label{def:episodic-sampling-rl-alg}
    Let $C$ be the class of MDPs with $S$ states, $A$ actions, and $H$ time steps.
    An algorithm is an $(\varepsilon, \delta)$-PAC policy estimator with sample complexity $m \cdot H$ if for all fixed MDPs $\mdp$, given $m$ episodic samples, the algorithm with probability at least $1 - \delta$ outputs an $\varepsilon$-optimal policy $\pi$.
\end{definition}

\paragraph{Replicability}
We recall the definition of replicability in the standard batch setting.

\begin{definition}
    \label{def:replicability}
    A randomized algorithm $\innerAlg$ is $\rho$-replicable if 
    $
        \Pr_{\xi, T, T'} \left( \innerAlg(T; \xi) = \innerAlg(T'; \xi) \right) \geq 1 - \rho,
    $ for all distributions $\distribution$,
    where $T, T'$ are \iid samples taken from $\distribution$ and $\xi$ denotes the internal randomness of the algorithm $\innerAlg$.
\end{definition}
A replicable PAC Policy Estimator is defined analogously as follows.
\begin{definition}[Formal Version of Replicable PAC Policy Estimation]
    \label{def:repl-policy-estimate}
    An episodic (resp. parallel) $(\varepsilon, \delta)$-PAC policy estimator $\innerAlg$ is called $\rho$-replicable if for every MDP $\mdp$, $\Pr_{\xi, \mathcal T, \mathcal T'}(\innerAlg(\mathcal T; \xi) = \innerAlg(\mathcal T'; \xi)) \geq 1 - \rho$, where $\mathcal T, \mathcal T'$ consist of independent i.i.d. parallel samples from $\PS$ and $\xi$ denotes the internal randomness of the algorithm $\innerAlg$.
\end{definition}

\paragraph{Distribution Divergence and Distance }
In the following, let $X, Y$ be random variables.
Unless otherwise specified, the random variables are over domain $\domain$.

\begin{definition}
    \label{def:tvd}
    The \emph{total variation distance} between $X$ and $Y$ is
    \begin{equation*}
        \tvd{X, Y} = \frac{1}{2} \sum_{x \in \domain} \left| \Pr(X = x) - \Pr(Y = x) \right| .
    \end{equation*}
\end{definition}

\begin{definition}
    \label{def:k-l-divergence}
    The \emph{KL-divergence} of $X$ from $Y$ is
    \begin{equation*}
        \KL(X||Y) = \sum_{x \in \domain} \Pr(X = x) \log \frac{\Pr(X = x)}{\Pr(Y = x)} .
    \end{equation*}
\end{definition}
We mainly use KL divergence due to the following useful tensorization lemma.
\begin{lemma}
    \label{lemma:kl-additivity}
    Let $X_{1}, \dotsc, X_{n}, Y_{1}, \dotsc, Y_{n}$ be independent random variables over $\domain$.
    Define the product random variables over $\domain^{n}$ as $X = (X_{1}, \dotsc, X_{n})$ and $Y = (Y_{1}, \dotsc, Y_{n})$.
    Then,
    \begin{equation*}
        \KL(X||Y) = \sum_{i = 1}^{n} \KL(X_{i}||Y_{i}) .
    \end{equation*}
\end{lemma}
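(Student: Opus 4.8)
The plan is to prove the identity by direct expansion, exploiting that independence makes the joint probability mass functions factor. First I would note that, for any $x = (x_1,\dots,x_n) \in \mathcal X^n$, we have $\Pr(X = x) = \prod_{i=1}^n \Pr(X_i = x_i)$ and $\Pr(Y = x) = \prod_{i=1}^n \Pr(Y_i = x_i)$ by the definition of the product random variables together with independence of the $X_i$ (resp.\ the $Y_i$). Substituting these into the definition of $\KL(X\|Y)$ and using $\log\prod_i a_i = \sum_i \log a_i$, the logarithmic factor turns into $\sum_{j=1}^n \log\frac{\Pr(X_j = x_j)}{\Pr(Y_j = x_j)}$, so that $\KL(X\|Y) = \sum_{x}\bigl(\prod_i \Pr(X_i = x_i)\bigr)\sum_{j=1}^n \log\frac{\Pr(X_j = x_j)}{\Pr(Y_j = x_j)}$.

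Next I would interchange the sums over $j \in [n]$ and over $x \in \mathcal X^n$ and, for each fixed $j$, factor the inner sum across coordinates. The summand $\bigl(\prod_i \Pr(X_i = x_i)\bigr)\log\frac{\Pr(X_j = x_j)}{\Pr(Y_j = x_j)}$ depends on the coordinates $x_i$ with $i \neq j$ only through the factor $\Pr(X_i = x_i)$, so summing those coordinates out contributes $\prod_{i\neq j}\bigl(\sum_{x_i}\Pr(X_i = x_i)\bigr) = 1$. What remains is exactly $\sum_{x_j}\Pr(X_j = x_j)\log\frac{\Pr(X_j = x_j)}{\Pr(Y_j = x_j)} = \KL(X_j\|Y_j)$, and summing over $j \in [n]$ yields the claim.

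There is no substantive obstacle here; the only points deserving a word of care are the standard KL conventions: one reads $0\log 0 = 0$ and views the identity as holding in $[0,\infty]$, so it remains valid when some $\KL(X_j\|Y_j) = \infty$ (equivalently when $\Pr(Y_j = x_j) = 0 < \Pr(X_j = x_j)$ for some $x_j$, in which case both sides are $+\infty$). If $\mathcal X$ is infinite, the interchange of summations and the coordinate-wise factoring are justified by Tonelli after splitting each KL term into its positive and negative parts (all the relevant series of absolute values converge). An alternative is induction on $n$ via the chain rule $\KL\bigl((X_1,X')\,\|\,(Y_1,Y')\bigr) = \KL(X_1\|Y_1) + \KL(X'\|Y')$ for independent coordinates, but this reduces to the same computation, so I would present the one-shot expansion above.
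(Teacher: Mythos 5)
Your proof is correct: the one-shot expansion using the factorization of the joint pmf, swapping the order of summation, and marginalizing out the coordinates $i \neq j$ is the standard tensorization argument, and your remarks on the $0\log 0$ convention, the $+\infty$ case, and Tonelli for infinite $\domain$ cover the only technical caveats. The paper itself states this lemma as a well-known fact without proof, so there is nothing to compare against beyond noting that your argument is exactly the canonical one a textbook would give (the chain-rule induction you mention is the usual alternative, and as you say it reduces to the same computation).
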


\begin{definition}
    \label{def:chi-2-divergence}
Let $X,Y$ be random variables on $\mathcal X$.    
    The \emph{$\chi^2$-divergence} of $X$ from $Y$ is
    \begin{equation*}
        \chi^2(X||Y) = \sum_{x \in \supp(Y)} \frac{\left(\Pr(X = x) - \Pr(Y = x)\right)^2}{\Pr(Y = x)}.
    \end{equation*}
\end{definition}
Finally, the following relations between deviation measures is well known.
\begin{lemma}[Divergence Relationships]
    \label{lemma:divergence-relationships}
Let $X,Y$ be random variables on $\mathcal X$.
    \begin{equation*}
        2 \tvd{X, Y}^2 \leq \KL(X||Y) \leq \chi^2(X||Y) .
    \end{equation*}
\end{lemma}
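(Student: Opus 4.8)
The final statement to prove is \Cref{lemma:divergence-relationships}, the chain of inequalities $2\,d_{\mathrm{TV}}(X,Y)^2 \le \KL(X\|Y) \le \chi^2(X\|Y)$. This is a classical fact (Pinsker's inequality together with the KL-vs-$\chi^2$ comparison), so the plan is to give the standard textbook-style proof rather than anything novel.

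\textbf{Approach.} I would prove the two inequalities separately, starting from the right one since it is the shorter. For $\KL(X\|Y) \le \chi^2(X\|Y)$: write $p(x) = \Pr(X=x)$ and $q(x) = \Pr(Y=x)$, and observe
\[
\KL(X\|Y) = \sum_x p(x)\log\frac{p(x)}{q(x)} = \sum_x q(x)\cdot \frac{p(x)}{q(x)}\log\frac{p(x)}{q(x)}.
\]
Using the elementary inequality $t\log t \le t^2 - t$ for $t \ge 0$ (which follows from concavity of $\log$, since $\log t \le t - 1$), applied with $t = p(x)/q(x)$, gives $\frac{p(x)}{q(x)}\log\frac{p(x)}{q(x)} \le \big(\frac{p(x)}{q(x)}\big)^2 - \frac{p(x)}{q(x)}$. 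Multiplying by $q(x)$ and summing, and using $\sum_x p(x) = \sum_x q(x) = 1$, yields $\KL(X\|Y) \le \sum_x \frac{p(x)^2}{q(x)} - 1 = \sum_x \frac{(p(x)-q(x))^2}{q(x)} = \chi^2(X\|Y)$ after expanding the square. (One should be a little careful about the support: terms with $q(x)=0$ force $p(x)=0$ for $\KL$ to be finite, matching the convention in \Cref{def:chi-2-divergence} that the sum ranges over $\supp(Y)$.)

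\textbf{Pinsker's inequality.} For $2\,d_{\mathrm{TV}}(X,Y)^2 \le \KL(X\|Y)$, I would use the standard reduction to the two-point case. Let $A = \{x : p(x) \ge q(x)\}$, so that $d_{\mathrm{TV}}(X,Y) = p(A) - q(A)$. By the data-processing inequality for KL divergence (or, to keep the proof self-contained, by the log-sum inequality applied to the partition $\{A, A^c\}$), $\KL(X\|Y) \ge \KL(\mathrm{Bern}(p(A)) \,\|\, \mathrm{Bern}(q(A)))$. It then suffices to show $\KL(\mathrm{Bern}(a)\|\mathrm{Bern}(b)) \ge 2(a-b)^2$ for all $a,b \in [0,1]$, which is a one-variable calculus exercise: fix $a$, define $f(b) = \KL(\mathrm{Bern}(a)\|\mathrm{Bern}(b)) - 2(a-b)^2$, check $f(a) = 0$, and show $f'(b) = (b-a)\big(\frac{1}{b(1-b)} - 4\big) \le 0$ for $b \le a$ and $\ge 0$ for $b \ge a$ using $b(1-b) \le 1/4$, so $f$ has a minimum of $0$ at $b = a$.

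\textbf{Main obstacle.} There is no real obstacle — this is a routine lemma and the only thing to watch is bookkeeping around zero-probability atoms (ensuring the convention on supports makes the divergences well-defined and the manipulations valid) and making sure the log-sum / data-processing step is stated at the right level of rigor for a discrete domain. I would likely just cite a standard reference (e.g.\ Cover–Thomas or Polyanskiy–Wu) for Pinsker and the $\KL \le \chi^2$ bound if brevity is desired, but the above gives a complete self-contained argument if preferred.
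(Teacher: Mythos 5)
Your proof is correct, and in fact the paper does not prove this lemma at all --- it states it as a well-known fact, so your standard argument (the two-point reduction plus the one-variable calculus check for Pinsker, and $t\log t \le t^2 - t$ for $\KL \le \chi^2$) fills in exactly what the paper leaves to the literature. The only point worth flagging is the base of the logarithm: both the constant $2$ in Pinsker's inequality and the comparison $\KL \le \chi^2$ require the natural logarithm (or an appropriately adjusted constant), so one should read the paper's ``$\log$'' in \Cref{def:k-l-divergence} as $\ln$ for the lemma to hold as stated; with that convention your two steps, including the support bookkeeping for atoms with $q(x)=0$, go through without issue.
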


For Bernoulli product distributions, we have the following total variation distance bound. 
\begin{lemma}[TV distance between Bernoulli Product Distributions]
\label{lem:tv-bpd}
Let $\mathcal B_1, \mathcal B_2$ be two Bernoulli product distributions with mean $\mu^{(1)}, \mu^{(2)} \in [0,1]^n$ respectively. It holds that 
$$ \dtv^2\lp( \mathcal B_1, \mathcal B_2 \rp)
\leq  \sum_{i: \mu^{(1)}_i > 0}^n \lp( \mu^{(1)}_i - \mu^{(2)}_i \rp)^2 / \mu^{(1)}_i
+ \sum_{i: \mu^{(1)}_i < 1}^n \lp( \mu^{(1)}_i - \mu^{(2)}_i \rp)^2 / \lp(1 - \mu^{(1)}_i\rp).
$$
\end{lemma}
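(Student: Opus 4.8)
The plan is to reduce the product bound to a sum of one-dimensional bounds via additivity of KL divergence, using the inequality chain $2\dtv^2 \le \KL \le \chi^2$ from \Cref{lemma:divergence-relationships}. The one subtlety up front is the \emph{orientation} of the divergences: the claimed bound carries $\mu^{(1)}$ in the denominators, so $\mathcal B_1$ must play the role of the reference measure. Accordingly, I would first invoke the symmetry of total variation distance, $\dtv\lp(\mathcal B_1,\mathcal B_2\rp)=\dtv\lp(\mathcal B_2,\mathcal B_1\rp)$, and then apply \Cref{lemma:divergence-relationships} in the form $2\,\dtv\lp(\mathcal B_2,\mathcal B_1\rp)^2 \le \KL(\mathcal B_2\|\mathcal B_1)$; the spare factor of $2$ is slack to be discarded at the end.

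Next, since $\mathcal B_1$ and $\mathcal B_2$ are products of the Bernoullis $\BernD{\mu^{(1)}_i}$ and $\BernD{\mu^{(2)}_i}$, \Cref{lemma:kl-additivity} gives $\KL(\mathcal B_2\|\mathcal B_1)=\sum_{i=1}^n \KL\lp(\BernD{\mu^{(2)}_i}\,\big\|\,\BernD{\mu^{(1)}_i}\rp)$. For each coordinate I would then bound the binary KL divergence by the binary $\chi^2$ divergence (again \Cref{lemma:divergence-relationships}) and evaluate the latter directly from \Cref{def:chi-2-divergence}: for $p,q\in(0,1)$,
\[
\chi^2\lp(\BernD{p}\,\big\|\,\BernD{q}\rp)=\frac{(p-q)^2}{q}+\frac{(p-q)^2}{1-q}.
\]
Summing over $i$ (with $q=\mu^{(1)}_i$ and $p=\mu^{(2)}_i$) and dropping the factor $2$ then produces exactly the right-hand side of the statement, the index restrictions $\mu^{(1)}_i>0$ and $\mu^{(1)}_i<1$ picking out precisely the coordinates that contribute to the two respective sums. (One could equally skip the $\chi^2$ detour and bound the binary KL directly via $\log x \le x-1$, which gives the same per-coordinate quantity $(p-q)^2/(q(1-q))$.)

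The only place that needs genuine care is the boundary, i.e.\ coordinates with $\mu^{(1)}_i\in\{0,1\}$: there $\supp\lp(\BernD{\mu^{(2)}_i}\rp)$ need not be contained in $\supp\lp(\BernD{\mu^{(1)}_i}\rp)$, so the corresponding $\KL$ and $\chi^2$ terms can blow up. I would handle this exactly as the statement's notation dictates — a coordinate with $\mu^{(1)}_i=0$ (resp.\ $\mu^{(1)}_i=1$) simply does not appear in the sum over $\{i:\mu^{(1)}_i>0\}$ (resp.\ $\{i:\mu^{(1)}_i<1\}$) — and carry out the per-coordinate $\chi^2$ computation only for the interior coordinates $\mu^{(1)}_i\in(0,1)$, which is where all the content lies; in typical applications (e.g.\ the empirical-mean vectors to which \Cref{lem:tv-bpd} is applied) the reference means are bounded away from $0$ and $1$ anyway. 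Apart from tracking this degenerate case and orienting the divergences correctly, the argument is a routine calculation.
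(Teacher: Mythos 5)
Your argument is essentially the paper's: bound $\dtv^2$ by KL, tensorize via \Cref{lemma:kl-additivity}, and bound each binary KL by the binary $\chi^2$ divergence computed explicitly. Your orientation fix --- using symmetry of TV so that $\mathcal B_1$ serves as the reference measure, i.e., working with $\KL(\mathcal B_2\|\mathcal B_1)\le\chi^2(\mathcal B_2\|\mathcal B_1)$ --- is not mere pedantry: with \Cref{def:chi-2-divergence} the denominators come from the second argument, so this is the orientation actually needed to produce $\mu^{(1)}$ in the denominators (the paper's displayed chain writes the arguments in the other order and is sloppy on exactly this point).

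One caveat on your boundary discussion: simply omitting coordinates with $\mu^{(1)}_i\in\{0,1\}$ from the per-coordinate computation does not prove the stated inequality when such a coordinate has $\mu^{(2)}_i\neq\mu^{(1)}_i$ --- and in fact no argument can, because the inequality as literally stated fails there. For instance, take $\mu^{(1)}=(0,\dots,0)$ and $\mu^{(2)}=(1/n,\dots,1/n)$: then $\dtv\lp(\mathcal B_1,\mathcal B_2\rp)=1-(1-1/n)^n\to 1-e^{-1}$, while the right-hand side equals $n\cdot(1/n)^2=1/n\to 0$. So the lemma implicitly requires $\mu^{(2)}_i=\mu^{(1)}_i$ whenever $\mu^{(1)}_i\in\{0,1\}$ (equivalently, absolute continuity of $\mathcal B_2$ with respect to $\mathcal B_1$), which is exactly what holds in its application in \Cref{lem:sampling-concentration}, where $\hat\mu$ is an empirical average of samples drawn from the product distribution with mean $\mu^{(1)}$. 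The paper's own proof glosses over the same point, so this is a defect of the statement rather than of your approach; with that hypothesis made explicit, your argument is complete and matches the paper's.
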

\begin{proof}
Denote by $\Ber\lp(  \mu^{(j)}_i  \ \rp)$ the Bernoulli distribution with mean $\mu^{(j)}_i$.
By \Cref{lemma:kl-additivity} and \Cref{lemma:divergence-relationships}, we have that
\begin{align*}
\KL\lp( \mathcal B_1, \mathcal B_2 \rp)
&\leq \sum_i \KL \lp( \Ber\lp(  \mu^{(1)}_i  \ \rp), \Ber\lp(  \mu^{(2)}_i  \ \rp)  \rp)
\leq \sum_i \chi^2 \lp( \Ber\lp(  \mu^{(1)}_i  \ \rp), \Ber\lp(  \mu^{(2)}_i  \ \rp)  \rp) \\
&=
\sum_{ i: \mu^{(1)}_i > 0 } 
\lp( \mu^{(1)}_i - \mu^{(2)}_i \rp)^2 / \mu^{(1)}_i
+
\sum_{ i: \mu^{(1)}_i < 1 } 
\lp( \mu^{(1)})_i - \mu^{(2)}_i \rp)^2 / \lp( 1 - \mu^{(1)}_i \rp) \, ,
\end{align*}
where the last equality follows from the definition of $\chi^2$ divergence.
Since $\dtv^2\lp( \mathcal B_1, \mathcal B_2 \rp)$ is at most $\KL\lp( \mathcal B_1, \mathcal B_2 \rp)$ by \Cref{lemma:divergence-relationships}, the lemma hence follows.
\end{proof}

\paragraph{Correlated Sampling}
Given two similar distributions, we require an algorithm to sample in a consistent way.
This is achieved with correlated sampling \citep{broder1997resemblance, kleinberg2002approximation, holenstein2007parallel}.

\begin{lemma}[Correlated Sampling, Lemma 7.5 of \cite{rao2020communication}]
    \label{lemma:correlated-sampling}
    Let $\domain$ be a finite domain.
    There is a randomized algorithm $\corrSampling$ (with internal randomness $\xi \sim \distribution_{R}$) such that given any distribution over $\domain$ outputs a random variable over $\domain$ satisfying the following:
    \begin{enumerate}
        \item (Marginal Correctness) For all distributions $\distribution$ over $\domain$, 
        \begin{equation*}
            \Pr_{\xi \sim \distribution_{R}} (\corrSampling(\distribution, \xi) = x) = \Pr_{X \sim \distribution}(X = x) .
        \end{equation*}
        \item (Error Guarantee) For all distributions $\distribution, \distribution'$ over $\domain$,
        \begin{equation*}
            \Pr_{\xi \sim \distribution_{R}} \left(\corrSampling(\distribution, \xi) \neq \corrSampling(\distribution', \xi)\right) \leq 2 \cdot \tvd{\distribution, \distribution'} .
        \end{equation*}
    \end{enumerate}

    Furthermore, the algorithm runs in expected time $\tO{|\domain|}$.
\end{lemma}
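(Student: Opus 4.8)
The plan is to use the classical ``minwise/exponential‑clock'' construction. Let the shared randomness $\xi$ encode an infinite i.i.d.\ stream of pairs $(y_1,t_1),(y_2,t_2),\dots$ with $y_i \sim \UnifD{\domain}$ and $t_i \sim \UnifD{[0,1]}$, all independent. Since $\domain$ is finite every distribution $\distribution$ satisfies $\distribution(x)\le 1$, so we may define $\corrSampling(\distribution,\xi) := y_{i^\star}$ where $i^\star := \min\{i \ge 1 : t_i \le \distribution(y_i)\}$. The per‑step acceptance probability is $\Pr[t_i \le \distribution(y_i)] = \frac{1}{|\domain|}\sum_{x}\distribution(x) = \frac{1}{|\domain|} > 0$, so $i^\star < \infty$ almost surely and the procedure is well defined; moreover $i^\star$ is geometric with mean $|\domain|$, so the expected number of stream entries read is $|\domain|$, and sampling each $y_i$ in $O(\log|\domain|)$ time gives expected running time $\tO{|\domain|}$.

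\textbf{Marginal correctness.} Condition on step $i$ being accepting, i.e.\ $t_i \le \distribution(y_i)$. For any $x$ we have $\Pr[y_i = x,\ t_i \le \distribution(y_i)] = \frac{1}{|\domain|}\distribution(x)$, and summing over $x$ the conditioning event has probability $\frac1{|\domain|}$; hence, conditioned on acceptance at step $i$, $y_i$ has law $\distribution$. Because the steps are i.i.d.\ and $i^\star$ is the first accepting step, $y_{i^\star}$ has law $\distribution$, which is exactly Marginal Correctness.

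\textbf{Error guarantee.} Run the procedure with the same $\xi$ on $\distribution$ and $\distribution'$. Call step $i$ \emph{live} if $t_i \le \max(\distribution(y_i),\distribution'(y_i))$ and \emph{split} if $\min(\distribution(y_i),\distribution'(y_i)) < t_i \le \max(\distribution(y_i),\distribution'(y_i))$; so split $\subseteq$ live, and on a live‑but‑not‑split step both runs accept and output the same $y_i$. Let $i_0$ be the first live step; then both runs halt at index $i_0$ (each halts at its own first accepting step, and the earliest accepting step of either run is $i_0$), and the two outputs differ iff step $i_0$ is split. Since the steps are i.i.d., $\Pr[\corrSampling(\distribution,\xi)\ne\corrSampling(\distribution',\xi)] = \Pr[\text{split at }i \mid \text{live at }i]$ for a single step. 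A direct computation gives $\Pr[\text{split at }i] = \frac1{|\domain|}\sum_x |\distribution(x)-\distribution'(x)| = \frac{2}{|\domain|}\tvd{\distribution,\distribution'}$ and $\Pr[\text{live at }i] = \frac1{|\domain|}\sum_x \max(\distribution(x),\distribution'(x)) = \frac{1}{|\domain|}\bigl(1 + \tvd{\distribution,\distribution'}\bigr) \ge \frac{1}{|\domain|}$, so the ratio is $\frac{2\tvd{\distribution,\distribution'}}{1+\tvd{\distribution,\distribution'}} \le 2\tvd{\distribution,\distribution'}$, as required.

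\textbf{Main obstacle.} There is no deep step here; the only points requiring care are (i) checking that conditioning on ``the first live step'' is the right event — that both runs genuinely halt at the same index $i_0$ and disagree exactly when that step is split — and (ii) the identity $\sum_x\max(\distribution(x),\distribution'(x)) = 1 + \tvd{\distribution,\distribution'}$, which is what keeps the denominator at least $\tfrac{1}{|\domain|}$ and yields the factor $2$. If one prefers a finite random string, one can truncate the stream after $\Theta(|\domain|\log(1/\eta))$ steps and output a fixed default symbol on the $\le\eta$‑probability event that no step accepts, which perturbs both Marginal Correctness and the collision bound by at most an additive $\eta$.
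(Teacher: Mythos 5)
The paper never proves this lemma — it imports it as Lemma 7.5 of \cite{rao2020communication} — and your construction (shared i.i.d.\ stream $(y_i,t_i)$, accept when $t_i\le\distribution(y_i)$, output the first accepted $y_i$) is precisely the standard rejection-sampling argument behind that cited result, so your proof is correct and essentially the canonical one. One small overstatement: it is not true that ``both runs halt at index $i_0$'' and that the outputs differ \emph{iff} step $i_0$ is split — when $i_0$ is split only one run halts there, the other continues and may by chance output the same symbol later, so the correct statement is that agreement is guaranteed whenever $i_0$ is not split, giving $\Pr\left(\corrSampling(\distribution,\xi)\neq\corrSampling(\distribution',\xi)\right)\le\Pr(\text{split}\mid\text{live})=\frac{2\,\tvd{\distribution,\distribution'}}{1+\tvd{\distribution,\distribution'}}\le 2\,\tvd{\distribution,\distribution'}$, i.e.\ an inequality rather than your claimed equality. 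Since only that direction is needed, the marginal correctness, the error guarantee, and the $\tO{|\domain|}$ expected running time all go through as you wrote them.
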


In many applications, we will specifically sample from product distributions.
Unfortunately, since known algorithms for correlated sampling algorithms have complexity scaling with the size of the domain,
the algorithm can become extremely inefficient on product distributions as the domain size grows exponentially.
In the following, we give a procedure to sample from product distributions in a computationally efficient manner (with some extra loss in the error guarantee).
The algorithm is given the description to a product distribution $\distribution = (\distribution_{1},  \dotsc, \distribution_{n})$ where each marginal $\distribution_{i}$ is specified by the probability mass of each point in the domain.

\begin{lemma}[Computationally Efficient Correlated Sampling for Product Distributions]
\label{lem:product-corr}
    Let $\domain$ be a finite domain.
    There is a randomized algorithm $\ProdCorrSampling$ (with internal randomness $\xi \sim \distribution_{R}$) such that given the description of a product distribution $\distribution = (\distribution_{1}, \dotsc, \distribution_{n})$ over $\domain^{n}$ and sample access to $\distribution_{R}$ outputs a random variable over $\domain^{n}$ satisfying the following:
    \begin{enumerate}
        \item (Marginal Correctness) For all product distributions $\distribution$ over $\domain^{n}$,
        \begin{equation*}
            \Pr_{\xi \sim \distribution_{R}}(\ProdCorrSampling(\distribution, \xi) = x) = \Pr_{X \sim \distribution}(X = x).
        \end{equation*}
        \item (Error Guarantee) For all product distributions $\distribution, \distribution'$ over $\domain^{n}$,
        \begin{equation*}
            \Pr_{\xi \sim \distribution_{R}}(\ProdCorrSampling(\distribution, \xi) \neq \ProdCorrSampling(\distribution', \xi)) \leq 2 \sum_{i = 1}^{n} \tvd{\distribution_{i}, \distribution_{i}'}.
        \end{equation*}
    \end{enumerate}
    Furthermore, the algorithm runs in expected time $\tO{n |\domain|}$.
    \label{lemma:prod-corr-sampling}
\end{lemma}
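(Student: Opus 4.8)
The plan is to reduce to the single–distribution correlated sampling primitive of \Cref{lemma:correlated-sampling} by running it independently on each of the $n$ coordinates, feeding each invocation a fresh, independent block of the shared random string. Concretely, let $\distribution_R$ be the randomness distribution used by $\corrSampling$ in \Cref{lemma:correlated-sampling}, and let the randomness of $\ProdCorrSampling$ be $\xi = (\xi_1, \dotsc, \xi_n)$ with the $\xi_i$ drawn i.i.d.\ from $\distribution_R$. On input a description of $\distribution = (\distribution_1, \dotsc, \distribution_n)$, the algorithm outputs $\bigl(\corrSampling(\distribution_1, \xi_1), \dotsc, \corrSampling(\distribution_n, \xi_n)\bigr)$. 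All three claimed properties then follow coordinatewise.

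For marginal correctness, item (1) of \Cref{lemma:correlated-sampling} says each $\corrSampling(\distribution_i, \xi_i)$ is marginally distributed as $\distribution_i$; since the blocks $\xi_1, \dotsc, \xi_n$ are mutually independent and the $i$-th output depends only on $\xi_i$ (and $\distribution_i$), the joint output is distributed as $\distribution_1 \otimes \dotsb \otimes \distribution_n = \distribution$. For the error guarantee, run two product distributions $\distribution$ and $\distribution'$ with the same string $\xi$. The outputs can differ only if they differ in some coordinate, so a union bound together with item (2) of \Cref{lemma:correlated-sampling} applied per coordinate (with common randomness $\xi_i$) gives
\[
\Pr_{\xi}\bigl(\ProdCorrSampling(\distribution, \xi) \neq \ProdCorrSampling(\distribution', \xi)\bigr)
\leq \sum_{i=1}^{n} \Pr_{\xi_i}\bigl(\corrSampling(\distribution_i, \xi_i) \neq \corrSampling(\distribution_i', \xi_i)\bigr)
\leq 2 \sum_{i=1}^{n} \tvd{\distribution_i, \distribution_i'}.
\]

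For the runtime, each of the $n$ invocations of $\corrSampling$ runs in expected time $\tO{|\domain|}$, so by linearity of expectation the total expected runtime is $\tO{n |\domain|}$. There is no real mathematical obstacle here: the only point that needs (minor) care is the bookkeeping of the shared randomness, namely ensuring that the two executions appearing in the replicability/error analysis genuinely reuse the \emph{same} per-coordinate blocks $\xi_i$ rather than resampling them; and the exponential-in-$n$ blowup one would incur by applying \Cref{lemma:correlated-sampling} directly to the domain $\domain^n$ is exactly what this coordinatewise decomposition sidesteps, at the modest cost of replacing the single $\tvd{\distribution,\distribution'}$ term by the sum $\sum_i \tvd{\distribution_i,\distribution_i'}$ (which can be larger, but is the natural quantity for product distributions and is what our applications require).
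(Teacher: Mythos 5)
Your proposal is correct and is exactly the paper's argument: run $\corrSampling$ coordinatewise on independent blocks of shared randomness, get marginal correctness from the product structure, the error bound from a per-coordinate union bound, and the $\tO{n|\domain|}$ runtime by summing the per-coordinate costs. No gaps.
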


\begin{proof}
    The algorithm outputs $(\corrSampling(\distribution_{1}, \xi_{1}), \dotsc, \corrSampling(\distribution_{n}, \xi_{n}))$ for $n$ independent runs of the $\corrSampling$ algorithm (\Cref{lemma:correlated-sampling}).
    The marginal correctness follows from the marginal correctness of $\corrSampling$ and that $\distribution$ is a product distribution.
    We obtain the guarantee on the error bound by a union bound.
\end{proof}

\paragraph{Randomized Rounding}
Our algorithm requires a subroutine that performs replicable mean estimation for high-dimensional distribution.
A particularly convenient way to do so is through randomized rounding.

\begin{lemma}[Randomized Rounding]
    \label{lem:randomized-rounding}
    There exists a randomized algorithm $\Round$ that takes as input a vector $\vec x \in [0, 1]^n$ and an accuracy parameter $\eps \in (0,1)$,  produces some rounded vector $\vec y \in \R^n$, and satisfies the following guarantees: (1) $\| \vec x - \vec y \|_{\infty} \leq \eps$ almost surely (2) Let $\vec x^{(1)}, \vec x^{(2)}$ be two different inputs satisfying $ \| \vec x^{(1)} - \vec x^{(2)} \|_{2} \ll  \eps \rho / \log(n \rho^{-1}) $ for some number $\rho \in (0,1)$.
    Then, $ \Round\lp( \vec x^{(1)}, \eps ; \xi \rp) = \Round\lp( \vec x^{(2)}, \eps ; \xi \rp)$ with probability at least $1 - \rho$, where the randomness is over the shared internal random string $\xi$ used by the algorithm.
    {Moreover, the runtime of the algorithm is $\exp\lp( O(n) \rp)$.}
\end{lemma}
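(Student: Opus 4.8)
The plan is to realize $\Round(\vec x,\eps;\xi)$ as a single invocation of correlated sampling (\Cref{lemma:correlated-sampling}) on a cleverly chosen distribution $\mathcal D_{\vec x}$ over a fixed grid, so that the replicability requirement (2) reduces to an upper bound on $\dtv(\mathcal D_{\vec x^{(1)}},\mathcal D_{\vec x^{(2)}})$. Fix an auxiliary ``kernel'' $\phi:[-1,1]\to\R_{\ge 0}$ that is even, bounded away from $0$ near the origin, and \emph{vanishes to order at least $2$ at $\pm 1$} (e.g.\ $\phi(t)=(1-t^2)^2$). For each coordinate $i$ let $\mathcal G_{x_i}$ be the distribution on the grid $\eps\Z$ that places mass proportional to $\phi\bigl((g-x_i)/\eps\bigr)$ on each grid point $g$ with $|g-x_i|\le\eps$; the normalizer $Z(x_i)=\sum_g\phi((g-x_i)/\eps)$ lies between two absolute constants (at spacing $\eps$ exactly two grid points are eligible and the nearer one is within $\eps/2$ of $x_i$). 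Set $\mathcal D_{\vec x}:=\bigotimes_{i=1}^n\mathcal G_{x_i}$, a product distribution over a finite domain $\mathcal X$ of size $\exp(O(n))$, and define $\Round(\vec x,\eps;\xi):=\corrSampling(\mathcal D_{\vec x};\xi)$. Guarantee~(1) is then immediate: every point in the support of $\mathcal G_{x_i}$ is within $\eps$ of $x_i$, so $\vec y\sim\mathcal D_{\vec x}$ has $\norm{\vec y-\vec x}{\infty}\le\eps$ almost surely, and $\corrSampling$ preserves marginals.

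For guarantee~(2), write $\Delta:=\vec x^{(1)}-\vec x^{(2)}$. The heart of the argument is a per‑coordinate bound on the squared Hellinger distance: $d_{\mathrm H}^2(\mathcal G_{x_i^{(1)}},\mathcal G_{x_i^{(2)}})\le C\,\Delta_i^2/\eps^2$ for an absolute constant $C$, i.e.\ \emph{quadratic} in $\Delta_i$. There are only $O(1)$ grid points eligible for either input; for a point $g$ eligible for both, the map $x\mapsto\sqrt{\mathcal G_x(g)}=\sqrt{\phi((g-x)/\eps)/Z(x)}$ is $O(1/\eps)$‑Lipschitz (since $\sqrt\phi$ is Lipschitz on $[-1,1]$ — this uses the second‑order vanishing — and $Z$ is smooth and bounded below), contributing $O(\Delta_i^2/\eps^2)$; for a point $g$ eligible for $x_i^{(1)}$ but not $x_i^{(2)}$ we have $|g-x_i^{(1)}|\ge\eps-|\Delta_i|$, so $\phi((g-x_i^{(1)})/\eps)=O((|\Delta_i|/\eps)^2)$ again by the second‑order vanishing, so $\mathcal G_{x_i^{(1)}}(g)=O(\Delta_i^2/\eps^2)$ (and for $|\Delta_i|\gtrsim\eps$ the bound is trivial since $d_{\mathrm H}^2\le 2$). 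Hellinger distance tensorizes over products exactly as KL does in \Cref{lemma:kl-additivity}, so $d_{\mathrm H}^2(\mathcal D_{\vec x^{(1)}},\mathcal D_{\vec x^{(2)}})\le\sum_i d_{\mathrm H}^2(\mathcal G_{x_i^{(1)}},\mathcal G_{x_i^{(2)}})\le C\norm{\Delta}{2}^2/\eps^2$. Using $\dtv\le d_{\mathrm H}$ (a standard relation, analogous to \Cref{lemma:divergence-relationships}) and then the error guarantee of $\corrSampling$, we get $\Pr_\xi[\Round(\vec x^{(1)},\eps;\xi)\neq\Round(\vec x^{(2)},\eps;\xi)]\le 2\dtv(\mathcal D_{\vec x^{(1)}},\mathcal D_{\vec x^{(2)}})\le 2\sqrt C\,\norm{\Delta}{2}/\eps$, which is at most $\rho$ under the hypothesis $\norm{\Delta}{2}\ll\eps\rho/\log(n\rho^{-1})$ (in particular whenever $\norm{\Delta}{2}\lesssim\eps\rho$). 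The runtime is that of $\corrSampling$ on $\mathcal X$, i.e.\ $\tO{|\mathcal X|}=\exp(O(n))$.

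The step I expect to be the main obstacle is exactly the quadratic per‑coordinate estimate $d_{\mathrm H}^2(\mathcal G_{x_i^{(1)}},\mathcal G_{x_i^{(2)}})=O(\Delta_i^2/\eps^2)$, which is what makes the coordinate errors combine in an $\ell_2$ (rather than $\ell_1$) fashion, and it is not for free. The hard accuracy constraint in~(1) forces the support of $\mathcal G_{x_i}$ — an interval of radius $\eps$ around $x_i$ — to move with $x_i$, so with a naive uniform or triangular kernel $\mathcal G_{x_i^{(1)}}$ and $\mathcal G_{x_i^{(2)}}$ genuinely live on different supports and only admit a total‑variation bound linear in $\Delta_i$; summing would then cost $\norm{\Delta}{1}/\eps$ and a spurious $\sqrt n$ factor. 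Choosing $\phi$ to vanish to second order at the window boundary is what removes this effect: grid points that enter or leave the window carry only $O(\Delta_i^2/\eps^2)$ mass, and $\sqrt{\mathcal G_x(g)}$ becomes Lipschitz, so that Hellinger distance — which, unlike KL, stays finite across mismatched supports — is quadratic. A secondary (and, I expect, routine) point is that one must feed the \emph{joint} distribution $\mathcal D_{\vec x}$ to $\corrSampling$ rather than use the computationally efficient coordinate‑wise $\ProdCorrSampling$ of \Cref{lemma:prod-corr-sampling}, whose error bound is additive in total variation over coordinates and would only yield $\norm{\Delta}{1}/\eps$; this is precisely the source of the $\exp(O(n))$ running time.
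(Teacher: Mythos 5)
Your construction is correct in its statistical content, and it is a genuinely different route from the paper's. The paper does not give a self-contained proof of \Cref{lem:randomized-rounding}: it defers to the replicable $\ell_\infty$ mean estimator of \cite{hopkins2024replicability}, i.e., an $\ell_2$-accurate rounding (their Proposition 4.10) composed with a randomized rotation that spreads the rounding error uniformly over coordinates, which is exactly where the $\log(n\rho^{-1})$ slack in the hypothesis is spent. You instead build the rounder directly: a per-coordinate kernel distribution on the grid $\eps\Z$ with a kernel vanishing to second order at the window boundary, followed by one joint correlated-sampling call. Your key per-coordinate bound $d_{\mathrm H}^2(\mathcal G_{x_i^{(1)}},\mathcal G_{x_i^{(2)}})=O(\Delta_i^2/\eps^2)$ is right (with $\phi(t)=((1-t^2)_+)^2$, the map $x\mapsto\sqrt{\phi((g-x)/\eps)}$ is $O(1/\eps)$-Lipschitz globally and the normalizer is bounded between constants and $O(1/\eps)$-Lipschitz, so both shared and entering/exiting grid points contribute $O(\Delta_i^2/\eps^2)$), Hellinger tensorizes subadditively over the product, and $\dtv\lesssim d_{\mathrm H}$ plus the $\corrSampling$ error guarantee finishes the argument. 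You correctly diagnose why a uniform/triangular kernel or coordinate-wise $\ProdCorrSampling$ would degrade to an $\ell_1$ bound. A pleasant byproduct is that your version needs only $\|\vec x^{(1)}-\vec x^{(2)}\|_2\ll\eps\rho$, without the $\log(n\rho^{-1})$ factor, whereas the paper's rotation-based rounding uses that factor essentially; it is thus strictly stronger on the statistical side. Conceptually your argument mirrors how the paper itself handles the ignorable-state rounding ($\mathcal B(\hat\mu)$ plus correlated sampling plus $\chi^2$ tensorization), which is a nice unification.

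The one place your write-up does not deliver the stated lemma is the runtime claim. Correlated sampling requires the two executions to interpret the shared randomness over a \emph{common, input-independent} domain, and the support of $\mathcal D_{\vec x}$ (your size-$3^n$ set) depends on $\vec x$ and differs between the two runs; if each run used its own support as the domain, the error guarantee of \Cref{lemma:correlated-sampling} no longer applies. The natural shared domain is the full grid $(\eps\Z\cap[-\eps,1+\eps])^n$, of size $(\Theta(1/\eps))^n$, so the black-box runtime is $\exp\lp(O(n\log(1/\eps))\rp)$ rather than the claimed $\exp\lp(O(n)\rp)$; there is no obvious way to shrink the domain consistently (e.g., indexing relative to the nearest grid point is itself non-replicable near half-grid inputs). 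This does not affect guarantees (1) and (2), but as written your proposal does not establish the $\exp(O(n))$ runtime in the lemma statement, and it would slightly weaken the downstream runtime bounds (e.g., in \Cref{lem:var-multi-bandit}) when $1/\eps$ is large compared to $A$.
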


The above result is implicit in the proof of the replicable $\ell_{\infty}$ mean estimator of \cite{hopkins2024replicability}.
First, Proposition 4.10 of \cite{hopkins2024replicability} gives an algorithm with almost identical guarantees except that the rounding error (property (1)) is $\norm{\vec{x} - \vec{y}}{2} \ll \varepsilon \sqrt{n} / \log(n \rho^{-1})$.
By examining the proof of the $\ell_{\infty}$-norm mean estimator in Theorem 4.9 of \cite{hopkins2024replicability}, we observe that 
$\Round$ has a randomized rotation procedure that ensures that $\norm{x - y}{\infty} \leq 
\norm{x - y}{2}  \log( n \rho^{-1} ) / \sqrt{n} \ll
\eps$, i.e., the error vector $x - y$ points towards a uniformly random direction and therefore has fairly uniform entries.

\section{Replicable Best Arm}

We start with the simpler problem of the best arm identification.

\begin{definition}[$(A, \eps)$-Best Arm Problem]
    \label{def:best-arm}
    We are given $A$ unknown arms.
    Upon pulling the $i$-th arm, one receives a random utility following the distribution of $\rewD_i$ supported on $[0, 1]$.
    Let $\opt = \max_{i} \Ep[\rewD_{i}]$.
    The $i$-th arm is $\varepsilon$-optimal for $\rewD$ if
    \begin{equation*}
        \Ep[\rewD_{i}] \geq \opt - \varepsilon.
    \end{equation*} 
\end{definition}

\begin{definition}
    \label{def:best-arm-alg}
    An 
    algorithm solves the $(A, \varepsilon)$-Best Arm problem with probability $1 - \delta$ if given 
    sample access to an instance parametrized by $\rewD$, the algorithm returns an $\varepsilon$-optimal arm for $\rewD$ with probability at least $1 - \delta$.
\end{definition}

A naive strategy is to replicably estimate the expected rewards of each arm, and simply pick the one with the highest estimate. 
The sample complexity of the state-of-the-art replicable mean estimator scales quadratically with the dimension of the problem. 
One would then get an algorithm with roughly $A^2$ (assuming that the other relevant parameters are constant) arm queries. Inspired by \cite{ghazi2021user}'s sample-efficient replicable PAC learning algorithm (which similarly tries to find a single $\varepsilon$-optimal hypothesis in the supervised learning setting), we argue there is an efficient replicable algorithm for best-arm identification using only linearly many queries (up to polylogarithmic factors).

\begin{restatable}{theorem}{SingleBandit}
    \label{thm:single-bandit}
    Let $\delta \leq \rho \leq \frac{1}{2}$.
    There exists an efficient $\rho$-replicable algorithm 
    that solves the $(A, \eps)$-best arm problem with probability at least $1 - \delta$, 
    and uses 
    $ \bigO{\frac{A}{\rho^2 \varepsilon^2} \left( \log \frac{A}{\delta} \right)^{3}}
    $ arm queries.
\end{restatable}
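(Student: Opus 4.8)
The plan is to instantiate the correlated-sampling strategy of \cite{ghazi2021user}: rather than directly returning an $\eps$-optimal arm, the algorithm produces an output \emph{distribution} over $[A]$ that is simultaneously (i) supported on $\eps$-optimal arms with probability $1-\delta$ and (ii) stable in total variation under resampling, and then applies \corrSampling\ (\Cref{lemma:correlated-sampling}) to collapse it to a single arm that agrees across two runs using the shared seed. Concretely, pull each arm $m = \bigO{\frac{1}{\rho^{2}\eps^{2}}\log^{3}(A/\delta)}$ times to form empirical rewards $\hat{r}_{a}$, build the exponential-mechanism distribution $D$ on $[A]$ with $D(a) \propto \exp(t\,\hat{r}_{a})$ for temperature $t = \bigO{\frac{1}{\eps}\log(A/\delta)}$, and return $\corrSampling(D,\xi)$. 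The arm-query budget is $mA = \bigO{\frac{A}{\rho^{2}\eps^{2}}\log^{3}(A/\delta)}$ as claimed, and since $\corrSampling$ runs in expected time $\tO{A}$ on the domain $[A]$, the procedure is efficient.

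For \textbf{accuracy}, I would first invoke Hoeffding and a union bound over arms: with $m \gtrsim \log(A/\delta)/\eps^{2}$ samples each (subsumed by our $m$), every $|\hat{r}_{a} - \Ep[\rewD_{a}]| \le \eps/4$ with probability $1-\delta/2$. Conditioned on this event, the textbook utility bound for the exponential mechanism shows the probability of drawing an arm $a$ with $\hat{r}_{a} < \max_{b}\hat{r}_{b} - \eps/2$ is at most $A\,e^{-t\eps/2} \le \delta/2$ for our choice of $t$; any arm surviving this cutoff is $\eps$-optimal by two applications of the triangle inequality, so the overall failure probability is at most $\delta$.

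For \textbf{replicability}, run on two independent batches to get $\hat{r}^{(1)},\hat{r}^{(2)}$ and distributions $D_{1},D_{2}$. Each coordinate difference $\hat{r}^{(1)}_{a}-\hat{r}^{(2)}_{a}$ is an average of $2m$ bounded terms, so a subgaussian tail bound and a union bound give $\norm{\hat{r}^{(1)} - \hat{r}^{(2)}}{\infty} \le \eta$ with probability $1-\rho/4$ whenever $m \gtrsim \eta^{-2}\log(A/\rho)$; taking $\eta = \Theta(\rho/t) = \Theta(\rho\eps/\log(A/\delta))$, this is again met by our $m$, using $\delta \le \rho$. The one genuine computation is the $\ell_{\infty}$-stability of the exponential mechanism: writing $Z_{j} = \sum_{b}e^{t\hat{r}^{(j)}_{b}}$, one has $Z_{1}/Z_{2} \in [e^{-t\eta}, e^{t\eta}]$, hence $D_{1}(a)/D_{2}(a) \in [e^{-2t\eta}, e^{2t\eta}]$ for every $a$, so $\tvd{D_{1},D_{2}} \le e^{2t\eta}-1 = \bigO{t\eta} \le \rho/4$. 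Adding the $2\,\tvd{D_{1},D_{2}} \le \rho/2$ error of \corrSampling\ and union bounding over the two concentration events, both runs output the same arm with probability at least $1-\rho$.

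Everything beyond this is routine bookkeeping, so the crux — and the only place requiring care — is choosing the temperature $t$ to straddle two opposing demands: $t$ must be $\Omega(\log(A/\delta)/\eps)$ for the exponential mechanism to concentrate on near-optimal arms, yet $O(\rho/\eta)$ for $D_{1}$ and $D_{2}$ to stay $\rho$-close under the resampling fluctuation $\eta$. Checking that both are simultaneously satisfiable precisely when $m = \tTh{\rho^{-2}\eps^{-2}}$ — the three logarithmic factors coming from the $\log(A/\delta)$ inside $t$ (which enters $\eta^{-1}$ and is squared since $m \propto \eta^{-2}$) together with the $\log(A/\rho)$ from inverting the union-bounded subgaussian tail — is what determines the stated sample complexity.
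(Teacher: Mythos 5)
Your proposal is correct and follows essentially the same route as the paper's proof: empirical rewards from $O\bigl(\rho^{-2}\eps^{-2}\log^{3}(A/\delta)\bigr)$ pulls per arm, the exponential mechanism at temperature $t=\Theta(\eps^{-1}\log(A/\delta))$, a multiplicative (ratio) stability bound giving $\tvd{D_1,D_2}=O(t\eta)=O(\rho)$, and correlated sampling with shared randomness to collapse to a single arm. The only cosmetic difference is that you bound $\|\hat r^{(1)}-\hat r^{(2)}\|_{\infty}$ directly via a subgaussian tail at level $\rho$, while the paper conditions each run's estimates around the true means and uses the triangle inequality — the resulting bounds and sample complexity are the same.
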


Like \cite{ghazi2021user}, our algorithm leverages the exponential mechanism and correlated sampling to ensure replicability, a technique also considered in \cite{hara2023average}.
In particular, we draw roughly $\rho^{-2} \eps^{-2}$ many samples per arm to compute a (non-replicable) estimate of its expected utility. 
We then sample an arm according to the exponential mechanism with the utility function defined according to the estimates.
It can be shown that the distributions induced by the exponential mechanism in two separate runs of the algorithm will be close in TV distance, and one could then leverage correlated sampling to ensure replicability.
\begin{proof}
    Set $\expMechScoreScale = \frac{\log(2A/\delta)}{\varepsilon}$.
    For $a \in [A]$, let $r_a$ be its expected utility.
    We use the following algorithm (denoted $\innerAlg$):
    \begin{enumerate}
        \item For each $a \in [A]$, draw $m = \frac{\Theta(1)}{\rho^2 \varepsilon^2}\left( \log \frac{2A}{\delta} \right)^{3}$ samples from arm $a$.
        \item Let $\hat{r}_{a}$ denote the empirical reward received from the $a$-th arm.
        \item Define the distribution $\hat{\distribution}$ on $[A]$ according to the exponential mechanism, i.e.,  $a \in [A]$ is drawn with probability proportional to $\exp(\expMechScoreScale \hat{r}_{a})$.
        \item Using shared internal randomness $r$, output $\corrSampling(\hat{\mathbf p}, r)$ (\Cref{lemma:correlated-sampling}).
    \end{enumerate}

    We analyze the correctness and replicability of our algorithm.
    We condition on the event
    \begin{equation*}
        |r_a - \hat{r}_{a}| \leq \frac{\rho \varepsilon}{10 \log(2A/\delta)}
    \end{equation*}
    for all $a$, which happens with probability at least $1 - \delta$ by the Chernoff bound and the union bound.
    We note that to establish correctness alone, it suffices to obtain the concentration bound $|r_{a} - \hat{r}_{a}| \leq \rho \varepsilon$. The stronger concentration bound is required due to replicability.
    
    We begin by arguing for the algorithm's correctness.
    Denote by $a^* \in [A]$ the index of an optimal arm, i.e., $a^* = \argmax_{a} r_a$.
    Then, the probability that an arm $a \in [A]$ is sampled is at most $\exp(\expMechScoreScale (\hat{r}_{a} - \hat{r}_{a^*}))$.
    If $r_{a} < r_{a^*} - \varepsilon$ then by the above conditioning, $\hat{r}_{a} \leq r_{a} + \frac{\varepsilon}{4} < r_{a^*} - \varepsilon + \frac{\varepsilon}{4} \leq \hat{r}_{a^*} - \varepsilon + \frac{\varepsilon}{2}$.
    Thus, $\hat{r}_{a} \leq \hat{r}_{a^*} - \frac{\varepsilon}{2}$.
    Finally the probability any such $a$ is output is at most
    \begin{equation*}
        \exp\left(\expMechScoreScale \varepsilon / 2 \right) \leq \frac{\delta}{2A}.
    \end{equation*}
    so that by a union bound, the probability that any arm with $r_{a} \leq r_{a^*} - \varepsilon$ is chosen is at most $\frac{\delta}{2}$.
    Combined with the condition event, the probability of an error is at most $\delta$. 

    Next, we analyze replicability.
    Let $S_1, S_2$ denote the samples observed by two independent runs of the algorithm $\innerAlg$ and $r$ the shared internal randomness.
    Let $\hat{r}_a^{(i)}$ denote the empirical utility observed from $S_{i}$.
    As above, we condition on the event $|\hat{r}_{a}^{(i)} - r_a| \leq \frac{\rho \varepsilon}{10\log(2A/\delta)}$ for all $a \in [A], i \in [2]$.
    Note that this condition fails with probability at most $\frac{\delta}{2}$.
    Let $P(a) = \Pr(\innerAlg(S_1; r) = a)$ and $Q(a) = \Pr(\innerAlg(S_2; r) = a)$.
    Then by the definition of the exponential mechanism %
    \begin{align*}
        P(a) &= \frac{\exp(\expMechScoreScale \hat{r}_{a}^{(1)})}{\sum_{a'} \exp(\expMechScoreScale \hat{r}_{a'}^{(1)})} \\
        Q(a) &= \frac{\exp(\expMechScoreScale \hat{r}_{a}^{(2)})}{\sum_{a'} \exp(\expMechScoreScale \hat{r}_{a'}^{(2)})}.
    \end{align*}

    We bound the total variation distance between $P$ and $Q$.
    Fix some $a \in A$.
    Then,
    \begin{align*}
        P(a) - Q(a) &= \frac{\exp(\expMechScoreScale \hat{r}_{a}^{(1)})}{\sum_{a'} \exp(\expMechScoreScale \hat{r}_{a'}^{(1)})} - \frac{\exp(\expMechScoreScale \hat{r}_{a}^{(2)})}{\sum_{a'} \exp(\expMechScoreScale \hat{r}_{a'}^{(2)})} \\
        &= \left( \frac{\exp(\expMechScoreScale \hat{r}_{a}^{(2)})}{\sum_{a'} \exp(\expMechScoreScale \hat{r}_{a'}^{(2)})} \left( \frac{\exp(\expMechScoreScale \hat{r}_{a}^{(1)})}{\exp(\expMechScoreScale \hat{r}_{a}^{(2)})} \frac{\sum_{a'} \exp(\expMechScoreScale \hat{r}_{a'}^{(2)})}{\sum_{a'} \exp(\expMechScoreScale \hat{r}_{a'}^{(1)})} - 1 \right) \right) .
    \end{align*}
    Since we have conditioned on $|\hat{r}_{a}^{(i)} - r_a| \leq \frac{\rho \varepsilon}{10 \log(2A/\delta)}$, the triangle inequality implies $|\hat{r}_{a}^{(1)} - \hat{r}_{a}^{(2)}| \leq \frac{\rho \varepsilon}{5 \log(2A/\delta)}$.
    We then apply
    \begin{equation*}
        \exp(t\hat{r}_{a}^{(2)}) \leq \exp\left(t\left(\hat{r}_{a}^{(1)} + \frac{\rho \varepsilon}{5 \log(2A/\delta)}\right)\right)
    \end{equation*}
    to each $a \in [A]$ in the summation and
    \begin{equation*}
        \exp(t\hat{r}_{a}^{(1)}) \leq \exp\left(t\left(\hat{r}_{a}^{(2)} + \frac{\rho \varepsilon}{5 \log(2A/\delta)}\right)\right)
    \end{equation*}
    to obtain
    \begin{align*}
        \left| P(a) - Q(a) \right| &\leq \left( \frac{\exp(\expMechScoreScale \hat{r}_{a}^{(2)})}{\sum_{a'} \exp(\expMechScoreScale \hat{r}_{a'}^{(2)})} \left( \exp\left(\expMechScoreScale \frac{2 \rho \varepsilon}{5 \log(2A/\delta)} \right) - 1 \right) \right) \\
        &= \left( \frac{\exp(\expMechScoreScale \hat{r}_{a}^{(2)})}{\sum_{a'} \exp(\expMechScoreScale \hat{r}_{a'}^{(2)})} \left( \exp \left( \frac{2 \rho}{5} \right) - 1 \right) \right) \\
        &\leq \frac{4}{5} Q(a) \rho .
    \end{align*}
    where in the last inequality we use $e^{x} \leq 1 + 2 x$ for $x \leq 1$.
    In particular,
    \begin{equation*}
        \tvd{P, Q} = \sum_{a} |P(a) - Q(a)| \leq \frac{4}{5} \rho.
    \end{equation*}

    Thus, applying correlated sampling (\Cref{lemma:correlated-sampling}) and the union bound over our conditioned events, the two runs of the algorithm output different arms with probability at most $2 \rho + \delta \leq 3 \rho$.
    Note that a $\rho$-replicable algorithm can be obtained by sufficiently reducing the replicability parameter by a constant (with only a constant blow-up in sample complexity).
\end{proof}

\subsection{Multiple Instances of Best Arm}
\label{sec:multi-best-arm}
In reinforcement learning, we need to simultaneously solve $S$ instances of best arm (one for each state), as well as estimate the expected rewards of the resulting selected arms to apply backwards induction. On top of this, several technical complications arise in the episodic setting where we cannot fully control the number of samples received for each instance due to random variation in the exploration process. Toward this end, let
$m_s$ the number of samples received from instance $s $. We will show it is still possible to achieve replicable best-arm only under the assumption that
\[
\sqrt{ \sum_{s=1}^S \frac{1}{m_s}  } \ll \rho \eps \cdot \polylog^{-1}(SA/\rho)
\]
where $S$ is the number of instances, $A$ is the number of arms per instance, $\rho$ is the replicability parameter, and $\eps$ is the accuracy parameter. In fact, we will show this even when $m_s$ itself a random variable provided there exists a sufficiently good (unknown) high probability lower bound on $m_s$.
We formalize the problem instance below that incorporates these extra technical complications. 

\begin{definition}
    \label{def:multi-best-arm-problem}
    Suppose we have $S$ instances of the $(A, \varepsilon)$-best arm problem parameterized by $\set{\rewD_{s}}_{s = 1}^{S}$.
    A solution $\set{(a_{s}, \overline{r}_{s})}_{s = 1}^{S}$ is $\varepsilon$-optimal if (1) $a_{s}$ is $\varepsilon$-optimal for $\rewD_{s}$ and (2) $\bar{r}_{s}$ is within $\varepsilon$ of the expected utility of arm $a_{s}$ for all $s$.
\end{definition}

\begin{definition}
    \label{def:multi-best-arm-alg}
    An algorithm solves the $(S, A, \eps)$-Multi-Instance Best Arm Problem if given datasets $\set{\dataset_{s, a}}$ consisting of \iid samples from the $a$-th arm of the $s$-th state for all $s, a$, it simultaneously finds (1) some $\eps$-optimal arm $a_s$ for each instance $s$, and (2) an estimate $\bar r \in \R^S$ such that $\bar r_s$ is within $\eps$ of the expected utility of the arm $a_s$ in instance $s$. 
\end{definition}

\begin{proposition}
[Replicable Multi-Instance Best Arm Algorithm]
    \label{lem:var-multi-bandit}
    Let $\delta \leq \rho \leq \frac{1}{3}$ and $\varepsilon \in (0,1)$.
    Let $\set{m_{s}}_{s = 1}^{S}$ be a sequence of numbers satisfying
    $
        \sum_{s = 1}^{S} 1/m_s \ll \frac{\rho^{2} \varepsilon^{2}}{\log^3(SA/\delta)}.
    $
    There is a $\rho$-replicable algorithm $\RepVarBandit$ that given $\set{\dataset_{s, a}}$ where $\dataset_{s, a}$ are datasets of random size satisfying $|\dataset_{s,a}| \geq m_s$ almost surely for all $s,a$, solves the $(S,A,\eps)$-multi-instance best arm problem, and
    runs in time $\poly{ \sum_{s,a} |\dataset_{s,a}|}+\exp\lp(  (1 + o(1)) S \log A \rp)$.
\end{proposition}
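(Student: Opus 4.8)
The plan is to adapt the single-instance argument behind \Cref{thm:single-bandit}, but to run a \emph{joint} exponential mechanism over all $S$ instances at once and feed it to correlated sampling over the product domain $[A]^S$, rather than solving the instances one at a time. Concretely: set $t = \Theta(\varepsilon^{-1}\log(SA/\delta))$; from each $\dataset_{s,a}$ compute the empirical mean $\hat r_{s,a}$ using \emph{all} of its samples (since $|\dataset_{s,a}| \geq m_s$ almost surely, extra samples only sharpen concentration, and the correlated-sampling guarantee is insensitive to how many samples shaped each run's empirical distribution); define $\hat{\mathbf p}$ over $(a_1,\dots,a_S)\in[A]^S$ with mass proportional to $\exp\big(t\sum_{s}\hat r_{s,a_s}\big)$, which factors as the product $\hat{\mathbf p} = \hat{\mathbf p}_1\otimes\cdots\otimes\hat{\mathbf p}_S$ of the $S$ single-instance exponential mechanisms; output the arm vector $a = \corrSampling(\hat{\mathbf p};\xi_1)$, and output the estimates $\bar r = \Round((\hat r_{s,a_s})_{s}, \Theta(\varepsilon); \xi_2)$ via \Cref{lem:randomized-rounding}, where $\xi_1,\xi_2$ are independent shared random strings.

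Correctness would follow essentially coordinate-wise from \Cref{thm:single-bandit}: conditioning on $|\hat r_{s,a}-\Ep[\rewD_{s,a}]|\le \varepsilon/4$ for all $s,a$ (which holds with probability $\ge 1-\delta/2$ by Hoeffding and a union bound over the $SA$ pairs, using $|\dataset_{s,a}|\ge m_s \gg \log(SA/\delta)/\varepsilon^2$), the single-instance analysis gives that each marginal $\hat{\mathbf p}_s$ places all but $\delta/(2S)$ of its mass on $\varepsilon$-optimal arms for $\rewD_s$, so a union bound makes $a$ simultaneously $\varepsilon$-optimal in every instance; and $\|(\hat r_{s,a_s})_s - \bar r\|_\infty \le \Theta(\varepsilon)$ from \Cref{lem:randomized-rounding} combined with the above concentration makes each $\bar r_s$ within $\varepsilon$ of the true utility of $a_s$ (taking the rounding accuracy small enough).

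The crux — and the reason the weak budget $\sum_s 1/m_s \ll \rho^2\varepsilon^2/\log^3(SA/\delta)$ suffices rather than a per-instance requirement of order $m_s \gtrsim S^2$ — is replicability, which I would establish via tensorized $\chi^2$. Let $\hat{\mathbf p}^{(1)},\hat{\mathbf p}^{(2)}$ be the empirical joint mechanisms over two runs; by correlated sampling it suffices that $\dtv(\hat{\mathbf p}^{(1)},\hat{\mathbf p}^{(2)}) \ll \rho$, and I would get this from $2\,\dtv(\hat{\mathbf p}^{(1)},\hat{\mathbf p}^{(2)})^2 \le \KL(\hat{\mathbf p}^{(1)}\|\hat{\mathbf p}^{(2)}) = \sum_s \KL(\hat{\mathbf p}^{(1)}_s\|\hat{\mathbf p}^{(2)}_s) \le \sum_s \chi^2(\hat{\mathbf p}^{(1)}_s\|\hat{\mathbf p}^{(2)}_s)$ (\Cref{lemma:kl-additivity}, \Cref{lemma:divergence-relationships}), reducing everything to a per-instance estimate. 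Fixing $s$ and conditioning on $|\hat r^{(i)}_{s,a}-\Ep[\rewD_{s,a}]| \le \gamma_s := O(\sqrt{\log(SA/\delta)/m_s})$ for both runs, the triangle inequality gives $|\hat r^{(1)}_{s,a}-\hat r^{(2)}_{s,a}|\le 2\gamma_s$, and rerunning the exponential-mechanism computation from \Cref{thm:single-bandit} yields $|\hat{\mathbf p}^{(1)}_s(a)-\hat{\mathbf p}^{(2)}_s(a)| \le (e^{2t\gamma_s}-1)\hat{\mathbf p}^{(2)}_s(a) = O(t\gamma_s)\hat{\mathbf p}^{(2)}_s(a)$ (note $t\gamma_s \ll \rho \le 1$ by the budget), hence $\chi^2(\hat{\mathbf p}^{(1)}_s\|\hat{\mathbf p}^{(2)}_s) \le (e^{2t\gamma_s}-1)^2 = O(t^2\gamma_s^2) = O\big(\log^3(SA/\delta)/(\varepsilon^2 m_s)\big)$. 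Summing over $s$ gives $\dtv(\hat{\mathbf p}^{(1)},\hat{\mathbf p}^{(2)})^2 = O\big(\varepsilon^{-2}\log^3(SA/\delta)\sum_s 1/m_s\big) \ll \rho^2$, so $a$ is replicable with probability $\ge 1-\rho/2$. Since $a$ is then identical across runs, $\|(\hat r^{(1)}_{s,a_s})_s - (\hat r^{(2)}_{s,a_s})_s\|_2^2 \le \sum_s (2\gamma_s)^2 = O(\log(SA/\delta)\sum_s 1/m_s) \ll \varepsilon^2\rho^2/\log^2(S\rho^{-1})$, which is exactly the regime in which $\Round$ is consistent, so $\bar r$ is replicable as well; a union bound over all conditioned events and the two independent substrings $\xi_1,\xi_2$, plus the usual constant-factor rescaling of $\rho$ and $\delta$, finishes it.

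I expect the main obstacle to be precisely the point forcing the $\exp((1+o(1))S\log A)$ runtime: the $\ell_2$-style tensorization above genuinely needs correlated sampling applied to the \emph{joint} distribution on $[A]^S$. The computationally efficient product variant $\ProdCorrSampling$ (\Cref{lem:product-corr}) only gives error $2\sum_s \dtv(\hat{\mathbf p}^{(1)}_s,\hat{\mathbf p}^{(2)}_s)$, an $\ell_1$ bound that would demand $\dtv(\hat{\mathbf p}^{(1)}_s,\hat{\mathbf p}^{(2)}_s) \ll \rho/S$ and hence $m_s \gtrsim S^2\log^3(SA/\delta)/(\rho^2\varepsilon^2)$ per instance — an extra factor of $S$ in the total budget, exactly what the joint approach avoids. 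Running $\corrSampling$ over $[A]^S$ costs $\tilde O(A^S) = \exp((1+o(1))S\log A)$; computing the empirical means and assembling $\hat{\mathbf p}$ is $\poly{\sum_{s,a}|\dataset_{s,a}|}$, and $\Round$ contributes a lower-order $\exp(O(S))$ term, matching the claimed runtime. A secondary subtlety to get right is that $\{m_s\}$ is unknown and the dataset sizes are random; this is handled by using every available sample and noting that all the bounds above invoke only the almost-sure lower bound $|\dataset_{s,a}|\ge m_s$.
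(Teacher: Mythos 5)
Your proposal is correct and follows essentially the same route as the paper's proof: empirical means feed a joint (product) exponential mechanism on $[A]^S$, replicability of the selected arm vector comes from correlated sampling plus the tensorized $\KL \le \sum_s \chi^2$ bound with per-instance $\chi^2 = O(t^2\log(SA/\delta)/m_s)$, and the value estimates are made replicable with \Round{} after conditioning on identical arm selections, with the same runtime accounting. The only cosmetic differences (independent random substrings for the two subroutines, and your explicit remark on why \ProdCorrSampling{} would lose a factor of $S$) do not change the argument.
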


The computationally efficient version of the algorithm follows by running $S$ best-arm algorithms in parallel, and then use a replicable statistic to estimate the expected utility of the selected arm. 
To get the sample efficient version, instead of running correlated sampling on the exponential mechanism distributions of each problem instance individually, we directly sample from their joint distribution supported on the space $[A]^S$. After that, we use the technique of high-dimensional randomized rounding to construct a replicable mean estimation of the expected utilities of the selected arms.  
This turns out to save a factor of $S$ in sample complexity at the cost of increasing the runtime to $\tilde O \lp( A^{S} \rp)$. 

\begin{proof}[Proof of \Cref{lem:var-multi-bandit}]
    The algorithm is given in \Cref{alg:RepVarBandit}.
    We denote by $\rewD_{s, a}$ the random variable describing the distribution of the utility of the $a$-th arm of the $s$-th instance, and define $\rewe_{s, a} = \E{\rewD_{s, a}}$.

    \begin{Ualgorithm}
    {\noindent \textbf Input:}
    For each $s \in [S], a \in [A]$, 
     a dataset $\mathcal D_{s,a,}$ made up of \iid samples from the distribution of $\rewD_{s,a}$, accuracy parameter $\eps>0$, and failure probability $\delta>0$.\\
     {\noindent \textbf Output:}
     For each $s \in [S]$, a selected arm $a_s \in [A]$ and an estimate $\bar r_s \in (0, 1)$.
    \begin{enumerate}[leftmargin=*]
        \item Let $\hat{r}_{s, a}$ denote the empirical reward computed from $\mathcal D_{s,a}$.
        \item Define the distribution $\hat{\distribution}_{s}$ on $[A]$ according to the exponential mechanism: $a$ is sampled with probability proportional to $\exp(\expMechScoreScale \hat{r}_{s, a})$, 
        where $\expMechScoreScale = 2\log(3SA/\delta) / \eps$.
        \item Define $\hat{\distribution}$ on $[A]^{S}$ as the product distribution $(\hat{\distribution}_{1}, \dotsc, \hat{\distribution}_{S})$.
        \item Using shared internal randomness $\xi$, sample $ \{a_s\}_s \sim \corrSampling(\hat{\distribution} ; \xi)$ (\Cref{lemma:correlated-sampling}) \label{line:multi-bandit-sample}.
        \item Construct the vector $\tilde r \in [0,1]^S$, where
        $\tilde r_s = \hat{r}_{s, a_s}$.
        \item Using shared internal randomness $\xi$, compute $\bar r \gets \Round( \tilde r, \eps/2;\xi )$ (\Cref{lem:randomized-rounding}).
        \label{line:multi-bandit-round}
        \item Output $a_s$ and $\bar r_s$
        for each $s$.
    \end{enumerate}
    \caption{\RepVarBandit}
    \label{alg:RepVarBandit}
    \end{Ualgorithm}

    We first show some preliminary claims.
    Recall that $\{m_s\}_{s \in [S]}$ are numbers satisfying 
    \begin{align}
    \label{eq:sample-bound}
    \sum_s \frac{1}{m_s}
    \leq \frac{\rho^2 \eps^2}{  C (\log(3SA/\delta))^{3} } \, ,
    \end{align}
    for some sufficiently large constant $C$, 
    and $\bar r_{s,a}$ is the empirical reward estimate computed from the dataset $\mathcal D_{s,a}$ satisfying $| \mathcal D_{s,a} | \geq m_s$.
    Observe that under \Cref{eq:sample-bound}, we must also have
    \begin{equation}
    \label{eq:max-sum-bound}
        \max_{s} \sqrt{\frac{\log(3SA/\delta)}{m_{s}}} 
        \leq  \sqrt{ \sum_s \frac{\log(3SA/\delta)}{m_{s}}} 
        \leq \sqrt{\log(3SA/\delta)} \sqrt{\sum_{s} \frac{1}{m_{s}}} \ll \frac{\varepsilon \rho}{2}
    \end{equation}
    where the last inequality follows from \Cref{eq:sample-bound}.
    Besides, by the Chernoff bound and the union bound, we have
    \begin{align}
    \label{eq:arm-concentration}
    |\hat{r}_{s, a} - r_{s, a}| \leq \sqrt{\frac{\log(3SA/\delta)}{m_{s}}}        
    \end{align}
    for all $s, a$ with probability at least $1 - \delta/3$.
    We will condition on this event in the remaining analysis.
    The proof consists of three parts.
    In the first part, we argue the correctness of the arms selected. 
    In the second part, we argue the arms selected are replicable.
    Lastly, we show the the estimates $\bar r_s$ are accurate and replicable.

    \paragraph{Arm Selection Correctness}
    Following the proof of \Cref{thm:single-bandit}, for each $s$, we proceed to bound the probability 
    that an arm with expected utility less than $ \max_{a} r_{s,a} - \eps $ is sampled following the exponential mechanism distribution.
    In particular, fix some sub-optimal arm $a$ such that
    \begin{align}
    \label{eq:sub-optimal-assumption}        
    r_{s,a} \leq \max_{a'} r_{s,a'} - \eps.
    \end{align}
    For this specific sub-optimal arm $a$, we have that
    \begin{equation*}
        \hat{r}_{s, a} \leq r_{s, a} + \sqrt{\frac{\log(3SA/\delta)}{m_{s}}} \leq \max_{a} r_{s, a} - \varepsilon/2 - \sqrt{\frac{\log(3SA/\delta)}{m_{s}}} \leq \max_{a} \hat{r}_{s, a} - \varepsilon/2 \, ,
    \end{equation*}
    where the first inequality follows from \Cref{eq:arm-concentration}, the second inequality follows from our assumption \Cref{eq:sub-optimal-assumption} and \Cref{eq:max-sum-bound}, and the last inequality follows from \Cref{eq:max-sum-bound} and \Cref{eq:arm-concentration}.
    It follows that the exponential mechanism selects $a$ for instance $s$ with probability at most $\exp(\expMechScoreScale \eps / 2) < \frac{\delta}{3SA}$ since the algorithm sets $\expMechScoreScale = 2\log(3SA/\delta) / \eps$.
    The correctness of the algorithm then follows from the union bound.

    \paragraph{Arm Selection Replicability}
    Suppose we run the algorithm on two sets of independently sampled datasets $\set{\dataset_{s, a}^{(1)}}, \set{\dataset_{s, a}^{(2)}}_{s \in [S], a \in [A]}$, satisfying $ \min_i  \dataset_{s,a}^{(i)} \geq m_s $ for all $s,a$.
    Following earlier notations, denote by $\hat{r}_{s, a}^{(i)}$ the empirical reward estimate computed from $\mathcal D_{s,a}^{(i)}$.
    Again, we condition on the event $|\hat{r}_{s, a}^{(i)} - r_{s, a}| \leq \sqrt{\frac{\log(3SA/\delta)}{m_{s}}}$ for all $i, s, a$.
    Denote by $\hat{\distribution}^{(i)}$ the distribution on $[A]^{S}$ induced by the exponential mechanism using the estimates $\{ \hat r_{s,a}^{(i)} \}_{s,a}$.
    Moreover, denote by $\hat{\distribution}^{(i)}_{s,a}$ the probability that arm $a$ is sampled for instance $s$ in the $i$-th run.
    
    It follows from  \Cref{lemma:divergence-relationships} and \Cref{lemma:kl-additivity} that
    \begin{align}
        \tvd{\hat{\distribution}^{(1)}, \hat{\distribution}^{(2)}}^2 &\leq \KL(\hat{\distribution}^{(1)} || \hat{\distribution}^{(2)}) \nonumber \\
        &= \sum_{s = 1}^{S} \KL\left(\hat{\distribution}^{(1)}_{s} || \hat{\distribution}^{(2)}_{s}\right) \nonumber \\
        &\leq \sum_{s = 1}^{S} \chi^2 \left(\hat{\distribution}^{(1)}_{s} || \hat{\distribution}^{(2)}_{s}\right).
        \label{eq:TV-chi}
    \end{align}
    It remains to bound the $\chi^2$ divergence of a single instance.    
    Using our concentration bound and the triangle inequality, we observe that $|\hat{r}_{s, a}^{(1)} - \hat{r}_{s, a}^{(2)}| \leq 2 \sqrt{\frac{\log(3SA/\delta)}{m_{s}}}$.
    Following similar computations as \Cref{thm:single-bandit} we have that for all $s,a$,
    \begin{equation*}
        |\hat{\distribution}_{s,a}^{(1)} - \hat{\distribution}_{s,a}^{(2)}| \leq \hat{\distribution}_{s,a}^{(2)} \cdot 8 \expMechScoreScale \cdot \sqrt{\frac{\log(3SA/\delta)}{m_{s}}}.
    \end{equation*}
    As a result, we can bound the $\chi^2$ divergence by
    \begin{align}
        \chi^2 \left(\hat{\distribution}^{(1)}_{s} || \hat{\distribution}^{(2)}_{s}\right) &= \sum_{a} \frac{\left(\hat{\distribution}^{(1)}_{s}(a) - \hat{\distribution}^{(2)}_{s}(a)\right)^2}{\hat{\distribution}^{(2)}_{s}(a)} \nonumber \\
        &\leq \sum_{a} \hat{\distribution}^{(2)}_{s}(a) \left( 8 \expMechScoreScale \sqrt{\frac{\log(3SA/\delta)}{m_{s}}} \right)^{2} 
        \nonumber \\
        &\leq \frac{64 t^{2} \log(3SA/\delta)}{m_{s}} 
        \nonumber \\
        &= \frac{256 \left(  \log (3SA/\delta)\right)^{3}}{\varepsilon^{2} m_{s}}.
        \label{eq:chi-bound}
    \end{align}
    Combining \Cref{eq:TV-chi,eq:chi-bound} then gives that
    \begin{equation*}
        \tvd{\hat{\distribution}^{(1)}, \hat{\distribution}^{(2)}}^{2} \leq \sum_{s = 1}^{S} \frac{256 \left(  \log (3SA/\delta)\right)^{3}}{\varepsilon^{2} m_{s}} \leq \frac{256 \left(  \log (3SA/\delta)\right)^{3}}{\varepsilon^{2}} \frac{\rho^{2} \varepsilon^{2}}{C \left(  \log (3SA/\delta)\right)^{3}} \leq \frac{\rho^{2}}{36} \, ,
    \end{equation*}
    where the last inequality holds as long as $C$ is set to be a sufficiently large constant.
    Thus, if we apply correlated sampling with shared internal randomness, it follows from \Cref{lemma:correlated-sampling} that the two runs of the algorithm will have identical outputs with probability at least $1 - \rho/3$.

    \paragraph{Replicability and Accuracy of $\bar r$}
    Recall that we define $\tilde r \in [0, 1]^S$ to be such that $\tilde r_s = \hat{r}_{s, a_s}$, where $a_s \in [A]$ is the selected arm with instance $s$.
    Consider the vector $r^* \in [0, 1]^S$ defined as $r^*_s = r_{s, a_s}$.
    It follows immediately from \Cref{eq:arm-concentration} and \Cref{eq:max-sum-bound} that
    \begin{align}
    \label{eq:select-arm-estimate}
        \| \tilde r - r^* \|_2 \ll \eps \rho.
    \end{align}
    By \Cref{lem:randomized-rounding}, \Round~with accuracy parameter $\eps$
    produces some vector $\bar r$ such that $\|  \bar r - \tilde r \|_{\infty} \ll \eps$.
    By the triangle inequality, it follows that
    $\| \bar r - r^*  \|_{\infty} \leq \eps$, thus obtaining accuracy.
    Let $\tilde r^{(1)}$ and $\tilde r^{(2)}$ be the estimates obtained in two runs of the algorithm.
    Conditioned on that the arms selected in the two runs are identical, the vector $r^*$ must also be identical in the two runs.
    It then follows from \Cref{eq:select-arm-estimate} and the triangle inequality that 
    $\|  \bar r^{(1)} - \bar r^{(2)} \|_2 \ll \eps \rho$.
    Therefore, running $\Round$ with accuracy $\eps$ is $\rho$-replicable.

    \paragraph{Runtime Analysis}
    We now discuss the computational efficiency of the algorithm. 
    All sub-routines run in polynomial time algorithm except for the correlated sampling procedure and the randomized rounding procedure. 
    The correlated sampling procedure runs in time $\exp\lp( (1 + o(1)) S \log A \rp)$ by \Cref{lemma:correlated-sampling},  
    and the randomized rounding procedure runs in time $\exp \lp(  (1 + o(1)) S \rp)$ by \Cref{lem:randomized-rounding}.
    Therefore, the total runtime is $\poly{SA\eps^{-1} \rho^{-1}} + \exp\lp( (1 + o(1)) S \log A \rp)
    $.
\end{proof}

\section{Reinforcement Learning from Best-Arm Selection}
In this section, we present our algorithm for reinforcement learning given an algorithm for replicable exploration satisfying certain nice properties.
The exploration algorithm will be covered in \Cref{sec:exploration}.

\paragraph{Preliminaries: Boosting Replicability and Success Probability}
Throughout this section, we give policy estimators with constant replicability and success probabilities.
Here, we describe a generic procedure to arbitrarily boost replicability and success probabilities.
It is known that given a PAC learner that succeeds with probability at least $2/3$, one boost its success probability to $1 - \delta$ by blowing up the sample complexity by a factor of $\log(1/\delta)$.
Moreover, given a $0.1$-replicable PAC learning, one can boost its replicability by blowing up the sample complexity by a factor of $\rho^{-2}$ \citep{impagliazzo2022reproducibility, hopkins2024replicability}.
We show a similar boosting lemma holds for PAC policy estimators as well.
\begin{restatable}[Boosting Replicability and Success Probability of RL]{lemma}{RLReplBoosting}
\label{lem:repl-boosting}
    Let $\delta < \rho < 1/2$.
    Suppose there is an $0.1$-replicable $(\varepsilon/2, 0.1)$-PAC policy estimator with sample complexity $m$ in the episodic (resp. parallel sampling) setting.
    Then, there is a $\rho$-replicable $(\varepsilon, \delta)$-PAC policy estimator with sample complexity $\bigtO{m \cdot \frac{\log^4(1/\delta)}{\rho^2}  + \frac{H \cdot \log^4(1/\delta)}{ \eps^2 \rho^2 }}
    $ in the episodic setting, or $\bigtO{m \cdot \frac{\log^4(1/\delta)}{\rho^2}  + \frac{SAH \cdot \log^4(1/\delta)}{ \eps^2 \rho^2 }}$ in the parallel sampling setting.
    Furthermore, the runtime is  
    $\poly{ m  H \rho^{-1} \eps^{-1} \log(1/\delta) } 
$  \\ $+ \; \tilde O \lp(   \rho^{-2}  \log^4(1/\delta) \rp) T( \innerAlg ) \, ,
$ where we denote by $T( \innerAlg )$ the runtime of $\innerAlg$.
\end{restatable}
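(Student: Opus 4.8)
The plan is to lift the two standard boosting primitives for PAC learning --- success amplification (a $\log(1/\delta)$ blow-up) and replicability amplification (a $\rho^{-2}$ blow-up \cite{impagliazzo2022reproducibility,hopkins2024replicability}) --- to the policy-estimation setting, performed in the order \emph{replicability-boost, then success-boost, then a replicable selection step}. The one genuinely new ingredient is that selecting the best of several candidate policies now requires \emph{estimating their values} by rollouts, and the sample cost of doing this replicably is exactly the additive term of the bound. I would split $\innerAlg$'s shared random string into a block used to seed the base estimator in Stages~1--2 and a block used for correlated sampling in Stage~3; the value-estimation rollouts use fresh, non-shared samples.

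\textbf{Stages 1--2 (boost replicability, then success).} First, apply the black-box replicability amplification of \cite{impagliazzo2022reproducibility,hopkins2024replicability} to $\innerAlg$, viewing its output purely as an element of the finite set $\actions^{\states\times[H]}$ of policies: run $\innerAlg$ roughly $\tilde{O}(\rho'^{-2})$ times on independent (episodic, resp.\ parallel) samples with appropriately correlated seeds and extract a canonical heavy output via a shared random threshold; this yields a $\rho'$-replicable $(\eps/2,\delta_0')$-PAC estimator $\innerAlg'$ for an absolute constant $\delta_0'<1/2$, using $\tilde{O}(m/\rho'^2)$ samples. Set $\rho' := \rho/(c\log(1/\delta))$ and $k := O(\log(1/\delta))$, and run $\innerAlg'$ a total of $k$ times, \emph{reusing the same internal string across the two executions of the final algorithm} but with fresh samples for each run, obtaining $\pi_1,\dots,\pi_k$. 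With its seed fixed each run of $\innerAlg'$ is $\rho'$-replicable, so by a union bound the tuple $(\pi_1,\dots,\pi_k)$ agrees across the two executions with probability $\ge 1-k\rho'\ge 1-\rho/4$; and since the $k$ runs use independent samples, with probability $\ge 1-(\delta_0')^k\ge 1-\delta/4$ some $\pi_i$ is $(\eps/2)$-optimal. The cost so far is $\tilde{O}(km/\rho'^2)=\tilde{O}(m\log^4(1/\delta)/\rho^2)$, the first term.

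\textbf{Stage 3 (replicable selection).} For each $i\in[k]$ compute an estimate $\hat V_i$ of $V(\pi_i,\mdp)$ accurate to $\pm\alpha$ with confidence $1-\delta/(8k)$, where $\alpha := \rho\eps/(c'\log(k/\delta))$: in the episodic model by rolling $\pi_i$ out for $\tilde{O}(H^2\log(k/\delta)/\alpha^2)$ episodes, in the parallel model by drawing that many parallel samples and reading off $\pi_i$'s trajectory reward from each (a single parallel batch serves all $k$ policies). By Hoeffding's inequality and a union bound this costs $\tilde{O}(H\log^4(1/\delta)/(\eps^2\rho^2))$ samples in the episodic model and $\tilde{O}(SAH\log^4(1/\delta)/(\eps^2\rho^2))$ in the parallel model --- the additive terms. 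Finally, using the shared string, output a policy drawn from the exponential mechanism over $\{\pi_1,\dots,\pi_k\}$ with scores $\{\hat V_i\}$ and scale $\expMechScoreScale=\Theta(\log(k/\delta)/\eps)$, realized via correlated sampling (\Cref{lemma:correlated-sampling}). Exactly as in the proof of \Cref{thm:single-bandit}: the exponential mechanism picks a candidate within $\eps/2$ of $\max_i V(\pi_i,\mdp)$ with probability $\ge 1-\delta/4$, which together with the good candidate from Stage~2 yields an $\eps$-optimal output; and, conditioning on the Stage-2 event that the candidate tuples agree and on all estimates being $\alpha$-accurate, the two induced exponential-mechanism distributions have total variation $O(\expMechScoreScale\alpha)\le\rho/8$, so correlated sampling returns the same policy with probability $\ge 1-\rho/4$. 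Summing the failure events gives replicability $\ge 1-\rho$ and success $\ge 1-\delta$ (using $\delta<\rho$); since $\innerAlg$ is invoked $\tilde{O}(\rho^{-2}\log^4(1/\delta))$ times while all other steps are polynomial in $m,H,\eps^{-1},\rho^{-1},\log(1/\delta)$, the runtime matches the stated bound.

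\textbf{Main obstacle.} The delicate point is making Stage~3 replicable: the $\hat V_i$ are computed from fresh samples and are \emph{not} themselves replicable, so a naive ``return the highest-value candidate'' rule breaks on near-ties. This is exactly what dictates the order of operations --- boosting replicability \emph{before} success and reusing $\innerAlg'$'s seeds across the $k$ copies is what makes $\{\pi_1,\dots,\pi_k\}$ a fixed replicable object, so that one exponential mechanism over it (with correlated sampling) replaces $k$ separate replicable comparisons. The rest is parameter bookkeeping --- choosing $\rho'$ and $\alpha$ with the right $\polylog(1/\delta)$ slack so the three stages' error and replicability budgets add up, and instantiating the value-estimation subroutine separately in the episodic and parallel models, which is where the $H$ versus $SAH$ gap in the additive term comes from.
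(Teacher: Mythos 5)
Your proposal is correct and essentially mirrors the paper's proof: the paper likewise builds a replicable list of $O(\log(1/\delta))$ candidate policies — by fixing $k$ shared seeds and running $\RepHeavy$ on each seed's output distribution, which is exactly the machinery inside the black-box replicability amplification you invoke — and then replicably selects among them by treating each candidate as an arm whose value is estimated from rollouts, via its replicable best-arm routine (\Cref{thm:single-bandit}), i.e., the same exponential-mechanism-plus-correlated-sampling step you carry out inline. The only discrepancy is bookkeeping of the $H$-dependence in the additive value-estimation term (returns lie in $[0,H]$, so your own choice of $\alpha$ yields extra $\poly{H}$ factors beyond the stated bound), a looseness the paper's own accounting shares and which is absorbed by the $\poly{H}$ factors in the final theorem.
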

A key sub-routine is the following procedure for selecting heavy hitters of a distribution replicably.
This has been studied in several works \citep{impagliazzo2022reproducibility, kalavasis2023statistical, hopkins2024replicability}.

\begin{lemma}[\cite{kalavasis2023statistical, hopkins2024replicability}]
    \label{lem:repl-heavy-hitters}
    Let $\domain$ be a domain.
    For any any distribution $\distribution$ over $\domain$ and $\nu, \delta, \rho, \varepsilon \in (0, 1)$ with $4 \delta < \rho, 4 \varepsilon < \nu$, $\RepHeavy(\distribution, \nu, \varepsilon, \rho, \delta)$ is a $\rho$-replicable algorithm outputting $S \subset \domain$ satisfying the following with probability at least $1 - \delta$:
    \begin{enumerate}
        \item If $\distribution(x) > \nu'$, then $x \in S$,
        \item If $\distribution(x) < \nu'$, then $x \not\in S$,
    \end{enumerate}
    where $\nu' \in (\nu - \varepsilon, \nu + \varepsilon)$. 
    Furthermore, the algorithm has sample complexity $\bigO{\frac{1}{(\nu - \varepsilon) \varepsilon^{2} \rho^{2}} \log \frac{1}{\delta (\nu - \varepsilon)}}$, and is efficient.
\end{lemma}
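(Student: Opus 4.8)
The plan is the standard ``random threshold'' approach to replicable heavy-hitter selection, with replicability obtained directly from a shared threshold (no correlated sampling needed). The algorithm $\RepHeavy$ draws $m = \Theta\!\left(\frac{1}{(\nu-\varepsilon)\varepsilon^{2}\rho^{2}}\log\frac{1}{\delta(\nu-\varepsilon)}\right)$ i.i.d.\ samples from $\distribution$, records the empirical frequency $\hat p(x)$ of each observed element, uses the shared random string $\xi$ to draw a single threshold $\nu' \sim \UnifD{[\nu-\varepsilon/2,\ \nu+\varepsilon/2]}$, and outputs $S := \{x : \hat p(x) > \nu'\}$. Correctness will follow from concentration of $\hat p$, while replicability will follow because a uniformly random threshold is unlikely to land within the estimation error of any element's true mass, so the two runs classify every element the same way.

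For correctness I would split the elements by their mass. First, there are at most $O(1/(\nu-\varepsilon))$ \emph{borderline} elements with $\distribution(x)\in[\nu/4,\nu+\varepsilon]$; a Bernstein bound and a union bound over this fixed set give $|\hat p(x)-\distribution(x)|\le\tau$ for all of them with probability $1-\delta/4$, where $\tau:=c\,\rho(\nu-\varepsilon)\varepsilon$ for a small constant $c$ --- the key point being that the variance of each such $\hat p(x)$ is $O(\nu/m)$, which is exactly what lets $m$ scale as $(\nu-\varepsilon)^{-1}$ rather than $(\nu-\varepsilon)^{-2}$. Second, each of the at most $1/\nu$ \emph{heavy} elements with $\distribution(x)>\nu+\varepsilon$ satisfies $\hat p(x)>\nu+\varepsilon/2>\nu'$ by a crude multiplicative Chernoff bound (probability $1-\delta/4$), so they are always included. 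Third, for the (possibly many) \emph{light} elements with $\distribution(x)<\nu/4$ I would bound $\Pr[\exists x : \hat p(x)>\nu/2]$ by $\sum_x (e\,m\,\distribution(x)/k)^{k}$ with $k=m\nu/2$ and use $\sum_x\distribution(x)\le1$ to obtain a tail geometric in $m$, so with probability $1-\delta/4$ no light element is ever included. On the intersection of these events (probability $\ge1-\delta$) one gets $\{x:\distribution(x)>\nu'+\tau\}\subseteq S\subseteq\{x:\distribution(x)\ge\nu'-\tau\}$ with $\nu'\in(\nu-\varepsilon,\nu+\varepsilon)$, which is the stated guarantee once the lower-order $\tau\ll\varepsilon$ slack is folded into $\varepsilon$ (e.g.\ by running with $\varepsilon/2$).

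For replicability, run the algorithm twice with the same $\xi$ (hence the same $\nu'$) on independent sample sets producing $\hat p^{(1)},\hat p^{(2)}$. Condition on the correctness event above holding for both runs (probability $\ge1-2\delta$) and on the \emph{separation} event that $|\distribution(x)-\nu'|>2\tau$ for every borderline element $x$; since there are only $O(1/(\nu-\varepsilon))$ borderline elements and $\nu'$ is uniform on an interval of width $\varepsilon$, a union bound makes separation fail with probability at most $O\!\left(\tau/((\nu-\varepsilon)\varepsilon)\right)\le\rho/2$ by the choice of $\tau$. On this event every element is light (both runs exclude it), heavy (both runs include it), or borderline with $\distribution(x)$ farther than $2\tau$ from $\nu'$, in which case $\hat p^{(1)}(x)$ and $\hat p^{(2)}(x)$ both lie within $\tau$ of $\distribution(x)$ and hence fall on the same side of $\nu'$. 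Thus $S^{(1)}=S^{(2)}$, and the total failure probability is at most $2\delta+\rho/2<\rho$ since $4\delta<\rho$. Computationally the algorithm inspects only the $\le m$ observed elements and performs elementary arithmetic and one uniform draw, so it runs in time $\poly{m}$ and is efficient.

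The step I expect to be the main obstacle is matching the claimed sample complexity exactly --- in particular obtaining the $(\nu-\varepsilon)^{-1}$ (not $(\nu-\varepsilon)^{-2}$) dependence. This forces a variance-aware (Bernstein) bound for the borderline elements rather than a naive additive Hoeffding bound, plus a careful union bound over the light elements that does not degrade as the support of $\distribution$ grows; tuning $\tau$, $m$, and the several union bounds so they \emph{simultaneously} meet the correctness, replicability, and sample-complexity targets is the delicate part.
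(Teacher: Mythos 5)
The paper does not prove this lemma --- it is imported from \cite{kalavasis2023statistical, hopkins2024replicability} --- and your random-threshold construction is essentially the standard proof from those works (empirical frequencies, a shared uniformly random cutoff in an interval of width $\Theta(\varepsilon)$ around $\nu$, Bernstein for the $O(1/\nu)$ borderline elements to get the $(\nu-\varepsilon)^{-1}$ rather than $(\nu-\varepsilon)^{-2}$ dependence, and replicability from the threshold avoiding a $\Theta(\rho\varepsilon(\nu-\varepsilon))$-window around each borderline mass). So in spirit your proposal matches the intended argument and the sample-complexity accounting is right.

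Two caveats. First, your correctness conclusion is the two-sided sandwich $\{x:\mathbf p(x)>\nu'+\tau\}\subseteq S\subseteq\{x:\mathbf p(x)\ge \nu'-\tau\}$, and ``folding $\tau$ into $\varepsilon$'' does not literally yield the stated guarantee: the lemma asserts a single exact cutoff $\nu'$ with strictly-heavier elements in $S$ and strictly-lighter ones out, and on the concentration event alone two elements with true masses within $2\tau$ of each other can be classified in inverted order, so no valid $\nu'$ exists. You do get the exact-cutoff form on the intersection with your separation event, but that event fails with probability $\Theta(\rho)$, not $\delta$, so what your argument proves is correctness with probability $1-\delta-O(\rho)$ (equivalently, the exact-threshold guarantee should be charged to the replicability budget). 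This is the same looseness present in the literature's statements, and it is harmless for every use of the lemma in this paper (only the weaker ``contains all $(\nu+\varepsilon)$-heavy, excludes all $(\nu-\varepsilon)$-light'' version is needed, e.g.\ in the boosting lemma), but as a proof of the literal statement it is a gap you should acknowledge or patch. Second, your tail bound for the light elements is broken at the constants you chose: with light cutoff $\nu/4$ and inclusion threshold $\nu/2$, the bound $(em\,\mathbf p(x)/k)^k$ with $k=m\nu/2$ has base up to $e/2>1$ and does not decay. This is easily fixed --- compare against the actual threshold $\nu'\ge\nu-\varepsilon/2>7\nu/8$ (so the base is at most $2e/7<1$), or lower the light cutoff and handle the intermediate band of at most $O(1/\nu)$ elements by a direct Chernoff--union argument --- but as written that step fails.
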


\begin{proof}[Proof of \Cref{lem:repl-boosting}]

We consider the episodic setting as the parallel sampling setting can be treated in an almost identical manner.
Let $\innerAlg$ be a $0.1$-replicable $(\eps/2, 0.1)$-PAC policy estimator in the episodic setting.
Consider the following standard amplification procedure (see e.g., \cite{hopkins2024replicability}).
Sample $k = O(\log(1/\delta))$ random strings $\{ \xi_i \}_{i=1}^k$. 
Denote by $\distribution_i$ the output distribution of $\innerAlg$ using $\xi_i$ as the seed, where the randomness is over that of the MDP.
Set $\rho' \gets \frac{\rho}{2k}$ and 
$\delta' \gets \frac{\delta}{3k}$.
For $1 \leq i \leq k$, run $S_i \gets \RepHeavy( \distribution_i , 0.6, 0.05, \rho', \delta')$ to obtain heavy hitter policies.
In \cite{hopkins2024replicability}, the rest of the procedure is to choose a random output from $\bigcup_i  S_i$.
In our case, we make the slight modification that we will 
design a $\rho$-replicable procedure
that finds a nearly optimal policy among $\bigcup_i  S_i$ .
Note that for each policy $\pi \in \bigcup_i  S_i$, we can easily evaluate its total return with $1$ episode of interaction.
Therefore, this can be achieved by running the $\rho$-replicable algorithm that solves the $( A:=\lp| \bigcup_i  S_i \rp|, \eps/2)$-best arm problem with probability $\delta / 3$ (see \Cref{thm:single-bandit}).
    
For replicability, it suffices to union bound over all $k$ instances of $\RepHeavy$, each of which is $\frac{\rho}{2k}$-replicable, and the best arm identification algorithm, which is $\rho/2$-replicable.

For correctness, we need the concept of ``good seed strings''.
We say $\xi$ is a good seed string if (1) there is a canonical hypothesis $\pi$ such that 
$\innerAlg$ running with seed $\xi_i$ produces $\pi$ with probability at least $2/3$, where the randomness is over the MDP, and (2) the canonical hypothesis $\pi$ is $\eps/2$-optimal.
By a counting argument similar to that in \cite{hopkins2024replicability}, we can conclude that a random string $\xi$ is a good 
seed with probability at least $2/3$.
It follows that there is at least one good seed string $\xi_i$ among the $k$ random strings sampled with probability at least $1 - \delta/3$.
We will condition on the above event and that 
all $k$ instances of $\RepHeavy$, and the best arm algorithm succeed, which happen with probability at least $1 - \delta$ by the union bound. For that good random string $\xi_i$, the corresponding heavy hitter is then guaranteed to be a singleton $\{ \pi \}$ for some $\eps/2$-optimal policy $\pi$.
Then, the best-arm algorithm with accuracy $\eps/2$ is guaranteed to find some policy that is $\eps$-optimal.

Next, we analyze the number of episodes consumed.
Each execution of $\RepHeavy$ invokes 
$\innerAlg$ for 
$O \lp(  \rho'^{-2} \log(1/\delta') \rp)$ many times.
Since there are $k$ executions of $\RepHeavy$, this process consumes at most $\tilde O \lp( \rho^{-2} \log^4(1/\delta) m \rp)$ samples.
The total number of arm queries of the best-arm algorithm is
$ \tilde O \lp( \lp| \bigcup_i  S_i \rp| \log^3(1/\delta)
\eps^{-2} \rho^{-2} \rp)  $.
We note that the size of each $| S_i |$ is at most $1$ as there can be at most $1$ element in a distribution with mass more than $0.6$.
Moreover, each arm query is simulated by running the corresponding policy $\pi$ with one episode, which consumes $H$ samples.
Therefore, the number of samples consumed by the best arm algorithm is at most $ \tilde O \lp(  \log^4(1/\delta) \eps^{-2} \rho^{-2} \rp)$.

Lastly, we analyze the runtime of the algorithm.
Given oracle access to $\innerAlg$, our algorithm simply executes $\RepHeavy$ and the best arm algorithm of \Cref{thm:single-bandit}.
Both of these algorithms have run-time polynomial in sample complexity.
The number of times we invoke $\innerAlg$ is at most 
$ \tilde O \lp(   \rho^{-2}  \log^4(1/\delta) \rp) $ as discussed in the sample complexity analysis.
Thus, in total, the runtime is at most 
$ \poly{ m  H \rho^{-1} \eps^{-1} \log(1/\delta) } 
+ T( \innerAlg ) \; \tilde O \lp(   \rho^{-2}  \log^4(1/\delta) \rp).
$
This concludes the proof of \Cref{lem:repl-boosting}.
\end{proof}

\subsection{Replicable Tiered Backward Induction}

We begin with an algorithm that replicably produces a policy given sufficiently large datasets $\dataset_{s, a,h}$ consisting of data sampled from the transition/reward distributions of the MDP.
At a high level, the algorithm replicably reduces offline reinforcement learning to best arm, and then leverages the result of \Cref{lem:var-multi-bandit}.

Informally, the result serves as an intermediate step between the full episodic setting and the easier parallel sampling setting. 
In the latter, one can collect an equal number of samples from the transition and reward distributions from each state-action pair.
Yet, this is challenging to achieve in the episodic setting as it is often more difficult to collect samples from certain states than the others. 
To accommodate this technical issue, we consider a partition of states into tiers based on their reachability under some optimal policy $\pi^*$.
\begin{definition}[Tiered Reachability Partition]
    \label{it:tiered-dataset:partition}
    Fix some MDP $\mdp$.
    Let $\{ \states_h^{\ell} \}_{h \in [H], \ell \in [L]}$ be a collection of subsets of states
    such that $\{ \states_h^\ell \}_{ \ell \in [L] }$ form a partition of $\states$ for each $h \in [H]$.
    We say they form a tiered reachability partition of $\mdp$ 
    if there exists some optimal policy $\pi^*$ such that
    for each $\ell \geq 2$ the state combination $\{ \states_h^{\ell} \}_{h \in [H]}$ satisfies:
    \begin{align}
        \label{it:tiered-dataset:reachability}
        \sum_{h \in [H]} \Pr[x_{h} \in \states_{h}^{\ell} \mid \pi^*] \leq 2^{1 - \ell}.
    \end{align}
\end{definition}
For states within tiers $\ell  \gg \log(H/\eps)$, their reachability will be much smaller than $\eps / H$ under the optimal policy $\pi^*$ by \Cref{it:tiered-dataset:reachability}.
As a result, their contribution to the $V^*/Q^*$ values of the $\mdp$ will be much smaller than $\eps$.
Thus, we can essentially ignore these states when designing the offline RL algorithm. 
Following similar reasoning, one can conclude that it suffices to learn the $V^*/Q^*$ values of states within the $\ell$-th tier up to accuracy roughly $(\eps/H) 2^{\ell}$ to obtain an $\eps$-optimal policy, allowing us to adopt different sample requirements for different tiers.

A further complication arises from the fact that 
the number of samples collected from states within a single tier may also differ due to statistical fluctuations in the episodic setting.
We hence further relax the sample requirement to allow each dataset $\dataset_{s,a,h}$ to have a random number of samples as long as their sizes are sufficient for the application of the multi-instance best arm algorithm from \Cref{lem:var-multi-bandit} within the tier.
Formally, we give the definition of 
$\zeta$-nice Tiered Offline Datasets, where $\zeta$ controls the overall size of the datasets collected (i.e., smaller $\zeta$ leads to larger datasets), and consequently influences the accuracy of the policy learned by our RL algorithm.
\begin{definition}[Nice Tiered Offline Datasets]
    \label{def:good-state-dataset-comb}
    Fix some MDP $\mdp$.
    Let $\zeta > 0$,
    $\set{\states_{h}^{\ell}}_{h \in [H], \ell \in [L]}$ be a reachability state partition of $\mdp$, and $\set{\dataset_{s, a, h}}_{s \in \mathcal S, a \in \mathcal A, h\in [H]}$ be some random datasets 
    consisting of samples of the form $\states \times [0, 1]$. 
    We say the distribution of $\set{\dataset_{s, a, h}}_{s \in \mathcal S, a \in \mathcal A, h\in [H]}$ 
    is $\zeta$-nice with respect to $\set{\states_{h}^{\ell}}_{h \in [H], \ell \in [L]}$\footnote{When it is clear from the context that $\{ \mathcal D_{s,a,h} \}$ are random datasets, we will directly write that 
$\{ \mathcal D_{s,a,h} \}$ are $\zeta$-nice with respect to $\{\states_h^{\ell}\}_{h, \ell}$.}
    if  the following are satisfied:
    \begin{enumerate}
        \item \textbf{Sample Independence} 
        For each $s \in \states, a \in \actions, h \in [H]$,
        conditioning on the sizes of each dataset $n_{s,a,h} := |\mathcal D_{s,a,h}|$, the resulting conditional joint distribution over the datasets is exactly the product distribution 
$\prod_{(s,a,h)} (p_h(s,a) \otimes r_h(s,a))^{n_{s,a,h}}$.
        \label{it:tiered-dataset:iid}
        \item \textbf{Variable Sample Lower Bound}
        \label{it:tiered-dataset:min-action-lb}
        For every $\ell \in [L - 1], h \in [H]$, and $s \in \states_h^{\ell}$,
        there exists a number $m_{s,h,\ell}$ such that 
        $\Pr \lp[ \min_{a} |\dataset_{s, a, h}| \geq  m_{s, h, \ell} \rp] = 1$.
        Moreover, these numbers satisfy that:
        \begin{align}
            \label{eq:prop-bandit-requirement}
            \sum_{h \in [H]}
            \sqrt{ \sum_{s \in \states_h^\ell}  
            1 / m_{s,h,\ell}   } \leq 2^{\ell} \zeta.
        \end{align}     
    \end{enumerate}
\end{definition}

We will show in \Cref{prop:r-explore-collect} how to simultaneously identify a tiered reachability partition $\{ \states_h^\ell \}_{h, \ell}$ and build a collection of random datasets $\{ \dataset_{s,a,h} \}$ that is $\zeta$-nice with respect to the partition. 

Our main result of this subsection is as follows: 
given some reachability state partition $\{ \states_h^{\ell} \}_{h,\ell}$ and some collection of random datasets $\{ \dataset_{s,a,h} \}$ that is sufficiently nice with respect to $\{ \states_h^{\ell} \}_{h, \ell}$, we given an algorithm that computes an $\eps$-optimal policy.
\begin{theorem}[Tiered Backward Induction]
    \label{thm:const-repl-bandit-rl}
    Fix some MDP $\mdp$.
    Let $\varepsilon > 0$, $\zeta \ll \eps / \lp( H^{2}  \log^5(SAH \eps^{-1} \delta^{-1}) \rp)$ and $L = \ceil{\log(1/\zeta)}$.
    Let $\set{\states_{h}^{\ell}}_{h \in [H], \ell \in [L]}$ be a reachability state partition of $\mdp$ and
    $\set{\dataset_{s, a, h}}$ be a collection of random datasets that are $\zeta$-nice with respect to $\set{\states_{h}^{\ell}}_{h \in [H], \ell \in [L]}$.
    $\RepRLAlg$ (\Cref{alg:rep-rl-bandit}) is a $0.01$-replicable\footnote{Here we slightly abuse the notation of replicability as technically the definition of replicability requires the algorithm's outputs to be identical with high probability regardless of the distribution of the input data. However, $\RepRLAlg$ is only replicable when $\set{\dataset_{s, a, h}}$ are promised to be $\zeta$ nice.}
    algorithm that outputs an $\varepsilon$-optimal policy on $\mdp$ with probability $1 - \delta$.
    Moreover, the algorithm runs in time
    $ \poly{    \sum_{s,a,h} \lp|  \mathcal D_{s,a,h}  + 1\rp| } + H L \exp \lp(  (1 + o(1)) S \log A \rp) $.
\end{theorem}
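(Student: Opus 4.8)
The algorithm (see \Cref{alg:rep-rl-bandit}) is a \emph{pessimistic, tiered} backward induction that reduces each horizon step to the replicable multi-instance best-arm routine $\RepVarBandit$ of \Cref{lem:var-multi-bandit}. Write $L=\lceil\log(1/\zeta)\rceil$, assign each state $s\in\states_h^\ell$ an accuracy target $\alpha_{s,h}:=\alpha_\ell$ and a pessimistic penalty $p_{s,h}:=\alpha_\ell$, where the tier-$1$ target is the stringent $\alpha_1\asymp\eps/(HL)$ (since $\pi^*$ may visit tier $1$ up to $H$ times) and the tier-$\ell$ target for $\ell\ge 2$ is the relaxed $\alpha_\ell\asymp 2^{\ell}\eps/L$ (since $\pi^*$ visits tier $\ell$ with total probability $\le 2^{1-\ell}$); call a tier \emph{active} if $\alpha_\ell\le H$ and \emph{ignorable} otherwise (the activity threshold is $\ell_{\max}=\Theta(\log(HL/\eps))$). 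Processing $h=H,H-1,\dots,1$, for each active tier $\ell$ we run $\RepVarBandit$ on the instances $\states_h^\ell$, where arm $(s,a)$ receives, for each raw sample $(s',r)\in\dataset_{s,a,h}$, the rescaled utility $\tfrac{1}{3H}(r+\hat V_{h+1}(s')-p_{s',h+1})+\tfrac12\in[0,1]$ (with $\hat V_{H+1}\equiv 0$), with accuracy $\asymp\alpha_\ell/H$, replicability $\rho'=\Theta(1/(HL))$, and failure probability $\delta'=\Theta(\delta/(HL))$; we then set $\hat\pi_h(s):=a_s$ and $\hat V_h(s):=\mathrm{clamp}_{[0,H]}(\bar r_s-p_{s,h})$ (after un-rescaling $\bar r_s$), and for every $s$ in an ignorable tier we set $\hat\pi_h(s)$ to a fixed default action and $\hat V_h(s):=0$; the output is $\hat\pi=(\hat\pi_1,\dots,\hat\pi_H)$. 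One subtlety: the transformed step-$h$ datasets depend (through $\hat V_{h+1}$) on randomness used in later steps, so they are only \emph{conditionally} i.i.d.; this is exactly compatible with \Cref{lem:var-multi-bandit}, since the Sample Independence clause of \Cref{def:good-state-dataset-comb} guarantees that, conditioned on all dataset sizes and on the step-$(>h)$ data, the step-$h$ datasets are product distributions over the true transition/reward laws, so $\RepVarBandit$ may be invoked step by step.

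For \textbf{replicability}, condition on the event that all at most $HL$ invocations of $\RepVarBandit$ return identical outputs over the two runs. Identical subroutine outputs force identical $\hat V$ vectors and hence an identical policy $\hat\pi$, so a chained union bound over $(h,\ell)$ (from $h=H$ downward) shows this event has probability $\ge 0.99$ provided each call is $\Theta(1/(HL))$-replicable. The precondition of \Cref{lem:var-multi-bandit} for the $(h,\ell)$ call, $\sum_{s\in\states_h^\ell}1/m_{s,h,\ell}\ll (\rho')^2(\alpha_\ell/H)^2/\log^3(SA/\delta')$, follows from the Variable Sample Lower Bound clause of \Cref{def:good-state-dataset-comb} — which gives $\sqrt{\sum_{s\in\states_h^\ell}1/m_{s,h,\ell}}\le 2^{\ell}\zeta$ for every $h$ — combined with the hypothesis $\zeta\ll\eps/(H^2\log^5(SAH\eps^{-1}\delta^{-1}))$ and the choice of $\alpha_\ell$; the binding case is tier $1$, where $\alpha_1/H\asymp\eps/(H^2 L)$. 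This yields $0.01$-replicability under the $\zeta$-niceness promise.

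For \textbf{correctness}, condition on all $HL$ best-arm calls succeeding (union bound, failure $\le\delta$), after which the analysis is deterministic. The core is a downward induction on $h$ establishing \emph{pessimism}: $\hat V_h(s)\le V^{\hat\pi}_h(s)$ for every $s$. For an active-tier $s$, $\bar r_s$ lies within $\alpha_\ell$ of the true utility of the chosen arm $a_s=\hat\pi_h(s)$, which by the inductive hypothesis $\hat V_{h+1}\le V^{\hat\pi}_{h+1}$ is at most $Q^{\hat\pi}_h(s,a_s)+\alpha_\ell=V^{\hat\pi}_h(s)+\alpha_\ell$; subtracting $p_{s,h}=\alpha_\ell$ and clamping restores $\hat V_h(s)\le V^{\hat\pi}_h(s)$, while ignorable-tier states are trivial as $\hat V_h(s)=0\le V^{\hat\pi}_h(s)$. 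Note the penalty need only dominate the \emph{one-step} estimation error, so penalties do not grow along the horizon. Next, writing $E_h:=\E{V^*_h(x_h)-\hat V_h(x_h)\mid\pi^*}$ over the trajectory of the optimal policy $\pi^*$ witnessing the tiered partition, a one-step expansion — using $Q^*_h(s,\pi^*_h(s))=V^*_h(s)$, near-optimality of the chosen arm, the inductive hypothesis $\hat V_{h+1}\le V^*_{h+1}$, and the crude bound $V^*_h\le H$ on ignorable-tier states — gives
\begin{equation*}
E_h \;\le\; E_{h+1} + O\!\left(\E{\alpha_{\ell(x_h,h)}\mid\pi^*}+\E{\alpha_{\ell(x_{h+1},h+1)}\mid\pi^*}\right) + H\cdot\Pr\!\left[x_h\in\text{ignorable tier at }h\mid\pi^*\right].
\end{equation*}
Telescoping over $h$, the error term sums to $O\!\big(\sum_\ell\alpha_\ell\sum_h\Pr[x_h\in\states_h^\ell\mid\pi^*]\big)=O(\eps)$ by the reachability bound $\sum_h\Pr[x_h\in\states_h^\ell\mid\pi^*]\le 2^{1-\ell}$ for $\ell\ge 2$, the trivial bound $\le H$ for $\ell=1$, and the choices $\alpha_1\asymp\eps/(HL)$, $\alpha_\ell\asymp 2^{\ell}\eps/L$; and the ignorable-tier term sums to $O(H\cdot 2^{-\ell_{\max}})=O(\eps)$ by the same reachability bound and the definition of $\ell_{\max}$. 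Hence $E_1=O(\eps)$, and combining with pessimism, $V(\hat\pi,\mdp)=V^{\hat\pi}_1(x_\ini)\ge\hat V_1(x_\ini)\ge V^*_1(x_\ini)-O(\eps)=V(\pi^*,\mdp)-O(\eps)$, giving the claimed $\eps$-optimality after rescaling constants.

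The \textbf{runtime} is immediate: besides linear-time preprocessing to form the transformed datasets, the algorithm makes $\le HL$ calls to $\RepVarBandit$, each costing $\poly{\sum_{s,a}(|\dataset_{s,a,h}|+1)}+\exp((1+o(1))|\states_h^\ell|\log A)$ by \Cref{lem:var-multi-bandit} with $|\states_h^\ell|\le S$, summing to $\poly{\sum_{s,a,h}(|\dataset_{s,a,h}|+1)}+HL\exp((1+o(1))S\log A)$. I expect the correctness argument to be the main obstacle: the pessimistic penalty must be simultaneously large enough that $\hat V$ never overestimates $V^{\hat\pi}$ (so lower-bounding our estimate genuinely lower-bounds the returned policy's value) yet small enough — calibrated tier by tier to the reachability profile of $\pi^*$ — that the accumulated penalty and best-arm error along $\pi^*$'s trajectory remain $O(\eps)$; achieving both while respecting the parameter constraints feeding into \Cref{lem:var-multi-bandit} and the conditional-i.i.d.\ structure of the transformed datasets is the crux.
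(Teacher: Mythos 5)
Your algorithm and analysis track the paper's quite closely: the pessimistic subtraction making each $\hat V_h$ (equivalently the paper's $\overline r^{(h)}$) an \emph{underestimate} so that the learned policy's value only improves when passing from the truncated problem to $\mdp$ (the paper phrases your statewise pessimism induction via truncated MDPs, \Cref{lem:rl-reverse-induct-one-step}), plus a comparison to $\pi^*$ along its own trajectory using the tiered reachability bound (\Cref{lem:r-one-step-bandit-loss-induct}). Your error allocation for $\ell\ge 2$ ($\alpha_\ell\asymp 2^{\ell}\eps/L$, amortized over the whole horizon) differs from the paper's per-step $2^{\ell}\eps/(HL)$ accounting, but the correctness bookkeeping goes through either way.

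The genuine gap is in your replicability accounting. You fix every one of the $\le HL$ calls to $\RepVarBandit$ to be $\rho'=\Theta(1/(HL))$-replicable and then union bound. For tier $1$ at a fixed step $h$, the precondition of \Cref{lem:var-multi-bandit} with your rescaled accuracy $\Theta(\eps/(H^2L))$ and $\rho'=\Theta(1/(HL))$ reads $\sum_{s\in\states_h^{1}}1/m_{s,h,1}\ll \eps^2/(H^6L^4\log^3(SAHL/\delta))$, whereas $\zeta$-niceness only guarantees $\sqrt{\sum_{s\in\states_h^{1}}1/m_{s,h,1}}\le 2\zeta$, i.e.\ $\sum_s 1/m_{s,h,1}\lesssim \eps^2/(H^4\log^{10})$ under the hypothesis $\zeta\ll\eps/(H^2\log^5)$. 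These differ by a factor of roughly $H^2L^4/\log^{7}$, which is unbounded as soon as $H$ exceeds polylogarithmic size (and $H$ is not so restricted here), so your assertion that "the binding case is tier $1$ ... follows from the hypothesis" is false as written. (For $\ell\ge2$ your looser $\alpha_\ell$ exactly compensates for the over-strict per-call replicability, which is why only tier $1$ breaks.) The paper avoids this by \emph{not} imposing a uniform per-call replicability: the $(h,\ell)$ call is only required to be $\rho_{h,\ell}$-replicable with $\rho_{h,\ell}\propto \frac{H\log^{3/2}(SAHL/\delta)}{2^{\ell}\eps/(HL)}\sqrt{\sum_{s\in\states_h^{\ell}}1/m_{s,h,\ell}}$, and one only needs $\sum_{h,\ell}\rho_{h,\ell}\le 0.01$ -- which is exactly what the niceness condition supplies, since it bounds the \emph{sum over $h$} of the square roots by $2^{\ell}\zeta$ rather than each term by $1/(HL)$ times anything (see \Cref{claim:zeta-implied-sample-lb} and \Cref{lem:r-RL-repl}). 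Replacing your uniform $\rho'$ by this amortized accounting repairs the tier-$1$ case and the rest of your argument stands; as written, though, that step fails under the stated hypothesis on $\zeta$.
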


Combining \Cref{prop:r-explore-collect} and \Cref{thm:const-repl-bandit-rl} will then yield our main RL algorithm in the episodic setting.
In the rest of this subsection, we will present the detail of the algorithm $\RepRLAlg$ from \Cref{thm:const-repl-bandit-rl} and its analysis.

\begin{Ualgorithm}
{\noindent \textbf{Input:}} collections of states $\set{\states_{h}^{\ell}}_{h \in [H], \ell \in [L]}$ and datasets $\set{\dataset_{s, a, h}}_{s \in \states, a \in \actions, h \in [H]}$. 

{\noindent \textbf{Parameters:}} accuracy parameter $\varepsilon$ and failure parameter $\delta$.

{\noindent \textbf{Output:}} policy $\hat{\pi}: [H] \times \states \rightarrow \actions$.

\begin{enumerate}[leftmargin=0.5cm]
   \item Construct the dataset $\bdataset_{s, a} := \{ r  : \forall (\perp, r) \in \mathcal D_{s,a,H} \}$ for each $s \in \mathcal S, a \in \mathcal A$.
   \Comment{$\bot$ denotes that no transition is observed at step $H$.}
    \item For $h \in [H]$ in decreasing order:0
    \begin{enumerate}
        \item For each $\ell \in [L - 1]$:
        \begin{enumerate}
            \item Compute $\set{a_{s}^{(h)}, \hat{r}_{s}^{(h)}}_{s \in S_{h}^{\ell}} \gets \RepVarBandit\left( \set{ \bdataset_{s, a}}_{s \in S_{h}^{\ell}, a \in \actions}, 
            \frac{2^\ell \eps}{8 H L}, \frac{\delta}{H L} \right)$ for arbitrarily small constant error probability $c$.
            \label{line:run-bandit}
            \item For $s \in \states_{h}^{\ell}$, let $\overline{r}^{(h)}_{s} \gets  \hat{r}^{(h)}_{s} - \frac{2^{\ell} \varepsilon}{8 H L}$.
            \label{line:underestimate}
            \Comment{This ensures that $\overline{r}_{s}^{(h)}$ is an \emph{underestimate} of the expected utility of $a_{s}^{(h)}$ with high probability.
            }
        \end{enumerate}
        \item For $\ell = L$ and $s \in \states_h^{\ell}$,
        set $a_s^{(h)}$ to the first action, and
        $\bar r_s^{(h)} = 0$.
        \item Let $\hat{\pi}_{h}(s) \gets a_{s}^{(h)}$ for $s \in \states$.
        \item Construct a new dataset $\bdataset_{s, a} := \{ r + \overline{r}_x^{(h)}  : \forall (x, r) \in \mathcal D_{s,a,h-1} \}$ for each $s \in \mathcal S, a \in \mathcal A$. 
        \label{line:update-r}
    \end{enumerate}
    \item Return $\hat{\pi}$.
\end{enumerate}
\caption{\RepRLAlg}
\label{alg:rep-rl-bandit}
\end{Ualgorithm}

Our algorithm is structurally similar to standard backwards induction but takes additional advantage of the knowledge of the tiered reachability partition $\set{ \states_h^\ell }$ to determine the requisite estimation accuracy for the value function of each state. 
Fix some $\varepsilon_{0} = \frac{\varepsilon}{8HL}$.
At the $H$-th time step, we have a partition of the states $\states$ into tiers $\{ \states_{H}^{\ell} \}_{\ell \in [L]}$.
For each tier $\ell$, we invoke $\RepVarBandit$ (\Cref{lem:var-multi-bandit}) on states $\states_{H}^{\ell}$ using the random datasets $\{\dataset_{s,a,H} \}_{ s \in \states_H^\ell, a \in \mathcal A }$
with accuracy $2^{\ell} \varepsilon_{0}$ and failure probability $\frac{\delta}{H L}$.

Since the random datasets are sufficiently nice with respect to the partition,
within each tier $\ell$, 
we can show that $\RepVarBandit$ must replicably outputs a $\lp( 2^{\ell} \varepsilon_{0} \rp)$-optimal arm $a_{s}^{(H)}$ and a $\lp( 2^{\ell} \varepsilon_{0} \rp)$-accurate estimate $\overline{r}_{s}^{(H)}$ of the expected reward of the chosen arm.
Intuitively, the error $\lp( 2^{\ell} \varepsilon_{0} \rp)$ will be good enough for tier $\ell$ as their reachability should be at most $O \lp( 2^{-\ell} \rp)$ (see \Cref{it:tiered-dataset:reachability}).

Note that the estimate $\overline{r}_{s}^{(H)}$ can be thought as estimate of $V^*_H(s)$.
Thus, we can construct a new dataset
$ \mathcal R_{s,a} :=  \{ r + \overline{r}_{s}^{(H)} : \forall (x,r) \in \dataset_{s,a,H-1}   \}$, which can be used to estimate $Q^*_{H-1}(s,a)$. 
In step $H-1$, $\RepVarBandit$ is again used to choose a nearly-optimal action for each state, and to estimate the $V^*$ values.
We proceed in this way until a nearly-optimal action has been chosen for all states at all time steps, which then yields the final output policy.

\paragraph{Analysis of Reinforcement Learning Algorithm}

We begin by showing the replicability of the algorithm. 
\begin{lemma}[Policy Replicability]
    \label{lem:r-RL-repl}
Let $\eps, \delta, \zeta, L$ be defined as in \Cref{thm:const-repl-bandit-rl}.
Let $\set{\states_{h}^{\ell}}_{h \in [H], \ell \in [L]}$ be a fixed tiered reachability partition of $\mdp$.
Suppose $\set{\dataset_{s, a, h}}$ and $\set{\dataset_{s, a, h}'}$ are collections of random datasets that are both $\zeta$-nice with respect to $\set{\states_{h}^{\ell}}$. 
Then it holds that
    \begin{equation*}
        \Pr\left(\RepRLAlg(\eps, \delta, \set{\states_{h}^{\ell}}, \set{\dataset_{s, a, h}}; \xi) \neq \RepRLAlg(\eps, \delta, \set{\states_{h}^{\ell}}, \set{\dataset_{s, a, h}'}) ; \xi \right) \leq 0.01.
    \end{equation*}
    where the randomness is over $\dataset_{s, a, h}, \dataset_{s, a, h}'$ and the shared internal randomness of $\xi$.
\end{lemma}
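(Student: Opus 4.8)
The plan is to induct backwards over the horizon $h$ and track the replicability of the "state" of the algorithm at each step, where the state at step $h$ consists of the chosen actions $\{a_s^{(h')}\}$, the estimates $\{\bar r_s^{(h')}\}$, and hence the reconstructed datasets $\{\bdataset_{s,a}\}$ used at step $h-1$, over two independent runs on datasets $\{\dataset_{s,a,h}\}$ and $\{\dataset'_{s,a,h}\}$ sharing the internal randomness $\xi$. The key point is that once the estimates $\bar r_s^{(h)}$ agree across the two runs, the dataset $\bdataset_{s,a} = \{r + \bar r_x^{(h)} : (x,r) \in \dataset_{s,a,h-1}\}$ in each run is built from fresh i.i.d.\ samples from the \emph{same} transition/reward distribution (by the Sample Independence property of $\zeta$-niceness), so the two runs at step $h-1$ are again in the setting of \Cref{lem:var-multi-bandit}. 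I will allot a replicability budget of roughly $0.01/(HL)$ to each invocation of $\RepVarBandit$ (there are $H$ time steps and at most $L$ tiers per step, so $HL$ invocations in total), and union bound.

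The main work is checking that each call to $\RepVarBandit$ in \Cref{line:run-bandit} actually meets the sample-size hypothesis of \Cref{lem:var-multi-bandit}. Fix a step $h$ and tier $\ell \le L-1$. The call is made with accuracy parameter $\eps_\ell := \frac{2^\ell \eps}{8HL}$ and failure $\delta/(HL)$ on the instances $\{s \in \states_h^\ell\}$ with datasets $\{\bdataset_{s,a}\}$. By the Variable Sample Lower Bound property, $\min_a |\bdataset_{s,a}| \ge m_{s,h,\ell}$ almost surely, and \Cref{eq:prop-bandit-requirement} gives $\sum_{h}\sqrt{\sum_{s \in \states_h^\ell} 1/m_{s,h,\ell}} \le 2^\ell \zeta$; in particular $\sqrt{\sum_{s \in \states_h^\ell} 1/m_{s,h,\ell}} \le 2^\ell \zeta$ for each fixed $h$, so $\sum_{s \in \states_h^\ell} 1/m_{s,h,\ell} \le 4^\ell \zeta^2$. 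The hypothesis of \Cref{lem:var-multi-bandit} (with replicability parameter $\rho' = \Theta(1/(HL))$ and accuracy $\eps_\ell$ and failure $\delta/(HL)$) requires
\[
\sum_{s \in \states_h^\ell} \frac{1}{m_{s,h,\ell}} \ll \frac{\rho'^2 \eps_\ell^2}{\log^3(SA \cdot HL/\delta)}.
\]
Plugging in $\eps_\ell = 2^\ell \eps/(8HL)$ and $\rho' = \Theta(1/(HL))$, the right-hand side is $\Omega\!\big(\frac{4^\ell \eps^2}{H^4 L^4 \log^3(SAH/\delta)}\big)$, so it suffices that $\zeta^2 \ll \frac{\eps^2}{H^4 L^4 \log^3(SAH/\delta)}$, i.e.\ $\zeta \ll \frac{\eps}{H^2 L^2 \log^{1.5}(SAH/\delta)}$, which is implied by the hypothesis $\zeta \ll \eps/(H^2 \log^5(SAH\eps^{-1}\delta^{-1}))$ of \Cref{thm:const-repl-bandit-rl} (recall $L = \lceil \log(1/\zeta)\rceil = \polylog$). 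Hence \Cref{lem:var-multi-bandit} applies: conditioned on the step-$h$ datasets across the two runs being built from the same distributions (which follows by induction from agreement of $\bar r_s^{(h+1)}$), the outputs $\{a_s^{(h)}, \hat r_s^{(h)}\}$ — and therefore $\{\bar r_s^{(h)}\}$ after the deterministic shift in \Cref{line:underestimate} — agree across the two runs with probability $\ge 1 - \rho' = 1 - \Theta(1/(HL))$ over the shared randomness.

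To assemble the induction: at step $H$, the datasets $\bdataset_{s,a} = \{r : (\perp,r)\in\dataset_{s,a,H}\}$ are by Sample Independence i.i.d.\ draws from $r_H(s,a)$ in each run, so \Cref{lem:var-multi-bandit} (across all tiers $\ell \le L-1$; tier $L$ is handled deterministically with $\bar r_s^{(H)} = 0$) gives agreement of $\{a_s^{(H)}, \bar r_s^{(H)}\}$ with probability $\ge 1 - (L-1)\rho'$. Given this agreement, the step-$(H-1)$ datasets $\bdataset_{s,a} = \{r + \bar r_x^{(H)} : (x,r)\in \dataset_{s,a,H-1}\}$ are, in each run, an i.i.d.\ sample of $(x,r)\sim p_{H-1}(s,a)\otimes r_{H-1}(s,a)$ shifted by the \emph{same} function $x \mapsto \bar r_x^{(H)}$, hence identically distributed across runs, and we repeat. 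Summing the failure probabilities over all $\le HL$ invocations of $\RepVarBandit$ gives total failure $\le HL \cdot \rho' \le 0.01$ for an appropriate choice of the constant in $\rho' = \Theta(1/(HL))$; this is exactly the claimed bound. The only subtlety to state carefully — and the step I expect to require the most care — is the conditioning argument: replicability of $\RepVarBandit$ is a statement about two runs on independent datasets from a \emph{fixed} distribution, so I must verify that conditioning on the (shared-randomness) event that all earlier-step estimates agree does not destroy the independence/identical-distribution structure of the current-step datasets within each run. This holds because the step-$h$ estimates $\bar r^{(h+1)}, \dots, \bar r^{(H)}$ are measurable functions of the datasets $\{\dataset_{s,a,h'}\}_{h' > h}$ (and $\xi$), which are independent of $\{\dataset_{s,a,h}\}$ by Sample Independence, so conditioning on their value leaves the conditional law of $\{\dataset_{s,a,h}\}$ unchanged and still a product of the prescribed per-sample distributions — exactly the hypothesis \Cref{lem:var-multi-bandit} needs.
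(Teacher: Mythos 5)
Your proposal is correct and follows essentially the same route as the paper: backward induction over $h$, conditioning on agreement of the later-step estimates so that the transformed datasets $\bdataset_{s,a}$ are identically distributed across the two runs, invoking \Cref{lem:var-multi-bandit} per tier, and union bounding, with the sample-size hypothesis verified from $\zeta$-niceness exactly as in the paper's Claim~\ref{claim:zeta-implied-sample-lb}. The only (immaterial) difference is bookkeeping: you assign a uniform replicability budget $\Theta(1/(HL))$ to each invocation, whereas the paper uses data-dependent budgets $\rho_{h,\ell}$ proportional to $\sqrt{\sum_{s\in\states_h^\ell} 1/m_{s,h,\ell}}$ whose total is bounded by $0.01$; both fit within the assumed bound on $\zeta$, and your explicit treatment of the conditioning/independence subtlety matches (indeed slightly sharpens) the paper's argument.
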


\begin{proof}
    We first establish the following guarantee on the dataset sample sizes.

    \begin{claim}
        \label{claim:zeta-implied-sample-lb}
        For all $\ell \in [L - 1]$, we have that $\min_{a} |\bdataset_{s, a}| \geq m_{s, h, \ell}$, where $m_{s,h,\ell}$ are numbers satisfying
        \begin{equation}
            \label{eq:rl-sample-bound}
            \sum_{h} \sqrt{\sum_{s \in \states_{h}^{\ell}} \frac{1}{m_{s, h, \ell}}} \leq 2^{\ell} \zeta \leq \frac{\left( \frac{0.01}{L}\right) \left( \frac{2^{\ell} \varepsilon}{HL} \right)}{C H \log^{3/2}(SAHL/\delta))}
        \end{equation}
        for a sufficiently large constant $C$.
    \end{claim}

    \begin{proof}
        By the definition of $\zeta$-niceness, for each $\ell \in [L], h \in [H]$, and $s \in \states_{H}^{\ell}$, 
        \begin{equation*}
            \sum_{h \in [H]} \sqrt{\sum_{s \in \states_{h}^{\ell}} \frac{1}{m_{s, h, \ell}}} \leq 2^{\ell} \zeta \ll \frac{2^{\ell} \eps}{H^2 \log^5(SAH\eps^{-1}\delta^{-1})}.
        \end{equation*}
        Note that for such $\zeta$ we have $L = O(\log(1/\zeta)) = \bigO{\log(H/\eps) + \log \log(SAH\eps^{-1}\delta^{-1})}$.
        Then
        \begin{equation*}
            \zeta L^{3} \ll \frac{\eps L^{3}}{H^{2} \log^{5}(SAH\eps^{-1}\delta^{-1})} \ll \frac{\eps}{H^{2} \log^{2}(SAH\eps^{-1}\delta^{-1})} \ll \frac{\eps}{H^{2} \log^{3/2}(SAHL\delta^{-1})}.
        \end{equation*}
        Rearranging terms gives us the desired bound of \Cref{eq:rl-sample-bound}.
    \end{proof}
    
    Rearranging \Cref{eq:rl-sample-bound} gives that
    \begin{align*}
    \forall \ell \in [L],
        \frac{C H \log^{3/2}(SAHL/\delta)}{\left( \frac{2^{\ell} \varepsilon}{H L} \right)} \sum_{h} \sqrt{\sum_{s \in \states_{h}^{\ell}} \frac{1}{m_{s, h, \ell}}} \leq \frac{0.01}{L}
    \end{align*}
    If we define
    \begin{equation*}
        \rho_{h, \ell} := \frac{C H \log^{3/2}(SAHL/\delta)}{\left( \frac{2^{\ell} \varepsilon}{H L} \right)} \sqrt{\sum_{s \in \states_{h}^{\ell}} \frac{1}{m_{s, h, \ell}}} \, ,
    \end{equation*}
    it follows that
    \begin{align}
     \label{eq:rho-sum}   
     \sum_{h,\ell} \rho_{h, \ell} \leq 0.01.
    \end{align}
    
    Consider the execution of $\RepVarBandit$ at Line~\ref{line:run-bandit} when $h = H$ and $\ell \in [\log(1/\zeta) - 1]$.
    Note that $\bdataset_{s, a}$ just consists of \iid samples from $\rewD_{H}(s, a)$.
    So it follows that $\bdataset_{s, a}$ is drawn from the same distribution in both runs of the algorithm.
    
    Thus, \Cref{lem:var-multi-bandit} ensures that the outputs of $\RepVarBandit$ are $\rho_{H, \ell}$-replicable when we run it on the states $\states_H^\ell$.
    By a union bound over $\ell$, in the $H$-th iteration, the policy $\hat{\pi}_{H}$ 
    and the estimates $\hat{r}_{s}^{(H)}, \overline{r}_{s}^{(H)}$ 
    are identical in two runs of the algorithm with probability at least $1 - \sum_{\ell} \rho_{H, \ell}$.

    For the step $h \leq H - 1$, we will use induction and proceed by a similar argument.
    We will condition on that the estimates $\{\hat{r}_{s}^{(h+1)} \}_{s \in \states}$ are identical over both runs of the algorithm
    in the inductive step.
    In Line~\ref{line:update-r}, we update the 
    dataset $\bdataset_{s,a}$ to be $ \{r + \overline{r}_x^{(h+1)}  : \forall (x, r) \in \mathcal D_{s,a,h} \}$. 
    Since we condition on that $\overline{r}_x^{(h+1)}$ is identical in the two run, and $D_{s,a,h-1}$ just consists of \iid samples from the corresponding transition/reward distributions of the MDP,
    it follows that the distribution of $\bdataset_{s,a}$
    is identical in two runs of the algorithm.
    Therefore, following an argument that is almost identical to the one used in the base case ($h= H$), we can conclude that
    $\hat{\pi}_{h}$ and $\hat{r}^{(h)}$ (and therefore $\overline{r}^{(h)}$) are identical with probability at least $1 - \sum_{\ell=1}^{ L } \rho_{h, \ell}$.
    Union bounding over all $H$ steps, we obtain that the final policy $\hat{\pi}$ is identical with probability at least $1 - \sum_{h, \ell} \rho_{h, \ell} \geq 0.99$, where the last inequality follows from \Cref{eq:rho-sum}.
    This concludes the proof of \Cref{lem:r-RL-repl}.
\end{proof}

To analyze the correctness of our algorithm, we introduce the concept of a Truncated Markov Decision Process.
\begin{definition}[Truncated MDP]
    \label{def:truncated-mdp}
    Let $\mdp$ be an MDP with $h$ time steps, and $V: \states \to \R_+$ be some function.
    Let $\rewD_{h - 1}: \mathcal S  \times \mathcal A \to \Delta( \mathcal S )$ be the reward distributions, and $\distribution_{h - 1}: \mathcal S  \times \mathcal A \to \Delta( \mathcal S )$ be the transition 
    distributions at the $(h - 1)$-th step of $\mathcal M$.
    We define the truncated MDP of $\mdp$ 
    with substitution function $V$ as the MDP $\mdp'$ with $h - 1$ time steps that
    \begin{enumerate}
        \item has the same transition and rewards as $\mdp$ for the first $(h - 2)$-th time steps,
        \item receives the reward 
        $r + V(x)$, where 
        $r \sim \rewD_h(s,a)$ and $x \sim \distribution_h(x,a)$
        for the state-action pair $(s,a)$ at the $(h - 1)$-th time step.
    \end{enumerate}
\end{definition}
The key idea is to consider a sequence of Truncated MDPs induced by
the estimates $\{ \overline{r}_s^{(h)} \}_{s,h}$.
In particular, we denote by $\mathcal M^{(H)}$ the original MDP $\mdp$, and $\mathcal M^{(h)}$ the truncated MDP of $\mathcal M^{(h+1)}$ with the substitution function $V_{h+1}(s) := \overline{r}_s^{(h)}$.
We first show that the truncation does not significantly affect the value of the optimal policy.

\begin{lemma}[Truncation Approximately Preserves MDP Values]
    \label{lem:r-one-step-bandit-loss-induct}
    Let $h \in [H] \backslash \{1\}$
    and $\pi^*$ be some optimal policy for the MDP $\mdp$.
    It holds that $V\left(\pi^*; \mdp^{(h - 1)} \right) \geq V \left(\pi^*; \mdp^{(h)}\right) - \frac{\varepsilon}{H}$ with probability at least $1 - \frac{\delta}{H}$.
\end{lemma}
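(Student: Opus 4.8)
The plan is to reduce the claim to controlling a single time step, and then play the per-tier estimation error of $\RepVarBandit$ against the per-tier reachability budget of the partition. Throughout, $\pi^*$ denotes the optimal policy witnessing the tiered reachability partition (\Cref{it:tiered-dataset:partition}), I write $\ell(s,h)$ for the tier of $s$ at step $h$, set $\eps_\ell := 2^{\ell}\eps/(8HL)$, and let $\tilde Q_h(s,a) := \Ep[\rewD_h(s,a)] + \Ep_{x\sim\distribution_h(s,a)}[\bar r^{(h+1)}_x]$ (with the convention $\bar r^{(H+1)}\equiv 0$) be the mean of the dataset $\bdataset_{s,a}$ fed to $\RepVarBandit$ at iteration $h$; recall $\bar r^{(h)}_s = \hat r^{(h)}_s - \eps_{\ell(s,h)}$ is the algorithm's pessimistic estimate of the optimal value-to-go from step $h$. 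The first step is pure bookkeeping: $\mdp^{(h)}$ and $\mdp^{(h-1)}$ share their transitions/rewards on steps $1,\dots,h-2$, $\mdp^{(h)}$ additionally agrees with $\mdp$ on step $h-1$, and $\mdp^{(h-1)}$ replaces step $h-1$ by the terminal reward $\rewD_{h-1}(s,a) + \bar r^{(h)}_x$ with $x\sim\distribution_{h-1}(s,a)$; running $\pi^*$ on both MDPs and coupling the trajectories through $X_{h-1}$, all rewards on steps $1,\dots,h-1$ cancel, leaving
\[
V(\pi^*;\mdp^{(h)}) - V(\pi^*;\mdp^{(h-1)}) \;=\; \Ep_{x \sim X_h^{\pi^*}}\!\bigl[\, \tilde Q_h(x,\pi^*_h(x)) - \bar r^{(h)}_x \,\bigr],
\]
which for $h = H$ reads correctly since $\mdp^{(H)} = \mdp$ (equivalently $\bar r^{(H+1)}\equiv 0$).

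Next I would bound the integrand tier by tier. Let $E_h$ be the event that every call to $\RepVarBandit$ at iteration $h$ — one per tier $\ell\in[L-1]$, run at accuracy $\eps_\ell$ and failure probability $\delta/(HL)$ — succeeds, returning a $\eps_\ell$-optimal arm $a^{(h)}_s$ and a $\eps_\ell$-accurate estimate $\hat r^{(h)}_s$ of $\tilde Q_h(s,a^{(h)}_s)$. Since $\{\dataset_{s,a,h}\}$ is $\zeta$-nice, the bookkeeping of \Cref{claim:zeta-implied-sample-lb} shows \eqref{eq:prop-bandit-requirement} supplies exactly the sample-size precondition of \Cref{lem:var-multi-bandit} at these parameters, so a union bound over the at most $L$ tiers gives $\Pr[E_h]\ge 1-\delta/H$. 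On $E_h$, for $s$ with $\ell(s,h) = \ell \le L-1$,
\[
\bar r^{(h)}_s = \hat r^{(h)}_s - \eps_\ell \;\ge\; \tilde Q_h(s,a^{(h)}_s) - 2\eps_\ell \;\ge\; \max_a \tilde Q_h(s,a) - 3\eps_\ell \;\ge\; \tilde Q_h(s,\pi^*_h(s)) - 3\eps_\ell ,
\]
so the integrand at $x = s$ is at most $3\eps_\ell = 3\cdot 2^{\ell}\eps/(8HL)$; for $s$ in the top tier $\ell = L$ I instead use $\bar r^{(h)}_s = 0$ together with the deterministic bound $\tilde Q_h(s,\cdot) \le H$, which follows by a one-line induction from $\rewD\in[0,1]$ and $\bar r^{(h')}_s \le H - h' + 1$ for all $s$.

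Combining with the reachability budget, the first display gives, on $E_h$,
\[
V(\pi^*;\mdp^{(h)}) - V(\pi^*;\mdp^{(h-1)}) \;\le\; \sum_{\ell=1}^{L-1} \Pr[x_h\in\states_h^\ell \mid \pi^*]\cdot \frac{3\cdot 2^{\ell}\eps}{8HL} \;+\; \Pr[x_h\in\states_h^L \mid \pi^*]\cdot H .
\]
By \eqref{it:tiered-dataset:reachability}, $\Pr[x_h\in\states_h^\ell\mid\pi^*]\le 2^{1-\ell}$ for $\ell\ge 2$ (and trivially $\le 1$ for $\ell = 1$), so each summand in the first sum is at most $3\eps/(4HL)$ and the sum is $< 3\eps/(4H)$; and $\Pr[x_h\in\states_h^L\mid\pi^*]\le 2^{1-L}\le 2\zeta$ (as $L = \ceil{\log(1/\zeta)}$), so the last term is at most $2\zeta H \ll \eps/H$ by the hypothesis $\zeta \ll \eps/(H^2\log^5(SAH\eps^{-1}\delta^{-1}))$. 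Adding the two pieces yields $V(\pi^*;\mdp^{(h)}) - V(\pi^*;\mdp^{(h-1)}) < \eps/H$ on $E_h$, i.e.\ $V(\pi^*;\mdp^{(h-1)}) \ge V(\pi^*;\mdp^{(h)}) - \eps/H$ with probability at least $1 - \delta/H$.

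The main obstacle — and the only place the tiered structure is genuinely used — is this last cancellation: the $2^{\ell}$ blow-up in the estimation error we tolerate on tier $\ell$ is exactly absorbed by the $2^{-\ell}$ decay of that tier's reachability under $\pi^*$, which is why the tiers are defined with reachability $\approx 2^{-\ell}$ and error budget $\approx 2^{\ell}\eps/(HL)$. Everything else — the coupling that localizes the gap, boundedness of $\tilde Q_h$, and the union bounds over tiers — is mechanical; the one subtlety to be careful about is keeping $\mdp^{(h)}$ and $\mdp^{(h-1)}$ coupled so that the difference really is supported on the single truncated step.
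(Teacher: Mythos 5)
Your proposal is correct and follows essentially the same route as the paper's proof: the identical single-step decomposition $V(\pi^*;\mdp^{(h)})-V(\pi^*;\mdp^{(h-1)})=\Ep_{x\sim X_h^{\pi^*}}[\tilde Q_h(x,\pi^*_h(x))-\bar r^{(h)}_x]$, a per-tier union bound over the $\RepVarBandit$ calls, the $2^{\ell}$-error-versus-$2^{-\ell}$-reachability cancellation for tiers $\ell<L$, and the crude $H$ bound plus $\zeta$-smallness for tier $L$. The only point you gloss (and the paper makes explicit) is conditioning on the step-$(h+1)$ estimates $\bar r^{(h+1)}$ so that, by sample independence of the $\zeta$-nice datasets, $\bdataset_{s,a}$ really is an i.i.d.\ sample with mean $\tilde Q_h(s,a)$; with that remark added your argument is complete.
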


\begin{proof}
    We introduce the following extra notations.
    Denote by $X_t$ the distribution of the state at 
    step $t$ in the original MDP $\mdp$ under the policy $\pi^*$ for all $t \in [H]$, and $\mu_h(s,a)$ the
    expected value of the distribution $\rewD_h(s,a)$.
    The two MDPs are identical up to the first $(h-2)$ steps.
    In particular, the expected rewards at step $t \leq h-2$ in $\mdp^{(h-1)}$ and $\mdp^{(h)}$ are both
    $ \Ep_{  x \sim X_t  } [  \mu_t(x ,  \pi^*(x))] $.
    At the $(h-1)$-th step, the agent 
    in $\mdp^{(h-1)}$ will receive in expectation a reward of
    $$
    \Ep_{x \sim X_{h-1} } 
    \lp[ 
    \mu_{h-1}(x, \pi^*_{h-1}(x))
    + \Ep_{ y \sim \distribution_{h-1}(x, \pi^*_{h-1}(x))}
    \lp[  \overline{r}_y^{(h)} \rp]
    \rp]
    = 
        \Ep_{x \sim X_{h-1} } 
    \lp[ 
    \mu_{h-1}(x, \pi^*_{h-1}(x))\rp]
    + \Ep_{ y \sim X_h}
    \lp[  \overline{r}_y^{(h)} \rp] \, ,
    $$
    where in the equality we use the observation that $y \sim \distribution_{h-1}(x, \pi^*_{h-1}(x))$ for $x \sim X_{h-1}$ is distributed exactly the same as $y \sim X_h$.
    In $\mdp^{(h)}$, the agent will receive in expectation a total reward of
    $$
    \Ep_{ x \sim X_{h-1} }
    \lp[ \mu_{h-1}(x, \pi^*_{h-1}(x)) \rp]
    + 
    \Ep_{x \sim X_{h} } 
    \lp[ 
    \mu_{h}(x, \pi^*_{h}(x))
    + \Ep_{ y \sim \distribution_{h}(x, \pi^*_{h}(x))}
    \lp[  \overline{r}_y^{h+1} \rp]
    \rp]
    $$
    in step $h-1$ and step $h$.
    From the above discussion, we obtain that
    \begin{equation}
        \label{eq:value-diff-bound}
        V\left(\pi^*, \mdp^{(h)}\right) - V\left(\pi^*, \mdp^{(h - 1)}\right) = 
        \Ep_{ s \sim X_h } 
        \lp[ \left( \mu_{h}(s, \pi^*_{h}(s)) + 
        {\Ep}_{y \sim \distribution_h(s, \pi^*_h(s))}[\overline{r}^{(h + 1)}_{y}] - \overline{r}_{s}^{(h)} \right) \rp].
    \end{equation}

    In the following analysis, we will condition on 
    some arbitrary values of $\{ \hat r_s^{(h+1)} \}_{s \in \states}$ and prove some concentration bounds on $\hat{r}_{s}^{(h)}$.
    Recall that $\states_{h}^{\ell}$ for $1 \leq \ell \leq L = \ceil{\log(1/\zeta)}$ is a partition of $\states$.
    Fix some $h \in [H]$ and $\ell < L$.
    By the definition of $\zeta$-niceness and the assumption on the size of $\zeta$,
    for each $s \in {\states}_{h}^{\ell}$ and action $a \in \mathcal A$, the dataset $\dataset_{s, a, h}$ is of size at least $m_{s, h, \ell}$, where $\{ m_{s, h, \ell} \}_{  s \in \states_h^\ell }$ satisfy the guarantees of \Cref{claim:zeta-implied-sample-lb}.

    The dataset is then transformed into $\bdataset_{s, a} := \{ r + \overline{r}_x^{(h+1)}  : \forall (x, r) \in \mathcal D_{s,a,h} \}$ of the same size in Line~\ref{line:update-r}.
    Observe that a sample in $\bdataset_{s, a}$ can therefore be viewed as a sample from $r+ \overline{r}^{(h + 1)}_{y} \in [0, H]$, where
    $r \sim \rewD_{h}(s, a) $ and  $y \sim \distribution_{h}(s, a)$.
    Recall that
    we apply $\RepVarBandit$ with input datasets $ 
 \{ \bdataset_{s,a} \}_{ s \in \states_h^{\ell}, a \in \mathcal A }$, accuracy $\frac{2^\ell \eps}{8 H L}$, and failure probability $\frac{\delta}{H L}$ in Line~\ref{line:run-bandit}. 
By \Cref{lem:var-multi-bandit} and the dataset size lower bound stated in \Cref{claim:zeta-implied-sample-lb}, with probability at least $1 - \frac{\delta}{H L}$,
the action $a_{s}^{(h)}$ learned 
must satisfy that
    \begin{equation}
    \label{eq:arm-optimality}
    \forall s \in \states_h^\ell:
        \mu_{h}(s, a_{s}^{(h)}) + \Ep_{z \sim \distribution_{h}(s, a_{s}^{(h)})}[\overline{r}^{(h + 1)}_{z}]
        \geq \max_{a \in \mathcal A} \mu_{h}(s, a) + \Ep_{z \sim \distribution_{h}(s, a)}[\overline{r}^{(h + 1)}_{z}] - \frac{2^{\ell} \varepsilon}{8 H L} 
    \end{equation}
 and the estimate $\hat{r}^{(h)}$ must satisfy that
    \begin{equation}
    \label{eq:estimate-optimality}
    \forall s \in \states_h^\ell:
        \left| \hat{r}^{(h)}_{s} - \left( r_{h}(s, a_{s}^{(h)}) + \Ep_{z \sim \distribution_{h}(s, a_{s}^{(h)})}[\overline{r}^{(h + 1)}_{z}] \right) \right| \leq \frac{2^{\ell} \varepsilon}{8 H L}.
    \end{equation}
    Note that \Cref{eq:arm-optimality} and \Cref{eq:estimate-optimality} fail to hold with probability at most $\frac{\delta}{HL}$ for a fixed $\ell$.
    By a union bound, they hold simultaneously for all $\ell$ with probability at least $1 - \frac{\delta}{H}$.
    Conditioned on that,
    we can conclude that $\overline{r}^{(h)}_{s} \gets \hat{r}^{(h)}_{s} - \frac{2^{\ell} \varepsilon}{8 H L}$ and $a_{s}^{(h)}$
    together enjoy the guarantee:
    \begin{equation}
        \label{eq:reward-under-bound}
        \forall s \in \states_h^\ell:
        \mu_{h}(s, a_{s}^{(h)}) + \Ep_{z \sim \distribution_{h}(s, a_{s}^{(h)})}[\overline{r}^{(h + 1)}_{z}] - \frac{2^{\ell} \varepsilon}{4 H L} 
        \leq \overline{r}^{(h)}_{s} 
        \leq \mu_{h}(s, a_{s}^{(h)}) + \Ep_{z \sim \distribution_{h}(s, a_{s}^{(h)})}[\overline{r}^{(h + 1)}_{z}].
    \end{equation}
    Combining \Cref{eq:reward-under-bound} and \Cref{eq:arm-optimality} gives that
    \begin{align}
        \overline{r}^{(h)}_{s} 
        &\geq \mu_{h}(s, a_{s}^{(h)}) + \Ep_{z \sim \distribution_{h}(s, a_{s}^{(h)})}[\overline{r}^{(h + 1)}_{z}] - \frac{2^{\ell} \varepsilon}{4 H L} \nonumber \\
        &\geq \max_{a} \mu_{h}(s, a) + \Ep_{z \sim \distribution_{h}(s, a)}[\overline{r}^{(h + 1)}_{z}] - \frac{2^{\ell} \varepsilon}{2 H L} \nonumber \\
        &\geq \mu_{h}(s, \pi^*_{h}(s)) + \Ep_{y \sim \distribution_{h}(s, \pi^*_{h}(s))}[\overline{r}^{(h + 1)}_{y}] - \frac{2^{\ell} \varepsilon}{2 H L}
        \label{eq:bandit-arm-optimality}
    \end{align}

    Thus, combining the above bound with  \Cref{eq:value-diff-bound} gives that
    \begin{align*}
        &V\left(\pi^*, \mdp^{(h)}\right) - V\left(\pi^*, \mdp^{(h - 1)}\right) \\
        &= \sum_{s \in \states} \Pr\left[ X_h = s \right] \left( \mu_{h}(s, \pi^*_{h}(s)) + 
        \Ep_{y 
        \sim \distribution_{h}}
        \lp[\overline{r}_{y}^{(h + 1)}\rp]- \overline{r}_{s}^{(h)} \right) \\
        &= 
        \sum_{\ell = 1}^{L} \sum_{s \in \states_{h}^{\ell}} 
        \Pr\left[ X_h = s \right] 
        \left( \mu_{h}(s, \pi^*_{h}(s)) + \Ep_{y 
        \sim \distribution_{h}(s, \pi^*_{h}(s))
        }[\overline{r}_{y}^{(h + 1)}] - \overline{r}_{s}^{(h)} \right) \\
        &\leq \lp( \sum_{\ell = 1}^{L - 1} \frac{2^{\ell} \varepsilon}{2 H L} \lp( \sum_{s \in \states_{h}^{\ell}} 
        \Pr\left[ X_h = s \right] \rp) \rp) + 
        \lp( H \sum_{s \in \states_{h}^{L}} \Pr\left[ X_h = s \right] \rp) \\
        &\leq \lp( \sum_{\ell = 1}^{L - 1} \frac{2^{\ell} \varepsilon}{2 H L} 2^{- \ell} \rp) +  H \zeta  \\
        &\leq \lp( \sum_{\ell = 1}^{L - 1} \frac{\varepsilon}{2 H L} \rp) +  \frac{\varepsilon}{2 H L} \leq \frac{\varepsilon}{H} \, ,
    \end{align*}
    where the first inequality follows from Equation \eqref{eq:value-diff-bound}.
    the second equality follows from the fact that 
    $\{ \states_{h}^{\ell} \}_{\ell = 1}^{ \ceil{\log(1/\zeta)} }$ forms a partition of $\states$,
    the first inequality follows from Equation \eqref{eq:bandit-arm-optimality},
    the second inequality follows from the reachability property \Cref{it:tiered-dataset:reachability} of the partition, 
    the third inequality follows from the assumption on the size of $\zeta$.
    This concludes the proof of \Cref{lem:r-one-step-bandit-loss-induct}.
\end{proof}

By the union bound on $h$, it is easy to check that we must have $V\left(\pi^*; \mdp^{(1)} \right) \geq V \left(\pi^*; \mdp \right) - \eps$ with probability $1 - \delta$.
However, note that $\mdp^{(1)}$ is simply a one-step MDP, which corresponds to exactly some multi-instance best arm problem.
Therefore, it is not hard to show that 
the policy $\hat \pi$ learned will be a nearly-optimal policy with respect to $\mdp^{(1)}$.
It then remains to relate $V( \hat \pi, \mdp^{(1)} )$ to the expected reward of the policy under the original MDP.
For that purpose, we use the fact that the estimates $\overline{r}_{s}^{(H)}$ with high probability underestimate the utilities of the actions chosen by $\hat \pi$ due to the subtraction term in Line~\ref{line:underestimate}. 
As a result, the value of $\hat \pi$ under the truncated MDP is never higher than the original MDP.
\begin{lemma}[Truncation Decreases Policy Return]
    \label{lem:rl-reverse-induct-one-step}
Let $\hat \pi$ be the policy learned by \Cref{alg:rep-rl-bandit}, and $h \in [H] \backslash \{ 1 \}$.
We have that $V(\hat{\pi}, \mdp^{(h)}) \geq V(\hat{\pi}, \mdp^{(h - 1)})$ with probability at least $1 - \delta/H$.
\end{lemma}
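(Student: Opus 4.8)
\textbf{Proof proposal for \Cref{lem:rl-reverse-induct-one-step}.}

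The plan is to run the same expansion used in the proof of \Cref{lem:r-one-step-bandit-loss-induct}, but with $\hat{\pi}$ in place of $\pi^*$, and then to observe that the sign of the resulting per-state residual is controlled by the ``underestimate'' subtraction in Line~\ref{line:underestimate} of \Cref{alg:rep-rl-bandit}. First I would establish the exact analogue of \Cref{eq:value-diff-bound}. Since $\mdp^{(h)}$ and $\mdp^{(h-1)}$ have identical transitions and rewards on steps $1,\dots,h-2$, the contributions of those steps cancel; at step $h-1$, $\mdp^{(h)}$ collects the plain reward $\mu_{h-1}(s,\hat{\pi}_{h-1}(s))$ while $\mdp^{(h-1)}$ additionally collects $\Ep_{s' \sim \distribution_{h-1}(s,\hat{\pi}_{h-1}(s))}[\overline{r}^{(h)}_{s'}]$ and then terminates, whereas $\mdp^{(h)}$ instead runs one more step collecting $\mu_h(s',\hat{\pi}_h(s')) + \Ep_{y\sim\distribution_h(s',\hat{\pi}_h(s'))}[\overline{r}^{(h+1)}_{y}]$. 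Using that $s'\sim\distribution_{h-1}(s,\hat{\pi}_{h-1}(s))$ for $s\sim X_{h-1}^{\hat{\pi}}$ has the same law as $X_h^{\hat{\pi}}$, this gives
\begin{equation*}
V\lp(\hat{\pi},\mdp^{(h)}\rp) - V\lp(\hat{\pi},\mdp^{(h-1)}\rp) = \Ep_{s\sim X_h^{\hat{\pi}}}\lp[\; \mu_h(s,\hat{\pi}_h(s)) + \Ep_{y\sim\distribution_h(s,\hat{\pi}_h(s))}\lp[\overline{r}^{(h+1)}_{y}\rp] - \overline{r}^{(h)}_{s} \;\rp].
\end{equation*}
I would stress that the derivation of \Cref{eq:value-diff-bound} in \Cref{lem:r-one-step-bandit-loss-induct} never uses optimality of $\pi^*$, so it transfers verbatim.

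Next I would argue that each summand above is nonnegative with probability at least $1-\delta/H$. Recall $\hat{\pi}_h(s) = a_s^{(h)}$ is the arm picked by $\RepVarBandit$ at step $h$. For a state $s\in\states_h^\ell$ with $\ell < L$, the upper half of \Cref{eq:reward-under-bound} — precisely the consequence of subtracting $\tfrac{2^\ell\eps}{8HL}$ in Line~\ref{line:underestimate} — gives $\overline{r}^{(h)}_{s} \leq \mu_h(s,a_s^{(h)}) + \Ep_{z\sim\distribution_h(s,a_s^{(h)})}[\overline{r}^{(h+1)}_{z}]$, and this holds simultaneously over all tiers $\ell<L$ with probability at least $1-\delta/H$ by the same union bound over $\ell$ used in \Cref{lem:r-one-step-bandit-loss-induct} (and it is unconditional on the fixed values $\{\hat{r}^{(h+1)}_s\}_s$). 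For $s$ in the top tier $\ell = L$, the algorithm hard-codes $\overline{r}^{(h)}_{s} = 0$, which is trivially at most $\mu_h(s,a_s^{(h)}) + \Ep_{z}[\overline{r}^{(h+1)}_{z}]$ since both terms are nonnegative. Hence on this event every summand in the display is $\geq 0$, so the expectation is $\geq 0$ and $V(\hat{\pi},\mdp^{(h)}) \geq V(\hat{\pi},\mdp^{(h-1)})$, proving the lemma.

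The only real care needed is bookkeeping: keeping straight which MDP each state distribution and reward quantity is measured against (they all agree on the prefixes that matter, so $X_h^{\hat{\pi}}$ is unambiguous), and separating out the top tier $\ell = L$, where the best-arm guarantee \Cref{eq:reward-under-bound} does not apply but the hard-coded $\overline{r}^{(h)}_{s}=0$ makes the required inequality immediate. There is no new analytic difficulty beyond what was already carried out for \Cref{lem:r-one-step-bandit-loss-induct}; the step I expect to demand the most attention is simply writing out the value-difference identity above carefully enough that the cancellation of the shared prefix and the relabeling $s' \rightsquigarrow s$ are manifestly correct.
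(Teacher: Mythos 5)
Your proposal is correct and follows essentially the same route as the paper: derive the value-difference identity for $\hat{\pi}$ (the analogue of \Cref{eq:value-diff-bound}), then invoke the upper half of \Cref{eq:reward-under-bound} — the underestimate from Line~\ref{line:underestimate} — to conclude each per-state summand is nonnegative on the $1-\delta/H$ event. Your explicit treatment of the top tier $\ell = L$ (where $\overline{r}^{(h)}_{s}=0$ trivially satisfies the needed inequality) is a small point the paper glosses over, but it is the same argument.
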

\begin{proof}
Denote by $X_t$ the distribution of the state at 
    step $t$ in the original MDP $\mdp$ under the policy $\hat \pi$ for all $t \in [H]$, and $\mu_h(s,a)$ the
    expected value of the distribution $\rewD_h(s,a)$.
Following an argument similar to the one for \Cref{eq:value-diff-bound}, 
we obtain that
    \begin{equation}
        \label{eq:value-diff-bound-pi}
        V\left(\hat \pi, \mdp^{(h)}\right) - V\left(\hat \pi, \mdp^{(h - 1)}\right) = 
        \Ep_{ s \sim X_h } 
        \lp[ \left( \mu_{h}(s, \pi_{h}(s)) + 
        {\Ep}_{y \sim \distribution_h(s, \hat \pi_h(s))}[\overline{r}^{(h + 1)}_{y}] - \overline{r}_{s}^{(h)} \right) \rp].
    \end{equation}
    We have already shown in the proof of \Cref{lem:r-one-step-bandit-loss-induct} that 
    \begin{align}
    \label{eq:bar-r-underestimate}
        \forall \ell \in \ceil{\log(1/\zeta)} \text{ and } s \in \states_h^\ell:
        \overline{r}^{(h)}_{s} 
        &\leq \mu_{h}(s, a_{s}^{(h)}) + \Ep_{z \sim \distribution_{h}(s, a_{s}^{(h)})}[\overline{r}^{(h + 1)}_{z}]          \nonumber \\
        &=
        \mu_{h}(s, a_{s}^{(h)}) + \Ep_{z \sim \distribution_{h}(s, \hat \pi_h(x))}[\overline{r}^{(h + 1)}_{z}] 
    \end{align}
    with probability at least $1 - \delta / H$.
    Conditioned on \Cref{eq:bar-r-underestimate}, it follows that the expression inside the expectation on the right hand side of \Cref{eq:value-diff-bound-pi} is always non-negative.
    This then concludes the proof of \Cref{lem:rl-reverse-induct-one-step}.
\end{proof}

With \Cref{lem:rl-reverse-induct-one-step,lem:r-one-step-bandit-loss-induct} at our hands, it follows almost immediately that the learned policy $\hat \pi$ is nearly optimal with respect to the original MDP $\mdp$.
\begin{lemma}[Learned Policy is Nearly Optimal]
    \label{lem:repl-rl-correctness}
    With probability at least $1 - \delta$,
    the policy $\hat \pi$ learned by \RepRLAlg (\Cref{alg:rep-rl-bandit})
    is an $\eps$-optimal policy.
\end{lemma}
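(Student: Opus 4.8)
The plan is to chain the two structural lemmas just established along the sequence of truncated MDPs $\mdp = \mdp^{(H)}, \mdp^{(H-1)}, \dots, \mdp^{(1)}$, and then handle the base case $\mdp^{(1)}$ directly via the correctness of $\RepVarBandit$; replicability is already handled in \Cref{lem:r-RL-repl}, so here only accuracy is at stake. First I would fix the global good event $\mathcal E$ on which, for every $h \in [H]$ and every tier $\ell$, the call to $\RepVarBandit$ at Line~\ref{line:run-bandit} succeeds --- i.e., \Cref{eq:arm-optimality}, \Cref{eq:estimate-optimality}, and hence \Cref{eq:reward-under-bound}, hold. By \Cref{lem:var-multi-bandit}, using the dataset size lower bounds of \Cref{claim:zeta-implied-sample-lb} to certify the inputs are admissible within each tier, together with a union bound over $h$ (each step fails with probability at most $\delta/H$ after the inner union over $\ell$), we get $\Pr[\mathcal E] \geq 1 - \delta$. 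Both \Cref{lem:r-one-step-bandit-loss-induct} and \Cref{lem:rl-reverse-induct-one-step} hold deterministically on $\mathcal E$, since each is derived solely from these $\RepVarBandit$-success events, so there is no double-counting of failure probability.

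Conditioned on $\mathcal E$, I would telescope \Cref{lem:r-one-step-bandit-loss-induct} over $h = 2,\dots,H$ (a sum of only $H-1$ terms) to get
\[
V(\pi^*, \mdp^{(1)}) \;\geq\; V(\pi^*, \mdp^{(H)}) - (H-1)\tfrac{\varepsilon}{H} \;=\; V(\pi^*, \mdp) - \varepsilon + \tfrac{\varepsilon}{H},
\]
and telescope \Cref{lem:rl-reverse-induct-one-step} over the same range to get $V(\hat\pi, \mdp) = V(\hat\pi, \mdp^{(H)}) \geq V(\hat\pi, \mdp^{(1)})$. It then remains only to relate $V(\hat\pi, \mdp^{(1)})$ to $V(\pi^*, \mdp^{(1)})$.

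For the base case, the key point is that $\mdp^{(1)}$ is a single-step MDP whose (deterministic) initial state is $x_\ini$, so the value of any policy $\pi$ on $\mdp^{(1)}$ is just the expected one-step reward of the action $\pi_1(x_\ini)$; moreover this reward distribution --- $r + \bar r_x^{(2)}$ with $r \sim \rewD_1(x_\ini, a)$ and $x \sim \distribution_1(x_\ini, a)$ --- is exactly the multi-instance best-arm instance that $\RepVarBandit$ solves in the iteration $h = 1$. Since $x_\ini$ is reached with probability $1$ under $\pi^*$, the reachability property \Cref{it:tiered-dataset:reachability} forces $x_\ini \in \states_1^1$, so the accuracy parameter used there for its tier is $\tfrac{2\varepsilon}{8HL} = \tfrac{\varepsilon}{4HL}$, and on $\mathcal E$ \Cref{eq:arm-optimality} (at $h = 1$, $s = x_\ini$, $\ell = 1$) gives $V(\hat\pi, \mdp^{(1)}) \geq V(\pi^*, \mdp^{(1)}) - \tfrac{\varepsilon}{4HL}$. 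Chaining the three displays and using $\tfrac{\varepsilon}{4HL} \leq \tfrac{\varepsilon}{H}$,
\[
V(\hat\pi, \mdp) \;\geq\; V(\hat\pi, \mdp^{(1)}) \;\geq\; V(\pi^*, \mdp^{(1)}) - \tfrac{\varepsilon}{4HL} \;\geq\; V(\pi^*, \mdp) - \varepsilon + \tfrac{\varepsilon}{H} - \tfrac{\varepsilon}{4HL} \;\geq\; V(\pi^*, \mdp) - \varepsilon,
\]
so $\hat\pi$ is $\varepsilon$-optimal with probability at least $1 - \delta$.

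There is essentially no obstacle left: the analytic content is entirely in \Cref{lem:r-one-step-bandit-loss-induct} and \Cref{lem:rl-reverse-induct-one-step}, and this lemma is a telescoping-plus-base-case argument. The points I would handle with care are (i) the probability bookkeeping --- making the per-step failures aggregate to exactly $\delta$, which works because the two induction lemmas descend from a common family of $\RepVarBandit$-success events --- and (ii) the edge case that the step-$1$ tier containing $x_\ini$ is tier $1$ (so that the base-case error is only $\varepsilon/(4HL)$ and the final bound closes at $\varepsilon$ rather than a larger constant multiple), which follows from \Cref{it:tiered-dataset:reachability} and $x_\ini$ being the fixed initial state.
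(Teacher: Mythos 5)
Your proposal is correct and follows essentially the same route as the paper: telescope \Cref{lem:r-one-step-bandit-loss-induct} and \Cref{lem:rl-reverse-induct-one-step} across the truncated MDPs, then close the base case by noting $x_\ini \in \states_1^1$ (forced by \Cref{it:tiered-dataset:reachability} since the fixed start state is reached with probability $1$) and invoking the $h=1$ guarantee of $\RepVarBandit$. The only difference is bookkeeping: you condition on a single global success event for all $HL$ invocations to land exactly at $1-\delta$, whereas the paper union-bounds the three ingredients to $1-3\delta$ and then rescales, which is an equivalent (slightly less tidy) accounting.
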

\begin{proof}
\Cref{lem:rl-reverse-induct-one-step}, \Cref{lem:r-one-step-bandit-loss-induct}, and the union bound implies that with probability at least $1 - 2 \delta$
\begin{align*}
\forall h \in [H] \backslash \{1\}:
V(\hat{\pi}, \mdp^{(h)}) \geq V(\hat{\pi}, \mdp^{(h - 1)})
  \text{ and }
  V\left(\pi^*; \mdp^{(h - 1)} \right) \geq V \left(\pi^*; \mdp^{(h)}\right) - \frac{\varepsilon}{H}.
\end{align*}
In particular, the above implies that
\begin{align}
\label{eq:combine-induction}
    V(\hat{\pi}, \mdp) \geq V(\hat{\pi}, \mdp^{(1)})
    \text{ and }
  V\left(\pi^*; \mdp^{(1)} \right) \geq V \left(\pi^*; \mdp\right) - \frac{\varepsilon(H-1)}{H}.
\end{align}
Note that $\mdp^{(1)}$ is a one step MDP.
Moreover, as discussed in the beginning of \Cref{sec:prelims}, we may assume that $\mdp^{(1)}$ always starts from a fixed state $x_\ini$.
It is therefore guaranteed that $x_\ini \in \states_1^1$.
Therefore, for each $a \in \mathcal A$ the dataset $\dataset_{x_\ini, a, 1}$, is of size at least $
m_{x_\ini, 1, 1} \geq 4 \zeta^{-2} \gg H^{4} / \eps^2 \log(SAH/\delta)$, where the last inequality is by the assumption on the size of $\zeta$.
The transformed dataset $\bdataset_{x_\ini, a}$ enjoys the same size lower bound as $\dataset_{x_\ini, a, 1}$ and consists of \iid samples from 
$ r + \overline{ r }_y^{(1)} $, where $r \sim \rewD_1(x_\ini, a)$ and $y \sim \distribution_1(x_\ini, a)$.
By \Cref{lem:var-multi-bandit}, with probability at least $1 - \delta$ the action $a_{x_\ini}^{(1)}$ must satisfy that
\begin{align*}
\mu_{h}(s, a_{x_\ini}^{(1)}) + \Ep_{z \sim \distribution_{h}(s, a_{x_\ini}^{(1)})}[\overline{r}^{(2)}_{z}]
&\geq \max_{a \in \mathcal A} \mu_{h}(s, a) + \Ep_{z \sim \distribution_{h}(s, a)}[\overline{r}^{(2)}_{z}] - \frac{\varepsilon}{4 H L} \\
&\geq \mu_{h}( x_\ini ,  \pi_1^*(x_\ini) ) + \Ep_{z \sim \distribution_{h}(s, \pi_1^*(x_\ini))}[\overline{r}^{(2)}_{z}] - \frac{\varepsilon}{4 H L}.
\end{align*}
It follows that $V( \hat \pi, \mdp^{(1)} ) \geq V( \hat \pi^*, \mdp^{(1)} ) - \eps / H
$.
Combining this with \Cref{eq:combine-induction}, a union bound then yields the following with probability at least $1 - 3 \delta$
$$
V( \hat \pi, \mdp )
\geq 
V( \hat \pi, \mdp^{(1)} ) 
\geq 
V( \hat \pi^*, \mdp^{(1)} ) - \eps / H
\geq 
V( \hat \pi^*, \mdp ) - \eps.
$$
We can improve the success probability to $1 - \delta$ with constant factor overhead in sample complexity.
This concludes the proof of \Cref{lem:repl-rl-correctness}.
\end{proof}

\Cref{thm:const-repl-bandit-rl} follows from the replicability analysis and the correctness analysis.
\begin{proof}[Proof of \Cref{thm:const-repl-bandit-rl}]
    To analyze the runtime, we note that the algorithm invokes $\RepVarBandit$ $O(HL)$ times and all remaining operations are polynomial in sample complexity.
    Thus, the total runtime is $\poly{\sum_{s, a, h}|\dataset_{s, a, h}|} + H L \exp((1 + o(1))S \log A)$ by \Cref{lem:var-multi-bandit}.
\end{proof}

\subsection{Learning with Parallel Sampling}

Using \Cref{thm:const-repl-bandit-rl}, we give our algorithm for learning in the parallel sampling setting.

\begin{definition}
    \label{def:parallel-sampling-rl-alg}
    Let $C$ be the class of MDPs with $S$ states, $A$ actions, and $H$ time steps.
    An algorithm is an $(\varepsilon, \delta)$-PAC policy estimator with sample complexity $m \cdot S A H$ in the parallel sampling model if for all fixed MDPs $\mdp$, given $m$ calls to $\PS$, the algorithm outputs an $\varepsilon$-optimal policy $\pi$ with probability at least $1 - \delta$.
\end{definition}

Assume that the time horizon $H$ is some constant.
A classic result (Theorem 2.9 of \cite{agarwal2019reinforcement}) shows that one needs $\Theta \lp( \log(SA) / \eps^2 \rp)$ many calls to $\PS$ to find a policy whose return is within $\eps$ of the optimal without replicability. Prior work in the replicable setting \citep{eaton2024replicable,karbasi2023replicability} focused on the closely related discounted infinite horizon setting, where the agent interacts with the MDP indefinitely but has reward discounted (multiplicatively) in each step by some $\gamma$-discount factor.\footnote{ 
Our result is stated for finite-horizon MDPs. 
However, standard techniques allow one to approximate an MDP with discount factor $\gamma$ with a finite-horizon MDP with $H = \tilde O \lp( \log(1/\eps) / (1 - \gamma)  \rp)$.  
Thus, our result naturally translates to the discounted MDP setting up to logarithmic factors when $\gamma$ is assumed to be constant.
} For constant $\gamma$, \cite{eaton2024replicable} obtains an algorithm with $\tO{S^2 A^2 \varepsilon^{-2} \rho^{-2}}$ 
calls to $\PS$,\footnote{ In the infinite-horizon setup, the transition/reward distribution is time invariant. Therefore, each call to $\PS$ will return an \iid sample for every tuple $(s,a) \in \states \times \mathcal A$ rather than for every tuple $(s,a,h) \in \states \times \mathcal A \times \Z_+$.} 
while \cite{karbasi2023replicability} improves the number of calls to $\tO{SA \varepsilon^{-2} \rho^{-2}}$ (which translates to $\tO{S^2 A^2 \varepsilon^{-2} \rho^{-2}}$ many transition/reward samples).
We improve the number of calls to $\tO{S \varepsilon^{-2} \rho^{-2}}$.

\begin{restatable}[Reinforcement Learning with Parallel Sampling]{theorem}{MainPSRLThm}
    \label{thm:const-repl-parallel-alg}
    There is a $\rho$-replicable $(\varepsilon, \delta)$-PAC Policy Estimator with sample complexity $\bigtO{\frac{S^2 A H^{\parallelPower} \log^4(1/\delta)}{\varepsilon^{2} \rho^2}}$ in the parallel sampling model.
    The algorithm runs in time $\poly{H\varepsilon^{-1}\rho^{-1}\log(1/\delta)\exp((1 + o(1))S \log A)}$.
\end{restatable}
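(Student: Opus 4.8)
The plan is to reduce to the two building blocks already in hand: \Cref{thm:const-repl-bandit-rl} (tiered backward induction) gives a constant-replicability, constant-success PAC policy estimator \emph{provided} it is handed a tiered reachability partition and a $\zeta$-nice family of datasets, while \Cref{lem:repl-boosting} upgrades any such estimator into a $\rho$-replicable, $(\varepsilon,\delta)$-PAC one. All that remains is to supply the partition and the datasets cheaply in the parallel sampling model, where (unlike the episodic setting) we have full control over how many samples we draw per triple $(s,a,h)$.

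First I would take the \emph{trivial} tiered reachability partition: set $\states_h^1 := \states$ for every $h$ and $\states_h^\ell := \emptyset$ for every $\ell \geq 2$. This is a valid partition in the sense of \Cref{it:tiered-dataset:partition}: at each step $\{\states_h^\ell\}_{\ell\in[L]}$ partitions $\states$, and the only nonvacuous requirement, the reachability bound $\sum_h \Pr[x_h \in \states_h^\ell \mid \pi^*] \leq 2^{1-\ell}$, ranges over $\ell \geq 2$ and holds trivially for empty sets (for any optimal $\pi^*$). Next I would fix $\zeta = \Theta\!\big(\varepsilon/(H^2 \log^5(SAH\varepsilon^{-1}))\big)$ small enough that \Cref{thm:const-repl-bandit-rl} applies with accuracy $\varepsilon/2$ and constant failure probability, put $L = \ceil{\log(1/\zeta)}$, call $\PS$ exactly $m_0 := \Theta(SH^2/\zeta^2) = \tTh{SH^6/\varepsilon^2}$ times, and let $\dataset_{s,a,h}$ be the $m_0$ i.i.d.\ transition–reward pairs returned for $(s,a,h)$ (with the vacuous transition coordinate at $h=H$ dropped, as in Line~1 of \Cref{alg:rep-rl-bandit}). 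It is then routine to check $\{\dataset_{s,a,h}\}$ is $\zeta$-nice with respect to the trivial partition in the sense of \Cref{def:good-state-dataset-comb}: Sample Independence is immediate since the pairs are i.i.d.\ and independent across triples by \Cref{def:parallel-sampling}, and the Variable Sample Lower Bound holds with $m_{s,h,1} := m_0$ (and vacuously for the empty tiers $\ell \geq 2$) because $\sum_h \sqrt{\sum_{s \in \states} 1/m_0} = H\sqrt{S/m_0} \leq 2\zeta = 2^1\zeta$ by our choice of $m_0$.

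Given this, I would let $\innerAlg_0$ be the parallel-sampling algorithm that draws the $m_0$ parallel samples above and outputs $\RepRLAlg$ run with accuracy $\varepsilon/2$, constant failure probability, the trivial partition, and these datasets; by \Cref{thm:const-repl-bandit-rl} it is $0.01$-replicable (hence $0.1$-replicable) and $(\varepsilon/2,0.1)$-PAC, with sample complexity $m_0 \cdot SAH = \tTh{S^2 A H^{\parallelPower}/\varepsilon^2}$. Crucially, the $\zeta$-niceness check used no property of $\mdp$, so the ``replicable only on nice inputs'' caveat of \Cref{thm:const-repl-bandit-rl} is fully discharged and $\innerAlg_0$ is a genuine PAC policy estimator to which \Cref{lem:repl-boosting} applies. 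Feeding $\innerAlg_0$ into the parallel-sampling case of \Cref{lem:repl-boosting} then yields a $\rho$-replicable $(\varepsilon,\delta)$-PAC estimator of sample complexity
\[
\bigtO{\, m_0 SAH \cdot \frac{\log^4(1/\delta)}{\rho^2} + \frac{SAH\log^4(1/\delta)}{\varepsilon^2\rho^2} \,} = \bigtO{\frac{S^2 A H^{\parallelPower}\log^4(1/\delta)}{\varepsilon^2\rho^2}},
\]
and the runtime bound follows by composing the $\poly{\sum_{s,a,h}|\dataset_{s,a,h}|} + HL\exp((1+o(1))S\log A)$ runtime of \Cref{thm:const-repl-bandit-rl} with the $\tO{\rho^{-2}\log^4(1/\delta)}$-fold invocation overhead of \Cref{lem:repl-boosting}, recalling $L = O(\log(H/\varepsilon)) + O(\log\log(SAH\varepsilon^{-1}\delta^{-1}))$ and absorbing the $\poly{S,A}$ factors into $\exp((1+o(1))S\log A)$.

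I expect essentially no conceptual obstacle here — the heavy lifting is entirely inside \Cref{thm:const-repl-bandit-rl} and \Cref{lem:repl-boosting}. The one place needing genuine care is the choice of parameters in Step two: one must pick $\zeta$ (hence $m_0$, hence the exponent of $H$) so that \emph{both} the hypothesis $\zeta \ll \varepsilon/(H^2\log^5(\cdots))$ of \Cref{thm:const-repl-bandit-rl} and the per-tier bound $\sum_h\sqrt{\sum_{s}1/m_{s,h,\ell}} \leq 2^\ell\zeta$ hold simultaneously, and one must confirm the $\zeta$-niceness of the collected datasets holds for \emph{every} MDP, which is exactly what licenses treating $\innerAlg_0$ as a bona fide PAC policy estimator in the sense required by \Cref{lem:repl-boosting}.
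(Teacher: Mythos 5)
Your proposal is correct and follows essentially the same route as the paper: both use the trivial partition $\states_h^1 = \states$, draw a uniform number $\tTh{SH^6/\varepsilon^2}$ of parallel samples per state-action-step so that the datasets are $\zeta$-nice with $\zeta \ll \varepsilon/(H^2\log^5(\cdot))$, invoke \Cref{thm:const-repl-bandit-rl}, and then boost with \Cref{lem:repl-boosting} to get the stated $\tO{S^2AH^{\parallelPower}\log^4(1/\delta)\varepsilon^{-2}\rho^{-2}}$ bound. The only differences are cosmetic (you fix $\zeta$ first and choose $m_0$ accordingly, whereas the paper fixes $m$ and defines $\zeta := H\sqrt{S/m}$), so no further changes are needed.
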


We remark that when $\delta, \eps, \rho, H$  are some constants,
the bound in \Cref{thm:const-repl-parallel-alg} is nearly optimal (up to polylogarithmic factors) in the parallel sampling model (see \Cref{thm:parallel-sampling-lb}).
We next give the proof of \Cref{thm:const-repl-parallel-alg}, which is a simply application of \Cref{thm:const-repl-bandit-rl}.
\begin{proof}
    Let $C$ be some sufficiently large constant.
    Suppose we make $m = C\frac{S H^{6} \log^C(A)}{\varepsilon^{2}}$ calls to $\PS$, which translates to $\bigtO{\frac{S^2 A H^{7}}{\varepsilon^{2}}}$ reward and transition samples, so that we obtain datasets $\set{\dataset_{s, a, h}}$ satisfying $|\dataset_{s, a, h}| \geq m$ for all $s, a, h$.
    We design a $0.1$-replicable $(\varepsilon/2, 0.1)$-PAC Policy Estimator and boost it to a $\rho$-replicable $(\varepsilon, \delta)$-PAC Policy Estimator via \Cref{lem:repl-boosting}.
    
    Consider the trivial partition $\states_{h}^{1} = \states$ for all $h \in [H]$.
    This is by definition a tiered reachability partition (see \Cref{it:tiered-dataset:partition}).
    We claim that the collected datasets are sufficiently nice (see \Cref{def:good-state-dataset-comb}) with respect to this trivial tiered reachability partition.
    In particular, set $m_{s, h, 1} = m$ so we have
    \begin{equation*}
        \sum_{h} \sqrt{\sum_{s \in \states} \frac{1}{m}} = \frac{H \sqrt{S}}{\sqrt{m}}:= \zeta \ll \frac{ \varepsilon / H}{ \lp( \log^5(SAH\varepsilon^{-1}) \rp)  }
    \end{equation*}
    as required by \Cref{thm:const-repl-bandit-rl}.
    By \Cref{thm:const-repl-bandit-rl}, 
    applying $\RepRLAlg$ to the trivial partition and the obtained datasets then gives an $\varepsilon$-optimal policy with probability at least $0.99$.
    Moreover, if we run the algorithm twice, the partition and the distribution of the datasets will be identical. 
    Thus, the algorithm is also $0.01$-replicable.
    Therefore, we can apply \Cref{lem:repl-boosting} to boost the algorithm to 
    a $\rho$-replicable one that succeeds with probability $1 - \delta$ with sample complexity
    $$
    \tilde O \lp(  \frac{S^2 A H^7}{\eps^2} \rp)
    \cdot \tilde O\lp(  
    \frac{\log^4(1/\delta)}{\rho^2 } \rp)
    + H \cdot \tilde O \lp( 
    \frac{\log^4(1/\delta)}{\rho^2 \eps^2}   \rp)
    = 
    \tilde O \lp(  \frac{S^2 A H^7 \log^4(1/\delta)}{\eps^2 \rho^2} \rp). 
    $$
    This concludes the proof of \Cref{thm:const-repl-parallel-alg}.
    To analyze the runtime, we note that the run-time is simply that of \Cref{thm:const-repl-bandit-rl} combined with \Cref{lem:repl-boosting}.
\end{proof}

\section{Replicable Strategic Exploration}
\label{sec:exploration}

In this subsection, we show how to replicably and strategically explore the MDP.
Specifically, let $\pi$ be some potential optimal policy.
The algorithm will divide the states into ``tiers'' based on their reachability under $\pi$.
Then, for each tier of the states, the algorithm will gather sufficiently many samples so that we can leverage the multi-instance best arm algorithm to the $V^*$-values up to some error that is inversely proportional to the reachability of that tier.
The formal statement is given below.

\begin{lemma}[Replicable and Strategic Exploration]
    \label{prop:r-explore-collect}
    Let $\zeta \in (0,1)$ and $L = \ceil{\log(1/\zeta)}$.
    Then, there is an algorithm $\RepLevelExplore(\mdp, \zeta)$ that consumes $N:=\tilde O(  S^2 A H^{\explorePower} \zeta^{-2} )$ samples, and produces 
    \begin{enumerate}
        \item A $0.01$-replicable tiered reachability partition $\set{\states_{h}^{\ell}}_{h \in [H], \ell \in [L]}$ of $\mdp$,
        \item A collection of datasets $\set{\dataset_{s, a, h}}_{s \in \states, a \in \mathcal A, h \in [H]}$ that are $\zeta$-nice with respect to $\set{\states_{h}^{\ell}}$ with probability at least $0.99$.
    \end{enumerate}
Moreover, the runtime of the algorithm is at most $\poly{N} \exp\lp( O(SH)  \rp)$.
\end{lemma}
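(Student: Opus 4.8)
The plan is to implement the exploration strategy sketched in the techniques overview: introduce a (non-replicable) optimistic agent \QExplore whose \emph{under-explored} states form an ignorable combination, run it many times, and turn the resulting exploration statistics into a replicable tiered partition together with enough pooled i.i.d.\ samples via fractional rounding. Throughout I may assume $\distribution_0$ is supported on a single state $x_\ini$, the general case reducing to this by a dummy step~$0$ as in \Cref{sec:prelims}. The argument splits into (1) analyzing a single run of \QExplore, (2) aggregating many runs and replicably rounding the aggregate into the partition $\{\states_h^\ell\}$, and (3) verifying the pooled datasets are $\zeta$-nice with respect to that partition.

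\textbf{Step 1: the base subroutine and its reachability bound.} I define \QExplore as \cite{jin2018q}'s optimistic $Q$-learning run on the \emph{zero-reward} copy of $\mdp$ (identical transitions, all rewards replaced by $0$) for $T$ episodes, with every $Q_h(s,a)$ and the initial optimistic bonus initialized at scale $\Theta(H)$, recording each visit count $N_h(s,a)$ and every observed transition/reward pair. Call $s$ \emph{under-explored at $h$} if some action was never played at $(s,h)$ (more generally $\tau$-under-explored if $\min_a N_h(s,a)<\tau$), and write $\mathcal U_h$ ($\mathcal U_h(\tau)$) for that set. The crux is the \emph{reachability potential} $R_h^k(s)$, roughly the largest value of $\sum_{h'\ge h}\Pr_\pi[x_{h'}\in\mathcal U_{h'}^k\mid x_h=s]$ achievable by any policy $\pi$, where $\mathcal U_{h'}^k$ is the under-explored set after $k$ episodes. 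I would prove (this is \Cref{lem:ucb-reachability}, and I expect it to be the main obstacle of the whole lemma) that with high probability $R_h^k(s)\le\max_a Q_h^k(s,a)$ for all $k,h,s$, by downward induction on $h$ with a simultaneous induction on $k$, mirroring \cite{jin2018q}'s proof that $Q_h^k$ is an upper confidence bound on $Q_h^*$: if $s$ is currently under-explored the inequality is witnessed by the untaken action, whose estimate is still $\ge H\ge R_h^k(s)$ since the bonus started at scale $H$; otherwise one writes the update at the $t$-th visit of $(s,a,h)$ as an affine combination of earlier $Q_h^{k'}(s,a)$ and $Q_{h+1}^{k'}(s',a')$ values, exploits that $R_h^k$ is monotonically non-increasing in $k$ (the under-explored sets only shrink), and controls the accumulated martingale differences via Azuma--Hoeffding. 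Given the bound, \cite{jin2018q}'s regret guarantee on the zero-reward MDP (where $V^*\equiv0$) forces $\min_{k\le T}\max_a Q_1^k(x_\ini,a)=\tilde O(\sqrt{H^4SA/T})$, so taking $T$ large enough makes $R_1^T(x_\ini)\ll\zeta$; that is, a single run of \QExplore produces $\{\mathcal U_h\}$ with $\sum_h\Pr[x_h\in\mathcal U_h\mid\pi]\ll\zeta$ \emph{for every policy $\pi$} with high probability, and the same argument with a rescaled bonus gives the analogous ignorability for each $\{\mathcal U_h(\tau)\}$ on a geometric grid of thresholds.

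\textbf{Step 2: aggregation, replicable rounding, and the tiers.} Run \QExplore independently $K$ times, with $K$ and $T$ chosen (routine bookkeeping) so that $KT\cdot 2H=\tilde O(S^2A\,\poly{H}\,\zeta^{-2})$. For each $(s,h)$ let $\hat\mu_{s,h}$ record the empirical frequency over the $K$ runs that $s$ is explored (all actions played at least the relevant threshold number of times), and let $\dataset_{s,a,h}$ pool all transition/reward pairs seen at $(s,a,h)$. Since each coordinate of $\hat\mu$ is an average of $K$ i.i.d.\ Bernoullis, a Chernoff/union bound (as in \Cref{lem:sampling-concentration}) makes $\hat\mu$ concentrated enough that, feeding it (on a dyadic/log scale, so the needed accuracy is effectively multiplicative) through the randomized-rounding routine of \Cref{lem:randomized-rounding}, the rounded vector $\bar\mu$ is identical across two independent runs with probability $\ge0.99$. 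I then set the deepest tier $\states_h^L$ to be an ignorable combination $\mathcal I_h$ drawn from the product Bernoulli $\mathcal B(\bar\mu)$ with shared randomness (equivalently via correlated sampling, \Cref{lemma:correlated-sampling}/\Cref{lem:tv-bpd}), and for $2\le\ell<L$ I bucket each $s\notin\mathcal I_h$ into $\states_h^\ell$ by the dyadic magnitude of its rounded explored-probability, assigning a tier-dependent target $m_{s,h,\ell}$ (smaller for deeper, rarely-reached tiers) engineered so that $\sum_h\sqrt{\sum_{s\in\states_h^\ell}1/m_{s,h,\ell}}\le 2^\ell\zeta$. Both the buckets and the targets are deterministic functions of the replicable pair $(\bar\mu,\{\mathcal I_h\})$, so the partition $\{\states_h^\ell\}$ is $0.01$-replicable.

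\textbf{Step 3: correctness of the partition and $\zeta$-niceness.} For \Cref{it:tiered-dataset:reachability}: because $\mathcal B(\bar\mu)$ has the same coordinate marginals as the empirical distribution over the $K$ observed under-explored combinations, linearity of reachability gives, for every $\pi$, that $\mathbb E_{\{\mathcal I_h\}}[\sum_h\Pr[x_h\in\mathcal I_h\mid\pi]]$ equals the average over the $K$ runs of $\sum_h\Pr[x_h\in\mathcal U_h^{(i)}\mid\pi]$, which is $\ll\zeta$ by Step~1; fixing an optimal $\pi^*$ in advance and applying Markov's inequality yields $\sum_h\Pr[x_h\in\states_h^L\mid\pi^*]\le2^{1-L}$ with probability $0.99$, and the same marginal/linearity argument run at threshold $\tau_\ell$ handles tiers $2\le\ell<L$ after a union bound over the $L$ levels. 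For $\zeta$-niceness: sample-independence (\Cref{it:tiered-dataset:iid}) follows from the ``parallel sample per step, read only your current location'' formulation of episodic access (\Cref{def:episodic-sampling}) — conditioned on the realized dataset sizes, each element of $\dataset_{s,a,h}$ is a fresh i.i.d.\ draw from $p_h(s,a)\otimes r_h(s,a)$ since whether the agent stands at $(s,a,h)$ in a given step depends only on strictly earlier transitions; and the variable lower bound (\Cref{it:tiered-dataset:min-action-lb}) holds because a state placed in tier $\ell<L$ is explored with per-run probability $\gtrsim2^{-\ell}$, so a Chernoff bound over the $K$ runs gives $\min_a|\dataset_{s,a,h}|\ge m_{s,h,\ell}$ with probability $1-o((SH)^{-1})$. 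The one wrinkle is that \Cref{def:good-state-dataset-comb} demands this lower bound almost surely, not merely w.h.p.; I would absorb the rare failure into the $0.01$ slack of conclusion~(2) by having the algorithm overwrite its output with the trivial ``everything-ignorable'' partition on that event. Collecting the bookkeeping — $\tilde O(S^2A\,\poly{H}\,\zeta^{-2})$ samples, and runtime dominated by $K$ runs of $Q$-learning plus the $O(HL)$ rounding/correlated-sampling calls over $\{0,1\}^{S}$-valued coordinates, which cost $\exp(O(SH))$ by \Cref{lem:randomized-rounding} — gives $N=\tilde O(S^2AH^{\explorePower}\zeta^{-2})$ and runtime $\poly{N}\exp(O(SH))$.
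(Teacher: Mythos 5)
Your Step 1 and the treatment of the deepest tier follow the paper's route (zero-reward \QExplore, the upper-confidence-on-reachability lemma, marginal-matching plus Markov, correlated rounding/sampling), but the way you build the intermediate tiers diverges from the paper and contains two genuine gaps. First, the reachability property for tiers $2 \le \ell < L$ is not established. Your tiers are \emph{deterministic} dyadic buckets of states by their (rounded) explored-probability, whereas the ``marginal/linearity $+$ Markov'' argument you invoke only bounds the reachability of a set \emph{sampled} from the product distribution $\mathcal B(\hat\mu)$, because it is exactly the sampling that makes the expected reachability equal the empirical average over the $K$ runs. A bucket ``all $s \notin \mathcal I_h$ with explored-probability $\approx 2^{-j}$'' is not such a sample, and nothing in your argument shows its aggregate reachability under $\pi^*$ is at most $2^{1-\ell}$; a state can be explored almost always by the zero-reward agent and still be the state $\pi^*$ visits with probability $1$. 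The paper avoids this entirely by running the whole single-tier pipeline (\RepExplore) at $L-1$ different exploration accuracies $\jinRLacc = 2^{-\ell}$ and defining tier $\ell$ inside the level-$(\ell-1)$ ignorable combination, so the bound $2^{1-\ell}$ is inherited directly from \Cref{lem:sample-preserve-ignorable} at level $\ell - 1$; if you want to keep your bucketing you would need a different argument (e.g.\ exploiting $\sum_{s,h}\mu_{s,h}\Pr[x_h = s \mid \pi^*] \lesssim \jinRLacc$ to control high-$\mu$ buckets), which you do not supply.

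Second, the ``routine bookkeeping'' of $K$ and $T$ is precisely where your single-accuracy design fails quantitatively. You run every copy of \QExplore at accuracy $\approx \zeta$, costing $\tilde\Theta(SAH^5\zeta^{-2})$ episodes per run, and set $K = \tilde O(S\,\mathrm{poly}(H))$ to stay within $N = \tilde O(S^2AH^{\explorePower}\zeta^{-2})$. But the $\zeta$-niceness requirement applies to every $\ell \in [L-1]$, including $\ell = 1$: if (as can happen) essentially all $S$ states land in tier $1$, you need $\sum_h\sqrt{\sum_{s\in\states_h^1} 1/m_{s,h,1}} \le 2\zeta$, which forces $m_{s,h,1} = \tilde\Omega(SH^2\zeta^{-2})$ and hence $\tilde\Omega(SH\zeta^{-2})$ runs in which each such state is explored (the guaranteed yield per run via the phantom-action mechanism is only $\Theta(H)$ samples per explored state-action). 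With your per-run cost this blows the total to $\tilde\Omega(S^2A\,\mathrm{poly}(H)\,\zeta^{-4})$, i.e.\ an $\eps^{-2}$ overshoot. The paper's multi-level structure is exactly the fix: shallow tiers are fed by \emph{many cheap} runs at coarse accuracy $2^{-\ell}$, deep tiers by \emph{few expensive} runs at fine accuracy, and the two factors cancel so every level costs $\tilde\Theta(S^2AH^6\zeta^{-2})$. Without reintroducing this accuracy/run-count trade-off, your construction cannot meet the stated sample bound.
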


We focus on proving \Cref{prop:r-explore-collect} in the rest of this section.
A more convenient way to bound 
the probability $\sum_{h \in [H]}  \Pr[ x_h \in \states_h^\ell \mid \pi]$ for a state combination $\{\states_h^\ell\}_{h=1}^H$ 
is through the following \emph{reachability function}. 
In particular, the function allows us to define 
the probability of reaching certain states \emph{conditioned on} the state reached at the $h$-th step following some policy $\pi$. 
We remark that the function can be alternatively thought as the value function (with respect to the policy $\pi$) of a modified MDP where a unit reward is given if and only if the agent lands in a predefined state combination $\{\mathcal I_h \}_{h=1}^H$.
\begin{definition}[Reachability Function]
Given a state combination $\{ \mathcal I_h \}_{h=1}^H$ and some policy $\pi$, we define the reachability function as follows:    
\begin{align*}
& R_{H+1}^\pi\lp(x; \{\mathcal I_h\}_{h=1}^H \rp) = 0 \, , \\
&\forall h \in [H]:
R_h^\pi(x; \{\mathcal I_h\}_{h=1}^H) = 
\ind\{ x \in \mathcal I_h \} + 
\Ep_{ x' \sim \distribution_h(x, \pi_h(x) )  }
\lp[  R_{h+1}^\pi(x'; \{\mathcal I_h\}_{h=1}^H) \rp].
\end{align*}
\end{definition}
Recall we assume the MDP always starts at some fixed initial state $x_\ini \in \mathcal S$.
It is not hard to show that $\sum_{h \in [H]}  \Pr[ x_h \in \mathcal I_h \mid \pi]$ equals $R_1^{\pi}( x_\ini; \{ \mathcal I_h \}_{h=1}^H )$.
\begin{lemma}[State Combination Reachability]
\label{lem:relate-prob-function}
Let $\{ \mathcal I_h \}_{h=1}^H$ be a state combination and $\pi$ be some policy.
It holds that $ \sum_{h \in [H]}  \Pr[ x_h \in \mathcal I_h \mid \pi] = R_1^{\pi}( x_\ini; \{ \mathcal I_h \}_{h=1}^H )$.
\end{lemma}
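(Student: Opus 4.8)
The plan is to prove, by backward induction on $h$, the pointwise identity
\[
R_h^\pi\lp(x; \{\mathcal I_h\}_{h=1}^H\rp) = \sum_{h'=h}^{H} \Pr\lp[x_{h'} \in \mathcal I_{h'} \,\middle|\, \pi,\ x_h = x\rp]
\]
for every state $x \in \states$ and every $h \in [H+1]$, where $\Pr[\cdot \mid \pi,\ x_h = x]$ refers to the trajectory obtained by running $\pi$ starting from state $x$ at time step $h$. Granting this, the lemma is immediate: take $h = 1$ and $x = x_\ini$; since the MDP starts deterministically at $x_\ini$ (as assumed throughout, see \Cref{sec:prelims}), the conditioning is vacuous and the right-hand side becomes exactly $\sum_{h \in [H]} \Pr[x_h \in \mathcal I_h \mid \pi]$, consistent with the paper's convention $\Pr[x_h \in S \mid \pi] = \Pr_{x \sim X_h^\pi}[x \in S]$.

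For the induction, the base case $h = H+1$ is trivial: both sides vanish, since $R_{H+1}^\pi \equiv 0$ by definition and the sum is empty. For the inductive step I would fix $h \le H$, expand the recursion $R_h^\pi(x) = \ind\{x \in \mathcal I_h\} + \Ep_{x' \sim \distribution_h(x, \pi_h(x))}[R_{h+1}^\pi(x')]$, substitute the induction hypothesis for $R_{h+1}^\pi(x')$, and then invoke the Markov property: conditioned on $x_h = x$ and on following $\pi$, the next state $x_{h+1}$ has law $\distribution_h(x, \pi_h(x))$ and the remainder of the trajectory depends only on $x_{h+1}$, so for each $h' \ge h+1$
\[
\Ep_{x' \sim \distribution_h(x, \pi_h(x))}\lp[\Pr\lp[x_{h'} \in \mathcal I_{h'} \,\middle|\, \pi,\ x_{h+1} = x'\rp]\rp] = \Pr\lp[x_{h'} \in \mathcal I_{h'} \,\middle|\, \pi,\ x_h = x\rp].
\]
Adding back the term $\ind\{x \in \mathcal I_h\} = \Pr[x_h \in \mathcal I_h \mid x_h = x]$ recovers the claimed identity at level $h$, closing the induction.

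This is a routine backward-induction / tower-of-expectations argument — morally just the statement that $R$ is the value function of the auxiliary MDP that pays a unit reward for landing in $\mathcal I_{h'}$ at step $h'$ — so I do not expect any genuine obstacle. The only point needing care is the bookkeeping around conditioning on the current state and matching it with the fixed-initial-state convention $\distribution_0 = \delta_{x_\ini}$ adopted in the preliminaries.
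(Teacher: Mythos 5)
Your proof is correct and follows essentially the same route as the paper: a backward induction over the time step that unrolls the recursive definition of $R_h^{\pi}$ and invokes the Markov property. The only cosmetic difference is that you establish the pointwise identity conditioned on the current state and then specialize to $x_{\ini}$, whereas the paper inducts on the averaged quantity $\Ep_{s \sim X_t^{\pi}}\lp[ R_t^{\pi}(s; \{\mathcal I_h\}_{h=1}^H) \rp]$ and uses that $X_1^{\pi}$ is supported on $x_{\ini}$; the two formulations are interchangeable.
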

\begin{proof}
We will show the lemma via induction on the time step.
Denote by $X_t^\pi$ the distribution of the state the agent is in in the $t$-th step under policy $\pi$.
The inductive hypothesis is then
\begin{align}
\label{eq:reachability-induction}
\sum_{ h=t }^H  
\Pr[ x_h \in \mathcal I_h \mid \pi] 
=  \Ep_{ s \sim X_t^\pi }
\lp[ 
R_{t}^{\pi}\lp( s; \{ \mathcal I_h \}_{h=1}^H \rp)
\rp] \, ,
\end{align}
and we induct on $t$.
We first verify the base case when $t = H$.
Then we have
$$
\Pr[ x_H \in \mathcal I_H \mid \pi]
= \Ep_{ s \sim X_H^{\pi} } \lp[ \ind\{  s \in \mathcal I_H \}  \rp]
= 
\Ep_{ s \sim X_h^{\pi} } \lp[ R_H^\pi( s ; \{\mathcal I_h\}_{h=1}^H )  \rp].
$$
Assume \Cref{eq:reachability-induction} holds for all $t+1$.
We will show that \Cref{eq:reachability-induction} holds for $t$ as well.
In particular, applying the inductive hypothesis for $t = t'+1$ and the definition of the reachability function gives that
\begin{align*}
\sum_{h=t}^H  
\Pr[ x_h \in \mathcal I_h \mid \pi]
= 
\Ep_{ s \sim X_{t}^\pi }
\lp[  \ind \{ s \in \mathcal I_{t} \} \rp]
+ \Ep_{ s \sim X_{t+1}^\pi }\lp[ 
R_{t+1}^\pi\lp( s; \{ \mathcal I_h \}_{h=1}^H \rp)
\rp].
\end{align*}
Note that the distribution of $X_{t+1}^\pi$ is the same as the distribution of $s \sim \distribution_{t}( s, \pi_{t'}(s) )$, where $s \sim X_{t}^\pi$.
Thus, we further have that
\begin{align*}
\sum_{h=t}^H  
\Pr[ x_h \in \mathcal I_h \mid \pi]
&= 
\Ep_{ s \sim X_{t}^\pi }
\lp[ \ind \{ s \in \mathcal I_{t} \}
+ \Ep_{ s'\sim \distribution_{t}(s, \pi_{t}(s)) }
\lp[  R_{t+1}^\pi(s'; \{ \mathcal I_h \}_{h=1}^H)\rp]
\rp] \\
&= \Ep_{ s \sim X_{t}^\pi }\lp[ 
R_{t}^\pi( s; \{ \mathcal I_h \}_{h=1}^H )
\rp].
\end{align*}
This concludes the inductive step.
Applying \Cref{eq:reachability-induction} with $t = 1$ gives that 
$$
\sum_{h=1}^H  
\Pr[ x_h \in \mathcal I_h \mid \pi] 
=  \Ep_{ s \sim X_1^\pi }
\lp[ 
R_{1}^{\pi}\lp( s; \{ \mathcal I_h \}_{h=1}^H \rp)
\rp].
$$
Since we assume that the MDP always starts at a fixed state $x_\ini$, we know that $X_1^\pi$ is supported solely on $x_\ini$.
This concludes the proof of \Cref{lem:relate-prob-function}.
\end{proof}

\subsection{Efficient Non-replicable Exploration}
Our first step is to design a \emph{non-replicable} procedure that explores as many state-action pairs as possible, i.e., collecting data from the corresponding reward/transition distributions, until the remaining under-explored state-action pairs have low reachability under \emph{any} policy. If so, we are certain that the policy obtained will not be significantly sub-optimal even if we ignore the under-explored state-action pairs in the learning process.
Our main tool used for such strategic exploration is a variant of the $Q$-learning algorithm from \cite{jin2018q}, included below for completeness.

\begin{Ualgorithm}
\textbf{Input: } Episodic Access to $\mdp$, failure probability $\iota$, total number of episodes $K$. \\
\textbf{Output: } Record of all interactions.
\begin{enumerate}
\item Initialize $Q_h(x,a) \gets H$, 
$V_h(x) \gets H$, and 
$N_h(x,a) \gets 0$ for all $(x,a,h) \in \states \times \mathcal A \times [H]$.
\item For convenience, set $V_{H+1}(x) \gets 0$ for all $x \in \states$.
\item For episode $k = 1 \cdots K$:
\begin{enumerate}[leftmargin=0.3cm]
    \item Receive $x_1$.
    \item For step $h = 1 \cdots H$:
    \begin{enumerate}[leftmargin=0.3cm]
        \item Take action $a_h \gets \argmax_{a'} Q_h(x_h, a')$, and observe $x_{h+1}$.
        \item Set $t = N_{h}(x_h, a_h) \gets N_h(x_h, a_h) + 1$, $b_t \gets c \sqrt{H^3 \log(SAKH)/t}$ for some appropriate constant $c$, $\alpha_{t} \gets \frac{H + 1}{H + t}$.
        \item $Q_h(x_h, a_h) \gets \lp( 1 - \alpha_t\rp) Q_h(x_h, a_h)
        + \alpha_t \big( r_h(x_h, a_h)  +  V_{h+1}(x_{h+1}) + b_t\big)$
        \item $V_h(x_h) \gets \min\lp( H, \max_{a'} Q_h(x_h, a')\rp)$.
    \end{enumerate}
\end{enumerate}
\end{enumerate}    
\caption{\QAgent}
\label{alg:jin-explore-agent}
\end{Ualgorithm}
At a high-level, \QAgent ~maintains an estimate of the $Q^*$-value of each state-action pair. 
To ensure that the agent is incentivized to pick under-explored state-action pairs, the $Q^*$-value estimates for all state-action pairs are initialized to $H$, and an optimistic bonus term (inversely proportional to the visitation count) is added to every state-action pair. In every episode, the agent interacts with the environment by greedily picking the action with the maximum $Q^*$-value estimates computed from the data collected from previous episodes.

The key technical lemma is to show that if we run the $Q$-learning agent from \cite{jin2018q} for 
sufficiently many episodes, then the collection of states whose actions are not fully explored by the learning agent must have low reachability under any policy.
Formally, we will use the following definition of under-explored states.
\begin{definition}[Under-explored States]
Let $\agent$ be a learning agent in the episodic MDP setting. 
Let $\mathcal P_h^k$ be the set of state-action pairs that are not visited by the $Q$-learning agent at step $h$ for more than $H$ times until the beginning of episode $k$, 
and $\mathcal U_h^k:= \{ s \in \mathcal S \mid \exists a \in \mathcal A \text{ such that } (s,a) \in \mathcal P_h^k   \}$.
We say $\{ \mathcal U_h^k \}_{h=1}^H$ is the under-explored state combination of $\agent$ by the end of episode $k$.
\end{definition}
A subtle technical detail is that, for our purpose, we need to explicitly set the rewards of the MDP to be deterministically $0$ while running the $Q$-learning agent so that the $Q$-learning agent ignores all the reward signals. 
At a high-level, this modification prevents the agent from exploiting actions that lead to high rewards and force the agent to continuously explore new states whenever possible.
The formal statement of the exploration guarantee is given below.
\begin{lemma}[Optimistic Exploration]\label{lem:efficient-explore} 
Let $\jinRLacc, \iota \in (0, 1)$.
Suppose we run the $Q$-learning agent $\QAgent$ of \cite{jin2018q} with failure probability $\iota$ for $K$ episodes while setting all the rewards received to $0$.
Let $\{ \mathcal I_h \}_{h=1}^H$ be the under-explored state combination of $\QAgent$ by the end of episode $K$.
Assume that $
K \gg SA H^5 \log(SAH/\iota) \jinRLacc^{-2}$.
With probability at least $1 - \iota$, it holds that 
$
R_{1}^\pi(x_{\ini}; \{ \mathcal I_h \}_{h=1}^H) \leq \jinRLacc
$ for any policy $\pi$.
\end{lemma}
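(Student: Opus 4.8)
The plan is to adapt the optimistic‑exploration analysis of \cite{jin2018q}, replacing its invariant ``$Q_h^k(s,a)\ge Q_h^*(s,a)$'' by the analogous statement for a reachability value function. Run \QAgent on the version of $\mdp$ with all received rewards replaced by $0$, and write $Q_h^k(s,a)$, $V_h^k(s)=\min\bigl(H,\max_a Q_h^k(s,a)\bigr)$, and $N_h^k(s,a)$ for the estimates and the step‑$h$ visitation count of $(s,a)$ at the start of episode $k$; let $\mathcal U_h^k$ be the under‑explored set at step $h$ at the start of episode $k$, so the output is $\{\mathcal I_h\}=\{\mathcal U_h^{K+1}\}$. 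Define $R_h^k(s):=\max_\pi R_h^\pi\bigl(s;\{\mathcal U_h^k\}_{h=1}^H\bigr)$; unrolling the definition of $R^\pi$, this is the unique solution of
\[R_{H+1}^k\equiv 0,\qquad R_h^k(s)=\ind\{s\in\mathcal U_h^k\}+\max_a\Ep_{s'\sim\distribution_h(s,a)}\bigl[R_{h+1}^k(s')\bigr],\]
so $0\le R_h^k(s)\le H$. I would first record the (purely combinatorial) monotonicity fact: $\mathcal U_h^{k+1}\subseteq\mathcal U_h^k$ since a state--action pair visited more than $H$ times stays so forever, and hence, inducting on $h$ downward through the recursion, $R_h^{k}(s)\le R_h^{k'}(s)$ whenever $k\ge k'$. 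By \Cref{lem:relate-prob-function} the target statement is precisely $R_1^{K+1}(x_\ini)\le\jinRLacc$.

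The core step is a ``UCB on reachability'' estimate: with probability $\ge1-\iota/2$, $R_h^k(s)\le V_h^k(s)$ for all $h\in[H+1]$, $k\in[K+1]$, $s\in\states$. I would prove this by backward induction on $h$ (the case $h=H+1$ is trivial), with the inner claim quantified over all $(k,s)$; note all estimates stay nonnegative since initial values, zeroed rewards, and bonuses are nonnegative. Because rewards are zeroed, after $t:=N_h^k(s,a)$ visits to $(s,a)$ at step $h$, in episodes $k_1<\dots<k_t<k$ with next states $s_{h+1}^{k_i}$, the update rule unrolls to $Q_h^k(s,a)=\sum_{i=1}^t\alpha_t^i\bigl(V_{h+1}^{k_i}(s_{h+1}^{k_i})+b_i\bigr)$ for $t\ge1$ (with $\alpha_t^i$ the accumulated learning‑rate weights of \cite{jin2018q}, $\sum_{i=0}^t\alpha_t^i=1$; the weight $\alpha_t^0$ on the initial value $H$ vanishes for $t\ge1$ because $\alpha_1=1$) and $Q_h^k(s,a)=H$ for $t=0$. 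Now split on whether $s$ is under‑explored:
\begin{itemize}
\item If $s\in\mathcal U_h^k$, pick $a^\star$ with $N_h^k(s,a^\star)\le H$. Using nonnegativity of $V$, $\sum_i\alpha_t^i=1$, and that the bonus satisfies $b_i\ge b_H\ge H$ for all $i\le t\le H$ (the ``initial bonus of order $H$'' requirement), $Q_h^k(s,a^\star)\ge H$, so $V_h^k(s)=H\ge R_h^k(s)$.
\item If $s\notin\mathcal U_h^k$, let $a^\circ$ maximize $\Ep_{s'\sim\distribution_h(s,a)}[R_{h+1}^k(s')]$, so $R_h^k(s)=\Ep_{s'\sim\distribution_h(s,a^\circ)}[R_{h+1}^k(s')]$ and $t=N_h^k(s,a^\circ)>H$. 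The inductive hypothesis gives $V_{h+1}^{k_i}(s_{h+1}^{k_i})\ge R_{h+1}^{k_i}(s_{h+1}^{k_i})$ and monotonicity gives $R_{h+1}^{k_i}\ge R_{h+1}^k$, so $Q_h^k(s,a^\circ)\ge\sum_i\alpha_t^i\bigl(R_{h+1}^{k_i}(s_{h+1}^{k_i})+b_i\bigr)$. The terms $R_{h+1}^{k_i}(s_{h+1}^{k_i})-\Ep_{s'\sim\distribution_h(s,a^\circ)}[R_{h+1}^{k_i}(s')]$ form a martingale‑difference sequence: with $\mathcal F_i$ the history through step $h$ of episode $k_i$, the function $R_{h+1}^{k_i}$ is $\mathcal F_i$‑measurable (it is determined by $\mathcal U_{h+1}^{k_i}$, fixed before episode $k_i$), while $s_{h+1}^{k_i}\sim\distribution_h(s,a^\circ)$ given $\mathcal F_i$, and each term is bounded by $H$. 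Azuma--Hoeffding, the standard weight bounds of \cite{jin2018q} ($\sum_i(\alpha_t^i)^2\le2H/t$, $\sum_i\alpha_t^i/\sqrt i\ge1/\sqrt t$), and a union bound over $(s,a,h)\in\states\times\actions\times[H]$ and $t\in[K]$ then bound $\bigl|\sum_i\alpha_t^i(R_{h+1}^{k_i}(s_{h+1}^{k_i})-\Ep[\cdot])\bigr|$ by $O\bigl(\sqrt{H^3\log(SAKH/\iota)/t}\bigr)\le\sum_i\alpha_t^i b_i$ once the constant in $b_t$ is large enough. Hence $Q_h^k(s,a^\circ)\ge\sum_i\alpha_t^i\Ep_{s'}[R_{h+1}^{k_i}(s')]\ge\Ep_{s'\sim\distribution_h(s,a^\circ)}[R_{h+1}^k(s')]=R_h^k(s)$, and since $R_h^k(s)\le H$, $V_h^k(s)\ge R_h^k(s)$.
\end{itemize}

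It remains to bound the reachability at the initial state. Since the MDP starts from the fixed state $x_\ini$ and $R_1^{\cdot}(x_\ini)$ is non‑increasing in the episode index, the UCB estimate gives $R_1^{K+1}(x_\ini)\le\frac1K\sum_{k=1}^K R_1^k(x_\ini)\le\frac1K\sum_{k=1}^K V_1^k(x_\ini)$. On the zeroed‑reward MDP every policy has value $0$, so $V_1^k(x_\ini)=V_1^k(x_\ini)-V_1^{\pi_k}(x_\ini)$ for the greedy policy $\pi_k$ of episode $k$; the regret analysis of \cite{jin2018q} (their Theorem~1, whose proof in fact bounds $\sum_k\bigl(V_1^k(x_\ini)-V_1^{\pi_k}(x_\ini)\bigr)$) yields $\sum_{k=1}^K V_1^k(x_\ini)\le O\bigl(\sqrt{H^5 SAK\log(SAKH/\iota)}\bigr)$ with probability $\ge1-\iota/2$. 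Dividing by $K$ and substituting $K\gg SAH^5\log(SAH/\iota)\jinRLacc^{-2}$ makes the right‑hand side at most $\jinRLacc$; a union bound over the two failure events, together with $R_1^{K+1}(x_\ini)=\max_\pi R_1^\pi\bigl(x_\ini;\{\mathcal I_h\}_{h=1}^H\bigr)$, completes the proof.

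The main obstacle is the UCB‑on‑reachability estimate, and within it the explored‑state case: one must set up the filtration so that the history‑dependent random function $R_{h+1}^{k_i}$ is measurable at the moment $s_{h+1}^{k_i}$ is drawn (so that it genuinely contributes a martingale difference), and one must route the inequality through $R_{h+1}^{k_i}$ (which the $V$‑estimate hypothesis naturally produces) down to $R_{h+1}^k$ using the monotonicity of the under‑explored sets in $k$. This monotonicity is exactly why under‑explored \emph{sets}, rather than, e.g., state--action visitation frequencies, are the right object to track; everything else is bookkeeping on top of the machinery of \cite{jin2018q}.
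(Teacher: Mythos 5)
Your proposal is correct and follows essentially the same route as the paper: a UCB-on-reachability bound (the paper's \Cref{lem:ucb-reachability}, proved via the unrolled $Q$-update, the optimistic bonus absorbing an Azuma-type martingale deviation, and monotonicity of the under-explored sets), combined with the zero-reward regret bound of \cite{jin2018q} and shrinkage of $\{\mathcal U_h^k\}$ to transfer the bound to episode $K$. The remaining differences are cosmetic — you induct backward on $h$ with $R_h^k=\max_\pi R_h^\pi$ in place of the paper's pseudo-value functions $\bar Q,\bar V$ and induction on $k$, and you average $V_1^k(x_\ini)$ over episodes rather than selecting a single good episode $\tilde k$ (also, the regret bound has an additional additive $O(H^2SA)$ term, which is immaterial given the assumed size of $K$).
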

The lemma instantiates the following wishful thinking.
If $s$ is still an under-explored state after the learning phase ends, the $Q$-learning algorithm will not be able to rule out the possibility that one could collect large rewards from the state. However, the final policy produced by the $Q$-learning algorithm is provably optimal. Therefore, the only possibility is that $s$ must have low reachability under any policy that will offset its potential rewards.

The proof of \Cref{lem:efficient-explore} makes critical use of \QAgent's `optimistic estimates' $V_h(s)$, roughly a quantity estimating the optimal $V^*$-values maintained by the learning algorithm that guides its interactions with the MDP.~\cite{jin2018q} show that conditioned on certain high probability events, these estimates always bound from above the optimal $V^*_h(s)$ (hence the term `optimistic'), and converge to $V^*_h(s)$ as the algorithm consumes the data from more episodes.
The key argument is to show that: 
as these estimates $V_h(s)$ and the set of under-explored states $\{ \mathcal I\}_{h=1}^H$ evolve throughout the execution of the learning algorithm, the estimates
$V_h(s)$ will stay larger than 
the reachability function $R_h^\pi(s; \{ \mathcal I_h^k\}_{h=1}^H)$ with high probability as well.
The formal statement is given below.
\begin{lemma}[Upper Confidence Bound on Reachability]
\label{lem:ucb-reachability}
Suppose we run the $Q$-learning agent $\QAgent$ of \cite{jin2018q} with failure probability $\iota$ for $K$ episodes while setting all the rewards received to $0$.
For $k \in [K], x \in \mathcal S, a \in \mathcal A, h \in [H]$, 
denote by $Q_h^{k}(x,a)$ the estimate  maintained by the $Q$-learning agent from \cite{jin2018q} at the beginning of episode $k$, and
$\{\mathcal U_h^k\}_{h=1}^H$ the under-explored state combination of $\QAgent$
by the end of episode $k$.
With probability at least $1 - \iota$, it holds that
\begin{align}
\label{eq:ucb-reachability}
\max_a Q_h^k(s,a)
\geq R_h^\pi(s; \{ \mathcal U_h^k\}_{h=1}^H)   
\end{align}
for all $h \in [H], s \in \mathcal S, k \in [K]$, and any policy $\pi$ at the same time.
\end{lemma}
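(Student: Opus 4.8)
The plan is to mimic the optimism argument of \cite{jin2018q} showing that $Q_h^k(s,a)$ upper bounds $Q_h^*(s,a)$, but applied to the ``reachability reward'' rather than the true (zero) rewards. First I would fix a ``good event'' $\mathcal G$ on which all the martingale concentration bounds from \cite{jin2018q} hold; by their analysis this event has probability at least $1-\iota$. The key structural facts I will exploit are: (i) $Q_h^k(s,a)$ is, by the update rule, an affine (convex) combination $Q_h^k(s,a) = \alpha_t^0 H + \sum_{i=1}^t \alpha_t^i\bigl(V_{h+1}^{k_i}(s_{h+1}^{k_i}) + b_i\bigr)$ of the initialization $H$ and the values $V_{h+1}$ observed at the $i$-th visit to $(s,a,h)$ in episode $k_i<k$, where $\alpha_t^i$ are the standard learning-rate weights; (ii) the under-explored sets only shrink, i.e.\ $\mathcal U_h^{k}\subseteq \mathcal U_h^{k'}$ for $k'\le k$, hence $R_h^\pi(s;\{\mathcal U_h^k\})$ is monotonically non-increasing in $k$ for every fixed $s,h,\pi$; and (iii) the initial optimistic bonus / initialization $H$ is large enough that $H \ge R_h^\pi(s;\cdot)$ always (since the reachability function is bounded by $H$).

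I would then prove \eqref{eq:ucb-reachability} by a double induction: outer induction downward on $h$ from $H+1$ to $1$, and for each $h$ an inner induction on the episode index $k$. The base case $h=H+1$ is trivial since $R_{H+1}^\pi \equiv 0 \le \max_a Q_{H+1}^k$. For the inductive step, fix $(s,h,k)$ and a policy $\pi$, and let $a=\pi_h(s)$. If $(s,a)$ has been visited fewer than $H$ times at step $h$ before episode $k$, then $s\in\mathcal U_h^k$ and moreover $a$ is (one of) the under-visited actions, so $Q_h^k(s,a)\ge H \ge 1 + \text{(anything)} \ge R_h^\pi(s;\{\mathcal U_h^k\})$ once we observe that the reachability function is at most $H$; more carefully, since $s\in\mathcal U_h^k$ we have $R_h^\pi(s;\cdot)= 1 + \Ep_{s'}[R_{h+1}^\pi(s';\cdot)] \le 1 + (H-1) = H \le Q_h^k(s,a)$, using $H\ge$ the current $Q$-estimate lower bound. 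Otherwise $(s,a)$ has been visited $t\ge H$ times; write the affine decomposition from (i), and for each prior visit $k_i$ bound $V_{h+1}^{k_i}(s_{h+1}^{k_i}) \ge \max_{a'}Q_{h+1}^{k_i}(s_{h+1}^{k_i},a') - (\text{clip loss})\ge R_{h+1}^\pi(s_{h+1}^{k_i};\{\mathcal U_{h+1}^{k_i}\})$ by the outer induction hypothesis at $h+1$ (noting the $\min(H,\cdot)$ clip is harmless because $R_{h+1}^\pi\le H$), and then $\ge R_{h+1}^\pi(s_{h+1}^{k_i};\{\mathcal U_{h+1}^{k}\})$ by the monotonicity (ii) since $k_i<k$. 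Summing the affine combination, the $\sum_i \alpha_t^i R_{h+1}^\pi(s_{h+1}^{k_i};\{\mathcal U_{h+1}^k\})$ term is compared against its expectation $\Ep_{s'\sim \distribution_h(s,a)}[R_{h+1}^\pi(s';\{\mathcal U_{h+1}^k\})]$ via the Azuma/martingale bound valid on $\mathcal G$ — here the crucial point is that the deviation is controlled by precisely $\sum_i \alpha_t^i b_i = \Theta(\sqrt{H^3\log(SAKH)/t})$ up to the standard $\sum_i (\alpha_t^i)^2 \le 2H/t$ bookkeeping — so the bonus terms $\sum_i \alpha_t^i b_i$ absorb the concentration slack and we are left with $Q_h^k(s,a) \ge \ind\{s\in\mathcal U_h^k\} + \Ep_{s'\sim\distribution_h(s,a)}[R_{h+1}^\pi(s';\{\mathcal U_{h+1}^k\})] = R_h^\pi(s;\{\mathcal U_h^k\})$ (the indicator is $0$ here since $(s,a)$ is fully explored, but even if $s\in\mathcal U_h^k$ via a different action the $H$ initialization term covers the extra $+1$). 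Taking $\max_a$ and a union bound over $h,s,k,$ and the (finitely many deterministic) policies $\pi$ — or better, noting the bound is witnessed pointwise so only the good event $\mathcal G$ matters — completes the induction.

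The main obstacle I expect is the martingale/bonus-matching step: one must verify that the exact optimistic bonus $b_t = c\sqrt{H^3\log(SAKH)/t}$ used in \QAgent, together with the weighted sum $\sum_i\alpha_t^i b_i$ and the learning-rate identities $\sum_i\alpha_t^i=1$, $\alpha_t^i \le \frac{2H}{t}$, $\sum_i(\alpha_t^i)^2\le\frac{2H}{t}$, dominates the fluctuation of the value-function martingale $\sum_i \alpha_t^i\bigl(R_{h+1}^\pi(s_{h+1}^{k_i};\{\mathcal U_{h+1}^{k}\}) - \Ep_{s'}[R_{h+1}^\pi(s';\{\mathcal U_{h+1}^k\})]\bigr)$ uniformly over all $t\le K$, all $(s,a,h)$, and all $\pi$. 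This is conceptually identical to Lemma 4.3 in \cite{jin2018q} (their Hoeffding-type UCB-H argument), with $R_{h+1}^\pi(\cdot;\{\mathcal U^k\})$ in place of $V_{h+1}^*$; the only new subtlety is that the ``target'' $\{\mathcal U_{h+1}^k\}$ changes with $k$, which is exactly why monotonicity (ii) is needed — it lets us replace the target at each prior visit $k_i$ by the common, harder-to-reach target at the current episode $k$ without increasing the reachability function. Since this is technical but follows \cite{jin2018q} closely, I would state the learning-rate lemma and the martingale bound as black boxes and carry through the reachability-specific bookkeeping in detail.
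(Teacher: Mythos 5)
Your high-level plan---decompose $Q_h^k$ via the learning-rate weights, exploit monotonicity of the under-explored sets, and let the optimistic bonus absorb an Azuma-type slack---is the same skeleton as the paper's argument. The gap is in the execution of the crucial concentration step. After invoking the induction hypothesis and monotonicity on the sampled side, you need
$\sum_i \alpha_t^i R_{h+1}^\pi(s_{h+1}^{k_i};\{\mathcal U_{h+1}^{k}\}) \ge \Ep_{s'\sim \distribution_h(s,a)}[R_{h+1}^\pi(s';\{\mathcal U_{h+1}^{k}\})] - \mathrm{slack}$,
and you propose to get this from Azuma. But this is not a martingale with respect to the natural filtration: the target sets $\{\mathcal U_{h+1}^{k}\}$ are determined by episodes occurring \emph{after} $k_i$, so the function evaluated at the transition $s_{h+1}^{k_i}$ is not measurable at time $k_i$, and conditionally on $\{\mathcal U^{k}\}$ that transition is no longer distributed as $\distribution_h(s,a)$. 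If you instead apply concentration with the adapted targets $\{\mathcal U^{k_i}\}$ and push the monotonicity inside the expectation afterwards (the correct order of operations), the martingale now depends on the policy $\pi$, and the bound must hold uniformly over all $A^{SH}$ deterministic policies; the union bound you gesture at would force a slack of order $\sqrt{H^3\cdot SH\log A/t}$, far exceeding the bonus $c\sqrt{H^3\log(SAKH)/t}$ actually used by \QAgent, so the bonus cannot absorb it and the lemma as stated (for this algorithm, with this bonus) is not established.

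The missing idea, which is how the paper resolves both problems simultaneously, is a \emph{policy-independent} comparator: pseudo-values $\bar Q_h^k,\bar V_h^k$ with $\bar Q_h^k(x,a)=H$ when $(x,a)$ is under-visited and $\bar Q_h^k(x,a)=\Ep_{x'\sim\distribution_h(x,a)}[\bar V_{h+1}^k(x')]$ otherwise (equivalently, an envelope dominating $\max_\pi R_h^\pi(\cdot;\{\mathcal U^k\})$). A purely deterministic induction on $h$ gives $\bar V_h^k(x)\ge R_h^\pi(x;\{\mathcal U_h^k\}_{h})$ for every policy at once, so all randomness is confined to a single induction on $k$ proving $Q_h^k\ge\bar Q_h^k$: there the martingale involves only the $\mathcal F_i$-measurable functions $\bar V_{h+1}^{k_i}$, monotonicity of $\bar V$ in $k$ is applied inside the expectation, and no union bound over policies is needed, so the algorithm's bonus suffices. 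Your remark that ``the bound is witnessed pointwise'' points in this direction but does not substitute for the construction. A smaller, fixable point: in the under-visited case you assert $Q_h^k(s,a)\ge H$; since the weight on the initialization vanishes after the first visit ($\alpha_1=1$), this needs the accumulated bonus $\sum_i\alpha_t^i b_i \ge c\sqrt{H^3\log(SAKH)/t}\ge H$ for $t<H$ (via $\sum_i\alpha_t^i/\sqrt i \ge 1/\sqrt t$), not the initialization itself.
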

Since its proof is quite technical but conceptually similar to the argument from \cite{jin2018q} showing that the estimate must bound from above the corresponding $V^*$-value, we defer it to \Cref{sec:ucb-reachability}.
Given \Cref{lem:ucb-reachability}, the rest of the proof of \Cref{lem:efficient-explore} boils down to bounding $V_1^k(x_\ini)$ (see \Cref{sec:ucb-reachability} for formal definitions and notation) from above.
Denote by $\pi_k$ the policy used by $\QAgent$ to interact with the environment in the $k$-th episode, and $V_1^{\pi_k}(x_\ini)$ the expected reward the agent can collect.
The guarantees from \cite{jin2018q}
state that the difference $| V_1^k(x_\ini)
- V_1^{\pi_k}(x_\ini)|$ 
must decrease as $k$ increases.
However, since we explicitly set the rewards of the MDP to $0$, $V_1^{\pi_k}(x_\ini)$ will simply be $0$.
This demonstrates that $V_1^k(x_\ini)$ must be sufficiently small for large $k$.
\begin{proof}[Proof of \Cref{lem:efficient-explore}]
Let $\{ \mathcal U_h^k \}_{h=1}^H$ be the under-explored state combination of the $\QAgent$ at the beginning of episode $k$.
By \Cref{lem:ucb-reachability}, it holds that 
\begin{align}
\label{eq:restate-ucb}
R_1^{\pi}
\lp(x_\ini; \{ \mathcal U_h^k \}_{h=1}^H   \rp)
\leq V_1^k(x_\ini)
\end{align}
for any policy $\pi$ and $k \in [K]$ with probability at least $1 - \iota$.

We will show that the estimate $V_1^{\tilde k}(x_\ini)$ cannot be too large for some $\tilde k \in [K]$.
Let $V_h^{\pi_k}$ be the value function of the policy followed by the agent in the $k$-th iteration.
Since we set the rewards of the MDP to be $0$, we immediately have that $V_1^{\pi_k} (x_\ini) = 0$ regardless of the choice of $\pi_k$.
Denote by $\jinRLacc_h^{(k)}:= \lp( V_h^k - V_h^{\pi_k} \rp)(x_h^k)$.
By the proof of Theorem 2 in \cite{jin2018q}, it holds that 
$$
\sum_{k=1}^K \jinRLacc_1^{(k)} \leq O \lp( H^2 SA + \sqrt{H^4 SAHK \log(SAHK/\iota) } \rp)
$$
with probability at least $1 - \iota$.
Conditioned on that, there must exist some $\tilde k \in [K]$ 
such that
$$
\jinRLacc_1^{{(\tilde k)}} \leq O \lp( \frac{H^2 S A}{K} + 
\sqrt{\frac{H^5 S A \log(SAHK/\iota)}{K} }\rp) 
\leq \jinRLacc \, ,
$$
where the second inequality is true as long as $K \gg SA H^5 \log(SAH/\iota) \jinRLacc^{-2}$.
Since $V_h^{\tilde k}(x_{\ini}) - V_h^{\pi_{\tilde k}}(x_{\ini}) = \jinRLacc_1^{(\tilde k)} $ and $V_h^{\pi_{\tilde k}} = 0$, it follows that
$
V_1^{\tilde k}(x_{\ini}) \leq \jinRLacc.
$
Combining this with \Cref{eq:restate-ucb}
then yields that
$$
R_1^{\pi}
\lp(x_\ini; \{ \mathcal U_h^{\tilde k} \}_{h=1}^H   \rp)
\leq V_{1}^{\tilde k}(x_\ini)  \leq \jinRLacc.
$$
\Cref{lem:efficient-explore}
then follows from the fact that the under-explored state combination can only shrink as the algorithm interacts with the environment for more episodes.

\end{proof}

If we set the accuracy parameter $\jinRLacc$ of $Q$-learning agent from \cite{jin2018q} to $\eps/2$, \Cref{lem:efficient-explore}
guarantees that the set of under-explored states $\mathcal I_h$ can be safely ignored as their contribution to the total value of the MDP is at most $\eps/2$.
For the rest of the states, we wish to generate enough samples so that we can feed them to the bandit algorithm to learn the optimal policy.
By the definition of the under-explored state combination $\{ \mathcal I_h\}_{h=1}^H$, it is clear that for all $x,a,h$ such that $x \not \in \mathcal I_h$ we have collected at least 
$H$ samples from the transition $\distribution_h(x,a)$.
Denote by $\mathcal D_{x,a,h}$ the dataset for the samples collected from $\distribution_h(x,a)$. A technical issue is that  
$D_{x,a,h}$ and $D_{x',a',h+1}$ are not necessarily independent of each other as there are complicated statistical dependencies between the transition at step $h$ and the number of times we have seen a certain state $x'$ at step $h+1$.
We handle this issue by running \QAgent~on a simulated MDP with $A$ additional ``phantom'' actions used specifically for sample collection. 
\begin{Ualgorithm}
\textbf{Input: } Access to the MDP and the the $Q$-learning agent $\QAgent$ from \cite{jin2018q}.\\
\textbf{Output: } State combination $\{ \mathcal I_h \}_{h=1}^H$ and datasets $\{ \mathcal D_{s,a,h} \}_{ 
s\in \mathcal S, a \in \mathcal A, h \in [H] }$.
\begin{enumerate}
    \item Initialize empty datasets $\mathcal D_{s,a,h}$ for $s \in \mathcal S, a \in \mathcal A, h \in [H]$.
    \item For episode $k=1 \cdots K$:
    \begin{enumerate}
    
    \item While the MDP has not terminated:
    \begin{enumerate}[leftmargin=0.3cm]
    \item Denote by $\bar a$ the corresponding phantom action of $a \in \mathcal A$.
    Present $\QAgent$ the actions $\mathcal A \cup \{ \bar a : a\in \mathcal A \}$.
    \item If $\QAgent$ chooses $a \in \mathcal A$:
    \begin{enumerate}[leftmargin=0.3cm]
        \item Interact with the real MDP by taking action $a$. Receive $s_{\nxt}$ and reward $r$.
        \item Inform $\QAgent$ the next state is $s_\nxt$ and the reward is $0$.
    \end{enumerate}
    \item If $\QAgent$ chooses a phantom action $\bar a$:
    \begin{enumerate}[leftmargin=0.3cm]
        \item Interact with the real MDP by taking action $a$. Receive $s_{\nxt}$ and reward $r$.
        \item Add $(s_\nxt,r)$ to the corresponding $\mathcal D_{s,a,h}$.
        \item Inform $\QAgent$ the next state is the terminal state and the reward is $0$.
        \item Terminate the current MDP episode.
    \end{enumerate}
    \end{enumerate}
    \end{enumerate}
    \item For each $(s,h) \in \mathcal S \times [H]$,
    add $s$ to $\mathcal I_h$ if there exists $a \in \mathcal A$ such that $|\mathcal D_{s,a,h}| < H$.
    \item Output $\{ \mathcal I_h \}_{h=1}^H$ and $\{ 
\mathcal D_{s,a,h} \}$.
\end{enumerate}
\caption{\QExplore}
\end{Ualgorithm}

In particular, when the learning algorithm chooses to take a phantom action, the simulator takes the corresponding real action, records the transition and reward seen to our dataset, but hides the transition and reward information from the algorithm, and instead instructs the agent that it has received a reward of $0$ and the simulation has terminated.
That way, it is guaranteed that the data collected from the phantom state-action pairs must be independent as an agent can take at most $1$ phantom action per episode.
Moreover, from the perspective of the learning agent, it is just interacting with another MDP who has (1) an extra terminal state and (2) has twice as many actions as the original MDP.
If we let the agent $\QAgent$ interact with the simulated MDP, for any tuple $s,a,h$ such that
 $s$ is not under explored at step $h$, $\QAgent$ must have visited the pair $(s,a')$, where $a'$ is the phantom action of $a$, at step $h$ for at least $H$ times. 
 This therefore gives an independent dataset $\dataset_{s,a,h}$ that has at least $H$ samples from the reward/transition distributions.
\begin{corollary}[Decouple Data Dependency with Phantom Actions]
\label{cor:nonreplicable-explore}
Given $\iota, \jinRLacc \in (0, 1)$, 
there is an algorithm \QExplore
that consumes at most
$K:=O(SAH^5 \log(SAH / \iota) \jinRLacc^{-2})$ episodes, runs in time $\poly{K}$,  
and produces a state combination
$\{\mathcal I_h\}_{h=1}^H$ such that (i) $R_1^\pi(x_{\ini}; \{\mathcal I_h \}_h) \leq \jinRLacc$ for any policy $\pi$ with probability at least $1 - \iota$, and (ii) for each $s \in \mathcal S, a \in \mathcal A, h \in [H]$ satisfying $x \not \in \mathcal I_h$, the algorithm produces a dataset $\mathcal D_{x,a,h}$ made up of at least $H$ samples of $(s,r) \sim \distribution_h(x,a) \otimes \rewD_h(x,a)$.
Moreover, conditioning on the sizes of each dataset $n_{x,a,h} := |\mathcal D_{x,a,h}|$, the resulting conditional joint distribution over the datasets is exactly the product distribution 
$\prod_{(x,a,h)} (p_h(x,a) \otimes r_h(x,a))^{n_{x,a,h}}$.
\end{corollary}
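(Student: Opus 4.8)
The plan is to recognize the execution of \QExplore as an episodic interaction of the $Q$-learning agent \QAgent with a suitable auxiliary MDP, and then port the guarantee of \Cref{lem:efficient-explore} back to $\mdp$. Let $\mdp'$ be the MDP with states $\states \cup \{\bot\}$ for a fresh absorbing terminal state $\bot$, actions $\actions \cup \{\bar a : a \in \actions\}$, horizon $H$, fixed initial state $x_\ini$, all rewards deterministically $0$, transitions equal to $\distribution_h(s,a)$ for a real action $a$ at $(s,h)$, and deterministically $\bot$ for a phantom action $\bar a$. By inspection of \QExplore, what \QAgent sees is precisely zero-reward episodic sampling access to $\mdp'$: real actions behave as in $\mdp$ but report reward $0$, and a phantom action reports reward $0$ and terminal state $\bot$ (ending the episode). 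Since $\mdp'$ has at most $S+1$ states and $2A$ actions, taking $K$ to be a sufficiently large $O(SAH^5\log(SAH/\iota)\jinRLacc^{-2})$ satisfies the hypothesis of \Cref{lem:efficient-explore} with accuracy $\jinRLacc$ and failure probability $\iota$; hence, with probability at least $1-\iota$, the under-explored state combination $\{\mathcal U_h^K\}_{h=1}^H$ of \QAgent on $\mdp'$ satisfies $R_1^\pi(x_\ini;\{\mathcal U_h^K\}) \le \jinRLacc$ for every policy $\pi$ on $\mdp'$.

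Next I would transfer this bound to the output $\{\mathcal I_h\}$ and to $\mdp$. By construction $|\mathcal D_{s,a,h}|$ is the number of episodes in which \QAgent plays the phantom action $\bar a$ at state $s$ and step $h$, i.e.\ the visitation count of $(s,\bar a)$ at step $h$; so $s \in \mathcal I_h$ (some $a$ with $|\mathcal D_{s,a,h}| < H$) forces $(s,\bar a)$ to be visited fewer than $H$, in particular at most $H$, times, so $s \in \mathcal U_h^K$. Thus $\mathcal I_h \subseteq \mathcal U_h^K \subseteq \states$ for every $h$, and since the recursive definition of the reachability function is monotone in the target combination, a backward induction on $h$ gives $R_h^\pi(x;\{\mathcal I_h\}) \le R_h^\pi(x;\{\mathcal U_h^K\})$ pointwise. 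Moreover any policy $\pi$ on $\mdp$ induces a policy on $\mdp'$ that never plays a phantom action, under which the trajectory never leaves $\states$ and agrees with the $\mdp$-trajectory, so $R_1^\pi(x_\ini;\{\mathcal I_h\})$ is the same whether computed in $\mdp$ or $\mdp'$. Combining, $R_1^\pi(x_\ini;\{\mathcal I_h\}) \le \jinRLacc$ for every policy $\pi$ on $\mdp$, which is item (i); item (ii) is then immediate, since $s \notin \mathcal I_h$ means $|\mathcal D_{s,a,h}| \ge H$ for all $a$, and every element put into $\mathcal D_{s,a,h}$ was obtained by actually executing $a$ at $(s,h)$ on the real MDP, hence is a draw from $\distribution_h(s,a)\otimes\rewD_h(s,a)$.

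The delicate part, which I expect to be the main obstacle, is the conditional product structure of $\{\mathcal D_{s,a,h}\}$ --- the entire reason for the phantom-action gadget. The key point is that a phantom action \emph{terminates the episode}, so each episode contributes at most one recorded sample, distinct recorded samples come from distinct (hence independent) episodes, and a recorded sample's value is never revealed to \QAgent. I would make this precise via the parallel-sample reformulation (\Cref{def:episodic-sampling}): episode $k$ draws a fresh i.i.d.\ parallel sample $T^{(k)}$, and within episode $k$ every choice \QAgent makes is a function of its internal state (built from earlier episodes together with the reported zero rewards and $\bot$-terminations) and of the coordinates of $T^{(k)}$ read along real actions taken strictly before any phantom action; when a phantom action $\bar a$ is then taken at $(s,h)$, the recorded sample is the still-unread coordinate $(t_{s,a,h},r_{s,a,h})$ of $T^{(k)}$, which has not influenced and will not influence \QAgent. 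Writing $\mathcal H$ for the full transcript visible to \QAgent over all $K$ episodes and $n_{s,a,h} := |\mathcal D_{s,a,h}|$ (a function of $\mathcal H$), it follows that conditioned on $\mathcal H$ the recorded samples are mutually independent, each drawn from $\distribution_h(s,a)\otimes\rewD_h(s,a)$ for its slot, so $\{\mathcal D_{s,a,h}\}$ has law $\prod_{(s,a,h)}(\distribution_h(s,a)\otimes\rewD_h(s,a))^{n_{s,a,h}}$; as this conditional law depends on $\mathcal H$ only through $\{n_{s,a,h}\}$, conditioning further down to $\{n_{s,a,h}\}$ leaves it unchanged, which is the claim. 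The episode budget is exactly the $K$ fixed above, and the $\poly{K}$ runtime is immediate since \QAgent is efficient and the wrapper only maintains $\{\mathcal D_{s,a,h}\}$ and the sets $\mathcal I_h$.
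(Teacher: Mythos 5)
Your proposal is correct and follows essentially the same route as the paper: view \QExplore as running \QAgent on a zero-reward simulated MDP with an extra terminal state and phantom actions, apply \Cref{lem:efficient-explore} to that MDP (with $S+1$ states and $2A$ actions), note $\mathcal I_h$ is contained in the under-explored combination so monotonicity of the reachability function and the correspondence between policies on $\mdp$ and phantom-free policies on the simulated MDP give item (i), and use the fact that a phantom action ends the episode and its sample is hidden from the agent to get the conditional product structure. The paper states this only informally in the paragraph preceding the corollary; your write-up, in particular the transcript-conditioning argument via \Cref{def:episodic-sampling}, is a faithful and slightly more rigorous rendering of the same argument.
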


\subsection{Replicably Finding Ignorable States and Sample Collection}

\Cref{cor:nonreplicable-explore} gives an algorithm that explores the states in a non-replicable manner.
To turn it into a replicable algorithm, we will consider the distribution of the
under-explored state combination $\{ \mathcal I_h\}_{h=1}^H$ produced by \QExplore.
Running the algorithm many times, we can expect its ``average output'' to concentrate around its mean, which we view as a `fractional' ignorable state combination. We then exploit the concentration of this output to perform a correlated rounding procedure of the fractional solution to an integer solution, ensuring replicability in the process.

\begin{definition}[Distribution of Under-Explored State Combination]
\label{def:under-explore-mean}
Let $\{ \mathcal I_h^{ \jinRLacc, \iota } \}_{h=1}^H$ be the random state combination produced by \QExplore with parameter $\jinRLacc, \iota \in (0, 1)$ specified as in \Cref{cor:nonreplicable-explore}, where the randomness is over the internal randomness of the MDP.
For $h \in [H]$,
we define $D_{\mathcal I}^{\jinRLacc, \iota}$ to be the distribution over the vector $u$ indexed by the states $\mathcal S$ and $[H]$ such that $u_{s,h} = \ind \{ s \in \mathcal I_h^{ \jinRLacc, \iota } \}$.
Moreover, we write its expectation as $\mu_{s,h}^{ (\jinRLacc, \iota) }  :=  \Ep_{ u \sim D_{\mathcal I_h}^{\jinRLacc, \iota} }[ u_{s,h} ]$.
\end{definition}
By running \QExplore for multiple times, we can effectively sample from $D_{\mathcal I_h}^{\jinRLacc, \iota}$ to
compute an estimate of $\mu_{s,h}^{ \jinRLacc, \iota }$. We round $\mu^{ \jinRLacc, \iota }$ back to an integer solution in the natural fashion, simply by including each $(s,h)$ pair independently with probability proportional to $\hat{\mu}^{ \jinRLacc, \iota }_{(s,h)}$. 
\begin{definition}[Product State Combination Distribution]
\label{def:state-comb-dist}
Let $\{ \hat \mu_{s,h} \}_{ s\in \mathcal S, h \in [H] } \subset [0, 1]$. 
Consider a state combination $\{ \hat {\mathcal I}_{h} \}_{h=1}^H$ sampled as follows:  
for each state $s \in \mathcal S$ and step $h \in [H]$, we add $s$ to $\hat {\mathcal I}_h$ with probability $\hat \mu_{s,h}$ independently.
We define $\mathcal B(\hat \mu)$ to be the distribution of such $\{ \hat {\mathcal I_{h}} \}_{h=1}^H$.
\end{definition}
Given that $\hat \mu_{s,h}$ is an accurate enough estimate of $\mu_{s,h}^{ \jinRLacc, \iota }$ (cf.\ \Cref{def:under-explore-mean}), 
though the distribution $\mathcal B(\hat \mu)$ can still be far from the original under-explored state combination distribution
$\mathcal D_I^{\jinRLacc, \iota}$,
we show that sampling according to $\mathcal B(\hat \mu)$ nonetheless yields a state combination with small reachability as well with high probability.
The proof largely follows from the fact that the reachability function is linear with respect to the indicator vector of the input state combination, and thus it is enough that $B(\hat{\mu})$ essentially has the right marginal probabilities.
\begin{lemma}
[Sampling Preserves Reachability]
\label{lem:sample-preserve-ignorable}
Let $\jinRLacc, \iota, \kappa \in (0, 1)$, and $m \in \Z_+$.
Let $\{ \hat \mu_{s,h} \}_{s \in \mathcal S, h \in [H]}$ be the empirical estimates of $\{ \mu_{s,h}^{\jinRLacc, \iota} \}_{s \in \mathcal S, h \in [H]}$ computed from $m$ samples from $D_{\mathcal I}^{\jinRLacc, \iota}$, and $\{  {\mathcal I}_h \}_{h=1}^H$ be a state combination sampled from $\mathcal B( \hat \mu )$.
Let $\pi$ be some arbitrary policy.
Then it holds that 
$$
R_1^{\pi}( x_\ini;  \{ \mathcal I_h \}_{h=1}^H) \leq \kappa^{-1} \jinRLacc
$$ 
with probability at least $1 - m \iota - \kappa$.
\end{lemma}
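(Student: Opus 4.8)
The plan is to reduce the claim to two facts available in the excerpt: that the reachability functional $R_1^\pi(x_\ini;\cdot)$ is \emph{linear} in the indicator vector of a state combination (\Cref{lem:relate-prob-function}), and that the rounded distribution $\mathcal B(\hat\mu)$ (\Cref{def:state-comb-dist}) has, by construction, exactly the same coordinate marginals as the empirical distribution over the under-explored state combinations used to form $\hat\mu$. Concretely, I would write $\{\mathcal U_h^{(i)}\}_{h=1}^H$, $i\in[m]$, for the $m$ independent samples drawn from $D_{\mathcal I}^{\jinRLacc,\iota}$ (each the under-explored state combination of one run of \QExplore, cf.\ \Cref{def:under-explore-mean}), and let $\mathcal E$ be the empirical distribution placing mass $1/m$ on each of them, so that $\hat\mu_{s,h}=\Pr_{\{\mathcal U_h\}\sim\mathcal E}[s\in\mathcal U_h]$ for every $s,h$.

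The first step is to condition on the ``good'' event $\mathcal G$ that every one of the $m$ runs succeeds. By \Cref{cor:nonreplicable-explore}(i), a single run produces a combination with $R_1^\pi(x_\ini;\cdot)\le\jinRLacc$ for \emph{all} policies $\pi$ except with probability $\iota$, so a union bound gives $\Pr[\overline{\mathcal G}]\le m\iota$; on $\mathcal G$ we have $R_1^\pi(x_\ini;\{\mathcal U_h^{(i)}\}_{h=1}^H)\le\jinRLacc$ for every $i$, in particular for the fixed $\pi$ of the lemma statement.

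The second step is to compute the mean reachability under $\mathcal B(\hat\mu)$. By \Cref{lem:relate-prob-function}, for any state combination $\{\mathcal J_h\}_{h=1}^H$,
\[
R_1^\pi(x_\ini;\{\mathcal J_h\}_{h=1}^H)=\sum_{h\in[H]}\Ep_{x_h\sim X_h^\pi}\bigl[\ind\{x_h\in\mathcal J_h\}\bigr],
\]
which depends linearly (with no constant term) on the entries $\ind\{s\in\mathcal J_h\}$. Since $\mathcal B(\hat\mu)$ includes each $s$ in $\mathcal I_h$ independently with probability $\hat\mu_{s,h}$, its marginals match those of $\mathcal E$, and exchanging the finite sums and expectations yields, on $\mathcal G$,
\begin{align*}
\Ep_{\{\mathcal I_h\}\sim\mathcal B(\hat\mu)}\bigl[R_1^\pi(x_\ini;\{\mathcal I_h\})\bigr]
&=\sum_{h}\Ep_{x_h\sim X_h^\pi}[\hat\mu_{x_h,h}]
=\Ep_{\{\mathcal U_h\}\sim\mathcal E}\bigl[R_1^\pi(x_\ini;\{\mathcal U_h\})\bigr]\\
&=\frac1m\sum_{i=1}^m R_1^\pi(x_\ini;\{\mathcal U_h^{(i)}\})\le\jinRLacc .
\end{align*}
The final step is Markov's inequality: as $R_1^\pi(x_\ini;\cdot)\ge0$, conditioned on $\mathcal G$ the draw $\{\mathcal I_h\}\sim\mathcal B(\hat\mu)$ satisfies $R_1^\pi(x_\ini;\{\mathcal I_h\})>\kappa^{-1}\jinRLacc$ with probability at most $\kappa$; combining with $\Pr[\overline{\mathcal G}]\le m\iota$ gives the claimed failure probability $m\iota+\kappa$.

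I do not anticipate a genuine obstacle. The only point requiring care is keeping the two independent sources of randomness separate — the $m$ executions of \QExplore determining $\hat\mu$ (handled by conditioning on $\mathcal G$) versus the independent Bernoulli rounding defining $\mathcal B(\hat\mu)$ (handled by Markov) — and making sure the marginal-matching identity is applied to a genuinely linear functional, which is precisely what \Cref{lem:relate-prob-function} supplies.
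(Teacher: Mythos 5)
Your proposal is correct and follows the same structure as the paper's proof: union bound over the $m$ runs of \QExplore via \Cref{cor:nonreplicable-explore}, the observation that $\mathcal B(\hat\mu)$ and the empirical distribution $\mathcal E$ share the same coordinate marginals so their expected reachabilities coincide, and then Markov's inequality, giving failure probability $m\iota+\kappa$. The only cosmetic difference is that you establish the equality of expectations directly from the linear closed form $R_1^\pi(x_\ini;\{\mathcal I_h\})=\sum_h\Pr[x_h\in\mathcal I_h\mid\pi]$ supplied by \Cref{lem:relate-prob-function}, whereas the paper derives it by induction on $t$ through the recursive definition of $R_t^\pi$; your shortcut is valid and slightly cleaner, since the linearity is all the induction is really proving.
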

\begin{proof}
Let $\mathcal E$ be the empirical distribution over $m$ state combinations computed from the \QExplore algorithm from \Cref{cor:nonreplicable-explore} with accuracy $\jinRLacc$ and failure probability $\iota$ as inputs, 
and $\hat \mu$ be the estimates 
$$
\hat \mu_{s,h} = \Ep_{ \{ \mathcal I_h\}_{h=1}^H \sim \mathcal E } \lp[  
\ind \{ s \in \mathcal I_h \} \rp].
$$
Fix $\pi$ to be an arbitrary policy.
By the union bound, the algorithm succeeds in all $m$ runs with probability at least $1 - m \iota$.
Conditioned on that, 
by \Cref{cor:nonreplicable-explore}, it follows that
\begin{align}
\label{eq:empirical-ignorable}
\Ep_{ \{ \mathcal I_h\}_{h=1}^H \sim \mathcal E }
\lp[ R_1^\pi(x_\ini; \{ \mathcal I_h\}_{h=1}^H  ) \rp]
\leq \jinRLacc.    
\end{align}
We claim that 
\begin{align}
\label{eq:R-linearity}
\Ep_{ \{ \mathcal I_h\}_{h=1}^H \sim \mathcal E }
\lp[ R_t^{\pi}(x; \{ \mathcal I_h\}_{h=1}^H ) \rp]
= 
\Ep_{ \{ \mathcal I_h\}_{h=1}^H \sim \mathcal B(\hat \mu) }
\lp[ R_t^{\pi}(x; \{ \mathcal I_h\}_{h=1}^H ) \rp]
\end{align}
for all $x \in \mathcal S$ and $t \in [H]$.

We will show \Cref{eq:R-linearity} via induction on $t$.
In the base case, we have $t = H+1$, and the equality is true by definition of $R_{H+1}^{\pi}$.
Suppose that \Cref{eq:R-linearity} holds for all $t' \geq t+1$ and $x \in \mathcal S$.
By the definition of $R_t^{\pi}$ and linearity of expectation, we have that
\begin{align*}
&\Ep_{ \{ \mathcal I_h\}_{h=1}^H\sim \mathcal E }
\lp[ R_t^{\pi}(x; \{ \mathcal I_h\}_{h=1}^H ) \rp] \\
&= \Ep_{ \{ \mathcal I_h\}_{h=1}^H\sim \mathcal E  }[  \ind \{  x \in \mathcal I_h \} ]
+ 
\Ep_{ \{ \mathcal I_h\}_{h=1}^H\sim \mathcal E  }\lp[
\Ep_{ x' \sim  \distribution_t(x, \pi_t(x) )}
\lp[ R_{t+1}^{\pi}( x'; \{ \mathcal I_h\}_{h=1}^H ) \rp]
\rp] \\
&= \Ep_{ \{ \mathcal I_h\}_{h=1}^H\sim \mathcal B(\hat \mu)  }[  \ind \{  x \in \mathcal I_h \} ]
+ 
\Ep_{ \{ \mathcal I_h\}_{h=1}^H\sim \mathcal E  }\lp[
\Ep_{ x' \sim  \distribution_t(x, \pi_t(x) )}
\lp[ R_{t+1}^{\pi}( x'; \{ \mathcal I_h\}_{h=1}^H ) \rp]
\rp] \\
&= \Ep_{ \{ \mathcal I_h\}_{h=1}^H\sim \mathcal B(\hat \mu)  }[  \ind \{  x \in \mathcal I_h \} ]
+ 
\Ep_{ \{ \mathcal I_h\}_{h=1}^H\sim \mathcal B(\hat \mu)  }\lp[
\Ep_{ x' \sim  \distribution_t(x, \pi_t(x) )}
\lp[ R_{t+1}^{\pi}( x'; \{ \mathcal I_h\}_{h=1}^H ) \rp]
\rp] \\
&= 
\Ep_{ \{ \mathcal I_h\}_{h=1}^H\sim \mathcal B(\hat \mu)  }[  R_{t}^{\pi}( x; \{ \mathcal I_h\}_{h=1}^H )
] \, ,
\end{align*}
where the first equality uses the definition of $R_t^{\pi}$ and linearity of expectation,
the second equality uses the fact that
$\Ep_{ \{ \mathcal I_h\}_{h=1}^H\sim \mathcal B(\hat \mu)  }[  \ind \{  x \in \mathcal I_h \} ]
= \hat \mu_{x, h} = 
\Ep_{ \{ \mathcal I_h\}_{h=1}^H\sim \mathcal E  }[  \ind \{  x \in \mathcal I_h \} ]$, 
the third equality uses the inductive hypothesis,
and the last equality again uses the definition of $R_t^{\pi}$. 
This shows \Cref{eq:R-linearity}.

Combining \Cref{eq:empirical-ignorable,eq:R-linearity} then gives that
\begin{align*}
\Ep_{ \{ \mathcal I_h\}_{h=1}^H\sim \mathcal B(\hat \mu) }
\lp[ R_1^{\pi}(x_\ini; \{ \mathcal I_h\}_{h=1}^H ) \rp]  \leq \jinRLacc.
\end{align*}
It then follows from Markov's inequality that
$R_1^{\pi}(x_\ini; \{ \mathcal I_h\}_{h=1}^H ) \leq \jinRLacc \kappa^{-1}$ with probability at least $1 - \kappa$
for $\{ \mathcal I_h\}_{h=1}^H\sim \mathcal B(\hat \mu)$.
This concludes the proof of \Cref{lem:sample-preserve-ignorable}.
\end{proof}

Since sampling according to $\mathcal B(\hat \mu)$ preserves the reachability function value, we can use correlated sampling to obtain a \emph{replicable} state combination with low reachability.
Consider the binary product distribution $\mathcal B(\mu^{\jinRLacc, \iota})$. If we take roughly $SH / \kappa^2$ many samples, we can compute some $\hat \mu$ such that the total variation distance between $\mathcal B(\mu^{\jinRLacc, \iota})$ and $\mathcal B( \hat \mu )$ is at most $\kappa$. 
\begin{lemma}[Total Variation Bound]
\label{lem:sampling-concentration}
Let $\jinRLacc, \iota, \kappa \in (0, 1)$.
Let $\hat \mu_{s,h}$ be estimates of $\mu_{s,h}^{\jinRLacc, \iota}$ computed from $m$ samples
from $D_{\mathcal I}^{\jinRLacc, \iota}$. 
Assume that $m \gg \log(SH/\kappa) S H \kappa^{-2}$.
Then with probability at least $1 - \kappa$ it holds that
$ \tvd {\mathcal B(\hat \mu) \, , \,
\mathcal B( \mu^{\jinRLacc, \iota} )} \leq \kappa $.
\end{lemma}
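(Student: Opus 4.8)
The plan is to bound $\dtv\bigl(\mathcal B(\hat\mu),\mathcal B(\mu^{\jinRLacc,\iota})\bigr)$ through \Cref{lem:tv-bpd}, which controls the total variation distance between two Bernoulli products by a sum of $\chi^2$-type terms $\frac{(\hat\mu_{s,h}-\mu_{s,h})^2}{\mu_{s,h}}+\frac{(\hat\mu_{s,h}-\mu_{s,h})^2}{1-\mu_{s,h}}$ over the coordinates $(s,h)\in\states\times[H]$. Write $\mu:=\mu^{\jinRLacc,\iota}$ for brevity and observe that each empirical estimate has the form $\hat\mu_{s,h}=X_{s,h}/m$ with $X_{s,h}\sim\BinomD{m}{\mu_{s,h}}$, since $\hat\mu_{s,h}$ is the average of $m$ i.i.d.\ indicators drawn from $D_{\mathcal I}^{\jinRLacc,\iota}$ (cf.\ \Cref{def:under-explore-mean}). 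Coordinates with $\mu_{s,h}\in\{0,1\}$ satisfy $\hat\mu_{s,h}=\mu_{s,h}$ almost surely and contribute nothing, so they may be discarded. The one real subtlety is that the $\chi^2$ bound is useless on coordinates where $\mu_{s,h}$ is positive but tiny: there $\frac{(\hat\mu_{s,h}-\mu_{s,h})^2}{\mu_{s,h}}$ jumps to $\approx\frac1{m^2\mu_{s,h}}$ whenever $X_{s,h}\ge 1$ (an unlikely but non-negligible event), so no uniform per-coordinate bound of the desired order $\tilde O(1/m)$ exists.

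To get around this I would fix $T:=\Theta(\log(SH/\kappa))$ and split the surviving coordinates into a \emph{regular} set $\mathcal R=\{(s,h):\mu_{s,h}(1-\mu_{s,h})\ge T/m\}$ and an \emph{extreme} set $\mathcal E=\{(s,h):0<\mu_{s,h}(1-\mu_{s,h})<T/m\}$; every extreme coordinate has $\mu_{s,h}<2T/m$ or $1-\mu_{s,h}<2T/m$. Writing $\mathcal B(\mu)$ and $\mathcal B(\hat\mu)$ as products over $\mathcal R$, over $\mathcal E$, and over the discarded coordinates (the last factor being identical in both), subadditivity of TV over product measures gives
\[
\dtv\bigl(\mathcal B(\mu),\mathcal B(\hat\mu)\bigr)\le \sum_{(s,h)\in\mathcal E}|\hat\mu_{s,h}-\mu_{s,h}|+\sqrt{\sum_{(s,h)\in\mathcal R}\Bigl(\tfrac{(\hat\mu_{s,h}-\mu_{s,h})^2}{\mu_{s,h}}+\tfrac{(\hat\mu_{s,h}-\mu_{s,h})^2}{1-\mu_{s,h}}\Bigr)},
\]
where the first sum uses the elementary coupling bound $\dtv(\text{product over }\mathcal E)\le\sum_{\mathcal E}\dtv(\BernD{\mu_{s,h}},\BernD{\hat\mu_{s,h}})=\sum_{\mathcal E}|\hat\mu_{s,h}-\mu_{s,h}|$ and the second applies \Cref{lem:tv-bpd} to the sub-product over $\mathcal R$ (where $\mu_{s,h}\in(0,1)$). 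The crucial feature of the split is that the extreme coordinates are now charged in $\ell_1$, i.e.\ linearly in $\dtv$, rather than inside the $\chi^2$ sum.

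For the extreme term, each $X_{s,h}$ (or $m-X_{s,h}$, in the $\mu_{s,h}>1-2T/m$ case) has mean below $2T$, so a multiplicative Chernoff bound gives $X_{s,h}\le O(T)$ except with probability $e^{-T}\ll\kappa/(SH)$; on that event $|\hat\mu_{s,h}-\mu_{s,h}|\le O(T/m)$, and a union bound over the at most $SH$ extreme coordinates bounds the first sum by $O(SHT/m)\le\kappa/3$ under the hypothesis $m\gg SH\log(SH/\kappa)\kappa^{-2}$. For the regular term, Bernstein's inequality together with $\mu_{s,h}(1-\mu_{s,h})\ge T/m$ gives $|\hat\mu_{s,h}-\mu_{s,h}|\le O\bigl(\sqrt{\mu_{s,h}(1-\mu_{s,h})T/m}\bigr)$ except with probability $e^{-T}\ll\kappa/(SH)$, whence $\frac{(\hat\mu_{s,h}-\mu_{s,h})^2}{\mu_{s,h}(1-\mu_{s,h})}\le O(T/m)$; a union bound then bounds the regular sum by $O(SHT/m)$, so its square root is $O(\sqrt{SHT/m})\le\kappa/3$, again by the assumption on $m$. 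A final union bound over the two failure events yields $\dtv\bigl(\mathcal B(\hat\mu),\mathcal B(\mu^{\jinRLacc,\iota})\bigr)\le\kappa$ with probability at least $1-\kappa$. The main obstacle is precisely the handling of the extreme coordinates just sketched: realizing that one must abandon the $\chi^2$ estimate of \Cref{lem:tv-bpd} there in favor of the linear coupling bound is exactly what keeps the sample requirement at $\kappa^{-2}$ rather than degrading to $\kappa^{-3}$.
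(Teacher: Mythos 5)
Your proof is correct, and its backbone is the same as the paper's: control $\dtv\bigl(\mathcal B(\hat\mu),\mathcal B(\mu^{\jinRLacc,\iota})\bigr)$ via the Bernoulli-product bound of \Cref{lem:tv-bpd}, establish per-coordinate concentration of $\hat\mu_{s,h}$ with failure probability $\kappa/(SH)$, union bound, and use $m \gg SH\log(SH/\kappa)\kappa^{-2}$ to make the resulting $O(SH\log(SH/\kappa)/m)$ quantity at most $\kappa^2$. The genuine difference is your treatment of the near-degenerate coordinates. The paper simply asserts, ``by the Chernoff bound,'' that $(\hat\mu_{s,h}-\mu_{s,h})^2 \le \log(SH/\kappa)\,\mu_{s,h}(1-\mu_{s,h})/m$ holds for every coordinate with probability $1-\kappa/(SH)$ and plugs this directly into \Cref{lem:tv-bpd}; as you observe, this purely variance-proportional form drops the additive $O(\log(SH/\kappa)/m)$ term of Bernstein and is not literally valid when $\mu_{s,h}(1-\mu_{s,h}) \lesssim \log(SH/\kappa)/m$ (e.g.\ $\mu_{s,h}\approx 1/m$), so the paper's per-coordinate event can fail with probability exceeding $\kappa/(SH)$ there. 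Your regular/extreme split — charging the extreme block linearly via the coupling bound $\sum|\hat\mu-\mu|$ and reserving the $\chi^2$-type bound for coordinates with $\mu(1-\mu)\ge T/m$ — repairs exactly this imprecision while arriving at the same conclusion under the same hypothesis on $m$, at the cost of a slightly longer argument. (One cosmetic note on the paper's side: its final display reads $\le \kappa^{-2}$ where $\kappa^{2}$ is clearly intended.)
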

\begin{proof}
By the Chernoff bound, we have that
\begin{align}
\label{eq:mu-concentration}
\lp(  \hat \mu_{s,h} - \mu_{s,h}^{\jinRLacc, \iota} \rp)^2
&\leq \frac{\log(SH / \kappa) \mu_{s,h}^{\jinRLacc, \iota} \lp(1 - \mu_{s,h}^{\jinRLacc, \iota} \rp)}{m}
\end{align}
with probability at least $1 - \kappa / SH$.
By the union bound, the above holds for all $s \in \mathcal S,h \in [H]$ simultaneously with probability at least $1 - \kappa$.
We will condition on the above event throughout the analysis.

Assume that $m \gg \log(SH /\kappa) S H \kappa^{-2}$. 
We proceed to bound from above the $\chi^2$-divergence between $\mathcal B( \hat \mu )$ and $\mathcal B( \mu^{\jinRLacc, \iota} )$. In particular, we have that
\begin{align*}
\dtv^2 \lp(\mathcal B( \hat \mu ),  \mathcal B( \mu^{\jinRLacc, \iota} )\rp)
&\leq 
\sum_{s,h: 0 < \mu_{s,h}^{\jinRLacc, \iota}} 
\frac{ \lp(  \hat \mu_{s,h} - \mu_{s,h}^{\jinRLacc, \iota} \rp)^2  }{ \mu_{s,h}^{\jinRLacc, \iota}  }
+ 
\sum_{s,h:  \mu_{s,h}^{\jinRLacc, \iota}<1} 
\frac{ \lp(  \hat \mu_{s,h} - \mu_{s,h}^{\jinRLacc, \iota} \rp)^2  }{ 1 - \mu_{s,h}^{\jinRLacc, \iota}  }
\\
&\leq O(1) \cdot \sum_{s \in \mathcal S,h \in [H]}  \frac{\log(SH / \kappa )}{m}
\leq \kappa^{-2} \, ,
\end{align*}
where the first inequality follows from \Cref{lem:tv-bpd},
the second inequality follows from
\Cref{eq:mu-concentration}, and the last inequality follows from our assumption that $m \gg \log(SH/\kappa) SH \kappa^{-2}$.
Taking the square roots of both sides concludes the proof of \Cref{lem:sampling-concentration}.
\end{proof}

Given a state combination $\{ {\hat {\mathcal I}_h}\}_{h=1}^H$ sampled from $\mathcal B( \hat \mu)$,
one last thing that needs to be addressed is that we can provide a large enough independent dataset $\mathcal D_{s,a,h}$ for every $(s,a,h)$ such that $s \not \in \hat{\mathcal I}_h$.

\begin{lemma}[Efficient Sample Collection]
\label{lem:efficient-sample-collection}
Let $\jinRLacc, \iota, \kappa \in (0, 1)$, and $\{ \hat \mu_{s,h}\}_{s \in \mathcal S, h \in [H]} $ be estimates
of $\mu_{s,h}^{\jinRLacc, \iota}$ computed from $m$ samples.
Suppose we have run the algorithm from \Cref{cor:nonreplicable-explore} for another $M$ times
with accuracy $\jinRLacc$ and failure probability $\iota$. 
Let $\mathcal D_{s,a,h}$ be the combined dataset from these $M$ runs, and $\{\hat {\mathcal I}_h\}_{h=1}^H$ be a state combination sampled from $\mathcal B(\hat \mu)$.
Assume that $M \gg \log^2(SH/\kappa) m$ and $ M \iota \ll \kappa$.
Then there exists some number $m_{s,h}$ such that 
the following hold with probability at least $1 - \kappa$
\begin{enumerate}[label=(\roman*)]
    \item Lower bounded support:
    \[
    \min_{a} |\mathcal D_{s,a,h}| \geq m_{s,h}
    \]
    \item Niceness of $\{m_{s,h}\}$:
    \begin{align}
\label{eq:collected-sample-size}
\sum_{h \in [H]}
\sqrt{
    \sum_{ s: s \not \in \hat{\mathcal I}_h } \frac{1}{m_{s,h}} }
    \leq O(1)  \kappa^{-1}  \sqrt{ \frac{H \cdot S  \log(SH/\kappa)}{M} }.
\end{align}
\end{enumerate}
\end{lemma}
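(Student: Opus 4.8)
The plan is to pick explicit lower bounds $m_{s,h}$ governed by the (fixed) exploration probabilities $p_{s,h} := 1 - \mu_{s,h}^{\jinRLacc,\iota}$ and to verify (i) and (ii) on the intersection of a few high-probability events. Write also $\hat p_{s,h} := 1 - \hat\mu_{s,h}$. For the $M$ fresh runs of \QExplore (\Cref{cor:nonreplicable-explore}) let $\set{\mathcal I_h^{(j)}}_{h=1}^H$ be the under-explored state combination of the $j$-th run; since these runs are i.i.d.\ draws from $D_{\mathcal I}^{\jinRLacc,\iota}$, the count $N_{s,h} := \abs{\set{j \in [M] : s \notin \mathcal I_h^{(j)}}}$ is distributed as $\BinomD{M}{p_{s,h}}$, and by \Cref{cor:nonreplicable-explore} each run with $s \notin \mathcal I_h^{(j)}$ adds at least $H$ samples to $\mathcal D_{s,a,h}$ for every $a$, so $\min_a \abs{\mathcal D_{s,a,h}} \geq H N_{s,h}$ deterministically. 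Call a pair $(s,h)$ \emph{frequent} if $p_{s,h} \geq C_0 \log(SH/\kappa)/m$ for a large constant $C_0$, and \emph{rare} otherwise; set $m_{s,h} := 0$ for rare pairs and $m_{s,h} := \ceil{ H M p_{s,h} / (C_1 \log(SH/\kappa)) }$ for frequent pairs, for a suitable constant $C_1$. (These are deterministic numbers depending only on the MDP.)

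Next I would isolate the good events. \textbf{(a)} By a Bernstein bound for sums of Bernoullis and a union bound over the $SH$ pairs, with probability at least $1 - \kappa/4$ over the $m$ runs defining $\hat\mu$ one has $\abs{\hat p_{s,h} - p_{s,h}} \leq \sqrt{C \log(SH/\kappa)\, p_{s,h}/m} + C\log(SH/\kappa)/m$ for all $(s,h)$; on this event $\hat p_{s,h} \leq 3 p_{s,h}$ for every frequent pair (by choice of $C_0$) and $\hat p_{s,h} = O(\log(SH/\kappa)/m)$ for every rare pair. \textbf{(b)} Because $M \gg \log^2(SH/\kappa)\, m$, every frequent pair has $M p_{s,h} \gg \log(SH/\kappa)$, so a multiplicative Chernoff bound on $N_{s,h}$ together with a union bound gives $N_{s,h} \geq \tfrac12 M p_{s,h}$ for all frequent pairs with probability at least $1 - \kappa/4$; on this event $H N_{s,h} \geq \tfrac12 H M p_{s,h} \geq m_{s,h}$ (the ceiling being absorbed using $M/m \gg \log^2(SH/\kappa)$), and for rare pairs (i) holds trivially since $m_{s,h} = 0$. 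This proves (i).

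For (ii), note $\set{\hat{\mathcal I}_h}_{h=1}^H \sim \mathcal B(\hat\mu)$ is independent of the $M$ runs, so $\Pr[ s \notin \hat{\mathcal I}_h ] = \hat p_{s,h}$. \textbf{(c)} On event (a) every rare pair has $\hat p_{s,h} = O(\log(SH/\kappa)/m)$, so a union bound over the at most $SH$ rare pairs shows that --- as long as $m$ is large enough (i.e.\ $m \gg SH\log(SH/\kappa)/\kappa$, which holds in our application, where $\hat\mu$ is obtained as in \Cref{lem:sampling-concentration}) --- with probability at least $1 - \kappa/4$ \emph{every} rare pair has $s \in \hat{\mathcal I}_h$, hence never appears in the sum in (ii). Conditioning on (a) and on this, for each $h$
\[
\Ep_{\set{\hat{\mathcal I}_h}}\!\Big[ \sum_{s \notin \hat{\mathcal I}_h} \tfrac{1}{m_{s,h}} \Big]
= \sum_{(s,h)\ \mathrm{frequent}} \frac{\hat p_{s,h}}{m_{s,h}}
\leq \sum_{s} \frac{3 C_1 \log(SH/\kappa)}{H M}
\leq \frac{3 C_1 S \log(SH/\kappa)}{H M},
\]
using $m_{s,h} \geq H M p_{s,h}/(C_1\log(SH/\kappa))$ and $\hat p_{s,h} \leq 3 p_{s,h}$. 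By concavity of $\sqrt{\cdot}$ and summing over $h$,
\[
\Ep_{\set{\hat{\mathcal I}_h}}\!\Big[ \sum_{h \in [H]} \sqrt{\textstyle\sum_{s \notin \hat{\mathcal I}_h} 1/m_{s,h}} \Big]
\leq \sum_{h \in [H]} \sqrt{ \tfrac{3 C_1 S \log(SH/\kappa)}{H M} }
= \sqrt{ \tfrac{3 C_1 H S \log(SH/\kappa)}{M} },
\]
and Markov's inequality gives $\sum_h \sqrt{ \sum_{s \notin \hat{\mathcal I}_h} 1/m_{s,h} } \leq \tfrac{4}{\kappa}\sqrt{ 3 C_1 H S \log(SH/\kappa)/M }$ except with probability $\kappa/4$, which is the bound in (ii). A union bound over the four failure events, each of probability at most $\kappa/4$, completes the proof.

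The step I expect to be the main obstacle is \textbf{(c)}: a state reachable with only tiny probability under every policy collects essentially no samples over the $M$ runs, so no positive lower bound $m_{s,h}$ can be guaranteed for it; the only way out is to argue that such a state almost surely ends up inside the sampled ignorable combination $\hat{\mathcal I}_h$, so it is excluded from the sum in (ii). Making this quantitative requires the empirical estimate $\hat\mu$ to already be accurate enough that $\hat p_{s,h}$ is proportionally tiny on rare pairs, which is where the magnitude of $m$ matters; the hypothesis $M \gg \log^2(SH/\kappa)\,m$ is used only for the \emph{frequent} pairs, to let the Chernoff bound on $N_{s,h}$ take hold. The remaining ingredients --- the two tail bounds and the Jensen/Markov estimate --- are routine.
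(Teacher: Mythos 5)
Your proposal tracks the paper's proof almost step for step: the same deterministic choice of $m_{s,h}$ proportional to $HM(1-\mu_{s,h}^{\jinRLacc,\iota})$ (up to a logarithmic factor), the same split of pairs into ``rare'' and ``frequent'' by the size of $1-\mu_{s,h}^{\jinRLacc,\iota}$ relative to $1/m$, a Chernoff bound on the per-pair visit counts across the $M$ collection runs for property (i), and the same expectation--Jensen--Markov chain for property (ii), using that $\Pr[s\notin\hat{\mathcal I}_h]=1-\hat\mu_{s,h}$ and that $\hat{\mathcal I}_h$ is independent of the collection runs. The one place you diverge is exactly the step you flag as the obstacle, the rare pairs, and there your substitute leaves a genuine gap relative to the lemma as stated. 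The paper takes a smaller rarity threshold, $1-\mu_{s,h}^{\jinRLacc,\iota}<1/(10m\log(SH/\kappa))$, and then uses an integrality observation: $m(1-\hat\mu_{s,h})$ counts the estimation runs in which $s$ was fully explored, so it is an integer; on the concentration event it is strictly below $1$, hence equals $0$, so $\hat\mu_{s,h}=1$ and the rare pair lies in $\hat{\mathcal I}_h$ with certainty. No union bound over rare pairs and no lower bound on $m$ is needed. Your route instead bounds the exclusion probability of each rare pair by $O(\log(SH/\kappa)/m)$ and union bounds over up to $SH$ of them, which forces the additional hypothesis $m\gg SH\log(SH/\kappa)/\kappa$. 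That hypothesis is not part of \Cref{lem:efficient-sample-collection}; you are right that it holds where the lemma is invoked (in \RepExplore, $m=\Theta(SH\log(SH/\kappa)\kappa^{-2})$), so nothing downstream breaks, but as a proof of the stated lemma it proves a weaker statement.

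The tradeoff is not entirely one-sided: your Bernstein-form concentration with the additive $\log(SH/\kappa)/m$ term is the technically correct tail bound for tiny biases, and it is precisely this additive term that prevents the integrality argument from working at your (larger) rarity threshold, whereas the paper's purely multiplicative bound on $\hat\mu_{s,h}-\mu_{s,h}^{\jinRLacc,\iota}$ is what lets it conclude $m(1-\hat\mu_{s,h})<1$ and is delicate in the regime $1-\mu_{s,h}^{\jinRLacc,\iota}=\Theta(1/(m\log(SH/\kappa)))$. If you want to close your gap without adding hypotheses, adopt the paper's smaller threshold, keep $m_{s,h}=\Theta(HM(1-\mu_{s,h}^{\jinRLacc,\iota}))$ for the remaining pairs, accept the weaker estimate $1-\hat\mu_{s,h}\le O(\log(SH/\kappa))\,(1-\mu_{s,h}^{\jinRLacc,\iota})$ on frequent pairs (the extra logarithm is absorbed by the final bound in \Cref{eq:collected-sample-size}), and dispose of the rare pairs by the integrality argument rather than by a union bound.
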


Before we move on, it is worth recalling why we need the somewhat cumbersome guarantees above rather than simply collecting $S \cdot \poly{H}$ many samples for each dataset $\mathcal D_{s,a,h}$ as we would do in the parallel sampling model. This comes from the fact that some states may be inherently more difficult to collect samples from. For instance, there may be states such that $1 - \mu_{s,h}^{\jinRLacc, \iota} = \Theta(1/S)$. Since we run $\QExplore$ $S$ times, these states may actually appear outside the final ignorable state combination, but to collect $S \poly{H}$ many samples from them would require running \QExplore for at least $\Theta(S^2) \cdot \poly{H}$ many times (resulting in $\Omega(S^3)$ total episodes).

To overcome the hurdle, we set the sample lower bound $m_{s,h}$ in \Cref{lem:efficient-sample-collection} to be roughly $M H \lp(1- \mu_{s,h}^{\jinRLacc, \iota}\rp)$. 
A simple concentration argument shows that we can gather at least $m_{s,h} = \tilde \Omega \lp( M H \lp(1-\mu_{s,h}^{\jinRLacc, \iota}\rp) \rp)$ many samples from the pair $(s,h)$ using the \QExplore algorithm. 
On the other hand, the likelihood of having $s \not \in \mathcal I_h$ is always roughly proportional to $\lp(1 - \mu_{s,h}^{\jinRLacc, \iota}\rp)$. 
This ensures that the expected value of $\ind\{ s \not \in \mathcal I_h \} / m_{s,h}$ is roughly the same for all pairs $(s,h)$, making it handy to bound the left hand side of \Cref{eq:collected-sample-size} from above.
It is not hard to see that setting $M = \tilde \Theta_{ \rho, \kappa, \eps, H}\lp(  S \rp)$
will meet the requirement of the replicable best arm algorithm (cf. \Cref{lem:var-multi-bandit}), leading to optimal dependency on $S$.
\begin{remark}
Note the numbers $m_{s, h}$, while unknown to the algorithm, do not depend on the samples collected. Thus, we may assume $\{ m_{s,h} \}$ are the same set of numbers across two different runs of the algorithm. This is crucial in ensuring replicability of the downstream application of the best arm algorithm.
\end{remark}
\begin{proof}
By the Chernoff bound, we have that 
\begin{align}
\label{eq:mu-concentration-2}
&\lp(  \hat \mu_{s,h} - \mu_{s,h}^{\jinRLacc, \iota} \rp)^2
\leq  \log(2SH/\kappa) \mu_{s,h}^{\jinRLacc, \iota} \lp(1 - \mu_{s,h}^{\jinRLacc, \iota} \rp) / m \, , \\
\label{eq:dataset-concentration-2}
&\lp(  \min_{a} | \mathcal D_{s,a,h} | / H - M  \lp( 1- \mu_{s,h}^{\jinRLacc, \iota} \rp) \rp)^2
\leq  \log(2SH/\kappa)  \lp(1 - \mu_{s,h}^{\jinRLacc, \iota} \rp) M  \, ,
\end{align}
with probability at least $1 - \kappa  / (SH)$.
By the union bound, the above holds for all $s,h$ simultaneously with probability at least $1 - \kappa$.
We will condition on the above event throughout the analysis.

The numbers $m_{s,h}$ will be set differently depending on the relative size of $\mu_{s,h}^{\jinRLacc, \iota}$ and $1/(10m\log(SH/\kappa))$. In particular define:
\[
m_{s,h} \coloneqq \begin{cases}
    M H 
\lp(1 - \mu_{s,h}^{\jinRLacc, \iota}\rp) / 2 & \text{if $\lp(1-\mu_{s,h}^{\jinRLacc, \iota}\rp) > 1/( 10 m \log(SH/\kappa) )$}\\
0 & \text{otherwise}
\end{cases}
\]
We first show property (i).
In the latter case above, we have that $m_{s,h } = 0$ so $\min_a |\mathcal D_{s,a,h}| \geq m_{s,h}$ is satisfied trivially.
In the former case, note that 
$M (1 - \mu_{s,h}^{\jinRLacc, \iota})
\geq \Omega(M / \lp( m \log(SH/\kappa) \rp)).
$
Since we assume that $M \gg \log^2(SH / \rho) m$,
it follows that 
$ M (1 - \mu_{s,h}^{\jinRLacc, \iota}) \gg  \log(SH/\kappa)$, which implies
$$
M (1 - \mu_{s,h}^{\jinRLacc, \iota})
\gg \sqrt{\log(SH/\kappa) \; M (1 - \mu_{s,h}^{\jinRLacc, \iota})   }.
$$
Combining the above with \Cref{eq:dataset-concentration-2} then gives that
\begin{align*}
    \min_a |\mathcal D_{s,a,h}|
    &\geq \frac{1}{2} M H \lp( 1 - \mu_{s,h}^{\jinRLacc, \iota} \rp):= m_{s,h},
\end{align*}
concluding the proof of property (i).

It is left to show property (ii). Since the expression will not even be finitely bounded if some $s \not \in \hat {\mathcal I}_h$ has $m_{s,h} = 0$, our first step is to show that, assuming \Cref{eq:mu-concentration-2}, any such $s$ is in $\hat {\mathcal I}_h$. By our construction, $m_{s,h} = 0$ if and only if 
$\lp(1-\mu_{s,h}^{\jinRLacc, \iota}\rp) < 1/( 10 m \log(SH/\kappa) )$. Combining the case assumption with \Cref{eq:mu-concentration-2} yields that
$$
(1 - \hat \mu_{s,h}) \leq \lp(1 - \mu_{s,h}^{\jinRLacc,\iota}\rp)
+ \frac{1}{2m}
< \frac{1}{m} \, ,
$$
which implies that $m (1 - \hat \mu_{s,h}) < 1$.
Note that  $m (1 - \hat \mu_{s,h})$ counts the number of times that the state $s$ is included in the state combination $\{ \mathcal I_h \}_{h=1}^H$ produced by the first $m$ runs of \QExplore. As a result this quantity is integral, and in particular it follows that $\hat \mu_{s,h} = 1$. Finally, by construction such an $s$ is included in $\hat{\mathcal{I}}_h$ with probability $\hat \mu_{s,h}=1$ as desired.

On the other hand, if $\lp(1-\mu_{s,h}^{\jinRLacc, \iota}\rp) > 1/( 10 m \log(SH/\kappa) )$, \Cref{eq:mu-concentration-2} then implies that 
\begin{align*}
    \lp(1 - \hat \mu_{s,h}\rp)
    &\leq \lp( 1 - \mu_{s,h}^{\jinRLacc, \iota} \rp)
    +  \sqrt{ \lp(  1 - \mu_{s,h}^{\jinRLacc, \iota} \rp)  \log(2SH/\kappa) /m  } \\
    &\leq \lp( 1 - \mu_{s,h}^{\jinRLacc, \iota} \rp)
    +  O(1) \sqrt{ \lp(  1 - \mu_{s,h}^{\jinRLacc, \iota} \rp)^2  \log^2(SH/\kappa)   } \\
    &\leq 
    O \lp(  \log(SH/\kappa) \rp)
    \lp( 1 - \mu_{s,h}^{\jinRLacc, \iota} \rp).
\end{align*}
Recall that we set $m_{s,h} = M H (1 - \mu_{s,h}^{\jinRLacc, \iota})/2$.
It follows that
\begin{align}
\label{eq:sample-size-condition}    
\Ep \lp[ \frac{\ind\{ s \not \in \hat {\mathcal I_h} \}}{m_{s,h}}
\rp]
\leq \frac{O(1) \cdot \log(SH/\kappa)}{M H}.
\end{align}
Fixing some $h$ and summing over all $s$ such that $m_{s,h} > 0$ gives that
\begin{align*}
\Ep \lp[
\sum_{s: m_{s,h} > 0  }    
 \frac{\ind \{ s \not \in \hat {\mathcal I}_h \}}{m_{s,h}}
\rp] 
\leq  \frac{O(1) \cdot  S \log(SH / \kappa)}{M H}. 
\end{align*}
By Jensen's inequality, we have that 
$$
\Ep \lp[
\sqrt{\sum_{s: m_{s,h} > 0  }    
\frac{ \ind \{ s \not \in \hat {\mathcal I}_h \}}{ m_{s,h}}} 
\rp] 
\leq O(1) \sqrt{\frac{S \log(SH / \kappa)}{M H}}.
$$
Summing over $H$ therefore gives that
$$
\Ep \lp[
\sum_{h \in [H]}
\sqrt{\sum_{s: m_{s,h} > 0  }    
\frac{\ind \{ s \not \in \hat {\mathcal I}_h \}}
{m_{s,h}}} 
\rp] 
\leq O(1) H \sqrt{\frac{S \log(SH / \kappa)}{M H}}.
$$
Hence, by Markov's inequality, with probability at least $1 - \kappa$ we have that 
\begin{align*}
\sum_{h \in [H]}
\sqrt{\sum_{s: m_{s,h} > 0  }    
 \frac{\ind \{ s \not \in \hat {\mathcal I}_h \}}{m_{s,h}}} 
\leq O(1) \kappa^{-1}  \sqrt{\frac{H \cdot S \log(SH / \kappa)}{ M}}.
\end{align*}
Recall that we have shown that if $m_{s,h} = 0$, then $s \in \mathcal {\hat I}_h$ conditioned on \Cref{eq:mu-concentration-2}.
Combining this observation with above 
then shows property (ii), and concludes the proof of \Cref{lem:efficient-sample-collection}.
\end{proof}

Putting things together gives us an algorithm $\RepExplore$ 
that can at the same time replicably identify a state combination that 
 has low-reachability
and collect enough i.i.d.\ samples from the rest of states to apply our variable best-arm algorithm.

The algorithm takes as inputs $\kappa$, which controls the level of replicability, $\jinRLacc$, which controls the level of reachability of the state combination produced, and $\beta$, which controls the number of samples one would like to collect. 
The algorithm is made up of the following steps.
\begin{Ualgorithm}
\textbf{Input: } $\kappa, \jinRLacc, \beta \in (0,1)$ which controls the replicability, the reachability, and the sample complexity aspects of the algorithm.\\
\textbf{Output: } A  state combination $\{ \mathcal I_h \}_{h=1}^H$ and datasets $\{ \mathcal D_{s,a,h} \}_{ 
s\in \mathcal S, a \in \mathcal A, h \in [H] }$.
\begin{enumerate}
\item Set the failure probability $\iota$ to be a sufficiently large polynomial in $S,A,H,\jinRLacc^{-1},\kappa^{-1},\beta^{-1}$. 
\item Run \QExplore 
with accuracy parameter $(\jinRLacc\kappa)$ and failure probability $\iota$ for $m:= \Theta( S H \log(SH/\kappa) \kappa^{-2})$ many times.
Denote by $\hat \mu_{s,h}$ the fraction of times that $(s,h)$ is \emph{not} fully explored by the \QExplore algorithm.
\item Use correlated sampling to sample a state combination $\{ \mathcal I_h \}_{h=1}^H$ from $\mathcal B(  \hat \mu )$ (see \Cref{def:state-comb-dist})
\item 
Run the \QExplore algorithm 
with accuracy parameter $(\jinRLacc \kappa)$ and failure probability $\iota$ for $M:= 
\Theta \lp(  \log^2(SH/\kappa) m  \rp)
+ 
\Theta \lp(  S H  \log(SH/\kappa) \kappa^{-2} \beta^{-2}  \rp) $ many times to generate an independent dataset $\mathcal D_{s,a,h}$ made up of \iid samples from $\distribution_h(s,a) \otimes \rewD_h(s,a)$.
\item Output the state combination $\{ \mathcal I_h \}_{h=1}^H$ and the datasets $\{ \mathcal D_{s,a,h} \}$.
\end{enumerate}
\caption{\RepExplore}
\label{alg:rep-explore}
\end{Ualgorithm}
\begin{lemma}[Single-Tier Replicable Exploration]
\label{lem:r-explore-collect}
Fix an arbitrary policy $\pi$ on $\mdp$.
Let $\jinRLacc, \kappa, \beta \in (0, 1)$.
There exists 
some positive integers $\{ m_{s,h} \}_{s \in \mathcal S, h \in [H]}$,
and an algorithm $\RepExplore$ that
consumes at most $K:= \tilde \Theta(S^2 A H^{6} \jinRLacc^{-2} \kappa^{-4} \beta^{-2})$ episodes, 
runs in time $ \poly{K} \; \exp\lp( (1+o(1)) SH \rp)$, and
produces a state combination $\{\mathcal I_h\}_{h=1}^H$, and
an independent dataset $\mathcal D_{s,a,h}$ made up of \iid samples from $\distribution_h(s,a) \otimes \rewD_h(s,a)$ for all $s \in \mathcal S, a \in \mathcal A, h \in [H]$
such that 
(i) the output set $\{ \mathcal I_h \}_{h=1}^H$ is $O(\kappa)$-replicable,
(ii) the following holds with probability at least $1 - O(\kappa)$:
\begin{enumerate}
    \item[(a)] $R^{\pi}_1(x_{\ini};\{ \mathcal I_h \}_{h=1}^H) \leq \jinRLacc$ \, ,
    \item[(b)] $\min_a |\mathcal D_{s,a,h}| \geq m_{s,h}$
    and 
    $
\sum_{h \in [H]}
\sqrt{
    \sum_{ s: s \in \mathcal I_h } \lp( 1 /  
    m_{s,h}\rp)}
    \leq \kappa \beta.$
\end{enumerate}
Moreover, the algorithm runs in time
$\poly{K} \exp\lp( O(SH) \rp)$.
\end{lemma}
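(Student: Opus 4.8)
The algorithm \RepExplore is already fixed in \Cref{alg:rep-explore}, so the plan is to verify each of its advertised guarantees by assembling the pieces built above: \Cref{cor:nonreplicable-explore} (efficient non-replicable exploration and sample collection), \Cref{lem:sampling-concentration} (the fractional solution concentrates), \Cref{lem:sample-preserve-ignorable} (rounding the fractional solution preserves low reachability), \Cref{lem:efficient-sample-collection} (non-uniform sample collection suffices), and correlated sampling (\Cref{lemma:correlated-sampling}). I would first dispose of the bookkeeping. By \Cref{cor:nonreplicable-explore}, a single call to \QExplore with accuracy parameter $\jinRLacc\kappa$ and failure probability $\iota$ consumes $\tilde O(S A H^5 \jinRLacc^{-2}\kappa^{-2})$ episodes (the $\log(1/\iota)$ factor is polylogarithmic since Step~1 sets $\iota$ to an inverse polynomial in $S,A,H,\jinRLacc^{-1},\kappa^{-1},\beta^{-1}$); the algorithm makes $m=\tilde\Theta(S H \kappa^{-2})$ such calls in Step~2 and $M=\tilde\Theta(S H \kappa^{-2}\beta^{-2})$ in Step~4, so the total episode count is $\tilde\Theta(S^2 A H^6 \jinRLacc^{-2}\kappa^{-4}\beta^{-2})=K$ as claimed. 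For the runtime, every step besides the correlated-sampling call in Step~3 runs in time polynomial in $K$ (in particular each \QExplore run, by \Cref{cor:nonreplicable-explore}), while that call operates on the product domain $\{0,1\}^{\states\times[H]}$ and costs $\exp(O(SH))$ time by \Cref{lemma:correlated-sampling}. Hence the runtime is $\poly{K}\cdot\exp(O(SH))$.

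\textbf{Replicability of $\{\mathcal I_h\}$ (part (i)).} Over two independent executions, Step~2 produces empirical means $\hat\mu^{(1)},\hat\mu^{(2)}$, each from $m\gg \log(SH/\kappa)\,S H\kappa^{-2}$ i.i.d.\ samples of the distribution $D_{\mathcal I}^{\jinRLacc\kappa,\iota}$ of \Cref{def:under-explore-mean}. Applying \Cref{lem:sampling-concentration} with its replicability parameter set to $\kappa$ gives, for each run, $\tvd{\mathcal B(\hat\mu^{(i)}),\mathcal B(\mu^{\jinRLacc\kappa,\iota})}\le\kappa$ except with probability $\kappa$; a triangle inequality then yields $\tvd{\mathcal B(\hat\mu^{(1)}),\mathcal B(\hat\mu^{(2)})}\le 2\kappa$ except with probability $2\kappa$. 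Feeding the shared random string of Step~3 into the error guarantee of \Cref{lemma:correlated-sampling} produces identical state combinations $\{\mathcal I_h\}$ except with probability $O(\kappa)$. The datasets $\{\mathcal D_{s,a,h}\}$ themselves need not be replicable; what the downstream best-arm routine requires is only that the \emph{size lower bounds} $\{m_{s,h}\}$ be data-independent, which holds by the remark following \Cref{lem:efficient-sample-collection}.

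\textbf{Correctness (part (ii)).} Fix an arbitrary policy $\pi$. For (a): the output $\{\mathcal I_h\}$ of Step~3 is distributed as $\mathcal B(\hat\mu)$ with $\hat\mu$ the $m$-sample empirical estimate of $\mu^{\jinRLacc\kappa,\iota}$, so \Cref{lem:sample-preserve-ignorable}, invoked with accuracy parameter $\jinRLacc\kappa$ and Markov parameter $\kappa$, gives $R_1^\pi(x_\ini;\{\mathcal I_h\})\le\kappa^{-1}(\jinRLacc\kappa)=\jinRLacc$ except with probability $m\iota+\kappa=O(\kappa)$ (using $m\iota\ll\kappa$ from the choice of $\iota$). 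For (b): apply \Cref{lem:efficient-sample-collection} to the $m$ runs of Step~2 and the further $M$ runs of Step~4; its hypotheses $M\gg\log^2(SH/\kappa)\,m$ and $M\iota\ll\kappa$ hold by the choices of $M$ and $\iota$, so it supplies data-independent integers $\{m_{s,h}\}$ with $\min_a|\mathcal D_{s,a,h}|\ge m_{s,h}$ — the $\mathcal D_{s,a,h}$ being, conditionally on their sizes, independent products of $\distribution_h(s,a)\otimes\rewD_h(s,a)$ by \Cref{cor:nonreplicable-explore} and independence of the runs — and $\sum_h\sqrt{\sum_{s\notin\mathcal I_h}1/m_{s,h}}\le O(1)\,\kappa^{-1}\sqrt{HS\log(SH/\kappa)/M}$ except with probability $\kappa$; taking $M$ as in \Cref{alg:rep-explore} (a sufficiently large polynomial in $S,H,\kappa^{-1},\beta^{-1}$, up to polylogarithmic factors) makes this at most $\kappa\beta$. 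A union bound over the three failure events — non-replicability, high reachability, and bad sample sizes — keeps the total failure probability at $O(\kappa)$, finishing the proof.

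\textbf{Main obstacle.} Since the four constituent lemmas are already in hand, the proof is essentially an orchestration, and the genuinely delicate point is the sample-collection step~(b): one must set $M$ large enough that $\kappa^{-1}\sqrt{HS\log(SH/\kappa)/M}$ falls below $\kappa\beta$, yet small enough that $M$ times the per-run \QExplore\ budget $\tilde O(SAH^5\jinRLacc^{-2}\kappa^{-2})$ stays within the target $K$. What makes this feasible is precisely the non-uniform design $m_{s,h}\approx MH(1-\mu^{\jinRLacc\kappa,\iota}_{s,h})$: a state $s$ that \QExplore\ fully explores with high probability is excluded from $\mathcal I_h$ only with the small probability $1-\mu^{\jinRLacc\kappa,\iota}_{s,h}$, so in expectation each term $\ind\{s\notin\mathcal I_h\}/m_{s,h}$ is of comparable size and the sum over $(s,h)$ is controlled. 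Marshalling this (together with the observation that $\mathcal B(\hat\mu)$ and the empirical distribution over the $\QExplore$ runs share their marginals, which is what powers \Cref{lem:sample-preserve-ignorable}) is the bulk of the technical work, and all of it has already been carried out inside the cited lemmas.
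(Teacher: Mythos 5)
Your proposal is correct and follows essentially the same route as the paper's own proof: it orchestrates \Cref{cor:nonreplicable-explore}, \Cref{lem:sampling-concentration}, \Cref{lem:sample-preserve-ignorable}, \Cref{lem:efficient-sample-collection}, and the correlated-sampling error guarantee with the same parameter choices, triangle inequality, and union bound. The only soft spot — asserting that the algorithm's choice of $M$ drives $\kappa^{-1}\sqrt{HS\log(SH/\kappa)/M}$ below $\kappa\beta$ without tracking the exact power of $\kappa$ — is the same bookkeeping imprecision present in the paper itself, and is harmless in the downstream application where $\kappa^{-1}$ is polylogarithmic.
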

\begin{proof}
We first argue the replicability of the state combination produced.
By \Cref{lem:sampling-concentration}, the estimates $\{ \hat \mu_{s,h} \}_{s \in \mathcal S, h \in [H]}$ are accurate enough such that 
$$
\dtv \lp(  \mathcal B \lp(\hat \mu \rp),  \mathcal B(\mu^{\jinRLacc \kappa, \iota})  \rp) \leq \kappa,
$$
where $\{ \mu_{s,h} \}_{s \in \mathcal S, h\in [H]}$ are defined in \Cref{def:under-explore-mean}.
If we denote by $\{ \hat \mu_{s,h}^{(1)} \}_{s \in \mathcal S, h \in [H]}$, $\{ \hat \mu_{s,h}^{(2)} \}_{s \in \mathcal S, h \in [H]}$ the estimates obtained in two separate runs of the \RepExplore algorithm, it follows from the triangle inequality that
$$
\dtv \lp(  \mathcal B (\hat \mu^{(1)} ),  \mathcal B( \hat \mu^{(2)}  )  \rp) \leq 2 \kappa.
$$
Since the state combinations produced in the two runs are obtained from correlated sampling from $\mathcal B \lp(\hat \mu^{(1)} \rp)$ and $ \mathcal B\lp( \hat \mu^{(2)}  \rp)$, it follows from \Cref{lemma:correlated-sampling} that the state combination $\{ \mathcal I_h \}_{h=1}^H$ is $O(\kappa)$-replicable.
This shows property (i).

We proceed to show property (ii).
Recall that we run \QExplore for $m$ times with accuracy $\jinRLacc' = (\jinRLacc \kappa)$ and failure probability $\iota$ to obtain $\hat \mu_{s,h}$.
Since $\iota m \ll \kappa$ ,
we can apply \Cref{lem:sample-preserve-ignorable}, which gives that the state combination $\{ \mathcal I_h\}_{h=1}^H \sim \mathcal B( \hat \mu ) $ satisfies
$$
R_1^{\pi}(x_{\ini};  \{ \mathcal I_h\}_{h=1}^H) \leq \jinRLacc' \kappa^{-1} = \jinRLacc 
$$
with probability at least $1 - m \iota - \kappa \geq 1 - 2 \kappa$. 
This shows that property (ii,a) holds with probability at least $1 - \kappa$.

We then move to property (ii,b).
In particular, we set the numbers $m_{s,h}$ in the same way as \Cref{lem:efficient-sample-collection}.
Since $M \gg \log^2(SH/\kappa) m$ and $M \iota \ll \kappa$, \Cref{lem:efficient-sample-collection} is applicable and gives that 
$\min_{a} |\mathcal D_{s,a,h}| \geq m_{s,h}$
and
\begin{align*}
\sum_{h \in [H]}
\sqrt{
    \sum_{ s: s \not \in \mathcal I_h } \frac{1}{m_{s,h}} }
    \leq O(1)  \kappa^{-1}  \sqrt{ \frac{S H  \log(SH/\kappa)}{M} }.
\end{align*}
with probability at least $1 - \kappa$.
Since $M \gg S H  \log(SH/\kappa) \kappa^{-4} \beta^{-2}$, the above equation immediately implies that
$$
\sum_{h \in [H]}
\sqrt{
    \sum_{ s: s \not \in \mathcal I_h } \frac{1}{m_{s,h}} }
\leq \kappa \beta.
$$
This shows that property (ii,b) holds with probability at least $1 - \kappa$.
Hence, by the union bound, property (ii) holds with probability at least $1 - 2\kappa$.

Lastly, we note that in total we have invoked \QExplore from \Cref{cor:nonreplicable-explore} for 
$$
m+M = O \lp(  S H \log^2(SH / \kappa) \kappa^{-2} \beta^{-2} \rp)
$$ many times
with accuracy $(\jinRLacc / \kappa)$  
and failure probability $\iota$ that is the inverse of some polynomial in $S,H,\kappa,\beta,\jinRLacc$.
By \Cref{cor:nonreplicable-explore}, one such execution of \QExplore consumes $O(SAH^5  \log(1 / \iota)  \kappa^{-2} \jinRLacc^{-2} )$ many episodes.
Therefore, the total number of episodes consumed is at most
$$
O(SAH^5  \log(1 / \iota)  \kappa^{-2} \jinRLacc^{-2} ) \cdot 
O \lp(  S H \log^2(SH / \kappa) \kappa^{-2} \beta^{-2} \rp)
\leq 
\tilde O \lp( S^2 A H^6 \kappa^{-4} \jinRLacc^{-2} \beta^{-2}  \rp).
$$
It is not hard to see that the runtime of the algorithm is at most polynomial in the number of episodes consumed except for the correlated sampling procedure. 
The support size of the the distribution $\mathcal B(\hat \mu)$ is at most $ 2^{SH} $.
Hence, by \Cref{lemma:correlated-sampling}, the correlated sampling procedure runs in time $\exp\lp(  (1 + o(1)) SH \rp)$.
This concludes the analysis of the algorithm.
\end{proof}

\subsection{Proof of \texorpdfstring{\Cref{prop:r-explore-collect}}{}: Tiered Exploration}
In this subsection, we will present the algorithm~\RepLevelExplore and its analysis to
conclude the proof of 
\Cref{prop:r-explore-collect}.
Ideally, we would like the state combination $\{\mathcal I_h\}_{h=1}^H$ to satisfy that 
$R_1^\pi\lp(x_\ini; \{\mathcal I_h\}_{h=1}^H\rp) \ll \eps$ so that we can ignore them while using the best arm algorithm to learn an optimal policy.
For the rest of states, we would like to collect enough samples from them so that the bandit algorithm can achieve $\eps$ accuracy.
However, to achieve the above guarantees, we would need to set both $\jinRLacc$ and $\beta$ to $\Theta(\eps)$.
This will make the sample complexity have a suboptimal $\eps^{-4}$ dependency.

The reason once again is that it is inherently harder to collect from certain states than others. In particular, in this case if there is a state such that no policy can reach it with probability more than $10 \eps$, we should not expect to collect $\eps^{-2}$ many samples from it within $\eps^{-2}$ episodes.
On the other hand, the optimal policy also cannot reach it with probability more than $10 \eps$. Therefore, it suffices for us to estimate the $V^*$ values of this state with small constant accuracy, which requires significantly fewer samples.

To optimize the dependency on $\eps$, we will run \RepExplore with multiple different combinations of $\jinRLacc$ and $\beta$ such that fewer samples are collected from states that have lower reachability.
This yields the algorithm \RepLevelExplore from \Cref{prop:r-explore-collect}.
The algorithm takes as input a single parameter $\zeta \in (0,1)$ that controls the number of samples we will gather for each state and will be set later roughly to $\eps$ up to some poly-logarithmic factors in the final algorithm.
For convenience, we assume that $\log(1/\zeta)$ is an integer.
\begin{Ualgorithm}
\noindent \textbf{Input: }
$\zeta \in (0,1)$ that controls the number of samples collected by the algorithm. \\
\noindent \textbf{Output: }
collection of subsets of states $\{\states_h^{\ell}\}_{h \in [H], \ell \in [\log(1/\zeta)]}$ and datasets $\{\mathcal D_{s,a,h} \}_{s \in \mathcal S, a \in \mathcal A, h \in [H]}$.
\begin{enumerate}
    \item 
    Run \RepExplore for $\log(1/\zeta)-1$ many times.
    In particular, in the $\ell$-th run, we run \RepExplore with $ \jinRLacc := 2^{-\ell}  $, 
$\beta :=  2^{\ell} \zeta $, and $\kappa := 0.01 \log^{-1}(1/\zeta)$.
\item Denote by $\{ \mathcal I_h^{\ell} \}_{h=1}^H$ the state combination, and $\{  \mathcal D_{s,a,h}^\ell \}_{ s \in \mathcal S, a \in \mathcal A, h \in [H]}$ the datasets produced by \RepExplore in the $\ell$-th run.
\item 
For each $h \in [H]$, 
we construct the final state combinations $ \{ \states_h^{\ell}  \}_{\ell=1}^{ \log(1/\zeta)}$ produced by \RepLevelExplore iteratively as follows:
$
\states_h^{1} = \mathcal S \backslash \mathcal I_h^{(1)}  \, ,
\states_h^{\ell} =  \lp( \mathcal S \backslash \mathcal I_h^{\ell} \rp) \backslash  \lp( \bigcup_{ \ell' < \ell } \states_h^{ \ell' } \rp)  \text{ for }  2 \leq \ell \leq \log(1/\zeta)-1, \text{and }
\states_h^{ \log(1/\zeta) } = \mathcal I_{h}^{\log(1/\zeta)-1}
\backslash   \bigcup_{ \ell' < \log(1/\zeta) } \states_h^{ \ell' }.    
$
\item For each $s,a,h$,
merge the datasets $\{ \mathcal D_{s,a,h}^\ell \}_{\ell \in [\log(1/\zeta)]}$ into a single one $\mathcal D_{s,a,h}$.
\item Output the collection of sets of states $\{ \states_h^{\ell} \}_{h \in [H], \ell \in [\log(1/\zeta)]}$ and the datasets $\mathcal D_{s,a,h}$.
\end{enumerate}
\caption{\RepLevelExplore}
\label{alg:rep-level-explore}
\end{Ualgorithm}
\begin{proof}[Proof of \Cref{prop:r-explore-collect}]

We first argue the replicability of the algorithm.
By \Cref{lem:r-explore-collect}, for any $\ell \in [\log(1/\zeta)-1]$, the state combination $\{\mathcal I_h^{\ell}\}_{h=1}^{H}$ is $\kappa = 0.01 \log^{-1}(1/\zeta)$-replicable.
It follows from the union bound that the collection of the state combinations is $0.01$-replicable.
Since the sets $\{ \states_h^{\ell} \}_{ h \in [H], \ell \in [\log(1/\zeta)] }$ are constructed solely from these state combinations, we immediately have that $\{ \states_h^{\ell} \}_{ h \in [H], \ell \in [\log(1/\zeta)] }$ are $0.01$-replicable as well.

Fix some $\ell \in [\log(1/\zeta) - 1]$.
Again by the guarantees of \Cref{lem:r-explore-collect}, we have that there exist some $m_{s,h,\ell} \in \mathbb{N}$ such that the following hold with probability at least $ 1- 0.01 \log^{-1}(1/\zeta)$:
\begin{align}
\label{eq:level-low-reachability}
&R_{1}^{\pi}\lp( x_{\ini} ; \{  \mathcal I_h^{\ell} \}_{h=1}^H \rp) \leq 2^{-\ell} \, , \\
\label{eq:level-sample-size-lb}
&\forall s \in \mathcal S, h \in [H] \; 
\min_{a} | \mathcal D_{s,a,h}^{\ell} |
\geq m_{s,h,\ell}  \, ,\\
\label{eq:level-bandit-requirement}
& \sum_{h=1}^H \sqrt{ \sum_{s \not \in \mathcal I_h^{\ell}}  \frac{1}{m_{s,h,\ell}} } \leq 2^{\ell} \zeta.
\end{align}
By the union bound, the above hold for all $\ell \in [\log(1/\zeta)-1]$ with probability at least $0.99$.
We will condition on the above events in the rest of the proof.

Next, we show that $\RepLevelExplore$ returns a tiered reachability partition (\Cref{it:tiered-dataset:partition}) of $\mdp$.
Fix some $1 < \ell < \log(1/\zeta)$.
Recall that $\states_h^\ell$ is constructed as
\begin{align}
\label{eq:shl-construction}
\states_h^{\ell} := \lp( \mathcal S \backslash \mathcal I_h^{\ell} \rp) \backslash  \lp( \bigcup_{ \ell' < \ell } \states_h^{ \ell' } \rp). 
\end{align}
It follows that 
$
\bigcup_{\ell'=1}^{\ell} \states_h^{\ell'}
\supseteq 
\mathcal S \backslash \mathcal I_h^{\ell} \, ,
$
which further implies that
\begin{align}
\label{eq:union-inclusion}
\mathcal S \backslash \bigcup_{\ell'=1}^{\ell} \states_h^{\ell'} \subseteq \mathcal I_h^{\ell}.    
\end{align}
From \Cref{eq:shl-construction}, it is not hard to see that $\states_h^{\ell} \subseteq S \backslash \bigcup_{\ell'=1}^{\ell-1} \states_h^{\ell'} $. 
Besides, $\states_h^{ \log(1/\zeta) } \subseteq \mathcal I_h^{ \log(1/\zeta)-1 }$ by construction.
Combining this observation with \Cref{eq:union-inclusion} then yields that $\states_h^{\ell} \subseteq \mathcal I_h^{\ell-1}$
for all $\ell > 1$. We thus have that
\begin{align*}
    R_1^{\pi}( x_\ini; \{ \states_h^{\ell} \}_{h=1}^H )
    \leq 
    R_1^{\pi}( x_\ini; \{ \mathcal I_h^{\ell-1} \}_{h=1}^H )
    \leq 2^{1-\ell} \, ,
\end{align*}
where the last inequality follows from \Cref{eq:level-low-reachability}.
It then follows from \Cref{lem:relate-prob-function} that 
$
\sum_{h=1}^H \Pr[  x_h \in \states_h^{\ell} | \pi]
\leq 2^{1-\ell}.
$
This concludes the proof of Property 1.

We then move to Property 2 and show that the collection of datasets $\set{\dataset_{s, a, h}}$ are $\zeta$-nice with respect to $\set{\states_{h}^{\ell}}$ (\Cref{def:good-state-dataset-comb}).
Sample independence follows simply from the sample independence of $\dataset_{s, a, h}$ produced by $\RepExplore$ and that each execution of $\RepExplore$ is executed on fresh interactions with $\mdp$.
We conclude with the variable sample lower bound.
From \Cref{eq:level-sample-size-lb}, we already have that $\min_{a} | \mathcal D_{s,a,h}^{\ell} |
\geq m_{s,h,\ell}$ for all $s \in \mathcal S, h\in [h], \ell \in [ \log(1/\zeta) ]$.
Since $\mathcal D_{s,a,h}$ is obtained by combining $\{ 
\mathcal D_{s,a,h}^{\ell} \}_\ell$, we immediately have that
$ | \mathcal D_{s,a,h} | \geq \max_{\ell} | \mathcal D_{s,a,h}^{\ell} | \geq \max_\ell m_{s,h,\ell}$.
To show \Cref{eq:prop-bandit-requirement}, we note that \Cref{eq:shl-construction} immediately implies that 
$\states_h^{\ell} \subseteq \mathcal S \backslash \mathcal I_h^{\ell}$
for all $1 < \ell < \log(1/\zeta)$.
Besides, recall that $\states_h^{1} = \mathcal S \backslash \mathcal I_h^{\ell}$ by construction.
Consequently, for all $\ell < \log(1/\zeta)$,
we must have that
\begin{align*}
\sum_{h=1}^H 
\sqrt{
\sum_{ s \in \states_h^\ell }
\frac{1}{m_{s,h,\ell}}
}
&\leq 
\sum_{h=1}^H 
\sqrt{
\sum_{ s \in \mathcal S \backslash \mathcal I_h^{\ell} }
\frac{1}{m_{s,h,\ell}}
} 
= 
\sum_{h=1}^H 
\sqrt{
\sum_{ s \not \in \mathcal I_h^\ell }
\frac{1}{m_{s,h,\ell}}
}  \leq 2^\ell \zeta \, ,
\end{align*}
where the last inequality follows from \Cref{eq:level-bandit-requirement}.
This shows \Cref{eq:prop-bandit-requirement} and concludes the proof of Property 2.

Lastly, we analyze the sample and computational complexity of the algorithm.
Fix some $\ell$. By \Cref{lem:r-explore-collect}, the number of episodes consumed by the $\ell$-th run of \RepExplore is at most 
$$
O \lp( S^2 A H^6 \; 2^{ 2 \ell } \; 
\log^4(1/\zeta) \; \lp( 2^{ \ell } \zeta \rp)^{-2}
\rp)
=
\tilde O \lp( S^2 A H^6 \; 
\zeta^{-2}.
\rp)
$$
Since there are $\log(1/\zeta)-1$ many runs of \RepExplore in total,
it follows immediately that the total sample complexity is $\tilde \Theta \lp( S^2 A H^7 \;  \zeta^{-2}\rp)$.
The runtime of the algorithm is  polynomial in the number of episodes times $\exp \lp( O(SH) \rp)$ 
as the number of times we invoke correlated sampling on $SH$-dimensional distribution is at most polynomial.
This concludes the analysis of the $\RepLevelExplore$ algorithm.

\end{proof}

\subsection{Proof of \texorpdfstring{\Cref{lem:ucb-reachability}}{}: Bounding Reachability via Value Estimates}
\label{sec:ucb-reachability}
Our goal is to show that the estimates $V_h^k$ maintained by the learning agent
bounds from above the reachability function $R_h^\pi(\cdot;\{ \mathcal U_h^k \}_{h=1}^H)$, where $\{ \mathcal U_h^k \}_{h=1}^H$ is the under-explored state combination by the end of episode $k$ and $\pi$ is some arbitrary policy.
As mentioned, the argument is conceptually similar to the one given in \cite{jin2018q} showing that the estimates must bound from above the corresponding $V^*$-value of each state.
However, note that unlike $V^*$, which is an
invariant function throughout the training process, the reachability function $R_h^\pi(\cdot;\{ \mathcal U_h^k \}_{h=1}^H)$ actually depends on the under-explored state combination $\{ \mathcal U_h^k\}_{h=1}^H$ which is evolving over time.
To deal with the extra complication, our proof exploits the fact that each set $\mathcal U_h^k$ will only shrink as $k$ increases, and  
the reachability function $R_h^\pi$ is a monotone set function with respect to $\mathcal U_h^k$.
\begin{proof}[Proof of \Cref{lem:ucb-reachability}]
We will be using the following notations.
\begin{itemize}
    \item 
    $Q_h^{k}(x,a)$: the estimate of the $Q$-value of the tuple $(x,a,h) \in \mathcal S \times \mathcal A \times [H]$ maintained by the $Q$-learning agent $\QAgent$ at the beginning of the $k$-th episode of training. 
    \item $V_h^{k}(x)$: the estimate of the $V$-value of the state $x$ maintained by the $Q$-learning agent at step $h$ at the beginning of the $k$-th episode of training. 
    Note that $V_h^k(x)$ is simply given by 
    $$
    V_h^k(x) := \min\lp( 
    H, \max_{a' \in \mathcal A} Q_h^k(x,a')
    \rp)
    \forall x \in \mathcal S.
    $$
    \item $N_h^k(x,a):$ number of times the agent has visited the state-action pair $(x,a)$ at step $h$ at the beginning of episode $k$.
\end{itemize}
For analysis purposes, we define the following ``pseudo-value'' functions that is closely related to the reachability function: 
\begin{align*}
&\bar V_{H+1}^{(k)}(x) = 0 \, , \\
&\forall h \in [H]:
\bar V_h^{k}(x) = 
\max_a \bar Q_h^{k}(x,a)    
\, , \\
&\bar Q_h^{k}(x,a)    
= \begin{cases}
&H  \,  \text{ if } N_h^{k}(x,a) < H \text{ (i.e., if $x \in \mathcal U_h^k$)}
 \, ,
\\
&\Ep_{ x' \sim  \distribution_h(x,a)}\lp[ \bar V_{h+1}^{(k)}(x') \rp] \text{ otherwise}.
\end{cases}
\end{align*}
We first show that 
\begin{align}
\label{eq:pseudo-value-reachability}
R_h^\pi(x;\{ \mathcal U_h^k \}_{h=1}^H) \leq \bar V_{h}^{k}(x)
\end{align}
for all $k \in [K], h \in [H], x \in \mathcal S$ and policy $\pi$.
We will show via induction on $h$.
\Cref{eq:pseudo-value-reachability} is trivially true for $h = H+1$ 
as both of them are defined to be $0$ in this case.
Assume that \Cref{eq:pseudo-value-reachability} holds for all $h' \geq h+1$.
We now show that it also holds for $h$.
We break into two cases.
If $x \in \mathcal U_h^k$,
following the definition $\mathcal U_h^k$ in the lemma statement, we immediately have that
$\bar V_h^{k}(x) = H \geq R_h^\pi(x; \{ \mathcal U_h^k \}_{h=1}^H)$.
Otherwise, we have that
\begin{align*}
R_h^\pi(x; \{ \mathcal U_h^k \}_{h=1}^H)
&= \Ep_{ x' \sim \distribution_h(x, \pi_h(x)) }
\lp[ 
R_{h+1}^\pi( x' ; \{ \mathcal U_h^k \}_h )
\rp] \\
&\leq \Ep_{ x' \sim \distribution_h(x, \pi_h(x)) }
\lp[  \bar V_{h+1}^{k}(x') \rp] \\
&\leq \max_a \Ep_{ x' \sim \distribution_h(x, a) }
\lp[  \bar V_{h+1}^{k}(x') \rp]
= \bar V_h^k(x) \, ,
\end{align*}
where  the first equality  follows from the definition of the reachability function and the fact that $x \not \in {\cal U}_{h}^{k}$, the first inequality follows from the inductive hypothesis, and the final equality follows from the 
definition of the pseudo-value functions and the 
fact that $x \not \in {\cal U}_{h}^{k}$ is explored. 
This concludes the proof of \Cref{eq:pseudo-value-reachability}.
Note that since $\bar{V}_{h}^{k}$ is independent of any policy, in the remaining proof we no longer need to consider specific policies.

It then suffices to show that
$\bar V_1^{k}(s) \leq \bar V_h^k(s)$ for all 
$s \in \mathcal S, k \in [K], h \in [H]$
with probability at least $1 - \iota$.
In particular, we will show the stronger version of the statement:
\begin{align}
\label{eq:inductive-ub}
\forall x \in \mathcal S, a \in \mathcal A, h \in [H], k \in [K] \, , 
Q_h^{k}(x,a) \geq \bar Q_h^{k}(x,a)
\end{align}
with probability at least $1 - \iota$.
We will show \Cref{eq:inductive-ub} via induction on $k$.
In particular, the inductive hypothesis states that 
\begin{align}
\label{eq:inductive-hypothesis}
\forall x \in \mathcal S, a \in \mathcal A, h \in [H], k' \in [k] \, , 
Q_h^{k'}(x,a) \geq \bar Q_h^{k'}(x,a)
\end{align}
with probability at least $ 1 - (k-1) \iota /K$.

In the base case, we have $k = 1$. 
Note that in this case $N_h^1(x,a)$ will be $0$ for all $h, x, a$, and $Q_h^1(x,a)$ will be equal to its initial value $H$ (see Line 1 of \QAgent). 
It then follows from the definition of $\bar Q_h^1(x,a)$ that the inductive hypothesis holds.
Now consider the $k^*$-th inductive step.
We will condition on that
\Cref{eq:inductive-hypothesis} holds for $k = k^*-1$.
Furthermore, fix a tuple $(x^*,a^*,h^*) \in \mathcal S \times \mathcal A \times [H]$.
From now on, $x^*, a^*, h^*, k^*$ is a fixed tuple.
We proceed to show that 
$$
Q_{h^*}^{k^*}(x^*,a^*) \geq \bar Q_{h^*}^{k^*}(x^*,a^*) \, ,
$$
with probability at least $1 - \iota / (SAHK)$.
\Cref{eq:inductive-ub} will then follow from the union bound.
We begin by introducing some extra notations.
\begin{itemize}
    \item 
    For $t \in [H], k \in [K]$, 
    denote by $x_t^k$ the state the agent $\QAgent$ is at step $t$ in episode $k$.
    \item
    Define $ [\distribution_{h^*} \cdot \bar V_{h^*+1}^{k^*}](x^*,a^*) := \Ep_{ x'\sim \distribution_{h^*}(x^*,a^*) } \lp[  \bar V_{h^*+1}^{k^*}(x') \rp] $ and $[\hat {\distribution}_{h^*}^{k^*} \cdot \bar V_{h^*+1}^{k^*}](x^*,a^*) := \bar V_{h^*+1}^{k^*}( x_{h^*+1}^{k^*} )$. 
    \item Let $N^*$ be the visitation counts of $(x^*,a^*)$ at step $h^*$ before episode $k^*$, i.e., $N^* := N_{h^*}^{k^*}(x^*, a^*)$.
    \item 
    Let $k_i^*$ be the episode index when the state action pair $(x^*,a^*)$ is taken at step $h^*$ for the $i$-th time, and we define $k_i^* = k^*+1$ if the total visitation count $N^* = N_{h^*}^{k^*}(x^*, a^*)$ is less than $i$.
    Note that $k_{i}^* \neq k^*$ since we can never visit a state action pair $k^*$ times before the $k^*$-th episode.
    Thus, we always have either $k_i^* < k^*$ or $k_i^* = k^*+1$.
    \item 
    It is not hard to see that $k_i^*$ is a stopping time. We define $\mathcal F_i$ to be the $\sigma$-algebra generated by all random variables until episode $k_i^*$, step $h^*$ but excluding the most recent transition information, i.e., $\{ x_t^k  \}_{ t \in [H], k < k_i^* } \cup
    \{ x_t^{k_i^*}  \}_{ t \leq h^* }
    $.
\end{itemize}
Instead of recalling the full detail of how the estimate $Q_{h}^{k}(x,a)$ is computed by the $Q$-learning algorithm, we will use Equation 4.3 of \cite{jin2018q} that relates $Q_h^k(x,a)$ to the past $V$-estimates before episode $k$.
In particular, 
we have that\footnote{Recall that we set the rewards of the MDP to $0$. Hence, there are no reward terms in the decomposition. Other than that, the decompositions are identical.}
\begin{align}
\label{eq:q-value-alg}
Q_{h^*}^{k^*}(x^*,a^*) =  \sum_{ i=1 }^{N^*} \alpha_{ 
 N^* }^i
\lp(  V_{h^*+1}^{k_i^*}( x_{h^*+1}^{k_i^*} ) + b_i
\rp) \, ,
\end{align}
where $b_i:=C \sqrt{H^3 \log(SAHK/\iota) / i}$ for some sufficiently large constant $C$ is the optimistic bonus added to the estimates, 
and $\alpha_{N^*}^i$ are positive numbers satisfying the following properties
\begin{align}
\label{eq:alpha-convex-comb}
&\sum_{j=1}^{N^*} \alpha_{N^*}^j = 1 \, ,    \\ 
\label{eq:alpha-sos-bound}
&\sum_{j=1}^{N^*} \lp( \alpha_{N^*}^j \rp)^2 \leq \frac{2H}{N^*} \, ,\\
\label{eq:alpha-ratio-sum}
&
\sum_{i=1}^{N^*} \frac{\alpha_{N^*}^i}{\sqrt{i}} \geq \frac{1}{ \sqrt{N^*}}.
\end{align}
We omit the exact expressions of $\alpha_{N^*}^i$ since they are not important for this argument.

In the rest of the proof, 
we will condition on the following concentration event.
\begin{align}
\forall \tau \in [k^*],
\lp|
     \sum_{i=1}^{ \tau }
     \alpha_{ \tau }^i
     \ind \{ k_i^* < k^* \}
     \lp[ \lp( \distribution_{h^*}  - 
     \hat{\distribution}_{h^*}^{k_i^*} \rp)  \cdot
    \bar V_{h^*+1}^{k_i^*} \rp](x^*,a^*)
\rp|
&\leq O(1) \; \sqrt{ \frac{H^3 \log(SAHK/\iota)}{\tau}}.
\label{eq:martingale}
\end{align}
In particular, we will later argue that the above inequality holds with probability at least $1 - \iota / (SAHK)$.
The argument is deferred to the end of the proof.

Conditioned on \Cref{eq:martingale}
and some arbitrary value of $N^*$, we proceed to show that
$Q_{h^*}^{k^*}(x^*,a^*) \geq \bar Q_{h^*}^{k^*}(x^*,a^*)$.
Suppose we have not visited the state-action pair $(x^*,a^*)$ at step $h^*$ before episode $k^*$, i.e., $N^* = 0$.
Then $Q_{h^*}^{k^*}(x^*,a^*)$ must be equal to its initial value $H$, which is the same as $\bar Q_{h^*}^{k^*}(x^*,a^*)$. So the bound holds trivially.
Otherwise, we have some $k_1^*, \ldots, k_{N^*}^* < k^*$.
We will further break into two cases.
Suppose $N^* < H$. 
Namely, $k^*$ is some episode before we visit the $(x^*, a^*)$ at step $h^*$ for the $H$-th time.
Then the total optimism bonus $b_i$ added to the $Q$ estimate is 
\begin{align}
\label{eq:alpha-b-lb}
\sum_{i=1}^{N^*} \alpha_{N^*}^i b_i
\geq C \sqrt{\frac{H^3 \log(SAHK/\iota)}{N^*}} \geq H
\end{align}
where the first inequality is by the definition of the optimistic bonus $b_i$ and \Cref{eq:alpha-ratio-sum},  and the second inequality is by our case assumption that $N^* < H$.
On the other hand, it is clear from the definition that $\bar Q_{h^*}^{k^*}(x^*,a^*)$ is at most $H$. Thus, we always have that $Q_{h^*}^{k}(x^*,a^*) \geq \bar Q_{h^*}^k(x^*,a^*)$ in the case $N^* < H$.

Now consider the remaining case when $N^* \geq H$.
Since we assume that we have visited
$(x^*,a^*)$ at step $h^*$ for at least $H$ times before episode $k^*$, by the definition of $\bar Q$, we must have that
$$
\bar Q_{h^*}^{k^*}(x^*,a^*) = \lp[ \distribution_{h^*} \cdot \bar V_{h^*+1}^{k^*} \rp](x^*,a^*).
$$
Note that 
$\sum_{i=1}^{N^*} \alpha_{N^*}^i = 1$.
Moreover, it is not hard to verify via induction that $\bar V_{h}^k(x)$ is monotonically decreasing as a function of $k$ for all $x,h$.
Thus, we have that
\begin{align}
\label{eq:bar-q-decompose}
\bar Q_{h^*}^{k^*}(x^*,a^*)
\leq
\sum_{i=1}^{N^*}
\alpha_{N^*}^i 
\lp[ \distribution_{h^*} \cdot \bar V_{h^*+1}^{k_i^*} \rp](x^*,a^*).
\end{align}
It then follows that
\begin{align*}
&Q_{h^*}^{k^*}(x^*,a^*)
-
\bar Q_{h^*}^{k^*}(x^*,a^*) \\
&\geq 
\sum_{i=1}^{N^*} 
\alpha_{N^*}^i \lp( 
V_{h^*+1}^{k_i^*}
- \bar V_{h^*+1}^{k_i^*}
\rp)
\lp( x_{h^*+1}^{k_i^*} \rp)
+ 
\sum_{i=1}^{N^*} 
\alpha_{N^*}^i
\lp( 
\bar V_{h^*+1}^{k_i^*}\lp( x_{h^*+1}^{k_i^*} \rp)
- \lp[ \distribution_{h^*} \cdot \bar V_{h^*+1}^{k_i^*} \rp](x^*,a^*) \rp)
+
\sum_{i=1}^{N^*} \alpha_{N^*}^i b_i.
\end{align*}
For the first term, 
since $k_i^* < k^*$,
we can use the inductive hypothesis and conclude that it is non-negative.
For the second term, 
we note that 
$\bar V_{h^*+1}^{k_i^*}\lp( x_{h^*+1}^{k_i^*} \rp)$ is just
$
\lp[ \hat {\distribution}_{h^*}^{k_i^*} \cdot
\bar V_{h^*+1}^{k_i^*} \rp] (x^*,a^*)
$ by definition, and hence we can write
\begin{align*}
\sum_{i=1}^{N^*} 
\alpha_{N^*}^i
\lp( 
\bar V_{h^*+1}^{k_i^*}\lp( x_{h^*+1}^{k_i^*} \rp)
- \lp[ \distribution_{h^*} \cdot \bar V_{h^*+1}^{k_i^*} \rp](x^*,a^*) \rp)
&= 
\sum_{i=1}^{N^*}
\alpha_t^i
\lp[ 
\lp( \hat{\distribution}_{h^*}^{k_i} - \distribution_{h^*} \rp) \cdot \bar V_{h^*}^{k_i^*}
\rp](x^*,a^*) \\
&\geq - O \lp(  \sqrt{\frac{H^3  \log(SAHK/\iota)}{N^*}} \rp) \, ,
\end{align*}
where the last inequality follows from applying \Cref{eq:martingale} with $\tau = N^*$.
For the third term, \Cref{eq:alpha-b-lb} shows that it is least 
$C \sqrt{H^3 \log(SAHK/\iota) / N^*}$.
Thus, we can conclude that
$Q_{h^*}^{k^*}(x^*,a^*)
-
\bar Q_{h^*}^{k^*}(x^*,a^*)$ is non-negative if we choose $C$ to be some sufficiently large constant.
This shows that \Cref{eq:inductive-ub} for a fixed tuple $(x^*,a^*,h^*,k^*)$ holds 
with probability at least $1 - \iota/(SAHK)$.
By the union bound, \Cref{eq:inductive-ub} holds for all $(x,a,h,k)$ at the same time with probability at least $1 - \iota$.
This then further implies that 
$\bar V_h^k(s) \leq V_h^k(s)$ for all $s,h,k$.
Combining this with \Cref{eq:pseudo-value-reachability} then concludes the proof of 
\Cref{eq:ucb-reachability}.

It remains to show that \Cref{eq:martingale} holds with probability at least $1 - \iota / (SAHK)$.
Consider the sequence 
\begin{align}
\label{eq:martingale-sequence}
\lp\{
\ind \{ k_i^* < k^* \} 
\lp[
\lp( 
\distribution_{h^*} 
- \hat {\distribution}_{h^*}^{k_i^*}
\rp)
\cdot \bar V_{h^*+1}^{k_i^*} \rp](x^*,a^*)
\rp \}_{i=1}^{k^*}.    
\end{align}
We claim that the sequence is a martingale difference sequence with respect to the filtration $\{ \mathcal F_i \}_{i=1}^{k^*}$.
First, both $\ind \{ k_i^* < k^* \} $ and $\bar V_{h^*+1}^{k_i^*}(\cdot)$ are measurable with respect to $\mathcal F_i$. 
The former is true by definition. 
For the latter, recall that 
$ \{ \bar V_{h}^{k_i^*}(x) \}_{x \in \mathcal S, h \in [H]}$ are functions of $\{ N_{h}^{k_i^*}(x) \}_{x \in \mathcal S, h \in [H]}$, which are the visitation counts of the states \emph{before} episode $k_i^*$.
Clearly, $\{ N_{h}^{k_i^*}(x) \}_{x \in \mathcal S, h \in [H]}$ are measurable with respect to $\mathcal F_i$, and therefore so are $ \{ \bar V_{h}^{k_i^*}(x) \}_{x \in \mathcal S, h \in [H]}$.
It then remains to show that
$$
\Ep \lp[ [\hat {\distribution}_{h^*}^{k_i^*} \cdot \bar V_{h^*+1}^{k_i^*}](x^*,a^*)  \mid \mathcal F_i \rp] = \Ep \lp[  [{\distribution}_{h^*} \cdot \bar V_{h^*+1}^{k_i^*}](x^*,a^*)  \mid \mathcal F_i \rp].
$$
The left hand side is simply
$ \Ep\lp[  \hat V_{h^*+1}^{k_i^*}( x_{h^*+1}^{k_i^*} )    \mid \mathcal F_i \rp] $. 
To see why this is true, we note that 
since we have conditioned on $\mathcal{F}_i$, the conditional expectation of the left hand side is just over the randomness of $x_{h^*+1}^{k_i^*}$.
In particular, the distribution of $x_{h^*+1}^{k_i^*} \mid \mathcal F_i $ is the same as the transition distribution $\distribution_{h^*}(x^*,a^*)$ since $k_i^*$ is defined to be the index of some episode when we visit $(x^*, a^*)$ at step $h^*$.
The equality hence follows.
This concludes the proof that the sequence
in \Cref{eq:martingale-sequence} is indeed a martingale difference sequence with respect to $\{ \mathcal F_i \}_{i=1}^{k^*}$.

By Azuma's inequality and the union bound, 
with probability at least $1 - \iota/(SAHK)$,
it holds
\begin{align}
\forall \tau \in [k^*],
\lp|
     \sum_{i=1}^{ \tau }
     \alpha_{ \tau }^i
     \ind \{ k_i^* < k^* \}
     \lp[ \lp( \distribution_{h^*}  - 
     \hat{\distribution}_{h^*}^{k_i^*} \rp)  \cdot
    \bar V_{h^*+1}^{k_i^*} \rp](x^*,a^*)
\rp|
&\leq 
O(H)
\sqrt{ \sum_{i=1}^{\tau} \lp( \alpha_{\tau}^i \rp)^2 \log(SAHK/\iota) }  \nonumber \\
&\leq O(1) \; \sqrt{ \frac{H^3 \log(SAHK/\iota)}{\tau}} \, ,
\end{align}
where the second inequality follows from \Cref{eq:alpha-sos-bound}.
This therefore concludes the proof of \Cref{eq:martingale} as well as \Cref{lem:ucb-reachability}.
\end{proof}

\section{Proof of \texorpdfstring{\Cref{thm:repl-reinforcement-learning-alg}}{}: Replicable Learning in the Episodic Setting}
We now present our overall algorithm for reinforcement learning in the episodic setting. 

\begin{theorem}[Formal Version of \Cref{thm:repl-reinforcement-learning-alg}]
    \label{thm:formal-repl-reinforcement-learning-alg}
    There is a $\rho$-replicable $(\varepsilon, \delta)$-PAC policy estimator with sample complexity 
    \[
    n(S,A,H,\varepsilon,\delta) \leq \bigtO{\frac{S^2 A H^{\episodePower} \log^4(1/\delta) }{\rho^{2} \varepsilon^{2}}}.
    \]
    Moreover, the runtime is $\poly{n(S, A, H, \eps, \delta) \rho^{-1}} \cdot \exp(O(S H))$.
\end{theorem}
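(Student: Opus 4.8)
The plan is to compose the strategic exploration routine $\RepLevelExplore$ of \Cref{prop:r-explore-collect} with the tiered backward induction algorithm $\RepRLAlg$ of \Cref{thm:const-repl-bandit-rl} to obtain a constant-replicability, constant-confidence $(\eps/2)$-optimal policy estimator on $\tilde O(S^2 A H^{\episodePower}/\eps^2)$ samples, and then amplify it to the stated guarantees using \Cref{lem:repl-boosting}. As usual we may assume the MDP starts at a fixed state $x_\ini$ (see the discussion at the start of \Cref{sec:prelims}).

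Concretely, I would first fix $\zeta := c\cdot \eps / (H^2 \log^5(SAH\eps^{-1}))$ for a small enough absolute constant $c$ so that $\zeta$ satisfies the hypothesis $\zeta \ll \eps/(H^2 \log^5(SAH\eps^{-1}\delta^{-1}))$ of \Cref{thm:const-repl-bandit-rl} when the confidence parameter there is taken to be a constant, and set $L = \lceil \log(1/\zeta)\rceil$. Running $\RepLevelExplore(\mdp,\zeta)$ consumes $\tilde O(S^2 A H^{7}\zeta^{-2}) = \tilde O(S^2 A H^{\episodePower}/\eps^2)$ samples and, by \Cref{prop:r-explore-collect}, outputs a $0.01$-replicable tiered reachability partition $\{\states_h^\ell\}$ together with datasets $\{\dataset_{s,a,h}\}$ that are $\zeta$-nice with respect to this partition with probability at least $0.99$. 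I would then feed $\{\states_h^\ell\}$ and $\{\dataset_{s,a,h}\}$ into $\RepRLAlg$ with accuracy $\eps/2$ and confidence a small constant. Whenever the datasets are $\zeta$-nice — which happens with probability $\ge 0.99$ — \Cref{thm:const-repl-bandit-rl} guarantees that $\RepRLAlg$ returns an $(\eps/2)$-optimal policy with probability $\ge 0.99$ and is $0.01$-replicable.

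Combining the two stages by a union bound over the (at most four) bad events — the partition differs across the two runs ($\le 0.01$), the datasets fail to be $\zeta$-nice in either run ($\le 0.02$), or $\RepRLAlg$ fails to be replicable given nice inputs and an identical partition ($\le 0.01$) — shows the composed algorithm is $0.1$-replicable, while it outputs an $(\eps/2)$-optimal policy with probability at least $0.98 \ge 0.9$. Thus we have a $0.1$-replicable $(\eps/2,0.1)$-PAC policy estimator with sample complexity $m = \tilde O(S^2 A H^{\episodePower}/\eps^2)$. Applying \Cref{lem:repl-boosting} with target replicability $\rho$ and confidence $\delta$ then yields a $\rho$-replicable $(\eps,\delta)$-PAC policy estimator on
\[
\bigtO{m\cdot \frac{\log^4(1/\delta)}{\rho^2} + \frac{H\log^4(1/\delta)}{\eps^2 \rho^2}} = \bigtO{\frac{S^2 A H^{\episodePower}\log^4(1/\delta)}{\rho^2\eps^2}}
\]
samples, as claimed. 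The runtime bound follows by tracking the runtime through each step: $\RepLevelExplore$ runs in $\poly{N}\exp(O(SH))$, each call to $\RepRLAlg$ runs in $\poly{\sum_{s,a,h}|\dataset_{s,a,h}|} + HL\exp((1+o(1))S\log A)$, and \Cref{lem:repl-boosting} invokes the base estimator $\tilde O(\rho^{-2}\log^4(1/\delta))$ times with polynomial overhead, giving a total of $\poly{n(S,A,H,\eps,\delta)\rho^{-1}}\cdot\exp(O(SH))$.

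The step I expect to require the most care is the replicability bookkeeping in the composition. The replicability guarantee of \Cref{thm:const-repl-bandit-rl} is only promised when the input datasets are $\zeta$-nice (a conditional statement, unlike the usual notion of replicability that quantifies over all input distributions), so I need to argue that the high-probability niceness produced by \Cref{prop:r-explore-collect} suffices, and check that the structure of $\zeta$-niceness — in particular that the implicit sample lower bounds $\{m_{s,h,\ell}\}$ do not depend on the realized samples (cf.\ the remark following \Cref{lem:efficient-sample-collection}) — lets the $\RepVarBandit$ subroutines inside $\RepRLAlg$ be driven by the same shared random string across the two executions. The remaining work is routine substitution of parameters and verification that $\zeta$ meets the size requirement of \Cref{thm:const-repl-bandit-rl}.
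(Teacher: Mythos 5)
Your proposal is correct and follows essentially the same route as the paper: fix $\zeta = \Theta\bigl(\eps/(H^2\,\mathrm{polylog})\bigr)$, run $\RepLevelExplore$ to get the tiered partition and $\zeta$-nice datasets, feed them to $\RepRLAlg$ with accuracy $\eps/2$, union-bound the failure/non-replicability events to get a constant-replicability constant-confidence estimator, and amplify via \Cref{lem:repl-boosting}; your attention to the conditional nature of $\RepRLAlg$'s replicability guarantee (it only holds for $\zeta$-nice inputs, with sample lower bounds $\{m_{s,h,\ell}\}$ independent of the realized data) matches the paper's handling of this point. No gaps.
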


In contrast to the parallel sampling model, we use our exploration algorithm $\RepLevelExplore$ to obtain the necessary good state partitions and datasets.

\begin{proof}[Proof of \Cref{thm:formal-repl-reinforcement-learning-alg}]
    We first give a $0.1$-replicable $(\varepsilon, 0.1)$-PAC policy estimator.
    The result then follows from applying the boosting (\Cref{lem:repl-boosting}).
    
    Given $\varepsilon \in (0,1)$ and episodic access to MDP $\mdp$, our algorithm sets $\zeta$ to be a number satisfying
    \begin{equation*}
        \zeta \ll \frac{\varepsilon}{H^2 \log^5\lp(SAH / \eps\rp)}
    \end{equation*}
    as required by \Cref{thm:const-repl-bandit-rl}.
    By \Cref{prop:r-explore-collect},
    running $\RepLevelExplore$ with parameter $\zeta$ on the MDP $\mdp$
    produces a tiered reachability partition 
    $\{ \states_h^\ell \}_{h \in [H], \ell \in [L]}$, where $L = \ceil{\log(1/\zeta)}$, and 
    a collection of datasets $\set{\dataset_{s, a, h}}$ that are $\zeta$-nice (\Cref{def:good-state-dataset-comb}) with respect to $\{ \states_h^\ell \}$ with high constant probability.
    Then, by \Cref{thm:const-repl-bandit-rl}, running $\RepRLAlg$ with accuracy produces a policy $\varepsilon/2$-optimal policy with high constant probability.

    For correctness, 
    note that if both $\RepLevelExplore$ and $\RepRLAlg$ succeed, which happens with high constant probability, it follows that the policy is $\eps/2$-optimal.
    We then analyze the algorithm's replicability in two runs.
    By \Cref{prop:r-explore-collect}, 
    $\RepLevelExplore$ outputs the same collection of subsets $\set{\states_{h}^{\ell}}$ with probability at least $0.99$, and two collections of random datasets $\{ \dataset_{s,a,h}^{(1)} \}, \{ \dataset_{s,a,h}^{(2)} \}$ that are both $\zeta$-nice with respect to $\set{\states_{h}^{\ell}}$.
    Conditioning on this, 
    \Cref{thm:const-repl-bandit-rl} then produces identical policies in two runs with probability at least $0.99$.

    It then follows that the episodic PAC Policy Estimator has accuracy $\eps/2$, fails with probability at most $0.1$, and is $0.02$-replicable.
    
    Next we bound the sample complexity of our algorithm.
    From \Cref{prop:r-explore-collect}, $\RepLevelExplore$ has sample complexity $\bigtO{S^2 A H^{\explorePower} \zeta^{-2}}$.
    Note that our constraint on $\zeta$ is satisfied by $\zeta =  c \frac{\varepsilon}{ H^{2} \polylog(SAH/\eps)}$ for some sufficiently small constant $c > 0$.
    In particular, the sample complexity can be bounded as $\bigtO{\frac{S^{2} A H^{\episodePower} \log(1/\delta)}{\varepsilon^{2}}}$.

    Lastly, we discuss the runtime of our algorithm. 
    Combining the runtime of \Cref{prop:r-explore-collect} and \Cref{thm:const-repl-bandit-rl}, we have that the run-time is $\poly{n(S, A, H, \eps, \delta) \rho^{-1}} \cdot \exp(O(S H))$
\end{proof}

\section{Lower Bounds for Replicable Reinforcement Learning}

We begin with our lower bound in the parallel sampling setting, proving that \Cref{thm:const-repl-parallel-alg} is essentially optimal in dependence on $S, A$.

{\renewcommand\footnote[1]{}\PSRLLB*}

We obtain our lower bound via a reduction to the sign-one-way marginals problem, which we define below.
A Rademacher random variable with parameter $x \in [-1, 1]$, denoted by $\RadD{x}$, is a random variable that equals $+1$ with probability $\frac{1 + x}{2}$ and $-1$ with probability $\frac{1 - x}{2}$.
A product of Rademacher random variables with parameter $p \in [-1, +1]^{n}$, which we denote by $
\RadD{p}$, is an independent product of $n$ Rademacher random variables where the $i$-th coordinate has parameter~$p_i$.

\begin{definition}[Sign-One-Way Marginals]
    \label{def:sign-one-way}
    Let $p \in [-1, +1]^{n}$. 
    We say $v \in [-1, 1]^{n}$ is an $\eps$-accurate solution to the $n$-dimensional sign-one-way marginals problem with respect to $\RadD{p}$ if $\frac{1}{n} \sum_{i = 1}^{n} v_i p_i > \frac{1}{n} \sum_{i = 1}^{n} |p_i| - \eps$ or equivalently 
    \begin{equation}
    \label{eq:equivalent-sign-one-way-marginals}
    \sum_{i=1}^n \left( \sgn(p_{i}) - v_{i} \right) p_i < \eps n \, ,
\end{equation}
where $\sgn: \R \mapsto \{\pm 1\}$ computes the sign of the input.

    Furthermore, we say an algorithm is $(\eps, \delta)$-accurate 
    for the $n$-dimensional sign-one-way marginals problem
    if given sample access to an arbitrary $n$-dimensional
    distribution $\RadD{p}$, the algorithm outputs an $\eps$-accurate solution 
    to the sign-one-way marginals problem with respect to $\RadD{p}$
    with probability at least $1 - \delta$.
\end{definition}
\cite{bun2023stability} gave a $\tO{n \rho^{-2} \eps^{-2}}$ sample complexity algorithm for the sign-one-way marginals problem and showed that any algorithm with constant replicability and accuracy requires $\tOm{n}$ samples.
However, it was unclear how to obtain a tight characterization of sample complexity with respect to replicability and accuracy parameters $\rho, \eps$ from their methods.
As a first step towards characterizing the sample complexity of replicable reinforcement learning, we resolve this gap.

\begin{restatable}{theorem}{SignOneWayLB}
    \label{thm:sign-one-way}
    Any $\rho$-replicable $(\eps, 0.01)$-accurate algorithm for the $n$-dimensional sign-one-way marginals problem requires $\bigOm{\frac{n}{\rho^2 \eps^2 \log^6 n}}$ samples.
\end{restatable}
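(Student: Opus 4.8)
The plan is to prove the lower bound by reduction from the $n$-coin problem, whose $\tilde\Omega(n\rho^{-2}\varepsilon^{-2})$ sample complexity was established in \cite{hopkins2024replicability}. Recall the $n$-coin problem: given sample access to $\RadD{p}$, output $v \in \{\pm 1\}^n$ with $v_i = \sgn(p_i)$ whenever $|p_i| \geq \varepsilon$. The obstacle in directly reducing from the coin problem is that sign-one-way marginals only controls the \emph{average} error $\frac{1}{n}\sum_i(\sgn(p_i)-v_i)p_i$, whereas the coin problem demands every large coordinate be correct. A single-shot sign-one-way solver is free to concentrate all its error on a few coordinates, so it does not immediately solve the coin problem. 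The core idea, following the sketch in the introduction, is to use \emph{symmetrization plus majority amplification}: by randomly permuting the coordinates of the input before feeding it to the sign-one-way solver, no coordinate can be ``singled out'' for high error, so in expectation (over the random permutation) every coordinate is correct with probability bounded away from the worst case; repeating with independent permutations and taking a coordinatewise majority vote then drives the per-coordinate error below any constant, giving a coin-problem solver.

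The steps I would carry out are as follows. First, fix the hard coin-problem instance from \cite{hopkins2024replicability}: a distribution over parameter vectors $p$ such that every replicable, accurate coin solver needs $\tilde\Omega(n\rho^{-2}\varepsilon^{-2})$ samples; crucially this hard instance can be taken so that each $|p_i| \in \{0\} \cup [\varepsilon, O(\varepsilon)]$, i.e. coordinates are either ``null'' or ``$\varepsilon$-significant.'' Second, build the reduction in Lemma~\ref{lemma:sign-one-way-to-ell-inf} (which I would state and prove): given a $\rho'$-replicable $(\varepsilon', 0.01)$-accurate sign-one-way marginals algorithm $\mathcal{A}$, construct a coin-problem algorithm $\mathcal{B}$ that (i) draws a uniformly random permutation $\sigma$ of $[n]$ (shared randomness), (ii) runs $\mathcal{A}$ on the permuted sample stream to get $v^\sigma$, (iii) un-permutes to get a candidate $v$, (iv) repeats $T = O(\log(1/\delta))$ times with independent $\sigma$'s and internal randomness, and outputs the coordinatewise majority. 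The accuracy analysis: if $\mathcal{A}$ returns an $\varepsilon'$-accurate $v$, then $\sum_i (\sgn(p_i)-v_i)p_i < \varepsilon' n$, and since each significant coordinate contributes at least $\varepsilon \cdot |{\sgn(p_i)-v_i}| \geq 2\varepsilon$ when wrong, at most $\varepsilon' n/(2\varepsilon)$ significant coordinates are wrong; by symmetry over $\sigma$, each fixed significant coordinate is wrong with probability at most $\varepsilon'/(2\varepsilon) + 0.01$ (the $0.01$ from $\mathcal{A}$'s failure). Choosing $\varepsilon' = \Theta(\varepsilon)$ small enough makes this probability at most, say, $0.1$, so the majority over $T$ trials is correct on every significant coordinate with probability $1-\delta$ by a Chernoff/union bound. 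For replicability of $\mathcal{B}$: with shared randomness (the $\sigma$'s and $\mathcal{A}$'s seeds), each of the $T$ invocations of $\mathcal{A}$ agrees across two runs with probability $1-\rho'$, so $\mathcal{B}$ is $T\rho'$-replicable; set $\rho' = \rho/T$.

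Third, assemble the sample-complexity bookkeeping. If $\mathcal{A}$ uses $m(n,\rho',\varepsilon')$ samples, then $\mathcal{B}$ uses $T \cdot m(n, \rho/T, \Theta(\varepsilon))$ samples and is a $\rho$-replicable accurate coin solver, hence by \cite{hopkins2024replicability} we need $T\cdot m(n,\rho/T,\Theta(\varepsilon)) = \tilde\Omega(n\rho^{-2}\varepsilon^{-2})$. Taking $\delta$ a small constant so $T = O(1)$ (actually $T$ polylogarithmic suffices and only costs polylog factors), and absorbing the polylog$(n)$ overhead from \cite{hopkins2024replicability}'s bound and from $T$, this rearranges to $m(n,\rho,\varepsilon) = \tilde\Omega(n\rho^{-2}\varepsilon^{-2}) = \Omega(n\rho^{-2}\varepsilon^{-2}\log^{-6}n)$, which is the claimed bound. (The precise power of $\log$ is just whatever falls out of chaining \cite{hopkins2024replicability}'s $\log$ factors with the $T$ and symmetrization overhead; $\log^6 n$ is a safe upper bound.)

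The main obstacle I anticipate is the symmetrization argument: one must be careful that permuting the \emph{sample stream} genuinely makes the coordinates exchangeable from $\mathcal{A}$'s point of view, and that the ``at most $\varepsilon' n/(2\varepsilon)$ wrong significant coordinates'' bound interacts correctly with taking expectation over $\sigma$ to yield a per-coordinate error probability — this needs the hard instance to have a controlled number of significant coordinates (a constant fraction, say), and one must verify the majority vote does not itself break replicability (it does not, since it is a deterministic post-processing of the $T$ outputs, and the $T$ outputs are individually replicable). A secondary subtlety is ensuring the reduction preserves the \emph{exact} distributional form (product of Rademachers) so that \cite{hopkins2024replicability}'s lower bound applies verbatim; since permutation of coordinates of a Rademacher product is again a Rademacher product, this is immediate. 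I expect no genuinely new technical ingredient beyond carefully orchestrating these pieces.
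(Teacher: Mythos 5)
Your overall route is the same as the paper's (reduce sign-one-way marginals to the restricted $n$-coin/$\ell_\infty$-testing problem of \cite{hopkins2024replicability} via random coordinate permutations plus a coordinatewise majority vote), but there is a genuine gap at the heart of the symmetrization step. The permutation argument only makes coordinates with \emph{identical} bias exchangeable, so for a fixed boundary coordinate $i$ it gives a per-trial error probability of (number of erring boundary coordinates)$/$(number of coordinates whose bias equals $p_i$), not divided by $n$ as your sketch implicitly does. With $\eps' = \Theta(\eps)$ the numerator can be as large as $\Theta(n)$, so the bound is vacuous unless $\Omega(n)$ coordinates share that exact boundary bias. The lower bound you invoke is a worst-case statement over all $p$ in the cube: your constructed coin solver must be correct on instances with, say, a single coordinate at $\pm\eps$ and everything else in the interior, and on such instances your argument gives no guarantee at all — the sign-one-way algorithm is free to dump its entire $\eps' n$ error budget on that one coordinate. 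Leaning on an asserted property of ``the hard instance'' from \cite{hopkins2024replicability} (a constant fraction of significant coordinates, all of comparable bias) is not something the black-box theorem provides, and even if biases lie in $[\eps, O(\eps)]$ they need not be equal, which already breaks exchangeability.

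The paper closes exactly this hole with a padding construction you are missing: the $n$-dimensional instance $q$ is embedded into a $3n$-dimensional one by appending $n$ artificial coordinates with bias exactly $+10\eps$ and $n$ with bias exactly $-10\eps$ (samples for these are simulated for free), so every boundary group has size at least $n$ and the average-error bound forces at most a small constant fraction of errors within each group; the permutation/majority argument then goes through for every $q$ in the cube, not just favorable ones. Two smaller points: the majority vote needs $T = O(\log(n/\delta))$ repetitions (per-coordinate failure $e^{-\Omega(T)}$ must survive a union bound over $n$ coordinates), so constant $T$ does not suffice, though this only costs the polylog factors you already budget for; and the step ``each wrong significant coordinate contributes at least $2\eps$'' presumes outputs in $\set{\pm 1}^{n}$, which requires the easy self-reduction (round $v$ to its signs, losing a factor $2$ in accuracy) that the paper states as a separate lemma — a benign constant-factor issue, but it should be made explicit.
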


We defer the proof of \Cref{thm:sign-one-way} to \Cref{app:sign-one-way-lb}.
Our result yields also tight sample complexity lower bounds for replicable $\ell_{1}$-mean estimation (see \Cref{thm:ell1-mean-estimation-lb}).
This extends the work of \cite{hopkins2024replicability}, who gave sample complexity lower bounds for $\ell_{p}$-mean estimation for $p \geq 2$.
We therefore complete their results in the main missing regime, i.e., $\ell_{1}$-mean estimation.

\begin{proof}[Proof of \Cref{thm:parallel-sampling-lb}]
    Our lower bound argues that a PAC policy estimator that requires few parallel samples induces an algorithm for sign-one-way marginals with low sample complexity.
    Fix some sufficiently large $S, A, H$.
    Let $n = SH$ and let $\distribution$ be a product of $n$ Rademacher distributions.
    We will construct an MDP with $S$ states, $A$ actions, and $H$ time steps according to $\distribution$.
    Assume without loss of generality that the action set $\actions$ consists of two special actions $\set{a_{+}, a_{-}}$ and $A - 2$ dummy actions $\set{a_{\bot}}$. 
    For clarity, we present an MDP $\mdp$ with rewards in the range $[-2, 1]$ instead of $[0, 1]$.
    Note that a $\eps H$-optimal policy estimator for reward range $[0, 1]$ implies a $3 \eps H$-optimal policy estimator on $[-2, 1]$, since we may scale the rewards to $[0, 1]$ incurring a error factor of $3$.
    Define the MDP as follows.
    \begin{enumerate}
        \item We define the transition probabilities of every state-action pair to be uniformly over all states. 
        \item Suppose we re-index $\distribution$ with tuples $(s, h) \in \mathcal S \times [H]$.
        \item For the tuple $(s, a_+, h)$, we define the reward distribution $\rewD_h(s, a_+)$ to be exactly $\distribution_{(s,h)}$.
        \item For the tuple $(s, a_-, h)$, we define the reward distribution $\rewD_h(s, a_+)$ to be exactly $-\distribution_{(s,h)}$.
        \item For any tuple $(s, a_{\bot}, h)$, we define the reward to be deterministically $-2$.
    \end{enumerate}
We can easily simulate a call to the parallel sampling oracle $\PS$ with $2$ samples from $\distribution$.
Any transition from any $(s, a, h)$ tuple and any reward from any $(s, a_{\bot}, h)$ can be simulated without any access to $\PS$.
Two calls to $\PS$ suffices to simulate a sample for actions $a_{+}, a_{-}$ for all $(s, h)$ tuples.

Let $\innerAlg$ be an $(3 \eps H, 0.002)$-PAC policy estimator (on reward range $[-2, 1]$) that makes $m$ calls to the parallel sampling oracle $\PS$.
If we run $\innerAlg$ on $\mdp$, the algorithm produces a policy $\hat \pi$ that is $3 \eps H$-optimal with respect to $\mdp$ with high constant probability.
Conditioned on that, we can construction a solution $v$ to the original sign-one-way marginal problem by setting $v_{s,h} := 1$ if $\hat \pi_h(s) = a_+$ and $v_{s,h} := -1$ if $\hat \pi_h(s) = a_-$.
If $\hat \pi_{h}(s) = a^{\bot}$ is any of the dummy actions, we set $v_{s}$ arbitrarily.
It then suffices to show that $v$ satisfies
$\frac{1}{n} \sum_{i=1}^n v_i p_i \geq \frac{1}{n} \sum_{i=1}^n |p_i| - 0.02$, where $p_i$ is the mean of the $i$-th coordinate of $\distribution$.

    Again, we will re-index $p$ by the tuple $(s,h) \in \mathcal S \times [H]$.
    The value of the policy $\hat \pi$ is given by
    \begin{align*}
        V(\hat{\pi}, \mdp) &= \sum_{h = 1}^{H} \sum_{s \in \states} \frac{p_{(s,h)}}{S} \left( \ind\{\hat{\pi}_{h}(s) = a_{+}\} -  \ind\{\hat{\pi}_{h}(s) = a_{-}\} \right) - \frac{2}{S} \ind\{\hat{\pi}_{h}(s) = a_{\bot}\} \\
        &= \frac{1}{|\states|} \sum_{h = 1}^{H} \sum_{s \in \states} v_{s, h}  p_{(s, h)} \ind\{ \hat{\pi}_{h}(s) \neq a_{\bot} \} - 2 \cdot \ind\{ \hat{\pi}_{h}(s) = a_{\bot} \} \\
        &\leq \frac{1}{|\states|} \sum_{h = 1}^{H} \sum_{s \in \states} v_{s, h}  p_{(s, h)}.
    \end{align*}
    where in the final inequality we use that $-2 < v_{s, h} p_{s, h}$ for all $s, h$.
    It is not hard to see that the value of the optimal policy is just $V(\pi^*, \mdp) = 
    \frac{1}{|\states|} \sum_{h = 1}^{H} \sum_{s \in \states} | p_{(s, h)} |
    $.
    Thus, the fact that $\hat \pi$ is $3 \eps H$ accurate implies that
    \begin{align*}
        \frac{1}{n} \sum_{h = 1}^{H} \sum_{s \in \states} v_{s, h} p_{(s, h)} \geq \frac{S}{n} V(\hat{\pi}, \mdp) 
        \geq \frac{S}{n} \left( V(\pi^*, \mdp) - 3 \eps H \right) 
        = \frac{1}{n} \sum_{h = 1}^{H} \sum_{s \in \states} |p_{(s, h)}| - 3 \eps.
    \end{align*}
    In other words, $v$ is an $3 \eps$-accurate solution to the sign-one-way marginals problem for $\distribution$.
    Thus, applying \Cref{thm:sign-one-way} shows that the PAC policy estimator has to make at least $m = \bigOm{\frac{n}{\rho^{2} \eps^{2} \log^5 (SH)}} = \bigOm{\frac{SH}{\rho^{2} \eps^{2} \log^5 n}}$ calls to $\PS$, which translates to a total sample complexity of $\bigOm{\frac{S^2 A H^2}{\rho^2 \eps^2 \log^5 (SH)}}$.
\end{proof}

\paragraph{Lower Bound for Episodic Reinforcement Learning}

Next, we give a lower bound for Reinforcement Learning in the episodic setting, which shows that the sample complexity of
our algorithm (\Cref{thm:repl-reinforcement-learning-alg}) is essentially optimal in its dependence on $S$.

\EpisodicRLLB*

\begin{proof}
    Suppose we have an $\rho$-replicable algorithm $\innerAlg$ that is a $(\eps H, 0.001)$-PAC policy estimator that consumes $m$ episodes.
    We can assume that $m \geq SAH$ as this is needed even without the replicability requirement.
    We will show a lower bound of $\bigOm{\frac{S^2 H}{\rho^2 \eps^2 \log^5 (SH)}}$ many episodes, which will imply a sample complexity lower bound of $\bigOm{\frac{S^2 H^2}{\rho^2 \eps^2 \log^5 (SH)}}$.
    
    Consider the  $\mdp$ defined in the proof of \Cref{thm:parallel-sampling-lb} with uniformly random transition probabilities and random rewards parameterized by $SH$ Rademacher distributions.
    We claim that we can simulate $m$ episodes of interaction with the MDP using at most $ \frac{m}{S} + O \lp(  \sqrt{\frac{m \log(SAH)}{S}} \rp)$ samples from the product Rademacher distributions with high probability.
    
    In a single episode, we arrive at state $s$ during time step $h$ with probability $\frac{1}{S}$.
    Let $M_{s, a, h}$ be the random variable denoting how many samples of rewards from the tuple $(s, a, h)$ are observed over $m$ episodes.
    Then, since for fixed $s, a, h$ we have $\Ep[M_{s, a, h}] \leq \frac{m}{S}$ and $M_{s, a, h}$ is the sum of $m$ \iid bounded random variables, applying the Chernoff bound and the union bound gives that
    $\max M_{s, a, h} \leq  \frac{m}{S} + O \lp(\sqrt{\frac{m \log(SAH)}{S}} \rp)
    $ 
    with high constant probability.

    Conditioned on that, we can therefore simulate the reward samples from all $m$ episodes
    with $\frac{m}{S} + O \lp(\sqrt{\frac{m \log(SAH)}{S}} \rp)$ samples from the product Rademacher distribution, and $\innerAlg$ can therefore produce some $\eps H$-optimal policy with high constant probability.
    Following the same argument as in \Cref{thm:parallel-sampling-lb}, we then obtain an $\eps$-accurate solution for the original sign-one-way marginal problem.
    Applying \Cref{thm:sign-one-way} then shows that 
    we must have 
    $$
    \frac{m}{S} + O \lp(\sqrt{\frac{m \log(SAH)}{S}} \rp) = \bigOm{\frac{n}{\rho^2 \eps^2 \log^5 n}} = \bigOm{\frac{SH}{\rho^2 \eps^2 \log^5 (SH)}} 
    $$
    which implies that
    $\bigOm{\frac{S^2 H}{\rho^2 \eps^2 \log^5 (SH)}}$ episodes are necessary provided that $m \geq SAH$.
    This concludes the proof of \Cref{thm:episodic-rl-lb}.
\end{proof}

\section*{Acknowledgements}
We thank Russell Impagliazzo for helpful discussion on exploration in MDPs in the early stages of this work, and anonymous reviewers for pointing out the connection between our techniques and reward-free RL.

\newpage
\bibliographystyle{alpha}
\bibliography{references}

\newpage
\appendix

\section{Efficient RL Algorithm}
We provide the details necessary to obtain a computationally efficient algorithm for replicable reinforcement learning.
The goal is to obtain an algorithm using $S^3$ samples and running time polynomial in sample complexity.
First, we give an efficient version of $\RepExplore$.
    
\begin{lemma}[Computationally Efficient Single-Tier Replicable Exploration (\Cref{lem:r-explore-collect})]
\label{lem:e-r-explore-collect}

Fix an arbitrary policy $\pi$.
Let $\delta, \kappa, \beta \in (0, 1)$.
There exists 
some positive integers $\{ m_{s,h} \}_{s \in \mathcal S, h \in [H]}$,
and an efficient version of $\RepExplore$ (\Cref{alg:rep-explore}) that
consumes at most $K:= \tilde \Theta(S^3 A H^{\explorePower} \delta^{-2} \kappa^{-4} \beta^{-2})$ episodes, 
runs in time $ \poly{K} $, and
produces a state combination $\{\mathcal I_h\}_{h=1}^H$, and
an independent dataset $\mathcal D_{s,a,h}$ made up of \iid samples from $\mathbf p_h(s,a) \otimes \mathbf r_h(s,a)$ for all $s \in \mathcal S, a \in \mathcal A, h \in [H]$
such that 
(i) the output set $\{ \mathcal I_h \}_{h=1}^H$ is $O(\kappa)$-replicable,
(ii) the following holds with probability at least $1 - O(\kappa)$:
\begin{enumerate}
    \item[(a)] $R^{\pi}_1(x_{\ini};\{ \mathcal I_h \}_{h=1}^H) \leq \delta$ \, ,
    \item[(b)] $\min_a |\mathcal D_{s,a,h}| \geq m_{s,h}$
    and 
    $
\sum_{h \in [H]}
\sqrt{
    \sum_{ s: s \in \mathcal I_h } \lp( 1 /  
    m_{s,h}\rp)}
    \leq \kappa \beta / \sqrt{S}$.
\end{enumerate}
\end{lemma}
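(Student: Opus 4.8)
I would take the algorithm to be the variant of \RepExplore (\Cref{alg:rep-explore}) with two modifications. First, replace the correlated sampling of $\{\mathcal I_h\}_{h=1}^H$ from the Bernoulli product $\mathcal B(\hat\mu)=\bigotimes_{s,h}\Ber(\hat\mu_{s,h})$ by the efficient product-correlated-sampling routine $\ProdCorrSampling$ (\Cref{lem:product-corr}) instead of $\corrSampling$; since each marginal lives on a two-point domain this step now runs in time $\tO{SH}$ rather than $\exp(SH)$, which is the whole point. Second, enlarge the two \QExplore budgets: the number of runs $m$ used to form $\hat\mu$ is taken a factor $\Theta(SH\,\polylog)$ larger and the number $M$ used to collect $\{\mathcal D_{s,a,h}\}$ a factor $\Theta(S\,\polylog)$ larger than in \RepExplore, with all other parameters unchanged (each \QExplore call run with accuracy $\delta\kappa$ and failure probability $\iota$ a sufficiently small inverse polynomial in $S,A,H,\delta^{-1},\kappa^{-1},\beta^{-1}$). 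Every subroutine other than $\ProdCorrSampling$ is already polynomial in its input size (each \QExplore run is, by \Cref{cor:nonreplicable-explore}), so the total running time becomes $\poly{K}$.

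\textbf{Replicability.} For property (i) the only new issue is that $\ProdCorrSampling$ guarantees agreement of the two runs only up to $2\sum_{s,h}\tvd{\Ber(\hat\mu^{(1)}_{s,h}),\Ber(\hat\mu^{(2)}_{s,h})}=2\sum_{s,h}|\hat\mu^{(1)}_{s,h}-\hat\mu^{(2)}_{s,h}|$, rather than the joint distance $2\,\dtv(\mathcal B(\hat\mu^{(1)}),\mathcal B(\hat\mu^{(2)}))$ used for $\corrSampling$. So in place of \Cref{lem:sampling-concentration} I would prove that with the enlarged $m$ one has $\sum_{s,h}|\hat\mu_{s,h}-\mu^{\delta\kappa,\iota}_{s,h}|\le\kappa$ with probability $\ge 1-\kappa$: the Chernoff bound (as in \Cref{eq:mu-concentration}) makes each term at most $\tfrac12\sqrt{\log(SH/\kappa)/m}$, and Cauchy--Schwarz over the $SH$ coordinates gives $\sum_{s,h}|\hat\mu_{s,h}-\mu^{\delta\kappa,\iota}_{s,h}|\le\tfrac{SH}{2}\sqrt{\log(SH/\kappa)/m}\le\kappa$ as soon as $m\gg S^2H^2\log(SH/\kappa)\kappa^{-2}$. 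A union bound over the two runs, the triangle inequality, and \Cref{lem:product-corr} then give $O(\kappa)$-replicability.

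\textbf{Reachability, collection, and complexity.} Since $\ProdCorrSampling$ is still marginally correct, the sampled $\{\mathcal I_h\}$ has law exactly $\mathcal B(\hat\mu)$, so property (ii)(a) follows verbatim from \Cref{lem:sample-preserve-ignorable} with parameter $\delta\kappa$ (using $m\iota\ll\kappa$): $R^\pi_1(x_\ini;\{\mathcal I_h\})\le(\delta\kappa)\kappa^{-1}=\delta$. For (ii)(b) I would define $\{m_{s,h}\}$ exactly as in \Cref{lem:efficient-sample-collection} (note they depend only on $\mdp$, not on the samples, which is what keeps the downstream best-arm step replicable) and invoke that lemma --- its hypotheses $M\gg\log^2(SH/\kappa)m$ and $M\iota\ll\kappa$ hold by construction --- to get $\min_a|\mathcal D_{s,a,h}|\ge m_{s,h}$ and $\sum_h\sqrt{\sum_{s\notin\mathcal I_h}1/m_{s,h}}=O\bigl(\kappa^{-1}\sqrt{HS\log(SH/\kappa)/M}\bigr)$; taking $M$ a factor $\Theta(S\,\polylog)$ larger than in \RepExplore pushes this below $\kappa\beta/\sqrt{S}$, the tightened bound this lemma asks for. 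Finally, the episode count is $(m+M)$ \QExplore runs at $O(SAH^5\log(1/\iota)(\delta\kappa)^{-2})$ episodes each by \Cref{cor:nonreplicable-explore}; plugging in the enlarged $m,M$ repeats the bookkeeping from the proof of \Cref{lem:r-explore-collect} with an extra $\tO{SH}$ factor, i.e.\ $K=\bigtO{S^3AH^7\delta^{-2}\kappa^{-4}\beta^{-2}}$, and the running time is $\poly{K}$ since $\ProdCorrSampling$ has removed the only exponential step.

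\textbf{Expected main obstacle.} The hard part is exactly the replicability trade-off above: $\ProdCorrSampling$ is fast but its error is the sum of the $SH$ per-coordinate total-variation distances, which can be as much as $\sqrt{SH}$ times the joint distance that $\corrSampling$ controlled, and it is this gap that forces the $\Theta(SH)$ increase in $m$ and hence the jump from $S^2AH^6$ to $S^3AH^7$ episodes; checking carefully that this is the only loss (and that the collection guarantee of \Cref{lem:efficient-sample-collection} still goes through with the extra $1/\sqrt{S}$ slack demanded here) is the delicate point, while everything else is a reprise of \Cref{lem:r-explore-collect}.
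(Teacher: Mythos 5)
Your proposal is correct and follows essentially the same route as the paper's proof: the paper likewise swaps $\corrSampling$ for $\ProdCorrSampling$, enlarges $m$ to $\Theta\lp(S^2H^2\log(SH/\kappa)\kappa^{-2}\rp)$ and the collection budget $M$ accordingly, establishes replicability via per-coordinate Chernoff bounds fed into the sum-of-marginal-TV error guarantee of \Cref{lem:product-corr}, and reuses \Cref{lem:sample-preserve-ignorable} and \Cref{lem:efficient-sample-collection} verbatim for properties (ii,a) and (ii,b). The only cosmetic difference is that the paper scales $M$ by an $SH$ (rather than $S$) factor, which makes the hypothesis $M \gg \log^2(SH/\kappa)\, m$ hold automatically with the enlarged $m$; this discrepancy is absorbed by the $H^{\explorePower}$ budget in $K$ and does not affect the argument.
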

\begin{proof}
For the efficient version of \Cref{alg:rep-explore}, we will instead set 
$$
m:= \Theta\lp( S^2 H^2 \log(SH / \kappa) \kappa^{-2} \rp) \, ,
M:= 
\Theta \lp(  \log^2(SH/\kappa) m  \rp)
+ 
\Theta \lp(  S^2 H^2  \log(SH/\kappa) \kappa^{-2} \beta^{-2}  \rp) \, ,
$$
and replace the correlated sampling with its efficient counterpart for product distributions (\Cref{lem:product-corr}).
The sample complexity of the algorithm then increases by a factor of $S$, and the runtime is at most polynomial in the number of samples.

\paragraph{Property (i)}
Let $\hat \mu_{s,h}^{(1)}, \hat \mu_{s,h}^{(2)}$ be the estimates obtained in two runs of the algorithm.
By \Cref{lem:sampling-concentration} and the union bound, 
we have that $ \lp| \hat \mu_{s,h}^{(i)}- \mu^{\delta, \iota}_{s, h} \rp| \leq \sqrt{\log(SH/\kappa)/m} \leq \kappa / (SH)$
for $i \in \{1,2\}$ and all $s, h$ with probability at least $1 - 2\kappa$.
Conditioned on that, by the triangle inequality, we must have 
$ \lp| \hat \mu_{s,h}^{(1)}-  \hat \mu_{s,h}^{(2)} \rp| \leq 2 \kappa / (SH)$ for all $s, h$.
It then follows that the total variation distance between the marginal distributions of each coordinate of $\mathcal B( \hat \mu^{(1)} )$ and $\mathcal B( \hat \mu^{(2)} )$ is at most $2 \kappa / (SH)$.
By \Cref{lem:product-corr}, we thus have that the outcome $\{ \mathcal I_h \}_{h=1}^H$
is replicable with probability at least $1 - 2 \kappa$.
This concludes the proof of Property (i). 

\paragraph{Property (ii)}
The analysis of the correctness of property (ii,a) remains unchanged.
We now turn to property (ii,b).
Following the same argument as in \Cref{lem:r-explore-collect}, we have that 
\begin{align*}
\sum_{h \in [H]}
\sqrt{
    \sum_{ s: s \not \in \mathcal I_h } \frac{1}{m_{s,h}} }
    \leq O(1)  \kappa^{-1} H \sqrt{ \frac{S  \log(SH/\kappa)}{M} }.
\end{align*}
with probability at least $1 - \kappa$.
Since we now have $M \gg S^2 H^2  \log(SH/\kappa) \kappa^{-4} \beta^{-2}$, the above equation immediately implies that
$$
\sum_{h \in [H]}
\sqrt{
    \sum_{ s: s \not \in \mathcal I_h } \frac{1}{m_{s,h}} }
\leq \frac{\kappa \beta}{\sqrt{S}}.
$$
This concludes the proof of \Cref{lem:e-r-explore-collect}.
\end{proof}

This allows us to obtain an efficient version of $\RepLevelExplore$.

\begin{proposition}[Computationally Efficient Replicable and Strategic Exploration (\Cref{prop:r-explore-collect})]
    \label{prop:e-r-explore-collect}
    Let $\zeta \in (0,1)$ and $L = \ceil{\log(1/\zeta)}$.
    Then, there is an efficient algorithm $\RepLevelExplore(\mdp, \zeta)$ that consumes $K:=\tilde O(  S^3 A H^{\explorePower} \zeta^{-2} )$ episodes, 
    runs in $ \poly{K}$ time and produces
    \begin{enumerate}
        \item a $0.1$-replicable tiered reachability partition $\set{\states_{h}^{\ell}}$ of $\mdp$.
        \item a collection of datasets $\set{\dataset_{s, a, h}}$ that are $\frac{\zeta}{\sqrt{S}}$-nice (\Cref{def:good-state-dataset-comb}) with respect to $\set{\states_{h}^{\ell}}$.
    \end{enumerate}
\end{proposition}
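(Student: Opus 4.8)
The plan is to re-run the proof of \Cref{prop:r-explore-collect} essentially verbatim, replacing every invocation of \Cref{lem:r-explore-collect} with its efficient counterpart \Cref{lem:e-r-explore-collect}. Concretely, the efficient \RepLevelExplore is exactly \Cref{alg:rep-level-explore} with the one change that each call to \RepExplore uses the efficient routine of \Cref{lem:e-r-explore-collect}: we run it for $L-1 = \ceil{\log(1/\zeta)}-1$ rounds, setting in round $\ell$ the parameters $\jinRLacc := 2^{-\ell}$, $\beta := 2^{\ell}\zeta$, and $\kappa := 0.01\log^{-1}(1/\zeta)$; we then build the tiered partition $\{\states_h^{\ell}\}$ from the returned combinations $\{\mathcal I_h^{\ell}\}$ by the same peeling construction, and merge the datasets $\mathcal D_{s,a,h}^{\ell}$ into $\mathcal D_{s,a,h}$.

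First I would establish replicability. By \Cref{lem:e-r-explore-collect}(i) each state combination $\{\mathcal I_h^{\ell}\}$ is $O(\kappa) = O(0.01\log^{-1}(1/\zeta))$-replicable, so a union bound over the $L-1$ rounds makes the whole collection — and hence the partition $\{\states_h^{\ell}\}$, which is a deterministic function of these combinations — $0.1$-replicable. (The only reason \Cref{lem:e-r-explore-collect} costs an extra factor of $S$ in episodes is that, using the product correlated-sampling guarantee of \Cref{lem:product-corr}, one needs $\sum_{s,h}|\hat\mu_{s,h}^{(1)}-\hat\mu_{s,h}^{(2)}| = O(\kappa)$ rather than just a Euclidean bound; this is already internal to that lemma.) Next, conditioning on the high-probability events of \Cref{lem:e-r-explore-collect}(ii) over all $\ell$, which hold simultaneously with probability at least $0.99$ by a union bound, I would verify the two output guarantees. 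For the tiered reachability partition, the peeling construction gives $\states_h^{\ell}\subseteq\mathcal I_h^{\ell-1}$ for $\ell>1$ exactly as in the proof of \Cref{prop:r-explore-collect}; monotonicity of $R_1^{\pi}(x_\ini;\cdot)$ together with \Cref{lem:e-r-explore-collect}(ii,a) (which gives $R_1^{\pi}(x_\ini;\{\mathcal I_h^{\ell-1}\})\le 2^{-(\ell-1)}$) and \Cref{lem:relate-prob-function} yields $\sum_h\Pr[x_h\in\states_h^{\ell}\mid\pi^*]\le 2^{1-\ell}$, and the base tier is handled by the trivial bound $1\le 2^0$.

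For $\tfrac{\zeta}{\sqrt{S}}$-niceness I would check the two conditions of \Cref{def:good-state-dataset-comb}. Sample independence is immediate since each run of \RepExplore interacts with fresh episodes and \Cref{lem:e-r-explore-collect} produces independent datasets. For the variable sample lower bound, \Cref{lem:e-r-explore-collect}(ii,b) supplies integers $m_{s,h,\ell}$ with $\min_a|\mathcal D_{s,a,h}^{\ell}|\ge m_{s,h,\ell}$ and $\sum_h\sqrt{\sum_{s\notin\mathcal I_h^{\ell}}1/m_{s,h,\ell}}\le \kappa\beta/\sqrt{S} = \kappa\cdot 2^{\ell}\zeta/\sqrt{S}\le 2^{\ell}\zeta/\sqrt{S}$; since $\states_h^{\ell}\subseteq\mathcal S\setminus\mathcal I_h^{\ell}$ (and $\states_h^1=\mathcal S\setminus\mathcal I_h^1$) and the merged dataset satisfies $|\mathcal D_{s,a,h}|\ge\max_\ell|\mathcal D_{s,a,h}^{\ell}|$, restricting the sum to $s\in\states_h^{\ell}$ only shrinks it, so \eqref{eq:prop-bandit-requirement} holds with $\zeta$ replaced by $\zeta/\sqrt{S}$ — i.e.\ exactly $\tfrac{\zeta}{\sqrt{S}}$-niceness. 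For the complexity, \Cref{lem:e-r-explore-collect} uses $\tilde\Theta(S^3AH^{\explorePower}\jinRLacc^{-2}\kappa^{-4}\beta^{-2})$ episodes per round; plugging $\jinRLacc=2^{-\ell}$, $\beta=2^{\ell}\zeta$, $\kappa=\Theta(\log^{-1}(1/\zeta))$ gives $\tilde O(S^3AH^{\explorePower}\zeta^{-2})$ per round, and summing over $L-1=\tilde O(1)$ rounds preserves this up to logarithmic factors. Runtime is $\poly{K}$ because the efficient \RepExplore replaces the exponential-time correlated sampling by the product version \Cref{lem:product-corr}, which runs in time $\tilde O(SH)$ per invocation.

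I do not anticipate a real obstacle: the entire argument is a bookkeeping exercise propagating the $\sqrt{S}$ and $S$ factors through an otherwise unchanged proof. The one point that warrants care is confirming that the $\tfrac{\zeta}{\sqrt{S}}$ scaling in \Cref{lem:e-r-explore-collect}(ii,b) is precisely what is later needed so that \Cref{thm:const-repl-bandit-rl} can be applied at the coarser niceness level (the $1/\sqrt{S}$ saved in the per-tier best-arm requirement is exactly the slack that \Cref{thm:const-repl-bandit-rl} consumes), but this is forced by matching the exponents and requires no new idea.
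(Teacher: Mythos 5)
Your proposal is correct and matches the paper's own proof of \Cref{prop:e-r-explore-collect}, which likewise just swaps in \Cref{lem:e-r-explore-collect} for \Cref{lem:r-explore-collect}, keeps the replicability and peeling arguments unchanged, and notes that the stronger per-tier bound $\sum_h\sqrt{\sum_{s\notin\mathcal I_h^{\ell}}1/m_{s,h,\ell}}\le 2^{\ell}\zeta/\sqrt{S}$ immediately yields $\tfrac{\zeta}{\sqrt{S}}$-niceness at an extra factor of $S$ in episode count. Your write-up is just a more detailed version of the same bookkeeping.
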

\begin{proof}
In the efficient version of $\RepLevelExplore$, we simply replace $\RepExplore$ with its efficient version.
By \Cref{lem:e-r-explore-collect}, the total number of episodes consumed increases by a factor of $S$.
The replicability analysis stays the same.
The guarantees of \Cref{eq:level-low-reachability} and \Cref{eq:level-sample-size-lb} stay the same.
 For \Cref{eq:level-bandit-requirement}, we get the following stronger guarantee:
 $$
 \sum_{h=1}^H \sqrt{ \sum_{s \not \in \mathcal I_h^{\ell}}  \frac{1}{m_{s,h,\ell}} } \leq \frac{2^{\ell} \zeta}{\sqrt{S}}.
 $$
This in turn yields the stronger guarantee that the collection of datasets are $\frac{\zeta}{\sqrt{S}}$-nice with respect to $\set{\states_{h}^{\ell}}$:
$$
\sum_{h=1}^H 
\sqrt{
\sum_{ s \in S_h^\ell }
\frac{1}{m_{s,h,\ell}}
}
\leq
\sum_{h=1}^H 
\sqrt{
\sum_{ s \not \in \mathcal I_h^\ell }
\frac{1}{m_{s,h,\ell}}
}  \leq \frac{2^\ell \zeta}{\sqrt{S}}.
$$
The rest of the arguments are identical.
This concludes the proof of \Cref{prop:e-r-explore-collect}.
\end{proof}

Next, we would like to develop an efficient version of $\RepRLAlg$.
As a first step, we need to make an efficient version of the key $\RepVarBandit$ sub-routine.

\begin{proposition}
[Computationally Efficient Multi-Instance Best Arm (\Cref{lem:var-multi-bandit})]
    \label{lem:e-var-multi-bandit}
    Let $\delta \leq \rho \leq \frac{1}{3}$ and $\varepsilon \in (0,1)$.
    Let $\set{m_{s}}_{s = 1}^{S}$ be a sequence of numbers satisfying
    $
        \sum_{s = 1}^{S} 1/m_s \leq \frac{\rho^{2} \varepsilon^{2}}{C S (\log (3SA/\delta))^{3}},
    $
    where $C$ is some sufficiently large constant.
    Then there is a $\rho$-replicable algorithm $\RepVarBandit$ that given $\left( \set{\dataset_{s, a}}_{s \in [S], a \in [A]} \right)$ where $\dataset_{s, a}$ are datasets of random size satisfying $|\dataset_{s, a}| \geq m_{s}$ almost surely for all $s, a$, solves the $(S, A, \varepsilon)$-multi-instance best arm problem and runs in time $\poly{ \sum_{s=1}^S |\dataset_{s,a}|}$. 
\end{proposition}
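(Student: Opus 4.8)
The algorithm is \Cref{alg:RepVarBandit} with two efficiency modifications. In Line~\ref{line:multi-bandit-sample} we replace the joint correlated sampling over $[A]^S$ by the efficient product correlated sampling $\ProdCorrSampling$ of \Cref{lem:product-corr} applied to $\hat{\distribution} = (\hat{\distribution}_1,\dots,\hat{\distribution}_S)$, and in Line~\ref{line:multi-bandit-round} we replace the exponential-time rounding $\Round$ by a coordinate-wise randomized rounding with accuracy $\eps/2$ (on each coordinate, round $\tilde r_s$ to the nearest point of a uniformly shifted grid of width $\eps$ using the shared randomness $\xi$: this is within $\eps/2$ of $\tilde r_s$ almost surely, and two inputs at distance $\eta\le\eps$ round differently with probability $\le\eta/\eps$; this is just the one-dimensional, rotation-free case of \Cref{lem:randomized-rounding}). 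All remaining steps are unchanged; $\ProdCorrSampling$ runs in time $\tilde O(SA)$ by \Cref{lem:product-corr}, the rounding in time $O(S)$, and everything else is polynomial in $\sum_{s,a}|\dataset_{s,a}|$, giving the claimed runtime. \emph{Correctness of the selected arms} is verbatim the corresponding part of the proof of \Cref{lem:var-multi-bandit}: conditioning on $|\hat r_{s,a}-r_{s,a}|\le\sqrt{\log(3SA/\delta)/m_s}$ for all $s,a$ (probability $\ge 1-\delta/3$ by Chernoff), for each $s$ the exponential mechanism with scale $\expMechScoreScale=2\log(3SA/\delta)/\eps$ picks an arm of expected utility below $\max_a r_{s,a}-\eps$ with probability $\le\delta/(3SA)$, and the marginal correctness of $\ProdCorrSampling$ ensures the per-coordinate output law is still exactly $\hat{\distribution}_s$, so a union bound gives an $\eps$-optimal arm in every instance with probability $\ge 1-\delta/3$.

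\textbf{Replicability of arm selection.} This is where the extra factor $S$ in the hypothesis is consumed. By \Cref{lem:product-corr}, two runs disagree on the selected arms with probability at most $2\sum_{s=1}^S \tvd{\hat{\distribution}_s^{(1)},\hat{\distribution}_s^{(2)}}$. Exactly as in the derivation of \Cref{eq:chi-bound} (using $|\hat r_{s,a}^{(1)}-\hat r_{s,a}^{(2)}|\le 2\sqrt{\log(3SA/\delta)/m_s}$ and $2\,\tvd{\cdot}^2\le\chi^2(\cdot)$), each term satisfies $\tvd{\hat{\distribution}_s^{(1)},\hat{\distribution}_s^{(2)}}^2 \le O\bigl((\log(3SA/\delta))^3/(\eps^2 m_s)\bigr)$. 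Cauchy--Schwarz then gives
\begin{align*}
\sum_{s=1}^S \tvd{\hat{\distribution}_s^{(1)},\hat{\distribution}_s^{(2)}}
&\le \frac{O\bigl((\log(3SA/\delta))^{3/2}\bigr)}{\eps}\sum_{s=1}^S \sqrt{1/m_s}
\;\le\; \frac{O\bigl((\log(3SA/\delta))^{3/2}\bigr)}{\eps}\sqrt{S\,\textstyle\sum_{s=1}^S 1/m_s} \\
&\le\; O(\rho)/\sqrt{C} \;\le\; \rho/9
\end{align*}
for $C$ large, where the last line plugs in the hypothesis $\sum_s 1/m_s\le \rho^2\eps^2/(CS(\log(3SA/\delta))^3)$. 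The extra $S$ in the denominator of the hypothesis is precisely what absorbs the loss in passing from $\sum_s\sqrt{1/m_s}$ to $\sqrt{S\sum_s 1/m_s}$ (i.e.\ from the joint bound $\tvd{}^2\le\sum_s\chi^2$ used in the exponential-time version to the product bound $\tvd{}\le\sum_s\tvd{}$).

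\textbf{Accuracy and replicability of $\bar r$, and conclusion.} Under the same conditioning, $\|\tilde r-r^*\|_2^2\le\sum_s\log(3SA/\delta)/m_s\le \rho^2\eps^2/(CS(\log(3SA/\delta))^2)$, so $\|\tilde r-r^*\|_2\ll\eps\rho/\sqrt S$; coordinate-wise rounding with accuracy $\eps/2$ then gives $\|\bar r-\tilde r\|_\infty\le\eps/2$ and hence $\|\bar r-r^*\|_\infty\le\eps$. For replicability, conditioned on the selected arms agreeing across the two runs the vector $r^*$ is identical, so $\|\tilde r^{(1)}-\tilde r^{(2)}\|_2\le 2\|\tilde r-r^*\|_2\ll\eps\rho/\sqrt S$, whence $\|\tilde r^{(1)}-\tilde r^{(2)}\|_1\le\sqrt S\,\|\tilde r^{(1)}-\tilde r^{(2)}\|_2\ll\eps\rho$; a union bound over the $S$ coordinates (coordinate $s$ rounds differently with probability $\le|\tilde r_s^{(1)}-\tilde r_s^{(2)}|/\eps$) shows the rounded vectors agree with probability $\ge 1-\rho/9$. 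Combining everything by a union bound (conditioning failure $\le\delta/3\le\rho/3$, arm-selection disagreement $\le\rho/3$, rounding disagreement $\le\rho/3$) yields an $\eps$-optimal solution with probability $\ge1-\delta$ and $\rho$-replicability after a constant rescaling of $\rho$.

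\textbf{Main obstacle.} The only genuinely new accounting relative to \Cref{lem:var-multi-bandit} is verifying that a single extra factor of $S$ in the sample condition simultaneously suffices for (a) the $\sqrt{S\sum 1/m_s}$ loss from moving to product correlated sampling and (b) the $\|\cdot\|_1\le\sqrt S\|\cdot\|_2$ loss from replacing the rotation-based $\ell_\infty$-rounding of \cite{hopkins2024replicability} by plain coordinate-wise rounding. Both require exactly this same factor $S$, so no further blowup is needed; beyond tracking these two Cauchy--Schwarz steps the argument is routine.
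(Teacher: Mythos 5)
Your proposal is correct and follows essentially the same route as the paper's proof: swap the joint correlated sampler for the product correlated sampler of \Cref{lem:product-corr}, replace the high-dimensional rounding by coordinate-wise randomized rounding with a shared random shift, and spend the extra factor of $S$ in the hypothesis through the same two Cauchy--Schwarz steps (one for $\sum_s \tvd{\hat{\distribution}_s^{(1)},\hat{\distribution}_s^{(2)}}$, one for the per-coordinate rounding disagreement, which the paper writes as $\sum_s O(1)\eps^{-1}\sqrt{\log(3SA/\delta)/m_s}$ and you write equivalently via $\|\cdot\|_1\le\sqrt S\|\cdot\|_2$). The only differences are cosmetic (grid spacing $\eps$ vs.\ $\eps/2$ and slightly different constants), so no further comment is needed.
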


\begin{proof}
    Note that the upper bound on $\sum_{s = 1}^{S} (1/m_{s})$ assumed in the lemma is smaller than the one assumed in \Cref{lem:var-multi-bandit} by an $S$ factor.
    Under this stronger assumption, we show that we can construct an efficient version of \Cref{alg:RepVarBandit} that runs in time $\poly{\sum_{s,a} |\dataset_{s, a}|}$.
    In particular, we will (1) replace the correlated sampling procedure (Line~\ref{line:multi-bandit-sample}) with its efficient version for product distributions stated in \Cref{lemma:prod-corr-sampling}, and (2) run the randomized rounding procedure on each coordinate of $\tilde r$ independently with accuracy parameter $\eps/2$ (Line~\ref{line:multi-bandit-round}).
    Formally speaking, we sample for each of the $S$ instances a random shift $\alpha_{s} \in [0, \varepsilon]$ and round $\tilde{r}_{s}$ to $\overline{r}_{s}$, the nearest $\alpha_{s} + k \frac{\varepsilon}{2}$ for $k \in \Z$.
    
    The distribution $\hat {\mathbf p}$ is clearly a product distribution.
    So the correctness argument of the arms sampled remains valid.
    Following the computation in \Cref{eq:chi-bound}, we still have that
    \begin{align*}
        \chi^2\lp(  \hat {\mathbf p}_s^{(1)} ||  \hat {\mathbf p}_s^{(2)} \rp)
        \leq \frac{256 \left(  \log (3SA/\delta)\right)^{3}}{ \eps^2 m_s } \, ,
    \end{align*}
    which implies that 
    \begin{align*}
    \sum_{s} \dtv\lp( 
    \hat {\mathbf p}_s^{(1)} ||  \hat {\mathbf p}_s^{(2)}
    \rp)
    &\leq 
    \frac{16 \left(  \log (3SA/\delta)\right)^{1.5}}{ \eps }
    \sum_{s} \sqrt{\frac{1}{m_{s}}} \tag{\Cref{lemma:divergence-relationships} and summing over $s$}\\
    &\leq 
    \frac{16 \left(  \log (3SA/\delta)\right)^{1.5}}{ \eps }
    \sqrt{S} \; \sqrt{\sum_s \frac{1}{m_{s}}}  \tag{Cauchy's Inequality }\\
    &\leq \frac{\rho}{3} \, ,
    \end{align*}
    where the last inequality follows from the assumption that $\sum_{s} (1/m_s) \leq \rho^2 \eps^2 / (C S \left(  \log (3SA/\delta)\right)^{3})$ for some sufficiently large constant $C$.
    Therefore, the guarantees of \Cref{lemma:prod-corr-sampling} show that the arms selected are still $\rho/3$-replicable.
    Lastly, we analyze the coordinate-wise rounding procedure.
    Note that since we have only increased the dataset sizes, we must still have 
    $ \lp| \tilde r_s - r^*_s \rp| \ll \eps$.
    Since we run the rounding procedure with accuracy $\eps/2$, it follows that $ \lp| \tilde r_s - \bar r_s \rp| \leq \eps/2$.
    The correctness of $\bar r$ then follows from the triangle inequality.
    We then turn to the replicability of $\bar r$.
    Denote by $\tilde r_s^{(1)}, \tilde r_s^{(2)}$ the pre-rounding estimates in two runs.
    We now have that
    \begin{align}
        \label{eq:stronger-concentration}
        \lp| \tilde r_s^{(1)} - \tilde r_s^{(2)} \rp|
        \leq \sqrt{ \frac{\log(3SA/\delta)}{m_s} }.      
    \end{align}
    
    Since we sample the same shift $\alpha_{s}$ using shared randomness, the estimates $\tilde{r}_{s}$ are rounded to different $\overline{r}_{s}$ when there is some $k$ such that $\alpha_{s} + (k + \frac{1}{2}) \frac{\varepsilon}{2}$ lies in $r_{s, a_{s}} \pm \sqrt{ \log(3SA/\delta) / m_s}$.
    By the union bound, the probability that some $\bar r_s$ is non-replicable is at most
    \begin{align*}
    \sum_s \frac{O(1)}{\varepsilon} \sqrt{\frac{\log(3SA/\delta)}{m_{s}}}
    \leq 
    O(1) \sqrt{S}
    \sqrt{ \sum_s \frac{\log(3SA/\delta)}{\eps^2 m_{s}} }
    \ll \rho \, ,
    \end{align*}
    where the first inequality uses Cauchy's inequality, and the second inequality follows from the assumed upper bound on $\sum_s (1/m_s)$.
    This concludes the proof of \Cref{lem:e-var-multi-bandit}.    
\end{proof}

We now give an efficient version of $\RepRLAlg$.

\begin{theorem}[Computationally Efficient Offline Policy Estimator]
    \label{thm:e-const-repl-bandit-rl}
    Fix some MDP $\mdp$.
    Let $\varepsilon > 0$ and $\zeta \ll \varepsilon/(H^{2}(\log(SAH\eps^{-1} \delta^{-1})^{5})$ and $L = \ceil{\log(1/\zeta}$.
    Let $\set{\states_{h}^{\ell}}_{h \in [H], \ell \in [L]}$ be a reachability state partition of $\mdp$ and $\set{\dataset_{s, a, h}}$ be a collection of random datasets that are $\zeta/\sqrt{S}$-nice with respect to $\set{\states_{h}^{\ell}}_{h \in [H], \ell \in [L]}$.
    There is a computationally efficient version of $\RepRLAlg$ that is $0.01$-replicable\footnote{We again abuse notation and only show that $\RepRLAlg$ is replicable when $\set{\dataset_{s, a, h}}$ are promised to be $\zeta$-nice,
    where $\zeta$ is some sufficiently small constant multiples of $\frac{\varepsilon}{H^2 \log^5\lp(SAH / \eps\rp)}$
    . } that outputs an $\varepsilon$-optimal policy on $\mdp$ with probability $1 - \delta$.
    Moreover, the algorithm runs in time $\poly{\sum_{s, a, h}|\dataset_{s, a, h} |+1}$.
\end{theorem}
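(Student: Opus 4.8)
The plan is to instantiate $\RepRLAlg$ (\Cref{alg:rep-rl-bandit}) essentially verbatim, replacing every invocation of the exponential-time multi-instance best-arm subroutine $\RepVarBandit$ in Line~\ref{line:run-bandit} with its computationally efficient counterpart from \Cref{lem:e-var-multi-bandit}. Recall that the efficient subroutine swaps correlated sampling over $[A]^S$ for the efficient product correlated sampling of \Cref{lem:product-corr} and replaces the $\ell_\infty$ randomized rounding with independent coordinate-wise rounding; the cost is a stronger requirement on the per-instance sample sizes, namely $\sum_s 1/m_s \ll \rho^2\eps^2/(C S\log^3(SA/\delta))$, an extra factor of $S$ relative to \Cref{lem:var-multi-bandit}. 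This extra $S$ is exactly absorbed by the hypothesis that $\set{\dataset_{s, a, h}}$ is $\zeta/\sqrt S$-nice rather than $\zeta$-nice: squaring the per-step summand in \Cref{eq:prop-bandit-requirement} within a tier yields $\sum_{s\in\states_h^\ell}1/m_{s,h,\ell}\le 2^{2\ell}\zeta^2/S$, so the needed $1/S$ appears precisely where the efficient subroutine demands it.

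Given this, correctness requires no new argument. The efficient $\RepVarBandit$ solves the $(|\states_h^\ell|,A,2^\ell\eps/(8HL))$-multi-instance best-arm problem with the same accuracy guarantees as the inefficient one, so the truncated-MDP analysis of \Cref{lem:r-one-step-bandit-loss-induct,lem:rl-reverse-induct-one-step,lem:repl-rl-correctness} applies word for word and produces an $\eps$-optimal policy with probability at least $1-\delta$.

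For replicability I would retrace the proof of \Cref{lem:r-RL-repl}, the only change being that the per-tier, per-step replicability parameter is now defined with the efficient subroutine in mind, carrying an extra $\sqrt S$: set $\rho_{h,\ell}:=\frac{C\sqrt S\,H\log^{3/2}(SAHL/\delta)}{2^\ell\eps/(HL)}\sqrt{\sum_{s\in\states_h^\ell}1/m_{s,h,\ell}}$ for a sufficiently large constant $C$, so that the efficient bandit's sample requirement is met with replicability $\rho_{h,\ell}$. Plugging in the $\zeta/\sqrt S$-niceness bound $\sum_h\sqrt{\sum_{s\in\states_h^\ell}1/m_{s,h,\ell}}\le 2^\ell\zeta/\sqrt S$, the $\sqrt S$ cancels and one obtains $\sum_h\rho_{h,\ell}=O\!\left(H^2L\log^{3/2}(SAHL/\delta)\,\zeta/\eps\right)$, hence $\sum_{h,\ell}\rho_{h,\ell}=O\!\left(H^2L^2\log^{3/2}(SAHL/\delta)\,\zeta/\eps\right)\le 0.01$ under the hypothesis $\zeta\ll\eps/(H^2\log^5(SAH\eps^{-1}\delta^{-1}))$ (since then $L=O(\log(1/\zeta))=O(\log(H/\eps)+\log\log(SAH\eps^{-1}\delta^{-1}))$), exactly mirroring \Cref{claim:zeta-implied-sample-lb}. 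A union bound over the $O(HL)$ calls to the efficient $\RepVarBandit$ --- each $\rho_{h,\ell}$-replicable by \Cref{lem:e-var-multi-bandit} thanks to product correlated sampling and coordinate-wise rounding --- together with an induction over $h$ (conditioning, as in \Cref{lem:r-RL-repl}, on the previous-step estimates $\{\bar r_s^{(h+1)}\}$ being identical across the two runs, so that the rebuilt datasets $\bdataset_{s, a}$ have identical conditional distributions) then gives $0.01$-replicability.

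Finally, the efficient $\RepVarBandit$ runs in time $\poly{\sum_{s,a}|\dataset_{s, a}|}$, is invoked $O(HL)$ times, and every other step of \Cref{alg:rep-rl-bandit} is polynomial in the total dataset size, so the algorithm runs in time $\poly{\sum_{s,a,h}|\dataset_{s, a, h}|+1}$. The one place any real care is needed is checking that the replicability budget still closes under the weaker ($1/\sqrt S$-scaled) niceness promise; the correctness and runtime parts are a direct transcription of the proof of \Cref{thm:const-repl-bandit-rl}.
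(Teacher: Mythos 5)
Your proposal is correct and follows essentially the same route as the paper's own proof: replace each call to $\RepVarBandit$ with its efficient counterpart (\Cref{lem:e-var-multi-bandit}), use the $\zeta/\sqrt{S}$-niceness to absorb the extra factor of $S$ in its sample requirement, define the per-call replicability $\rho_{h,\ell}$ with the additional $\sqrt{S}$ so that $\sum_{h,\ell}\rho_{h,\ell}\le 0.01$, and note that correctness and runtime carry over verbatim from \Cref{thm:const-repl-bandit-rl}. Nothing further is needed.
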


\begin{proof}
    To obtain a computationally efficient algorithm, we replace each invocation of $\RepVarBandit$ with its efficient counterpart (\Cref{lem:e-var-multi-bandit}).
    Examining the proof of \Cref{thm:const-repl-bandit-rl}, we note that $\RepRLAlg$ replicably produces an $\varepsilon$-optimal policy as long as each invocation of $\RepVarBandit$ is replicable and correct.
    Consider one invocation for fixed $h \in [H]$ and $\ell \in [L]$.
    By the guarantee that the datasets $\set{\dataset_{s, a, h}}$ are $\zeta/\sqrt{S}$-nice (\Cref{def:good-state-dataset-comb}), we note that $\min_{a} |\dataset_{s, a, h}| \geq m_{s, h, \ell}$ and 
    \begin{equation*}
        \sum_{h \in [H]} \sqrt{\sum_{s \in \states_{h}^{\ell}} \frac{1}{m_{s, h, \ell}}} \leq \frac{2^{\ell} \zeta}{\sqrt{S}} \ll \frac{2^{\ell} \varepsilon}{\sqrt{S} H^{2} \log^{5}(SAH\zeta^{-1} \delta^{-1})}.
    \end{equation*}
    Following similar arguments as \Cref{thm:const-repl-bandit-rl}, if we define 
    \begin{equation*}
        \rho_{h, \ell} := \bigTh{\frac{\sqrt{S} H \log^{3/2}(SAH\log(1/\zeta)\delta^{-1})}{\frac{2^{\ell} \varepsilon}{H \log(1/\zeta)}} \sqrt{\sum_{s \in \states_{h}^{\ell}} \frac{1}{m_{s, h, \ell}}}}
    \end{equation*}
    each invocation of \Cref{lem:e-var-multi-bandit} is $\rho_{h, \ell}$-replicable and $\frac{\delta}{HL}$-correct.
    By a union bound, the output policy is not replicable with probability at most
    \begin{align*}
        \sum_{h, \ell} \rho_{h, \ell} &= \bigO{\frac{\sqrt{S} H \log^{3/2}(SAH\log(1/\zeta)\delta^{-1})}{\frac{2^{\ell} \varepsilon}{H \log(1/\zeta)}}} \sum_{h, \ell} \sqrt{\sum_{s \in \states_{h}^{\ell}} \frac{1}{m_{s, h, \ell}}} \\
        &= \bigO{\frac{\sqrt{S} H^{2} \log^{4}(SAH\zeta^{-1}\delta^{-1})}{2^{\ell} \varepsilon}} \max_{\ell} \sum_{h} \sqrt{\sum_{s \in \states_{h}^{\ell}} \frac{1}{m_{s, h, \ell}}} \\
        &\ll 0.01.
    \end{align*}
    where in the first line we sum over all $\rho_{h, \ell}$, in the second we union bound over at most $L = \log(1/\zeta)$ summands $\ell$ (and rearrange terms), and in the third observe that the datasets are $\frac{\zeta}{\sqrt{S}}$-nice.
    Thus, the efficient version of $\RepRLAlg$ is $0.01$-replicable and correct by a union bound.
\end{proof}

Finally, we conclude with a replicable efficient algorithm for obtaining a PAC Policy Estimator.

\begin{theorem}[Computationally Efficient Replicable PAC Policy Estimator]
    \label{thm:e-repl-reinforcement-learning-alg}
    There is a $\rho$-replicable $(\varepsilon, \delta)$-PAC Policy Estimator with sample complexity
    \begin{equation*}
        n(S, A, H, \varepsilon, \delta) \leq \bigtO{\frac{S^3 A H^{\episodePower} \log^{4}(1/\delta)}{\rho^2 \varepsilon^2}}
    \end{equation*}
    and run-time polynomial in sample complexity.
\end{theorem}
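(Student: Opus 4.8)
\textbf{Proof plan for \Cref{thm:e-repl-reinforcement-learning-alg}.} The plan is to mirror exactly the proof of \Cref{thm:formal-repl-reinforcement-learning-alg}, but substituting in the computationally efficient subroutines developed in this appendix. First I would fix $\varepsilon \in (0,1)$ and set $\zeta$ to a number satisfying $\zeta \ll \varepsilon / (H^2 \log^5(SAH/\varepsilon))$, exactly as required by \Cref{thm:e-const-repl-bandit-rl}. Then I would run the efficient exploration algorithm \RepLevelExplore of \Cref{prop:e-r-explore-collect} with parameter $\zeta$ on the MDP $\mdp$; by that proposition this consumes $\tilde O(S^3 A H^{\explorePower} \zeta^{-2})$ episodes, runs in time polynomial in the number of episodes, and produces a $0.1$-replicable tiered reachability partition $\{\states_h^\ell\}$ together with a collection of datasets $\{\dataset_{s,a,h}\}$ that are $\zeta/\sqrt{S}$-nice with respect to the partition with high constant probability. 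Feeding these into the efficient offline estimator $\RepRLAlg$ of \Cref{thm:e-const-repl-bandit-rl} (which accepts exactly $\zeta/\sqrt{S}$-nice datasets) then produces a $\varepsilon/2$-optimal policy with high constant probability, replicably (i.e.\ $0.01$-replicably), in time $\poly{\sum_{s,a,h} |\dataset_{s,a,h}| + 1}$.

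The correctness and replicability analysis is a routine union bound identical to the one in \Cref{thm:formal-repl-reinforcement-learning-alg}: if both \RepLevelExplore and \RepRLAlg succeed (high constant probability) the output is $\varepsilon/2$-optimal, hence $\varepsilon$-optimal; and over two runs, \RepLevelExplore outputs the same partition with probability $0.99$ and ($\zeta/\sqrt{S}$)-nice datasets in each run, conditioned on which \RepRLAlg outputs the same policy with probability $0.99$ by \Cref{thm:e-const-repl-bandit-rl}. This yields a $0.1$-replicable $(\varepsilon, 0.1)$-PAC policy estimator. I would then invoke the boosting lemma \Cref{lem:repl-boosting} to amplify to a $\rho$-replicable $(\varepsilon,\delta)$-PAC policy estimator, which multiplies the sample complexity by $\tilde O(\log^4(1/\delta)/\rho^2)$ and adds a lower-order $\tilde O(SAH\log^4(1/\delta)/(\varepsilon^2\rho^2))$ term. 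Substituting $\zeta = \Theta(\varepsilon / (H^2 \polylog(SAH/\varepsilon)))$ into the $\tilde O(S^3 A H^{\explorePower}\zeta^{-2})$ bound and multiplying through gives the claimed $\tilde O(S^3 A H^{\episodePower}\log^4(1/\delta)/(\rho^2\varepsilon^2))$ sample complexity. The runtime is polynomial in sample complexity since every efficient subroutine invoked — \RepLevelExplore (\Cref{prop:e-r-explore-collect}), \RepRLAlg (\Cref{thm:e-const-repl-bandit-rl}), and the boosting machinery of \Cref{lem:repl-boosting} which calls only \RepHeavy and \Cref{thm:single-bandit} — runs in time polynomial in its own sample complexity.

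There is essentially no hard step here: the theorem is a bookkeeping corollary assembled from \Cref{prop:e-r-explore-collect}, \Cref{thm:e-const-repl-bandit-rl}, and \Cref{lem:repl-boosting}. The one point requiring a tiny bit of care is checking that the niceness levels match up — \RepLevelExplore produces $\zeta/\sqrt{S}$-nice datasets while \RepRLAlg as restated in \Cref{thm:e-const-repl-bandit-rl} is exactly calibrated to consume $\zeta/\sqrt{S}$-nice datasets, so the $\sqrt{S}$ factors cancel and the constraint $\zeta \ll \varepsilon/(H^2\polylog(SAH/\varepsilon\delta))$ is the only thing one needs to verify is satisfiable (it is, with $\zeta = c\varepsilon/(H^2\polylog)$ for small constant $c$). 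I would also note, as in the parallel-sampling proof, that boosting's additive lower-order term is dominated by the $S^3AH^{\episodePower}$ main term, so the final bound absorbs it cleanly.
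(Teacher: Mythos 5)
Your proposal is correct and matches the paper's own proof essentially step for step: the paper likewise assembles the efficient estimator by running the efficient \RepLevelExplore (\Cref{prop:e-r-explore-collect}) with $\zeta = \Theta(\varepsilon/(H^2\polylog))$, feeding its $\zeta/\sqrt{S}$-nice datasets into the efficient \RepRLAlg (\Cref{thm:e-const-repl-bandit-rl}), and then boosting the resulting constant-replicability, constant-confidence estimator via \Cref{lem:repl-boosting}. The only deviations are immaterial constant-level bookkeeping (e.g.\ $\varepsilon/2$ vs.\ $\varepsilon$ for the base estimator and the exact constant replicability parameter), which the paper treats equally loosely.
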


\begin{proof}
    As in \Cref{thm:repl-reinforcement-learning-alg}, it suffices to give a computationally efficient $0.3$-replicable $(\varepsilon, 0.1)$-PAC policy estimator.
    Since the boosting lemma (\Cref{lem:repl-boosting}) is efficient, so is the final estimator.
    
    The procedure to obtain an efficient policy estimator is essentially identical to \Cref{thm:repl-reinforcement-learning-alg}, except that we invoke the efficient versions of $\RepLevelExplore$ (\Cref{prop:e-r-explore-collect}) and $\RepRLAlg$ (\Cref{thm:e-const-repl-bandit-rl}).
    Set
    \begin{equation*}
        \zeta \ll \frac{\varepsilon}{H^2 \log^5\lp(SAH\zeta^{-1}\rp)}
    \end{equation*}
    as required by \Cref{thm:e-const-repl-bandit-rl}.
    By \Cref{prop:e-r-explore-collect},
    running $\RepLevelExplore$ with parameter $\zeta$ on the MDP $\mdp$
    produces a tiered reachability partition 
    $\{ \states_h^\ell \}_{h \in [H], \ell \in [L]}$, where $L = \ceil{\log(1/\zeta)}$, and 
    a collection of datasets $\set{\dataset_{s, a, h}}$ that are $\zeta/\sqrt{S}$-nice (\Cref{def:good-state-dataset-comb}) with respect to $\{ \states_h^\ell \}$.
    Then, we apply \Cref{thm:e-const-repl-bandit-rl} to obtain an $\varepsilon$-optimal policy $\hat{\pi}$.
    The analysis of correctness and replicability follows identically.
    
    Finally, we bound the sample complexity of our algorithm.
    From \Cref{prop:e-r-explore-collect}, our algorithm has sample complexity $\bigtO{S^3 A H^{\explorePower} \zeta^{-2}}$.
    Note that our constraint on $\zeta$ is satisfied by $\zeta \leq \frac{\varepsilon}{C H^{2} \polylog(SAH\varepsilon^{-1})}$ for some sufficiently large constant $C > 0$.
    In particular, the sample complexity can be bounded as $\bigtO{\frac{S^{3} A H^{\episodePower}}{\varepsilon^{2}}}$.
    By applying \Cref{lem:repl-boosting}, we obtain a $\rho$-replicable PAC policy estimator at the cost of an additional $\rho^{-2} \log^4(1/\delta)$ factor in sample complexity.
\end{proof}

\section{A Lower Bound for Sign-One-Way Marginals}
\label{app:sign-one-way-lb}

Recall in the sign-one-way marginals problem we are given sample access to a product of Rademacher distributions over $\{\pm 1\}^{n}$ with mean $p=(p_1,\ldots,p_{n})$ and would like to compute $v \in [-1,1]^{n}$ satisfying $\frac{1}{n}\sum\limits_{i=1}^n v_jp_j \geq \frac{1}{n}\sum\limits_{i=1}^n |p_j| - \varepsilon$.
In this section, we present our new near-tight lower bound for sign-one-way marginals.

\SignOneWayLB*
The proof proceeds via a chain of reductions.
We begin with a self-reduction to a
constrained version of the problem which forces the algorithm to output some $v \in \{\pm 1\}^n$ rather than $v \in [-1, 1]^n$.
\begin{lemma}
    \label{lemma:sign-one-way-self-reduction}
    Suppose there is a $\rho$-replicable algorithm that is $(\eps, \delta)$-accurate for sign-one-way marginals with $m$ samples.
    Then, there is a $\rho$-replicable algorithm that is $(2 \eps, \delta)$-accurate for sign-one-way marginals with $m$ samples that always outputs $v \in \set{\pm 1}^n$.
\end{lemma}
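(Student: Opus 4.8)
The plan is to run the given $\rho$-replicable $(\eps,\delta)$-accurate algorithm $\mathcal{A}$ to obtain a fractional solution $v\in[-1,1]^n$, and then apply a \emph{deterministic} coordinate-wise rounding, $w_i := \sgn(v_i)$, fixing once and for all a convention $\sgn(0)=+1$. Because this rounding is just a deterministic post-processing of $\mathcal{A}$'s output, the composed algorithm still uses only $m$ samples, is $\rho$-replicable (whenever $\mathcal{A}$ returns the same $v$ over two runs under shared randomness, so does $w=\sgn(v)$), fails only when $\mathcal{A}$ does (probability at most $\delta$), and by construction outputs a vector in $\{\pm1\}^n$. So the entire content of the lemma is to check that rounding loses at most a factor $2$ in accuracy.

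For this I would use the second, equivalent formulation of accuracy in \Cref{def:sign-one-way}. Write $a_i := (\sgn(p_i)-v_i)p_i = |p_i|\,(1-v_i\sgn(p_i))$; since $v_i\in[-1,1]$ we have $a_i\in[0,2|p_i|]$, and $\eps$-accuracy of $v$ says exactly $\sum_i a_i < \eps n$. For the rounded vector put $b_i := (\sgn(p_i)-w_i)p_i$; as $w_i,\sgn(p_i)\in\{\pm1\}$ this lies in $\{0,2|p_i|\}$. The key step is the pointwise inequality $b_i \le 2a_i$ for every $i$: if $b_i=0$ it is trivial, and if $b_i\ne 0$ then $p_i\ne 0$ and $w_i=\sgn(v_i)\ne\sgn(p_i)$, which forces $v_i\sgn(p_i)\le 0$, hence $a_i=|p_i|(1-v_i\sgn(p_i))\ge |p_i| = b_i/2$. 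Summing over $i$ gives $\sum_i b_i \le 2\sum_i a_i < 2\eps n$, i.e. $w$ is a $2\eps$-accurate solution whenever $v$ is $\eps$-accurate. The edge cases ($p_i=0$, or $v_i=0$) are absorbed by the convention and also satisfy $b_i\le 2a_i$.

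The only real subtlety — the "hard part", such as it is — is recognizing that a deterministic rounding suffices and gives a coordinate-wise factor-$2$ guarantee. The naive alternative, independent randomized rounding with $\mathbb{E}[w_i\sgn(p_i)]=v_i\sgn(p_i)$, only preserves the objective in expectation and would need a Markov/Chebyshev argument, blowing up the failure probability beyond $\delta$ when $\eps$ is small (and also raising replicability questions). The deterministic rounding $\sgn(v)$ sidesteps both issues entirely. Assembling the pieces (sample complexity $m$, $\rho$-replicability, $(2\eps,\delta)$-accuracy, output in $\{\pm1\}^n$) proves the lemma.
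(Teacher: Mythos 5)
Your proof is correct and follows essentially the same route as the paper: output $\sgn(v)$ as a deterministic post-processing (preserving sample complexity, replicability, and failure probability) and establish the coordinate-wise inequality $(\sgn(p_i)-\sgn(v_i))p_i \le 2(\sgn(p_i)-v_i)p_i$ by the same two-case analysis. Your handling of the $p_i=0$ and $v_i=0$ edge cases via the $\sgn(0)=+1$ convention is a minor extra carefulness, but the argument is the paper's.
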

\begin{proof}
    Suppose $v \in [-1, +1]^{n}$ is $\eps$-accurate for $\RadD{p}$. We argue $\sgn(v)$ is automatically $2\varepsilon$-accurate. Thus simply outputting $\sgn(\innerAlg)$ of any $\rho$-replicable, $(\eps, \delta)$-accurate algorithm $\innerAlg$ for sign-one-way marginals gives the desired reduction.

    Toward this end, note by definition is enough to prove that for every $i \in [n]$
    \[
    (\sgn(p_{i}) - \sgn(v_{i})) p_{i} \leq 2(\sgn(p_{i}) - v_{i}) p_{i}.
    \]
    We break into two cases for analysis. First, if $\sgn(v_{i}) = \sgn(p_{i})$, observe we have $0 = (\sgn(p_{i}) - \sgn(v_{i})) p_{i} < (\sgn(p_{i}) - v_{i}) p_{i}$.
    On the other hand, if $\sgn(v_i) \neq \sgn(p_{i})$, we have $(\sgn(p_{i}) - v_{i}) p_{i} \geq |p_{i}|$ while $(\sgn(p_{i}) - \sgn(v_{i})) p_{i} \leq 2 |p_{i}|$. Combining these gives $(\sgn(p_{i}) - \sgn(v_{i})) p_{i} \leq 2(\sgn(p_{i}) - v_{i}) p_{i}$ as desired and completes the proof.
\end{proof}

Next, we show that this special class of sign-one-way marginals algorithms can be used to solve the following $\ell_{\infty}$-testing problem for product Rademacher distributions. 

\begin{definition}[$\ell_{\infty}$-Testing]
    \label{def:ell-inf-testing}
    Let $p \in [-1, +1]^{n}$.
    We say a vector $v \in \set{\pm 1}^{n}$ is an $\eps$-accurate solution to the 
    $n$-dimensional $\ell_{\infty}$-testing problem with respect to $\RadD{p}$ if $\sgn(p_i) = v_i$ for all $p_i \geq \eps$ or equivalently 
    \begin{align}
    \label{eq:equivalent-def-l-infty}
    \max_{i \in [n]} (\sgn(p_i) - v_i) p_i < 2\eps.        
    \end{align}
    
    Furthermore, we say an algorithm is $(\eps, \delta)$-accurate 
    for the $n$-dimensional $\ell_\infty$-testing problem
    if given sample access to an arbitrary $n$-dimensional
    distribution $\RadD{p}$, the algorithm outputs an $\eps$-accurate solution 
    $\ell_{\infty}$-testing problem with respect to $\RadD{p}$
    with probability at least $1 - \delta$.
\end{definition}
Fix some product Rademacher distribution $\RadD{p}$.
From \Cref{eq:equivalent-sign-one-way-marginals,eq:equivalent-def-l-infty}, it is easy to see that $v$ is $\eps$-accurate to the sign-one-way marginals problem if it is $\eps$-accurate to the $\ell_{\infty}$ testing problem.
Even though the other direction is not in general true, we show certain hard instances of $\ell_\infty$ testing can be reduced to sign-one-way marginals, therefore allowing us to show lower bounds against the latter.

Formally, we say $\innerAlg$ is a $\rho$-replicable, $(\eps, \delta)$-accurate algorithm for the $\ell_{\infty}$-testing problem
restricted to the cube $[-\eps, \eps]^{n}$ if:
\begin{enumerate}
    \item (Replicability) $\innerAlg$ is $\rho$-replicable under all distributions $\RadD{q}$ where $q \in [-\eps, \eps]^{n}$.
    \item (Correctness) For all $q \in [-\eps, \eps]^{n}$, $\innerAlg$ outputs an $\eps$-accurate solution to the $\ell_{\infty}$-testing problem over $\RadD{q}$ with probability at least $1 - \delta$ when given \iid sample access to $\RadD{q}$.
\end{enumerate}

Note that being $\eps$-accurate is a trivial constraint whenever $q$ is in the interior of $[-\eps, \eps]^{n}$.
In particular, regardless of the output $v_i$, we always have $(\sgn(p_i) - v_i) p_i \leq 2 |p_i| < 2 \eps$ for all $i$.

\newcommand{\errorFactor}{10}
\begin{lemma}
    \label{lemma:sign-one-way-to-ell-inf}
    Let $\delta < 0.01$.
    Suppose there is a $\rho$-replicable algorithm that 
    is $(\eps, 0.01)$-accurate for sign-one-way marginals with $m$ samples.
    Moreover, suppose the algorithm always outputs $v \in \set{\pm 1}^{n}$.
    Then, there is a $O(\rho \log(n/\delta))$-replicable algorithm that is $(\errorFactor \eps, \delta)$-accurate for $\ell_{\infty}$-testing restricted to $[-\errorFactor \eps, \errorFactor \eps]^{n}$ with $\bigO{m \log(n/\delta)}$ samples.
\end{lemma}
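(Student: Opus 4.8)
The plan is to wrap the given algorithm $\mathcal A$ — which by hypothesis solves sign-one-way marginals with $m$ samples and always outputs a $\pm 1$ vector — in a symmetrization-plus-majority-vote procedure. Given sample access to $\RadD{q}$ with $q\in[-10\eps,10\eps]^{n}$, the new algorithm $\mathcal B$ runs $k=\Theta(\log(n/\delta))$ independent rounds; in round $t$ it draws a uniformly random permutation $\sigma$ of $\{1,\dots,2n\}$, a uniform sign vector $\xi\in\{\pm1\}^{2n}$, and a padding vector $z\in\{\pm 10\eps\}^{n}$ — all fresh per round but \emph{shared} across the two executions, together with $\mathcal A$'s internal seed — while the $\RadD{q}$-samples are drawn fresh in each execution. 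Setting $q'=(q,z)\in[-10\eps,10\eps]^{2n}$, it feeds $\mathcal A$ $m$ i.i.d.\ samples from the product Rademacher distribution $\RadD{\tilde q}$ with $\tilde q_i=\xi_i q'_{\sigma(i)}$, each sample assembled from one $\RadD{q}$-sample, one self-generated sample of the known coordinates $z$, and the coordinate shuffle/sign flip; this yields $w\in\{\pm1\}^{2n}$, from which $\mathcal B$ sets $v^{(t)}_j:=\xi_{\sigma^{-1}(j)}\,w_{\sigma^{-1}(j)}$ for $j\in[n]$. Finally it outputs $v$ with $v_j=\maj_{t\in[k]}v^{(t)}_j$, for a total cost of $k\cdot m=O(m\log(n/\delta))$ samples. (Invoking $\mathcal A$ in dimension $2n$ only rescales $n$ by a constant in the downstream sample-complexity statement and is harmless; it can be avoided by instead including a uniformly random half of the $n$ input coordinates in each round and padding back up to dimension $n$.)

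Replicability is immediate: in each round both executions feed $\mathcal A$ i.i.d.\ samples from the \emph{same} distribution $\RadD{\tilde q}$ (fixed by the shared randomness) under the same seed, so by $\rho$-replicability of $\mathcal A$ the outputs $w$ agree with probability $\ge 1-\rho$; a union bound over the $k$ rounds shows all the $w$'s — hence $v$, a deterministic function of them and the shared randomness — agree with probability $\ge 1-k\rho=1-O(\rho\log(n/\delta))$, for every $q$ in the cube (note $\tilde q\in[-1,1]^{2n}$ since we may assume $\eps\le 1/10$). For correctness, fix $q$ and let $H=\{j\in[n]:|q_j|=10\eps\}$; the instance $q'$ then has $|H|+n\ge n$ ``hard'' slots (magnitude exactly $10\eps$), the $n$ padding slots being deliberately hard. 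In a single round $\sum_i|\tilde q_i|=\sum_i|q'_i|\ge 10\eps n$, so $\mathcal A$'s guarantee (\Cref{eq:equivalent-sign-one-way-marginals}) gives, with probability $\ge 0.99$, $\sum_{i=1}^{2n}(\sgn(\tilde q_i)-w_i)\tilde q_i<2\eps n$; every summand is nonnegative and each wrongly answered hard slot contributes $20\eps$, so on this event fewer than $n/10$ hard slots are wrong, and always at most $2n$. Hence $\mathbb{E}[\#\{\text{wrong hard slots}\}]<\tfrac{n}{10}+0.01\cdot 2n<0.12\,n$.

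The key step converts this bound on the \emph{total} number of wrong hard slots into a per-coordinate bound. The law of $\tilde q$ depends only on the magnitude multiset $\{|q'_1|,\dots,|q'_{2n}|\}$, and conditioned on $\tilde q$ the permutation $\sigma$ is uniform among magnitude-preserving permutations and independent of $\mathcal A$'s run (which sees only $\tilde q$, the samples, and the seed). Since $v^{(t)}_j\ne\sgn(q_j)$ exactly when $\mathcal A$ answers slot $\sigma^{-1}(j)$ of $\tilde q$ wrongly, and $j\mapsto\sigma^{-1}(j)$ restricts to a \emph{uniform} bijection from hard slots of $q'$ onto hard slots of $\tilde q$, the event ``slot $j$ answered wrongly'' has the same probability for every hard slot $j$ of $q'$, and summing over all $|H|+n$ hard slots recovers $\mathbb{E}[\#\{\text{wrong hard slots}\}]$. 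Thus each individual hard slot — in particular each $j\in H$ — is answered wrongly in a given round with probability at most $0.12\,n/(|H|+n)\le 0.12<\tfrac12$, independently across the $k$ rounds. A Chernoff bound makes $\maj_t v^{(t)}_j=\sgn(q_j)$ with probability $\ge 1-\delta/n$ for each $j\in H$, and a union bound over $|H|\le n$ shows $v$ is $10\eps$-accurate (coordinates with $|q_j|<10\eps$ impose no constraint) with probability $\ge 1-\delta$.

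The main obstacle is the symmetrization step: one must carefully check that the random permutation together with the random sign flip makes all hard coordinates genuinely exchangeable from $\mathcal A$'s viewpoint — the sign flip is essential, since it merges the $+10\eps$ and $-10\eps$ slots, which $\mathcal A$ could otherwise treat asymmetrically — and that conditioning on the realized transformed instance $\tilde q$ decouples the ``origin'' bijection $\sigma$ from $\mathcal A$'s output. The padding with $\Omega(n)$ known hard coordinates is a secondary but necessary ingredient: without it the input could contain only a handful of hard coordinates, for which $\mathcal A$'s average-error guarantee is vacuous and permits arbitrary answers.
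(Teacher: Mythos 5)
Your proposal is correct and follows essentially the same route as the paper: pad the input with $\Theta(n)$ known coordinates of boundary bias $\pm\errorFactor\eps$, randomly shuffle so that the sign-one-way-marginals algorithm cannot distinguish the hard coordinates, convert its average-error guarantee into a per-coordinate error probability bounded away from $1/2$, and boost by majority over $O(\log(n/\delta))$ independent rounds with shared permutations and seeds. The only (harmless) deviations are cosmetic: your extra uniform sign flips merge the $+\errorFactor\eps$ and $-\errorFactor\eps$ slots into one exchangeable class (so $n$ padding coordinates suffice where the paper uses $n$ of each sign and treats $I_{+}^{(t)}$ and $I_{-}^{(t)}$ separately), and you bound the unconditional expected number of wrong hard slots rather than conditioning on most rounds being accurate.
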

Our goal is to obtain an algorithm for $\ell_{\infty}$ testing using an algorithm for sign-one-way marginals. The challenge is of course that sign-one-way marginals only guarantees the \emph{average} error over coordinates is low ($\leq \eps$), while $\ell_\infty$-testing requires the \textit{maximum} error to be low (say $\leq \errorFactor \eps$).
Thus, our goal is to design a reduction that ensures the error of the sign-one-way marginals algorithm is evenly spread out among the coordinates.
Roughly speaking, this should be possible as long as the coordinates are `indistinguishable' to the sign-one-way marginals algorithm in some way, but there are two key issues:
1) Different coordinates may have different biases, and the algorithm may take advantage of this by focusing on coordinates having certain ranges of biases, and
2) The algorithm may intentionally treat different coordinates separately (e.g., pick a subset of coordinates arbitrarily and 
randomly guess their answers).

Towards the first issue, we will rely on the fact that $\ell_{\infty}$-testing is hard for a specific collection of distributions.
Specifically, we design an $\ell_{\infty}$-testing algorithm over distributions $\RadD{p}$ for $p \in [-\errorFactor \eps, \errorFactor \eps]^{n}$ where the algorithm only needs to output $v_{i} = \sgn(p_{i})$ on the boundary $p_{i} \in \set{\pm \errorFactor \eps}$.
Thus, the biases of the coordinates requiring correctness
are indeed indistinguishable: no algorithm can distinguish between coordinates with bias exactly $\errorFactor \eps$ (resp., $- \errorFactor \eps$).

Towards the second issue, we randomly permute the coordinates before giving samples to the sign-one-way marginals algorithm. 
Even if the sign-one-way algorithm treats different coordinates differently, the permutation ensures different input coordinates will be treated the same \textit{on average} in the final algorithm, ensuring low average error for our $\ell_{\infty}$-testing algorithm.

Even with all the coordinates indistinguishable, there remains one last obstacle.
Consider $q \in [-\errorFactor \eps, \errorFactor \eps]^{n}$ with exactly one boundary coordinate $q_{i} \in  \{ \pm \errorFactor \eps\}$.
A sign-one-way marginals algorithm $\innerAlg$ could err on the $i$-th coordinate as long as it performs well on other coordinates.
To circumvent this issue, we artificially add coordinates with bias $\errorFactor \eps$, to ensure that most coordinates with bias $\errorFactor \eps$ must be estimated accurately.

\begin{proof}[Proof of \Cref{lemma:sign-one-way-to-ell-inf}]
    We begin with some preliminaries and notation.
    Let $\innerAlg$ denote the $\rho$-replicable algorithm for sign-one-way marginals.
    We construct an algorithm for $\ell_{\infty}$-testing.
    Suppose that we are given sample access to $\RadD{q}$ where $q \in [-1, +1]^{n}$.
    Let $\Pi(q) \subset [-1, +1]^{n}$ denote the orbit of $q$ under $S_n$, that is the set of vectors $q'$ such that there exists some permutation $\pi$ where $q' = \pi(q)$.
    In particular, $\Pi(q)$ is the set of vectors that are equivalent to $q$ up to coordinate permutation.

    \paragraph{Sample Simulation}
    For convenience, denote by $\bar q$ the `padded' $3n$-dimensional vector
    \begin{equation}
    \label{eq:embeded-q}
        \overline{q}_i = \begin{cases}
            q_{i} & 1 \leq i \leq n \\
            \errorFactor \eps & n + 1 \leq i \leq 2n \\
            - \errorFactor \eps & 2n + 1 \leq i \leq 3n
        \end{cases}.
    \end{equation}
    Given sample access to $\RadD{q}$ and an arbitrary $\pi \in S_{3n}$, we note that one can easily simulate sample access to
    $\RadD{ \pi( \overline{q} ) }$.

    \paragraph{The Reduction.}
    Recall that $\innerAlg$ is a $\rho$-replicable $(\eps, 0.01)$-accurate algorithm.
    Fix $T = O(\log (n/\delta))$.
    We design our $(T \rho)$-replicable $(\errorFactor \eps, (T + 1) \delta)$-accurate algorithm for $\ell_{\infty}$ testing, see \Cref{alg:rep-infty-estimator} for pseudocode.
    
    \begin{Ualgorithm}
        \noindent \textbf{Input: }
        Samples drawn \iid from $\RadD{q}$ with $q \in [- \errorFactor \eps, \errorFactor \eps]^{n}$. 
        Oracle access to $\rho$-replicable, $(\eps, 0.01)$-accurate algorithm $\innerAlg$ 
        for the 
        $(3n)$-dimensional sign-one-way marginals 
        problem constrained to output $\{\pm 1\}^{3n}$ 
        with sample complexity $m$.\\
        \noindent \textbf{Output: }
        Labels $v \in \set{\pm 1}^{n}$. \\
        \noindent \textbf{Parameters: }
        Accuracy $\eps$, error $\delta$, replicability $\rho$.
        \begin{enumerate}
            \item Fix $T = O(\log(n/\delta))$ for some sufficiently large constant.
            \item For $t \in [T]$:
            \begin{enumerate}
                \item Independently sample $\pi_{t} \in S_{3n}$.
                \item 
                Let $\overline{q}$ be defined as in \Cref{eq:embeded-q}.
                Generate  
                a sample set
                $D_{t} \sim \RadD{  \pi_t( \overline{q} )}^{\otimes m}$. 
                \item Compute $v^{(t)} \gets \innerAlg(S_{t}; r_{t})$, where $r_t$ is some fresh random seed.
            \end{enumerate}
            \item For $i \in [n]$, output $v_{i} \gets \maj_{t \in [T]} 
            \pi^{-1}_t \lp( 
            v^{(t)}
            \rp)_i$.
        \end{enumerate}
        \caption{\RepInfinityEstimator}
        \label{alg:rep-infty-estimator}
    \end{Ualgorithm}

    \paragraph{Analysis: Sample Complexity and Replicability.}
    The sample complexity of \Cref{alg:rep-infty-estimator} is $Tm = \bigO{m \log(n/\delta)}$.
    We argue that \Cref{alg:rep-infty-estimator} is $\bigO{\rho \log(n/\delta)}$-replicable.
    Note that we invoke the sign-one-way marginals algorithm $\innerAlg$ at most $T$ times (each time with independently sampled shared randomness $r_{t}$). 
    Furthermore, since the permutations $ \pi_{t}$ are sampled using shared randomness, the $t$-th invocation of $\innerAlg$ in the two different runs samples from the same distribution $\RadD{ \pi_t(\overline{q}) }$.
    The conclusion then follows from a union bound and the fact that $\innerAlg$ is $\rho$-replicable.

    \paragraph{Analysis: Correctness.}
    Finally, we argue for correctness.
    Let $q \in [-\errorFactor \eps, \errorFactor \eps]^{n}$ denote the mean of the target distribution.
    For convenience, we define $p^{(t)} := \pi_t(\overline{q})$.
    We condition on the event that for at least $0.98$-fraction of $t$, $v^{(t)}$ is an $\eps$-accurate solution to the sign-one-way marginals problem with respect to $\RadD{ p^{(t)} }$.
    By a Chernoff bound, this occurs with probability at least $e^{-\Omega(T)} < \frac{\delta}{3}$ for $T = O(\log(1/\delta))$.
    On the remaining $0.02T$ outputs, $v^{(t)}$ may incur catastrophic error (up to $\errorFactor \eps n$).
    In particular, for all accurate $t \in [T]$, since we have assumed $v^{(t)} \in \set{\pm 1}^{3n}$, it follows that
    \begin{equation*}
        2 \sum_{i = 1}^{3n} \ind\left[ v^{(t)}_{i} \neq \sgn(p^{(t)}_i)\right] \left| p^{(t)}_i \right| = \sum_{i = 1}^{3n} \left( \sgn(p^{(t)}_i) - v^{(t)}_{i} \right) p^{(t)}_i  \leq \eps n \text{.}
    \end{equation*}
    Let $I_{+}^{(t)} := \set{i \in [3n] \given p^{(t)}_i = \errorFactor \eps}$ and $I_{-}^{(t)} := \set{i \in [3n] \given p^{(t)}_i = - \errorFactor \eps}$ denote the indices where $p^{(t)}$ has absolute value exactly $\errorFactor \eps$ (i.e., indices with maximum absolute value).
    Let $R^{(t)} = [3n] \setminus (I_{+}^{(t)} \cup I_{-}^{(t)})$ denote the remaining indices.
    Then, we can rewrite the error as
    \begin{equation*}
        2 \left( \sum_{i \in I_{+}^{(t)}} \ind\left[v^{(t)}_{i} \neq \sgn(p^{(t)}_i)\right] \left| p^{(t)}_i \right| + \sum_{i \in I_{-}^{(t)}} \ind\left[v^{(t)}_{i} \neq \sgn(p^{(t)}_i)\right] \left| p^{(t)}_i \right| + \sum_{i \in R^{(t)}} \ind\left[v^{(t)}_{i} \neq \sgn(p^{(t)}_i)\right] \left| p^{(t)}_i \right| \right) \text{.}
    \end{equation*}
    Consider the first term. 
    By correctness we have that
    \begin{equation}
    \label{eq:I+error-bound}
        \sum_{i \in I_{+}^{(t)}} \ind\left[v^{(t)}_{i} \neq \sgn(p^{(t)}_i)\right] \left| p^{(t)}_i \right| = \errorFactor \eps \sum_{i \in I_{+}^{(t)}} \ind\left[v^{(t)}_{i} \neq \sgn(p^{(t)}_i)\right] \leq \frac{\eps}{2} n \text{.}
    \end{equation}
    Let $E_{+}^{(t)} := \{ i \in I_+^{(t)} \text{ s.t. } v_i^{(t)} \neq \sgn ( p_i^{(t)} ) \}$ be the subset of error coordinates within $I_{+}^{(t)}$.
    Rearranging \Cref{eq:I+error-bound} immediately yields 
    \begin{equation*}
        |E_{+}^{(t)}| \leq \frac{n}{20} \text{.}
    \end{equation*}
    Since $ |I_{+}^{(t)}| \geq n$ by construction (see \Cref{eq:embeded-q}), we have $\frac{| E_{+}^{(t)} |}{|I_{+}^{(t)}|} \leq \frac{n/20}{n} = \frac{1}{20}$. 
    Similarly, we can 
    define $E_{-}^{(t)} := \{ i \in I_{-}^{(t)} \text{ s.t. } v_i^{(t)} \neq \sgn ( p_i^{(t)} ) \}$ and use analogous arguments to show that 
    $\frac{E_{-}^{(t)}}{|I_{-}^{(t)}|} \leq \frac{1}{20}$.

    Fix $i \in [n]$ with $|q_{i}| = \errorFactor \eps$. 
    If no such $i$ exists, then any output $v$ is $\errorFactor \eps$-accurate on $q$.
    Suppose without loss of generality that $q_i = \errorFactor \eps$. 
    Fix $t \in [T]$.
    Note that 
    \begin{align*}
        \sgn(v^{(t)}_{\pi_{t}(i)}) \neq \sgn(q_{i}) \iff \sgn(v^{(t)}_{\pi_{t}(i)}) \neq \sgn(p^{(t)}_{\pi_{t}(i)}) \iff \pi_{t}(i) \in E_{+}^{(t)} \text{.}
    \end{align*}
    By symmetry, even conditioned on 
    the event $\event^{(t)}$ that $v^{(t)}$ is an accurate solution for $p^{(t)}$, we have that $\pi_{t}(i)$ is uniformly distributed within $I_{+}^{(t)}$.\footnote{To be slightly more formal, \Cref{alg:rep-infty-estimator} generates a joint distribution over $(\pi_t, I_{+}^{(t)})$. We mean that for any fixed $i \in I_+^{(t)}$ and instantiation $I_+^{(t)}=S$, the conditional distribution on $\pi_t(i)$ is uniform over $S$.}
    In particular, we claim
    \begin{align*}
        \Pr\left( \sgn(v^{(t)}_{\pi_{t}(i)} ) \neq \sgn(q_{i}) | \event^{(t)} \right) = \Pr\left( \pi_{t}(i) \in E_{+}^{(t)} | \event^{(t)} \right) \leq \frac{1}{20},
    \end{align*}
    where the final inequality follows due to indistinguishability of coordinates within $I_{+}^{(t)}$. More formally, we can break the above probability into an expectation over permutations $\pi_t$ conditioned on 1) a fixed setting of $I_{+}^{(t)}$, and 2) a fixed mapping of $\pi_t(j)$ for every $j \notin I_{+}^{(t)}$. Under every such conditioning the input to sign-one-way marginals is indistinguishable, so the set of errors $E_{+}^{(t)} \subset I_{+}^{(t)}$ is independent of $\pi_t$ and the desired inequality holds.

    Since we output $v_{i} = \maj_{t} v^{(t)}_{\pi_{t}(i)}$ we have $\sgn(v_{i}) \neq \sgn(q_{i})$ if more than half of $v^{(t)}_{\pi_{t}(i)}$ disagree with $\sgn(q_{i})$.
    We bound the probability that this occurs.
    First, note that at most $0.02$-fraction of $v^{(t)}$ incur catastrophic error (that is the sign-one-way marginals algorithm fails).
    Then, among the iterations where the sign-one-way marginals algorithm $\innerAlg$ succeeds, a standard Chernoff bound implies that $\pi_{t}(i) \in E_{+}^{(t)}$ in more than $0.1$-fraction of $t \in [T]$ with probability at most $\exp(- T/20) < \frac{\delta}{2n}$ for $T = O(\log (n/\delta))$ for some sufficiently large constant. 
    Thus, at most a $0.12$-fraction of $v^{(t)}$ output the wrong label on the $i$-th coordinate with probability at least $1 - \delta / n$, far below the majority threshold.
    Applying a union bound over all $n$ coordinates, we conclude that all outputs are correct on coordinates $i$ with $|q_i| = 10 \eps$ with probability at least $1 - \delta$, as desired.
\end{proof}

Finally, we conclude with a lower bound for the $\ell_{\infty}$ testing problem.

\begin{lemma}
    \label{lemma:ell-inf-testing-lb}
    Any $\rho$-replicable algorithm that is $(\eps, 0.1)$-accurate for $\ell_{\infty}$-testing restricted to $[- \eps, \eps]^{n}$ requires $\bigOm{\frac{n}{\rho^2 \eps^2 \log^{3} n}}$ samples.
\end{lemma}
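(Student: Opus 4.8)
The plan is to derive Lemma~\ref{lemma:ell-inf-testing-lb} from the (near-)tight sample complexity lower bound for the $n$-coin problem established in \cite{hopkins2024replicability}. Recall that problem asks, given sample access to $\RadD{p}$ with $p \in [-1,1]^n$, to output $v \in \{\pm 1\}^n$ with $v_i = \sgn(p_i)$ for every coordinate with $|p_i| \geq \eps$, and that \cite{hopkins2024replicability} show any $\rho$-replicable $(\eps,0.1)$-accurate algorithm for it requires $\tilde\Omega(n\rho^{-2}\eps^{-2})$ samples. Their hard instance is (a distribution over) product Rademacher distributions whose biases can be taken to lie in $[-\eps,\eps]$: a $\pm\eps$ bias at the decision boundary is planted in a uniformly random, hidden coordinate, while the remaining coordinates are placed near $0$, so that although the algorithm is formally free to label them arbitrarily, maintaining a labeling consistent across two runs is costly. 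Since each such instance has $p \in [-\eps,\eps]^n$, and since for $p \in [-\eps,\eps]^n$ Definition~\ref{def:ell-inf-testing} (equivalently $\max_i(\sgn(p_i)-v_i)p_i < 2\eps$) asks exactly for $v_i = \sgn(p_i)$ on all coordinates with $|p_i| = \eps$, a $\rho$-replicable $(\eps,0.1)$-accurate algorithm for $\ell_\infty$-testing restricted to $[-\eps,\eps]^n$ is in particular a $\rho$-replicable $(\eps,0.1)$-accurate algorithm for the $n$-coin problem on these instances. Lemma~\ref{lemma:ell-inf-testing-lb}, including the $\log^3 n$ loss, then follows immediately. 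Verifying this reduction amounts to checking it is lossless in $n$, $\rho$, and $\eps$ (up to a constant rescaling of $\eps$ if the cited instance uses biases in $[-c\eps,c\eps]$ for a constant $c>1$), which is routine.

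For completeness I would also sketch the argument underlying the cited bound so the reduction chain is self-contained. Fix the hard distribution $\mathcal Q$ over $p \in [-\eps,\eps]^n$ above and suppose $\mathcal A$ is $\rho$-replicable and $(\eps,0.1)$-accurate using $m$ samples. Using the standard fact that a $\rho$-replicable algorithm admits, for a $(1-O(\sqrt\rho))$-fraction of its random seeds $r$, a \emph{canonical} output $f_r$ returned with probability $\geq 1-O(\sqrt\rho)$ over the samples, one reduces to analyzing the per-seed maps $D \mapsto f_r(D)$. Two independent size-$m$ empirical mean vectors satisfy $\|\hat p^{(1)} - \hat p^{(2)}\|_2 = \Theta(\sqrt{n/m})$ with constant probability; because the planted coordinate is hidden, correctness forces $f_r$ to be ``sharp at scale $\eps$'' in \emph{every} coordinate (it must emit the correct sign whenever that coordinate happens to carry bias $\pm\eps$); and a volume/packing argument then shows that any such map must disagree on two inputs at $\ell_2$-distance $\Theta(\sqrt{n/m})$ with probability $\Omega\!\big(\sqrt{n/m}\,/\,(\eps\,\polylog n)\big)$. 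Requiring this to be $O(\sqrt\rho)$ yields $m = \tilde\Omega(n\rho^{-2}\eps^{-2})$, which is where the polylogarithmic slack arises.

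The main obstacle is exactly this last step: turning ``a consistent, $\eps$-sharp labeling of many near-boundary coordinates is impossible with few samples'' into a rigorous lower bound against \emph{arbitrary} replicable algorithms rather than only rounding-based ones. This is the technical heart of \cite{hopkins2024replicability}'s coin lower bound, and it is also where the $\log^3 n$ in the statement enters (with a further $\log^3 n$ picked up through Lemmas~\ref{lemma:sign-one-way-self-reduction} and~\ref{lemma:sign-one-way-to-ell-inf} en route to Theorem~\ref{thm:sign-one-way}). Accordingly, the proof would invoke their result as a black box at this point rather than re-deriving it, and simply record the (straightforward) observation that $\ell_\infty$-testing restricted to the cube $[-\eps,\eps]^n$ subsumes the $n$-coin problem on the relevant hard instances without any loss in $n$, $\rho$, or $\eps$.
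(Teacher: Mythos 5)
Your proposal is correct and takes essentially the same route as the paper: both observe that $\ell_\infty$-testing restricted to $[-\eps,\eps]^n$ subsumes the restricted $n$-coin problem (via the Bernoulli/Rademacher correspondence, costing only a constant rescaling of $\eps$) and then invoke the lower bound of \cite{hopkins2024replicability} as a black box. Your additional sketch of the coin-problem lower bound itself is unnecessary for the lemma, since you, like the paper, ultimately cite that result rather than re-derive it.
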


This follows via a simple reduction to the $n$-coin problem.

\begin{definition}[$n$-coin problem]
    \label{def:n-coin-problem}
    Let $\p$ be a product of Bernoulli random variables with parameter $p \in [0, 1]^{n}$.
    A vector $v \in \set{0, 1}^{n}$ is an $\eps$-accurate solution to the $n$-coin problem over $\p \sim \BernD{p}$ if for all $i \in [n]$:
    \begin{enumerate}
        \item If $p_i \geq \frac{1}{2} + \eps$, then $v_i = 1$.
        \item If $p_i \leq \frac{1}{2} - \eps$, then $v_i = 0$.
    \end{enumerate}
\end{definition}

We are particularly interested in a special collection of inputs which capture the hardness of the $n$-coin problem.
Formally, we say $\innerAlg$ is a $\rho$-replicable $(\eps, \delta)$-accurate for the $n$-coin problem restricted to $\left[\frac{1}{2}-\eps, \frac{1}{2}+\eps\right]^{n}$ if:
\begin{enumerate}
    \item (Replicability) $\innerAlg$ is $\rho$-replicable for all distributions $\BernD{p}$ where $p \in \left[\frac{1}{2}-\eps, \frac{1}{2}+\eps\right]^{n}$.
    \item (Correctness) For all $p \in \left[\frac{1}{2}-\eps, \frac{1}{2}+\eps\right]^{n}$, $\innerAlg$ outputs an $\eps$-accurate solution to the $n$-coin problem over $\BernD{p}$ with probability at least $1 - \delta$ when given \iid sample access to $\BernD{p}$.
\end{enumerate}
As before, the correctness condition is trivial on the interior (neither of the correctness conditions are triggered).
\cite{hopkins2024replicability} gives a lower bound for this problem.

\begin{theorem}[Theorem 4.46 of \cite{hopkins2024replicability}]
    \label{thm:n-coin-problem-lb}
    Any $\rho$-replicable $(\eps, 0.01)$-accurate algorithm for the $n$-coin problem restricted to $\left[\frac{1}{2}-\eps, \frac{1}{2}+\eps\right]^{n}$ requires $\bigOm{\frac{n}{\rho^2 \eps^2 \log^3 n}}$ samples.
\end{theorem}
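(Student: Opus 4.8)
\emph{Roadmap.} The plan is a two--stage ``swept--area'' argument: first prove the $m=\bigOm{1/(\rho^2\eps^2)}$ lower bound for a single coin, then amplify it to $n$ coins using a one--parameter family of inputs along which \emph{all} $n$ coordinates are forced to transition simultaneously.

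\emph{Single coin.} Suppose $\innerAlg$ uses $m$ samples, is $\rho$-replicable, and is $(\eps,0.01)$-accurate for the $1$-coin problem restricted to $\{\tfrac12-\eps,\tfrac12+\eps\}$. For a fixed internal string $r$ let $g_r(\theta)=\Pr_{S\sim\BernD{\theta}^{\otimes m}}[\innerAlg(S;r)=1]$. Since $\dtv\!\left(\BernD{\theta}^{\otimes m},\BernD{\theta'}^{\otimes m}\right)=O(\sqrt m\,|\theta-\theta'|)$ near $\tfrac12$, the map $\theta\mapsto g_r(\theta)$ is $O(\sqrt m)$-Lipschitz, and for every $\theta$ replicability gives $\Ep_r[\,2g_r(\theta)(1-g_r(\theta))\,]\le\rho$ (two runs on input $\BernD{\theta}$ disagree with probability $2g_r(1-g_r)$ for fixed $r$). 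On the other hand, accuracy plus Markov's inequality show that for a $0.8$-fraction of $r$ one has $g_r(\tfrac12-\eps)\le 0.1$ and $g_r(\tfrac12+\eps)\ge 0.9$; continuity and the $O(\sqrt m)$-Lipschitz bound then force each such $r$ to spend $\theta$-measure $\Omega(1/\sqrt m)$ inside the band $g_r\in[0.1,0.9]$, where $2g_r(1-g_r)=\Omega(1)$. Integrating the replicability inequality over $\theta\in[\tfrac12-\eps,\tfrac12+\eps]$ and comparing the two estimates gives $\Omega(1/\sqrt m)\le 2\eps\rho$, i.e.\ $m=\bigOm{1/(\rho^2\eps^2)}$. (This single--coin bound is also implicit in \cite{bun2023stability}.)

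\emph{Amplification to $n$ coins.} Let $\innerAlg$ be $\rho$-replicable and $(\eps,0.01)$-accurate for the $n$-coin problem on $[\tfrac12-\eps,\tfrac12+\eps]^n$, so on each of the $2^n$ corner inputs it outputs the whole sign vector with probability $\ge 0.99$. Consider the diagonal path $\theta(t)=\left(\tfrac12-\eps+2\eps t\right)\mathbf 1$, $t\in[0,1]$, whose endpoints are the two monochromatic corners, forcing the output to change from $\mathbf 0$ to $\mathbf 1$; thus every coordinate's output transitions somewhere along the path. The key quantitative point is that moving $t$ moves all $n$ coordinates at once, so the product distributions change at $\dtv$-rate $O(\eps\sqrt{mn})$ and each $g_{r,i}(t):=\Pr_S[\innerAlg(S;r)_i=1\mid\theta(t)]$ is $O(\eps\sqrt{mn})$-Lipschitz in $t$ --- a factor $\sqrt n$ sharper than in the single--coin case. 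Re--running the single--coin argument in the variable $t$ shows that for each $i$ and a $0.8$-fraction of $r$ there is a $t$-set of measure $\Omega\!\left(1/(\eps\sqrt{mn})\right)$ on which coordinate $i$ is mid--transition ($g_{r,i}\in[0.1,0.9]$). Summing over $i$, the expected total ``mid--transition mass'' $\Ep_r\!\int_0^1\#\{i:g_{r,i}(t)\in[0.1,0.9]\}\,dt$ is $\Omega\!\left(\sqrt n/(\eps\sqrt m)\right)$. The complementary bound is that replicability caps this mass by $\tilde O(\rho)$: if $k$ coordinates are mid--transition at $\theta(t)$, then by considering the $2^k$ sub--paths that flip each subset of those coordinates (all terminating at corners, hence answered correctly), $\innerAlg$ must robustly produce $2^k$ distinct outputs under the $\Theta(1/\sqrt m)$-per-coordinate sampling perturbation; an $\ell_2$-type estimate on this ``robust rounding of a $k$-mini-cube'' forces the two--run disagreement probability at $\theta(t)$ to grow with $k$, and replicability pins it at $\rho$. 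Equating $\sqrt n/(\eps\sqrt m)=\tilde O(\rho)$ yields $m=\bigOm{n/(\rho^2\eps^2\,\polylog n)}$, matching the stated bound up to the $\log^3 n$ factor absorbed by the Markov/union bounds and by making the robust--rounding step quantitative.

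\emph{Main obstacle.} The crux --- and the reason the answer is $\Theta(n)$ rather than $\Theta(1)$ or $\Theta(n^2)$ --- is the second (replicability) estimate: one must show that $k$ simultaneously mid--transition coordinates cost disagreement probability that aggregates like $\sqrt{\sum_i(\cdot)^2}$, not like a trivial maximum (which merely re--derives the single--coin bound, since the replicability budget is $\rho$ per coordinate, not $\rho/n$) and not like a full sum (which over--counts and gives $n^2$). The correct aggregation mirrors the matching upper bound, where the replicability cost of combining $n$ coins by correlated sampling scales as $\sqrt{\sum_i\chi^2_i}$; capturing this on the lower--bound side requires arguing that any $\rho$-replicable accurate $n$-coin algorithm is, up to the $\rho$ slack, a randomized rounding of the empirical--mean vector $\hat\theta\in\R^n$ to the corners of the scaled cube that is robust to $\ell_2$-perturbations, and then invoking an isoperimetric bound on such roundings. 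Making that reduction rigorous --- in particular controlling how the algorithm may correlate its coordinate--wise decisions --- is the technical heart of the proof, and is precisely the content of Theorem~4.46 of \cite{hopkins2024replicability}, whose argument we follow.
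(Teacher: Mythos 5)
This statement is not proved in the paper at all: it is imported verbatim as Theorem~4.46 of \cite{hopkins2024replicability}, and the only argument the paper supplies is the one-line observation that an $\tOm{n^2}$ lower bound against algorithms drawing one coordinate-sample at a time implies the stated $\tOm{n/(\rho^2\eps^2)}$ bound on vector samples, since $m$ vector samples yield at most $nm$ coordinate samples. Measured against that, your proposal attempts something strictly harder --- a from-scratch proof --- and it does not close. The single-coin half of your argument (Lipschitzness of $g_r(\theta)$ in total variation, the $\Ep_r[2g_r(1-g_r)]\le\rho$ replicability identity, and the swept-measure integration giving $m=\bigOm{1/(\rho^2\eps^2)}$) is correct and standard. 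The genuine gap is exactly where you place it: the claim that ``replicability caps the total mid-transition mass by $\tilde O(\rho)$.'' For that you need the two-run disagreement probability at a point with $k$ simultaneously mid-transition coordinates to grow with $k$ at the right ($\ell_2$-aggregated) rate, and nothing in your sketch establishes this. The danger is real: an algorithm may correlate its coordinate decisions (e.g.\ tie many coordinates to a shared statistic or shared threshold), so that along the diagonal path many coordinates are marginally mid-transition while the joint disagreement stays $O(1)$; ruling this out requires exploiting accuracy at the $2^k$ off-diagonal corner perturbations, and making that ``robust rounding of a $k$-mini-cube'' step quantitative is the entire technical content of the theorem. Your write-up concedes this by saying the step ``is precisely the content of Theorem~4.46 of \cite{hopkins2024replicability}, whose argument we follow'' --- but that is the theorem being proved, so as a standalone proof the argument is circular at its crux.

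Two further remarks. First, the heuristic you offer for the aggregation (mirroring the upper bound's $\sqrt{\sum_i\chi_i^2}$ correlated-sampling cost) is a plausibility argument, not a lower-bound mechanism; the actual proof in \cite{hopkins2024replicability} is a substantially more involved analysis (and is stated for adaptive, coordinate-at-a-time samplers, which is what makes the paper's coordinate-to-vector reduction immediate). Second, given that the statement is explicitly attributed to prior work, the faithful ``proof'' here is the citation together with the coordinate-versus-vector sample accounting; if you want a self-contained argument, the missing $k$-aggregation lemma is the piece you would have to supply, not assume.
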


Formally, \cite{hopkins2024replicability} show that $\tOm{n^2}$ samples are necessary for algorithms that sample one coordinate at a time.
Since any algorithm that makes $m$ (vector) samples makes at most $nm$ (coordinate) samples, this implies the above lower bound.

We are now ready to prove a lower bound for $\ell_{\infty}$-testing.

\begin{proof}[Proof of \Cref{lemma:ell-inf-testing-lb}]
    This immediately follows from the sample complexity lower bound for the $n$-coin problem \cite{hopkins2024replicability}.
    For completeness, we sketch a simple reduction for $\ell_{\infty}$-testing of Rademacher random variables to the $n$-coin problem. 
    The crucial point here is that the hard instance given in \Cref{thm:n-coin-problem-lb} is precisely the hard instance for which we have obtained an $\ell_{\infty}$-testing algorithm.

    If $i \in (0.5 - \eps, 0.5 + \eps)$, any output is correct.
    Note that $\BernD{p}$ directly simulates $\RadD{2p - 1}$ so the $n$-coin problem is equivalent to $\ell_{\infty}$-testing with accuracy $2 \eps$.
    In particular, $\BernD{0.5-\eps}$ is equivalent to $\RadD{-2\eps}$ while $\BernD{0.5+\eps}$ is equivalent to $\RadD{2\eps}$.
    Thus, the $\ell_{\infty}$-testing problem over $q \in [-2\eps, 2\eps]^{n}$ is equivalent to the $n$-coin problem over $p \in [0.5-\eps, 0.5+\eps]^{n}$.
    
    Thus, any $\rho$-replicable algorithm that is $(\eps, \delta)$-accurate for $\ell_{\infty}$-testing over distribution $\RadD{q}$ with $q \in [-\eps, \eps]^{n}$ is a $\rho$-replicable $(\eps/2, \delta)$-accurate algorithm for the $n$-coin problem over distributions $\BernD{p}$ with $p \in \left[\frac{1 - \eps}{2}, \frac{1 + \eps}{2} \right]^{n}$.
    The lower bound follows.
\end{proof}

We are now ready to prove \Cref{thm:sign-one-way}.

\begin{proof}[Proof of \Cref{thm:sign-one-way}]
    Suppose we have a $\rho$-replicable algorithm that is $(\eps, 0.01)$-accurate for sign-one-way marginals with sample complexity $m$.
    Using \Cref{lemma:sign-one-way-self-reduction}, we obtain a $\rho$-replicable $(2 \eps, 0.01)$-accurate algorithm that always outputs $\set{\pm 1}^{n}$.
    Using \Cref{lemma:sign-one-way-to-ell-inf} for any $\delta > 0$, we obtain a $(\rho \log(n/\delta))$-replicable $(20 \eps, \delta)$-accurate algorithm for a specific hard instance of $\ell_{\infty}$-testing which requires $\bigOm{\frac{n}{\rho^2 \eps^2 \log^5 (n/\delta)}}$ samples by \Cref{lemma:ell-inf-testing-lb}.
    The algorithm we obtained has sample complexity $O(m \log(n/\delta))$.

    Note that \Cref{lemma:ell-inf-testing-lb} requires $\delta < 0.01$ so it suffices to set $\delta$ to some sufficiently small constant.
    Then, we obtain the lower bound
    \begin{equation*}
        \bigOm{\frac{n}{\rho^2 \eps^2 \log^6 n}} 
    \end{equation*}
    as desired.
\end{proof}

\subsection{Applications of Sign-One-Way Marginals}

The sign-one-way marginals problem captures the statistical difficulty of many fundamental learning problems. We give two such examples below.
First,  we obtain an essentially tight lower bound on $\ell_{1}$ mean estimation, extending the sample complexity lower bounds of \cite{hopkins2024replicability}.
Note that a matching upper bound can be obtained from an algorithm for $\ell_{\infty}$-mean estimation (see one-way-marginals in \cite{bun2023stability}).

\begin{definition}
    \label{def:ell1-mean-estimation}
    Let $\distribution$ be an arbitrary distribution over $[-1, +1]^{n}$ with mean $\mu$.
    A vector $v$ is an $\eps$-accurate solution to the $\ell_{1}$-mean estimation problem over $\distribution$ if $\norm{v - \mu}{1} \leq \eps$.
\end{definition}

\begin{theorem}
    \label{thm:ell1-mean-estimation-lb}
    Any $\rho$-replicable $(\eps, 0.001)$-accurate algorithm for $\ell_{1}$-mean estimation requires $\bigOm{\frac{n}{\rho^2 \eps^2 \log^6 n}}$ samples.
\end{theorem}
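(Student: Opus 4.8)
The plan is to reduce the $n$-dimensional sign-one-way marginals problem (\Cref{def:sign-one-way}) to $\ell_1$-mean estimation and then invoke the lower bound of \Cref{thm:sign-one-way}. Suppose $\innerAlg$ is a $\rho$-replicable $(\eps, 0.001)$-accurate algorithm for $\ell_1$-mean estimation using $m$ samples. Given sample access to a product of Rademacher distributions $\RadD{p}$ with $p \in [-1,+1]^n$, I run $\innerAlg$ directly on those samples: this is legitimate because $\RadD{p}$ is a distribution supported on $\{\pm 1\}^n \subseteq [-1,+1]^n$ whose mean is exactly $p$. On the success event of $\innerAlg$ (probability at least $0.999$) it returns $v$ with $\norm{v - p}{1} \leq \eps$, and I output the rounded vector $w := \sgn(v) \in \{\pm 1\}^n$ (breaking ties arbitrarily) as the candidate solution to sign-one-way marginals.

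The heart of the reduction is the coordinate-wise rounding inequality: for every $i \in [n]$,
\[
(\sgn(p_i) - w_i) p_i \;\leq\; 2\,|v_i - p_i|.
\]
Indeed, the left-hand side equals $|p_i| - w_i p_i \geq 0$, and it is strictly positive only when $\sgn(w_i) \neq \sgn(p_i)$, in which case it equals $2|p_i|$; but then $v_i$ (which carries the sign of $w_i$) and $p_i$ lie on opposite sides of the origin, so $|v_i - p_i| = |v_i| + |p_i| \geq |p_i|$, giving the bound. Summing over $i$ yields $\sum_{i=1}^n (\sgn(p_i) - w_i) p_i \leq 2\norm{v - p}{1} \leq 2\eps$, so $w$ is an accurate solution to sign-one-way marginals in the sense of \Cref{def:sign-one-way} with accuracy parameter $\eps' := 3\eps/n$ (as $\eps' n = 3\eps > 2\eps$), valid for $\eps$ in the interesting regime, say $\eps < n/3$ so that $\eps' < 1$. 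Replicability transfers verbatim: $\innerAlg$ is $\rho$-replicable under every distribution, hence in particular under $\RadD{p}$, and $w$ is a deterministic function of $v$, so two runs with shared randomness produce the same $w$ with probability at least $1-\rho$; the failure probability $0.001$ likewise carries over (and $0.001 < 0.01$, as required to apply \Cref{thm:sign-one-way}).

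Finally, \Cref{thm:sign-one-way} states that any $\rho$-replicable $(\eps', 0.01)$-accurate algorithm for $n$-dimensional sign-one-way marginals needs $\bigOm{\frac{n}{\rho^2 \eps'^2 \log^6 n}}$ samples. Substituting $\eps' = 3\eps/n$ forces $m = \bigOm{\frac{n^3}{\rho^2 \eps^2 \log^6 n}}$, which in particular is $\bigOm{\frac{n}{\rho^2 \eps^2 \log^6 n}}$, completing the proof. The only nontrivial work in this chain is entirely upstream — it is \Cref{thm:sign-one-way} itself, established via the reductions through $\ell_\infty$-testing and the $n$-coin problem of \cite{hopkins2024replicability}; the step here is just the one-line rounding argument above, directly analogous to the self-reduction of \Cref{lemma:sign-one-way-self-reduction}.
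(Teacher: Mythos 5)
Your proposal is correct and follows essentially the same route as the paper: run the $\ell_1$-mean estimator on $\RadD{p}$, round its output coordinatewise to signs, bound the sign-one-way error by $2\norm{v-p}{1}\leq 2\eps$, and invoke \Cref{thm:sign-one-way}. The only difference is bookkeeping: you track the $1/n$ normalization in \Cref{def:sign-one-way} explicitly, so your reduction yields a $(\Theta(\eps/n))$-accurate sign-one-way solver and hence the stronger bound $\bigOm{\frac{n^3}{\rho^2\eps^2\log^6 n}}$ before weakening to the stated one, whereas the paper's proof plugs the unnormalized error $2\eps$ directly into \Cref{thm:sign-one-way} to land exactly on the claimed $\bigOm{\frac{n}{\rho^2\eps^2\log^6 n}}$.
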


\begin{proof}
    We proceed via reduction to sign-one-way marginals problem.
    Suppose we have a $\rho$-replicable $(\eps, \delta)$-accurate algorithm $\innerAlg$ for $\ell_1$-mean estimation with sample complexity $m$.
    Given samples $S \sim \RadD{\mu}^{\otimes m}$, compute $\hat{v} \gets \innerAlg(S; r)$ and return $v \in \set{\pm 1}^{n}$ where $v_{i} = \sgn(\hat{v}_{i})$.
    Clearly this algorithm has sample complexity $m$ and is $\rho$-replicable.
    We analyze the error of the output $v$.

    Suppose $\hat{v}$ is an $\eps$-accurate solution to the $\ell_{1}$-mean estimation problem.
    Then our goal is to bound the error function below.
    \begin{align*}
        2 \sum_{i} \ind\left[ v_{i} \neq \sgn(\mu_{i}) \right] |\mu_{i}|.
    \end{align*}
    Note that if $v_{i} \neq \sgn(\mu_{i})$, we have that $\sgn(\hat{v}_{i}) \neq \sgn(\mu_{i})$, i.e.\ $|\mu_{i}| \leq |\mu_{i} - \hat{v}_{i}|$.
    Summing over all such $i$ gives
    \begin{equation*}
        2 \sum_{i} \ind\left[ v_{i} \neq \sgn(\mu_{i}) \right] |\mu_{i}| = 2 \sum_{\sgn(\hat{v}_{i}) \neq \sgn(\mu_{i})} |\mu_{i}| \leq 2 \sum_{\sgn(\hat{v}_{i}) \neq \sgn(\mu_{i})} |\mu_{i} - \hat{v}_{i}| \leq 2 \eps \text{.}
    \end{equation*}
    Thus, our algorithm is $2 \eps$-accurate.
    The lower bound then follows from \Cref{thm:sign-one-way}.
\end{proof}

Finally, we observe our new lower bound for sign-one-way marginals essentially resolves the sample complexity of agnostic PAC learning finite hypothesis classes. 
In the celebrated PAC learning model \cite{vapnik1974theory,valiant1984theory}, a learner $\innerAlg$ is given access to labeled samples $(x, y) \in \domain \times \range$ and asked to learn a function $h: \domain \rightarrow \range$.
$\innerAlg$ is an $(\eps, \delta)$-accurate (agnostic) learner for a hypothesis class $\hypotheses: \domain \rightarrow \range$ if given samples access to any distribution $\distribution$ over $\domain \times \range$, $\innerAlg$ outputs with probability $1 - \delta$ a hypothesis $h$ satisfying
\begin{equation*}
    \err(h) \leq \inf_{f \in \hypotheses} \err(f) + \eps
\end{equation*}
where $\err(h) := \Pr_{\distribution}(h(x) \neq y)$ is the expected error of a hypothesis on a random sample.
Note that the error probability is over the randomness of the observed samples and the internal randomness of the algorithm $\innerAlg$.

In the standard setting (without replicability) the complexity of a hypothesis class $\hypotheses$ is characterized by its VC dimension $d$ \cite{vapnik1974theory}.
In particular, $\Theta(\frac{d}{\eps^2})$ samples are necessary and sufficient to learn a hypothesis class with VC dimension $d$.
However, it is provably more expensive to learn \emph{replicably}. In \cite{bun2023stability}, the authors show via a reduction to sign-one-way marginals that any replicable learner for a hypothesis class with VC dimension $d$ requires $\tOm{d^2}$ samples \cite{bun2023stability}. 
Combining \Cref{thm:sign-one-way} and the reduction from PAC learning to sign-one-way marginals of \cite{bun2023stability}, we obtain a stronger lower bound with the optimal $\eps$ and $\rho$ dependence:
\begin{theorem}
    \label{thm:PAC-Lower}
    Fix sufficiently large $d$ and a hypothesis class $\hypotheses$ with VC dimension $d$.
    Any $\rho$-replicable $(\eps, 0.001)$-accurate agnostic learner for $\hypotheses$ requires $\bigtOm{\frac{d^2}{\rho^2 \eps^2}}$ samples.
\end{theorem}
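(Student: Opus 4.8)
The plan is to reduce the $d$-dimensional sign-one-way marginals problem to agnostic PAC learning of $\hypotheses$, following the reduction of \cite{bun2023stability}, and then invoke the new lower bound \Cref{thm:sign-one-way}. First I would fix, using that $\hypotheses$ has VC dimension $d$, a set of points $x_1, \dots, x_d \in \domain$ shattered by $\hypotheses$. Given a target $p \in [-1,1]^d$, define the labeled distribution $\distribution_p$ over $\domain \times \{\pm 1\}$ that draws $i \sim \UnifD{[d]}$ and outputs $(x_i, y)$ with $y \sim \RadD{p_i}$. A single sample from $\distribution_p$ can be produced from one vector sample $w \sim \RadD{p}$ by drawing $i$ uniformly and reading off $w_i$ while discarding the other coordinates, a wastefulness I would exploit below.

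Next I would carry out the value computation. For a hypothesis $h$ with sign vector $v_i := h(x_i) \in \{\pm 1\}$ one has $\err(h) = \tfrac12 - \tfrac{1}{2d}\sum_{i} v_i p_i$, which is minimized over $\{\pm 1\}^d$ by $v_i = \sgn(p_i)$; since the $x_i$ are shattered this minimizer is realized by some $f^\star \in \hypotheses$, so $\inf_{f \in \hypotheses}\err(f) = \tfrac12 - \tfrac{1}{2d}\sum_i |p_i|$. Hence the excess error of $h$ equals $\tfrac{1}{2d}\sum_i (\sgn(p_i) - v_i)p_i$, which up to the factor $\tfrac{1}{2d}$ is exactly the sign-one-way marginals error $\sum_i(\sgn(p_i)-v_i)p_i$ of \Cref{def:sign-one-way}. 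Therefore, running a $\rho$-replicable $(\eps, 0.001)$-accurate agnostic learner on $\distribution_p$ and outputting $v$ would give, with probability at least $0.999$, an $O(\eps)$-accurate solution to $d$-dimensional sign-one-way marginals for $\RadD{p}$; replicability is inherited since the learner returns the same hypothesis (hence the same $v$) on both runs, and the strict-versus-nonstrict inequality in the definition is absorbed into the constant.

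The step that produces the quadratic dependence, and the one I would do most carefully, is the sample simulation: each sample from $\distribution_p$ reveals only one coordinate of $p$, so $m$ PAC samples can be simulated from only about $m/d$ vector Rademacher samples. Concretely, from $k$ i.i.d.\ vector samples $w^{(1)}, \dots, w^{(k)} \sim \RadD{p}$, draw coordinate indices $i_1, \dots, i_m \sim \UnifD{[d]}$ using the reduction's internal randomness (shared across runs) and, for the $j$-th occurrence of a value $i$, output $(x_i, w^{(j)}_i)$; this yields $m$ i.i.d.\ samples from $\distribution_p$ as long as $k \ge \max_i |\{t : i_t = i\}|$. A Chernoff bound and a union bound over the $d$ coordinates show $k = O(m/d)$ suffices with probability at least $0.99$ once $m = \Omega(d\log d)$, which holds without loss of generality. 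Because the coordinate indices are shared between the two runs, the simulation-success event is identical in both, so its small failure probability costs at most a constant in both replicability and accuracy (outputting a fixed string on failure). Composing everything, a $\rho$-replicable $(\eps, 0.001)$-accurate agnostic learner with sample complexity $m$ gives a $\rho$-replicable $(O(\eps), 0.01)$-accurate algorithm for $d$-dimensional sign-one-way marginals using $k = O(m/d)$ samples, so by \Cref{thm:sign-one-way} $m/d = \tOm{d/(\rho^2\eps^2)}$, i.e.\ $m = \tOm{d^2/(\rho^2\eps^2)}$.

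I expect the main obstacle to be bookkeeping rather than conceptual: the reduction is essentially that of \cite{bun2023stability}, and the only points needing genuine care are (i) that the single-coordinate nature of PAC samples is precisely what buys the extra factor of $d$ over the raw $\tOm{d/(\rho^2\eps^2)}$ vector-sample bound — this is exactly where replicable PAC learning departs from the $\Theta(d/\eps^2)$ non-replicable complexity — and (ii) threading replicability through the reduction, which works because the coordinate-index randomness can be shared between runs. Beyond a standard balls-in-bins concentration estimate and the elementary excess-error identity, there is no delicate computation.
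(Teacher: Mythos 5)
Your proposal is correct and follows essentially the same route as the paper, which obtains \Cref{thm:PAC-Lower} by combining \Cref{thm:sign-one-way} with the reduction of \cite{bun2023stability} from sign-one-way marginals to agnostic learning over a shattered set with Rademacher labels; your packing of roughly $d$ labeled examples per vector sample is exactly the step in that reduction that produces the quadratic dependence on $d$. The one point worth spelling out is that, since the coordinate indices are shared between the two runs, the simulated sample sets are correlated rather than independent draws from $\distribution_p^{\otimes m}$, so inheriting replicability requires the (standard, Jensen-type) observation that this correlation can only increase the probability that the learner outputs the same hypothesis.
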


For finite binary hypothesis classes $\hypotheses$ with VC dimension $\log |\hypotheses|$, \cite{bun2023stability} give an agnostic learner using $\tO{\frac{\log^{2} |\hypotheses|}{\rho^2 \eps^2}}$ samples and a lower bound of $\tOm{\log^2 |\hypotheses|}$ samples, leaving open the optimal dependence on $\rho, \eps$.
While quadratic dependence on $\rho, \eps$ is easily shown to be necessary as well (e.g.\ via reduction to bias estimation), a priori this dependence may have occurred \textit{additively} rather than \textit{multiplicatively}, that is it did not rule out an algorithm on say $\bigO{\log^2 |\hypotheses| + \frac{1}{\rho^2\eps^2}+\frac{\log|\hypotheses|}{\eps^2}}$ samples (indeed this type of behavior does occur in related problems, e.g., uniformity testing \cite{liu2024replicable}). We now rule out this possibility, resolving the optimal sample complexity of replicably learning finite classes up to logarithmic factors.

\end{document}